\newcommand{\savefootnote}[2]{\footnote{\label{#1}#2}}
\newcommand{\repeatfootnote}[1]{\textsuperscript{\ref{#1}}}
\newcommand{\hemant}[1]{\textcolor{blue}{#1}}
\newcommand*\hemantt{\color{blue}}
\renewcommand{\hemant}[1]{\textcolor{black}{#1}}
\renewcommand*\hemantt{\color{black}}
\newtheorem{theorem}{Theorem}
\newtheorem{assumption}{Assumption}
\newtheorem{corollary}{Corollary}
\newtheorem{definition}{Definition}
\newtheorem{lemma}{Lemma}
\newtheorem{proposition}{Proposition}
\newtheorem{remark}{Remark}
\numberwithin{equation}{section}
\newcommand{\calB}{\ensuremath{\mathcal{B}}}
\newcommand{\calC}{\ensuremath{\mathcal{C}}}
\newcommand{\calH}{\ensuremath{\mathcal{H}}}
\newcommand{\calI}{\ensuremath{\mathcal{I}}}
\newcommand{\calP}{\ensuremath{\mathcal{P}}}
\newcommand{\calR}{\ensuremath{\mathcal{R}}}
\newcommand{\calS}{\ensuremath{\mathcal{S}}}
\newcommand{\calM}{\ensuremath{\mathcal{M}}}
\newcommand{\calN}{\ensuremath{\mathcal{N}}}
\newcommand{\calT}{\ensuremath{\mathcal{T}}}
\newcommand{\calV}{\ensuremath{\mathcal{V}}}
\newcommand{\calE}{\ensuremath{\mathcal{E}}}
\newcommand{\calVp}{\ensuremath{\calV^{\prime}}}
\newcommand{\calVpp}{\ensuremath{\calV^{\prime\prime}}}
\newcommand{\norm}[1]{\parallel{#1}\parallel}
\newcommand{\abs}[1]{|{#1}|}
\newcommand{\ceil}[1]{\lceil{#1}\rceil}
\newcommand{\floor}[1]{\lfloor{#1}\rfloor}
\newcommand{\set}[1]{\left\{{#1}\right\}}
\newcommand{\dotprod}[2]{\langle#1,#2\rangle}
\newcommand{\est}[1]{\widehat{#1}}
\newcommand{\expec}{\ensuremath{\mathbb{E}}}
\newcommand{\grad}{\ensuremath{\nabla}}
\newcommand{\hess}{\ensuremath{\nabla^2}}
\newcommand{\matR}{\ensuremath{\mathbb{R}}}
\newcommand{\matZ}{\ensuremath{\mathbb{Z}}}
\newcommand{\argmin}[1]{\underset{#1}{\operatorname{argmin}}}
\newcommand{\prob}{\ensuremath{\mathbb{P}}}
\newcommand{\Linfnorm}{L_{\infty}}
\newcommand{\vp}{\ensuremath{v^{\prime}}}
\newcommand{\vpp}{\ensuremath{v^{\prime\prime}}}
\newcommand{\tp}{\ensuremath{t^{\prime}}}
\newcommand{\lp}{\ensuremath{l^{\prime}}}
\newcommand{\qp}{\ensuremath{q^{\prime}}}
\newcommand{\vecx}{\mathbf x}
\newcommand{\vecn}{\mathbf n}
\newcommand{\vecv}{\mathbf v}
\newcommand{\vecvp}{\mathbf v^{\prime}}
\newcommand{\vecvpp}{\mathbf v^{\prime\prime}}
\newcommand{\vecw}{\mathbf w}
\newcommand{\vece}{\mathbf e}
\newcommand{\vecy}{\mathbf y}
\newcommand{\vecz}{\mathbf z}
\newcommand{\matA}{\ensuremath{\mathbf{A}}}
\newcommand{\matB}{\ensuremath{\mathbf{B}}}
\newcommand{\matV}{\ensuremath{\mathbf{V}}}
\newcommand{\matH}{\ensuremath{\mathbf{H}}}
\newcommand{\matX}{\ensuremath{\mathbf{X}}}
\newcommand{\matVpp}{\ensuremath{\mathbf{V^{\prime\prime}}}}
\newcommand{\phitil}{\ensuremath{\tilde{\phi}}}
\newcommand{\numcen}{\ensuremath{m_{x}}}
\newcommand{\numdirec}{\ensuremath{m_{v}}} 
\newcommand{\numdirecp}{\ensuremath{m_{v^{\prime}}}} 
\newcommand{\numdirecpp}{\ensuremath{m_{v^{\prime\prime}}}} 
\newcommand{\numcenpair}{\ensuremath{m^{\prime}_{x}}}
\newcommand{\sparsparam}{\ensuremath{K}} 
\newcommand{\totsparsity}{\ensuremath{k}} 
\newcommand{\univsparsity}{\ensuremath{k_1}} 
\newcommand{\bivsparsity}{\ensuremath{k_2}} 
\newcommand{\univsupp}{\ensuremath{\calS_1}} 
\newcommand{\bivsupp}{\ensuremath{\calS_2}} 
\newcommand{\bivsuppvar}{\ensuremath{\calS_2^{\text{var}}}} 
\newcommand{\totsupp}{\ensuremath{\calS}} 
\newcommand{\totsuppest}{\ensuremath{\widehat{\calS}}} 
\newcommand{\binspart}{\ensuremath{\calP}}
\newcommand{\thirdtayrem}{\ensuremath{R}}
\newcommand{\rootset}{\ensuremath{\calR}}
\newcommand{\ripconst}{\ensuremath{\kappa}}
\newcommand{\spacespl}{\ensuremath{{\calH}}}
\newcommand{\exnoisemag}{\ensuremath{\varepsilon}}
\newcommand{\exnoisemagp}{\ensuremath{\varepsilon^{\prime}}}
\newcommand{\exnoise}{\ensuremath{z}} 
\newcommand{\exnoisevec}{\ensuremath{\mathbf{\exnoise}}}   
\newcommand{\exnoisep}{\ensuremath{z^{\prime}}}  
\newcommand{\dimn}{\ensuremath{d}} 
\newcommand{\degree}{\ensuremath{\rho}} 
\newcommand{\maxdegree}{\ensuremath{\rho_{m}}}
\newcommand{\numdegree}{\ensuremath{\alpha}} 
\newcommand{\smconst}{\ensuremath{B}}
\newcommand{\idenconst}{\ensuremath{D}}
\newcommand{\critintmeas}{\ensuremath{\lambda}}
\newcommand{\gradstep}{\ensuremath{\mu}}
\newcommand{\gradstepp}{\ensuremath{\mu^{\prime}}}
\newcommand{\hessstep}{\ensuremath{\mu_1}}
\newcommand{\taynoisvec}{\ensuremath{\vecn}}
\newcommand{\taynoissca}{\ensuremath{n}}
\newcommand{\baseset}{\ensuremath{\chi}}
\newcommand{\twohashfam}{\ensuremath{\calH_{2}^{d}}}
\newcommand{\thashfam}{\ensuremath{\calH_{t}^{d}}}
\newcommand{\khashfam}{\ensuremath{\calH_{k}^{d}}}
\newcommand{\hashfn}{\ensuremath{h}}
\newcommand{\linoptmat}{\ensuremath{\calM}}
\newcommand{\linoptmatB}{\ensuremath{\calB}}
\newcommand{\canvec}{\ensuremath{\mathbf{e}}}
\newcommand{\derivsamperr}{\ensuremath{\tau}}
\newcommand{\hesssamperr}{\ensuremath{\tau^{\prime}}}
\newcommand{\derivsamperrpp}{\ensuremath{\tau^{\prime\prime}}}
\newcommand{\pardevestnois}{\ensuremath{\eta}}
\newcommand{\hessestnoisebd}{\ensuremath{\eta}}
\newcommand{\pardevstep}{\ensuremath{\beta}}
\newcommand{\lpair}{\ensuremath{(l,l^{\prime})}} 
\newcommand{\lpairi}{\ensuremath{(l^{\prime},l)}} 
\newcommand{\qpair}{\ensuremath{(q,q^{\prime})}} 
\newcommand{\qpairi}{\ensuremath{(q^{\prime},q)}}
\newcommand{\xlpair}{\ensuremath{(x_{l},x_{l^{\prime}})}} 
\newcommand{\xqpair}{\ensuremath{(x_{q},x_{q^{\prime}})}} 
\newcommand{\xqpairi}{\ensuremath{(x_{q^{\prime}},x_{q})}} 
\newcommand{\hessestnoisa}{\ensuremath{\mathbf{\eta_{q,1}}}}
\newcommand{\hessestnoisb}{\ensuremath{\mathbf{\eta_{q,2}}}}
\newcommand{\gradtayrem}{\ensuremath{\begin{pmatrix}
  \vecv^{T} \hess \partial_1 g(\zeta_1) \vecv  \\ \vdots \\ \vecv^{T} \hess \partial_{\totsparsity} g(\zeta_{\totsparsity}) \vecv
 \end{pmatrix}}}
\newcommand{\gradtayremf}{\ensuremath{\begin{pmatrix}
  {\vecvp}^{T} \hess \partial_1 f(\zeta_1) \vecvp  \\ \vdots \\ {\vecvp}^{T} \hess \partial_{\dimn} f(\zeta_{\dimn}) \vecvp
 \end{pmatrix}}}
\newcommand{\hessrowips}{\ensuremath{\begin{pmatrix}
  \dotprod{\grad \partial_1 f(\vecx)}{\vecvp}  \\ \vdots \\ \dotprod{\grad \partial_{\dimn} f(\vecx)}{\vecvp}
 \end{pmatrix}}}
\newcommand{\hessrowmeas}{\ensuremath{\frac{1}{\hessstep}\begin{pmatrix}
  (\est{\grad} f(\vecx + \hessstep\vecvp_1) - \est{\grad} f(\vecx))_q  \\ \vdots \\ (\est{\grad} f(\vecx + \hessstep\vecvp_{\numdirecp}) - \est{\grad} f(\vecx))_q}
 \end{pmatrix}}
\newcommand{\gradtayremfq}{\ensuremath{\begin{pmatrix}
  {\vecvp_1}^{T} \hess \partial_q f(\zeta_{1}) \vecvp_1  \\ \vdots \\ {\vecvp_{\numdirecp}}^{T} \hess \partial_q f(\zeta_{\numdirecp}) \vecvp_{\numdirecp}
 \end{pmatrix}}}
\newcommand{\gradestnoisq}{\ensuremath{\frac{1}{\hessstep}\begin{pmatrix}
  w_q(\vecx + \hessstep\vecvp_1) - w_q(\vecx)   \\ \vdots \\ w_q(\vecx + \hessstep\vecvp_{\numdirecp}) - w_q(\vecx)
 \end{pmatrix}}}
\title{Algorithms for Learning Sparse Additive Models with Interactions in High Dimensions\thanks{A preliminary version of this paper appeared in the proceedings of the $19^{th}$ International Conference
on Artificial Intelligence and Statistics (AISTATS) 2016 \cite{Tyagi_aistats16}. The present draft is an expanded version containing additional results.}}
\author{Hemant Tyagi\thanks{School of Mathematics, University of Edinburgh,
Edinburgh, United Kingdom; The Alan Turing Institute, London, United Kingdom}\\ \texttt{htyagi@turing.ac.uk} \and 
Anastasios Kyrillidis\thanks{Department of Electrical and Computer Engineering, The University of Texas at Austin} \\ \texttt{anastasios@utexas.edu} \and
Bernd G\"{a}rtner\thanks{Department of Computer Science, Institute of Theoretical Computer Science, ETH Z\"urich, CH-8092 Z\"urich} \\ \texttt{gaertner@inf.ethz.ch} \and
Andreas Krause\thanks{Department of Computer Science, ETH Z\"urich, CH-8092 Z\"urich} \\ \texttt{krausea@ethz.ch}} 
\begin{document}
\maketitle

\begin{abstract}
A function $f: \matR^d \rightarrow \matR$ is a Sparse Additive Model (SPAM), if it is of the
form $f(\vecx) = \sum_{l \in \calS}\phi_{l}(x_l)$ where $\calS \subset [\dimn]$, $\abs{\calS} \ll \dimn$.
Assuming $\phi$'s, $\calS$ to be unknown, there exists extensive work for estimating $f$ from its samples. 
In this work, we consider a generalized version of SPAMs, that also allows for the presence of a sparse number of \emph{second order interaction terms}.
For some $\univsupp \subset [\dimn], \bivsupp \subset {[d] \choose 2}$, with
$\abs{\univsupp} \ll \dimn, \abs{\bivsupp} \ll d^2$, the function $f$ is now assumed to be of the form: 
$\sum_{p \in \univsupp}\phi_{p} (x_p) + \sum_{\lpair \in \bivsupp}\phi_{\lpair} \xlpair$. 
Assuming we have the freedom to query $f$ anywhere in its domain, we derive efficient algorithms 
that provably recover $\univsupp,\bivsupp$ with \emph{finite sample bounds}. 
Our analysis covers the noiseless setting where exact 
samples of $f$ are obtained, and also extends to the noisy setting where the queries are corrupted with noise. 
For the noisy setting in particular, we consider two noise models namely: i.i.d Gaussian noise and arbitrary but bounded noise. 
Our main methods for identification of $\bivsupp$ essentially rely on estimation of sparse Hessian matrices, for which we provide two novel 
compressed sensing based schemes. 
Once $\univsupp, \bivsupp$ are known, we show how the individual components $\phi_p$, $\phi_{\lpair}$ can be estimated via additional queries of $f$, 
with uniform error bounds. Lastly, we provide simulation results on synthetic data that validate our theoretical findings.
\end{abstract}

\section{Introduction} \label{sec:intro}
Many scientific problems involve estimating an unknown function $f$, defined over a
compact subset of $\matR^{\dimn}$, with $\dimn$ large. Such problems arise for instance, 
in modeling complex physical processes \cite{Muller2008,Maathuis09,Wainwright09a}. Information 
about $f$ is typically available in the form of point values $(x_i,f(x_i))_{i=1}^{n}$, which 
are then used for learning $f$. It is well known that the problem suffers from the curse of dimensionality, if only smoothness 
assumptions are placed on $f$. For example, if $f$ is $C^s$ smooth ($s$ times continuously differentiable), then for uniformly approximating 
$f$ within error $\delta \in (0,1)$, one needs $n = \Omega(\delta^{-\dimn/s})$ samples \cite{Traub1988}. 

A popular line of work in recent times, considers the setting where $f$ possesses an intrinsic 
low dimensional structure, \textit{i.e.}, depends on only a small subset of $\dimn$ variables. 
There exist algorithms for estimating such $f$ -- tailored to the underlying structural assumption --  
along with attractive theoretical guarantees, that do not suffer from the curse of dimensionality 
(cf., \cite{Devore2011,Cohen2010,Tyagi2012_nips,Fornasier2012}). One such assumption leads to the 
class of sparse additive models (SPAMs) wherein $f = \sum_{l \in \totsupp} \phi_l$ for some unknown 
$S \subset \set{1,\dots,\dimn}$ with $\abs{\totsupp} = \totsparsity \ll \dimn$. There exist several 
algorithms for learning these models (cf. \cite{Ravi2009,Meier2009,Huang2010,Raskutti2012,Tyagi14_nips}).   
Here we focus on a generalized SPAM model, where $f$ can also contain a small number of \emph{second order 
interaction terms}, \textit{i.e.}, 
\begin{equation} \label{eq:intro_gspam_form}
f(x_1,x_2,\dots,x_d) = \sum_{p \in \univsupp}\phi_{p} (x_p) + \sum_{\lpair \in \bivsupp}\phi_{\lpair} \xlpair; \quad 
\univsupp \subset [\dimn], \bivsupp \subset {[d] \choose 2},  
\end{equation}
with $\abs{\univsupp} \ll \dimn,\abs{\bivsupp} \ll \dimn^2$. Here, $\phi_{\lpair} \xlpair \not\equiv g_l(x_l) + h_{\lp}(x_{\lp})$ for 
some univariates $g_l, h_{\lp}$ meaning that $\frac{\partial^2}{\partial_l \partial_{\lp}} \phi_{\lpair} \not\equiv 0$. 
As opposed to SPAMs, the problem is significantly harder now -- allowing interactions leads to an additional $\dimn(\dimn-1)/2$ 
unknowns out of which of only a few terms (\textit{i.e.}, those in $\bivsupp$) are relevant. In the sequel, we will denote $\totsupp$ to be the 
support of $f$ consisting of variables that are part of $\univsupp$ or $\bivsupp$, and $k$ to be the size of $\totsupp$. 
Moreover, we will denote $\maxdegree$ to be the maximum number of occurrences of a variable in $\bivsupp$ -- this parameter captures the 
underlying \emph{complexity} of the interactions.

There exist relatively few results for learning models of the form \eqref{eq:intro_gspam_form}, 
with the existing work being mostly in the \emph{regression framework} in statistics (cf., \cite{Lin2006, Rad2010, Storlie2011}). 
Here, $(x_i,f(x_i))_{i=1}^{n}$ are typically samples from an unknown probability measure $\prob$, with the samples moreover assumed 
to be corrupted with (i.i.d) stochastic noise. In this paper, we consider the \emph{approximation theoretic} setting where we have the 
freedom to query $f$ at any desired set of points (cf. \cite{Devore2011,Fornasier2012,Tyagi14_nips}). 
We propose \hemant{strategies for querying $f$, along with efficient recovery algorithms}, which leads to much stronger guarantees than 
known in the regression setting. In particular, we provide the first \emph{finite sample bounds} for exactly recovering $\univsupp$ and $\bivsupp$. 
This is shown for (i) the noiseless setting where exact samples are observed, as well as (ii) the noisy setting, where 
the samples are corrupted with noise (either i.i.d Gaussian or arbitrary but bounded noise models). 

Once $\univsupp$, $\bivsupp$ are identified, we show in Section \ref{sec:est_comp} how the individual components: $\phi_p, \phi_{\lpair}$ of the model 
can be estimated, with \emph{uniform} error bounds. This is shown for both the noiseless and noisy query settings. It is accomplished by \hemant{additionally sampling} 
$f$ along the identified one/two dimensional subspaces corresponding to $\univsupp,\bivsupp$ respectively, and by employing standard estimators from approximation theory and 
statistics. 

\subsection{Our contributions} 
We make the following contributions for learning models of the form \eqref{eq:intro_gspam_form}.  
\begin{enumerate}
\item \label{cont:alg_gen_1_res} Firstly, we provide an efficient 
algorithm, namely Algorithm \ref{algo:gen_overlap}, 
which provably recovers $\univsupp, \bivsupp$ exactly with high probability\footnote{With probability $1 - O(d^{-c})$ for some constant $c > 0$.} (w.h.p), 
with $O(\totsparsity \maxdegree (\log \dimn)^3)$ noiseless queries. 
When the point queries are corrupted with (i.i.d) Gaussian noise, we 
show that Algorithm \ref{algo:gen_overlap} identifies $\univsupp$, $\bivsupp$ w.h.p, with $O(\maxdegree^5 \totsparsity^2 (\log \dimn)^4)$ 
noisy queries of $f$. We also analyze the setting of arbitrary but bounded noise, and derive sufficient conditions 
on the noise magnitude that enable recovery of $\univsupp, \bivsupp$. 

\item \label{cont:alg_gen_2_res} Secondly, we provide another efficient algorithm namely Algorithm \ref{algo:gen_overlap_alt}, 
which provably recovers $\univsupp, \bivsupp$ exactly w.h.p, with (i) $O(\totsparsity \maxdegree (\log \dimn)^2)$ noiseless queries 
and, (ii) $O(\maxdegree^5 \totsparsity^5 (\log \dimn)^3)$ noisy queries (i.i.d Gaussian noise). 
We also analyze the setting of arbitrary but bounded noise.

\item \label{cont:cont_cases} We provide an algorithm tailored to the special case where the underlying interaction graph corresponding to $\bivsupp$ is
known to be a \emph{perfect matching}, \textit{i.e.}, each variable interacts with at most one variable (so $\maxdegree = 1$). 
We show that the algorithm identifies $\univsupp,\bivsupp$ w.h.p, with (i) $O(\totsparsity (\log d)^2)$ 
noiseless queries and, (ii) $O(\totsparsity^2 (\log d)^3)$ noisy queries (i.i.d Gaussian noise). 
We also analyze the setting of arbitrary but bounded noise. 

\item \label{cont:sparse_hessians} An important part of Algorithms \ref{algo:gen_overlap}, \ref{algo:gen_overlap_alt} are two novel 
compressive sensing based methods, for estimating \emph{sparse}, $d \times d$ Hessian matrices. These might be of independent interest.
\end{enumerate}
We also provide simulation results on synthetic data, that validate our theoretical findings concerning the identification of $\univsupp,\bivsupp$.
\hemant{Algorithm \ref{algo:gen_overlap} appeared in AISTATS $2016$ \cite{Tyagi_aistats16}, in a preliminary version of this paper. The results in Section \ref{sec:est_comp} (estimating individual components of $f$) were part of the supplementary material in \cite{Tyagi_aistats16}.}
\subsection{Related work} 
We now provide a brief overview of related work, followed by an outline of our main contributions and an overview of the 
methods. A more detailed comparison with related work is provided in Section \ref{sec:discuss}.

\paragraph{Learning SPAMs.} 
This model was introduced in the nonparametric regression setting 
by Lin et al. \cite{Lin2006} who proposed the COSSO (Component selection and smoothing) method -- an extension of the lasso 
to the reproducing kernel Hilbert space (RKHS) setting. It essentially performs least squares minimization with a sparsity inducing penalty term 
involving the sum of norms of the function components. In fact, this method is designed to handle the more general 
smoothing spline analysis of variance (SS-ANOVA) model \cite{wabha03,Gu02}. It has since been studied extensively in the regression framework with 
a multitude of results involving: estimation of $f$ (cf.,\cite{Koltch08,Meier2009,Ravi2009,Raskutti2012,Koltch2010,Huang2010}) 
and/or variable selection, \textit{i.e.}, identifying the support $\totsupp$ (cf., \cite{Huang2010,Ravi2009,Wahl15}). 

A common theme behind (nearly all of) these approaches is to first (approximately) represent each $\phi_j$; $1 \leq j \leq d$, 
in a suitable basis of finite size. 
This is done for example via: B-splines (cf. \cite{Huang2010,Meier2009}), finite combination 
of kernel functions (cf. \cite{Raskutti2012,Koltch2010}) etc. Thereafter, the 
problem reduces to a finite dimensional one, that involves finding the values of the coefficients in the corresponding basis representation. 
This is accomplished by performing least squares minimization subject to sparsity and smoothness inducing penalty terms -- the optimization problem is convex 
on account of the choice of the penalty terms, and hence can be solved efficiently. 

With regards to the problem of estimating $f$, Koltchinskii et al. \cite{Koltch2010}, Raskutti et al. \cite{Raskutti2012} 
proposed a convex program for estimating $f$ in the RKHS setting along with $L_2$ error rates. 
These error rates were shown to be minimax optimal by Raskutti et al. \cite{Raskutti2012}. 
\hemant{For example, $f$ lying in a Sobolev space with smoothness parameter $\alpha > 1/2$, 
are estimated at the optimal $L_2$ rate: $\frac{\totsparsity \log d}{n} + \totsparsity n^{-\frac{2\alpha}{2\alpha+1}}$ 
where $n$ denotes the number of samples.}
There also exist results for the \emph{variable selection} problem, \textit{i.e.}, for estimating the support $\totsupp$. 
In contrast to the setting of sparse linear models, for which non-asymptotic sample complexity bounds are known \cite{Wainwright09b,Wainwright09a}, 
the corresponding results in the nonparametric setting are usually \emph{asymptotic}, \textit{i.e.}, derived in the limit of large $n$. 
This property is referred to as \emph{sparsistency} in the statistics literature; an estimator is called \emph{sparsistent} if $\est{\totsupp} = \totsupp$ 
with probability approaching one as $n \rightarrow \infty$. Variable selection results for SPAMs in the nonparametric regression setting can be 
found for instance in \cite{Ravi2009,Huang2010,Wahl15}. Recently, Tyagi et al. \cite{Tyagi14_nips} considered this problem in the approximation theoretic setting; 
they proposed a method that identifies $\totsupp$ w.h.p with sample complexities $O(\totsparsity \log d)$, $O(\totsparsity^3 (\log \dimn)^2)$ 
in the absence/presence of Gaussian noise, respectively. 

While there exists a significant amount of work in the literature for SPAMs, the aforementioned methods are designed for specifically learning SPAMs, and cannot handle generalized SPAMs  
of the form \eqref{eq:intro_gspam_form} containing interaction terms.

\paragraph{Learning generalized SPAMs.} There exist fewer results for generalized SPAMs of the form \eqref{eq:intro_gspam_form}, 
in the regression setting. The COSSO algorithm \cite{Lin2006} can handle \eqref{eq:intro_gspam_form}, 
however its convergence rates are shown only for the case of no interactions. Radchenko et al. \cite{Rad2010} proposed the VANISH 
algorithm -- a least squares method with sparsity constraints and show that their method is sparsistent.
Storlie et al. \cite{Storlie2011} proposed ACOSSO -- an adaptive version of the COSSO algorithm -- which can also handle \eqref{eq:intro_gspam_form}. 
They derived convergence rates and sparsistency results for their method, albeit for the case of no interactions.
Recently, Dalalayan et al. \cite{Dala2014}, Yang et al. \cite{Yang2015} studied a generalization of \eqref{eq:intro_gspam_form} that allows for the presence of 
a sparse number of $m$-wise interaction terms for some additional sparsity parameter $m$. While they derive non-asymptotic 
$L_2$ error rates for estimating $f$ in such generic setting, they do not guarantee unique identification of 
the interaction terms for any value of $m$. 

A special case of \eqref{eq:intro_gspam_form} -- where $\phi_p$'s are linear and each $\phi_{\lpair}$ is of the form $x_l x_{\lp}$ --
has been studied considerably. Within this setting, there exist algorithms that recover $\univsupp,\bivsupp$, along with 
convergence rates for estimating $f$ in the limit of large $n$ \cite{Choi2010,Rad2010,Bien2013}. 
\hemant{There also exist non-asymptotic sampling bounds for identifying the interaction terms in the noiseless setting (cf., \cite{Nazer2010,Kekatos11}).} 
However finite sample bounds for the non-linear model \eqref{eq:intro_gspam_form} are not known in general.

\paragraph{Other low-dimensional function models.} There exist results for other, more general classes 
of intrinsically low dimensional functions, that we now mention starting with the approximation theoretic setting. 
Devore et al. \cite{Devore2011} consider functions depending on an unknown subset $\totsupp$ 
of the variables with $\abs{\totsupp} = \totsparsity \ll d$. The functions do not necessarily possess an additive structure, 
so the function class is more general than \eqref{eq:intro_gspam_form}. 
They provide algorithms that recover $\totsupp$ exactly w.h.p, 
with $O(c^k \totsparsity \log d)$ noiseless queries of $f$, for some constant $c > 0$. 
Schnass et al. \cite{karin2011} derived a simpler algorithm for this problem in the noiseless setting. This function class was also studied 
by Comminges et al. \cite{Comming2011, Comming2012} in the nonparametric regression setting wherein they analyzed an estimator that 
identifies $\totsupp$ w.h.p, with $O(c^k \totsparsity \log d)$ samples of $f$.
Fornasier et al. \cite{Fornasier2012}, Tyagi et al. \cite{Tyagi2012_nips} considered a generalization of the above function class where $f$ is now of the form 
$f(\vecx) = g(\matA\vecx)$, for unknown $\matA \in \matR^{k \times d}$. They derived algorithms that approximately 
recover the row-span of $\matA$, with sample complexities typically polynomial in $\totsparsity, \dimn$. 
While the above methods could possibly recover the underlying support $\totsupp$ for the SPAM model \eqref{eq:intro_gspam_form}, 
their sample complexities are either exponential in $\totsparsity$ \cite{Devore2011,Comming2011,Comming2012} or 
polynomial in $\dimn$ \cite{Fornasier2012,Tyagi2012_nips}. As explained in Section \ref{sec:discuss}, the algorithm of Schnass et al. \cite{karin2011} 
would recover $\totsupp$ w.h.p, with $O(\maxdegree^4 \totsparsity (\log \dimn)^2)$ noiseless queries, with potentially large constants (depending on smoothness of $f$) 
within the $O(\cdot)$ term. Moreover, we note that the aforementioned methods are not designed for identifying \emph{interactions} among the variables. 

\subsection{Overview of methods used} 
We now describe the main underlying ideas behind the algorithms described in this paper, for identifying $\univsupp,\bivsupp$. 
On a top level, our methods are based on two simple observations for the model \eqref{eq:intro_gspam_form}, namely that for any $\vecx \in \matR^d$: 
\begin{itemize}
\item The gradient $\grad f(\vecx) \in \matR^d$ is $\totsparsity$ sparse.
\item The Hessian $\hess f(\vecx) \in \matR^{d \times d}$ is at most $\totsparsity (\maxdegree + 1)$ sparse. 
In particular, it has $\totsparsity$ non zero rows, with each such row having at most $\maxdegree + 1$ non zero entries.
\end{itemize}
For the special case of \emph{no overlap}, \textit{i.e.}, $\maxdegree = 1$, we proceed in two phases. 
In the first phase -- outlined as Algorithm \ref{algo:est_act} -- we identify all variables 
in $\totsupp$ by estimating $\grad f(\vecx)$ via $\ell_1$ minimization\footnote{We note that the idea of estimating a sparse 
gradient via $\ell_1$ minimization is motivated from Fornasier et al. \cite{Fornasier2012}; their algorithm however is for a more general function class than ours.}, 
for each $\vecx$ lying within a carefully constructed finite set $\baseset \in \matR^d$. The set $\baseset$ in particular is 
constructed\footnote{see Definition \ref{def:thash_fam} and ensuing discussion.} 
so that it provides a uniform discretization of all possible two dimensional canonical subspaces in $\matR^d$. 
In the second phase -- outlined as Algorithm \ref{algo:est_ind_sets} -- we identify the sets $\univsupp,\bivsupp$ via a 
simple (deterministic) binary search based procedure, over the rows of the corresponding $\totsparsity \times \totsparsity$ 
sub-matrix of the Hessian of $f$.

For the general case however where $\maxdegree \geq 1$, the above scheme does not guarantee identification of $\totsupp$; 
see discussion at beginning of Section \ref{subsec:noiseless_overlap_set_est}. Therefore now, we consider a different ``two phase'' 
approach where in the first phase, we query $f$ with the goal of identifying the set of interactions $\bivsupp$. 
This in fact entails estimating the sparse Hessian $\hess f(\vecx)$, at each $\vecx$ lying within $\baseset$. We propose two 
different methods for estimating $\hess f(\vecx)$, utilizing tools from compressive sensing (CS).
\begin{itemize}
\item The first method is a part of Algorithm \ref{algo:gen_overlap} where we estimate each row of 
$\hess f (\vecx)$ separately, via a ``difference of gradients'' approach. This is motivated by the following identity, 
based on the Taylor expansion of $\grad f$ at $\vecx$, for suitable $\vecvp \in \matR^d$, $\hessstep > 0$:
\begin{equation} \label{eq:intro_diff_grad_est}
 \frac{\grad f(\vecx + \hessstep\vecvp) - \grad f(\vecx)}{\hessstep} = \hess f(\vecx) \vecvp + O(\hessstep).
\end{equation}
We can see from \eqref{eq:intro_diff_grad_est}, that a difference of gradient vectors corresponds to obtaining a perturbed linear measurement of 
\emph{each} $\maxdegree + 1$ sparse row of $\hess f(\vecx)$. CS theory tells us that by collecting $O(\maxdegree \log d)$ such 
``gradient differences'' -- each difference term corresponding to a random choice of $\vecvp$ from a suitable distribution -- we can estimate each row of $\hess f (\vecx)$ 
via $\ell_1$ minimization. Since $\grad f$ is $\totsparsity$ sparse, it can also be estimated via $O(\totsparsity \log d)$ queries of $f$ -- this 
leads to obtaining an estimate of $\hess f (\vecx)$ with $O(\totsparsity \maxdegree (\log d)^2)$ queries of $f$ in total.

\item The second method is a part of Algorithm \ref{algo:gen_overlap_alt} where we estimate all entries of $\hess f(\vecx)$ in ``one go''. 
This is motivated by the following identity, based on the Taylor expansion of $f$ at $\vecx$, for suitable $\vecv \in \matR^d$, $\gradstep > 0$:
\begin{equation} \label{eq:intro_alt_hess_est}
\frac{f(\vecx + 2\gradstep\vecv) + f(\vecx - 2\gradstep\vecv) - 2f(\vecx)}{4\gradstep^2} = \dotprod{\vecv \vecv^T}{\hess f(\vecx)} + O(\gradstep).
\end{equation}
We see from \eqref{eq:intro_alt_hess_est} that the L.H.S corresponds to a perturbed linear measurement of the Hessian, 
with a rank one matrix. By leveraging recent results in CS -- most notably the work of Chen et al. \cite{Chen15} -- we recover an estimate of $\hess f(\vecx)$ through 
$\ell_1$ minimization, by choosing $\vecv$'s randomly from a suitable distribution. As described in detail in Section \ref{sec:algo_gen_overlap_alt}, 
this requires us to make $O(\totsparsity \maxdegree \log d)$ queries of $f$.
\end{itemize}
Once $\bivsupp$ is estimated, we estimate $\univsupp$ by invoking (a slightly improved version of) the method of Tyagi et al. \cite{Tyagi14_nips} 
for learning SPAMs, on the reduced variables set. 

\paragraph{Outline of the paper.} The rest of the paper is organized as follows. Section \ref{sec:problem} contains a formal description of the 
problem along with notation used. We begin by analyzing the special case of \emph{no overlap} between the elements of $\bivsupp$ (\textit{i.e.}, $\maxdegree = 1$), 
in Section \ref{sec:algo_nonoverlap}. Section \ref{sec:algo_gen_overlap} then considers the general setting where $\maxdegree \geq 1$. In particular, it describes 
Algorithm \ref{algo:gen_overlap} wherein the underlying sparse Hessian of $f$ is estimated via a difference of sparse gradients mechanism. Section \ref{sec:algo_gen_overlap_alt} 
also handles the general overlap setting, albeit with a different method for estimating the sparse Hessian of $f$. Once $\univsupp,\bivsupp$ are estimated, we describe how the individual 
components of $f$ can be estimated via standard tools from approximation theory and statistics, in Section \ref{sec:est_comp}. Section \ref{sec:sims} contains simulation results on synthetic examples. 
We provide a detailed discussion of related work in Section \ref{sec:discuss}, and conclude with directions for future work in Section \ref{sec:concl_rems}. All proofs are deferred to the appendix.
\section{Notation and problem setup} \label{sec:problem}
\paragraph{Notation.} Scalars are mostly denoted by plain letters (e.g. $\univsparsity$, $\bivsparsity$, $\dimn$), 
vectors by lowercase boldface letters (e.g., ${\vecx}$) or by lowercase Greek letters (\emph{e.g.}, $\zeta$), matrices by uppercase boldface
letters (e.g. ${\matA}$) and sets by uppercase calligraphic letters (e.g.
$\calS$), with the exception of $[\dimn]$ which denotes the index set $\set{1,
\ldots, \dimn}$.
Given a set $\mathcal{S} \subseteq [\dimn]$, we denote its complement by
$\mathcal{S}^c := [\dimn] \setminus \mathcal{S}$ and for vector $\vecx = (x_1,\dots,x_{\dimn}) \in \matR^{\dimn}$,
$(\vecx)_{\calS}$ denotes the restriction of $\vecx$ onto $\calS$, \textit{i.e.},
$((\vecx)_{\calS})_l = x_l$ if $l \in \calS$ and $0$ otherwise.
We use $\abs{\calS}$ to denote the cardinality of a set $\calS$.
The $\ell_p$ norm of a vector $\vecx \in \matR^{\dimn}$ is defined as $\norm{\vecx}_p
:= \left ( \sum_{l=1}^\dimn \abs{x_i}^p \right )^{1/p}$. 
Let $g$ be a function of $n$ variables, $g(x_1,\dots,x_n)$. 
$\expec_p[g]$, $\expec_{\lpair}[g]$ denote expectation w.r.t uniform distributions over 
$x_p$ and $(x_l, x_{l^{\prime}})$ respectively. $\expec[g]$ denotes expectation w.r.t. uniform distribution 
over $(x_1,\dots,x_n)$. For any compact $\Omega \subset \matR^n$, we denote by 
$\norm{g}_{\Linfnorm (\Omega)}$, the $\Linfnorm$ norm of $g$ in $\Omega$.
The partial derivative operator $\frac{\partial}{\partial x_i}$ is denoted by $\partial_i$, for $i=1,\dots,n$. 
So for instance, $\frac{\partial^3 g}{\partial x_1^2 \partial x_2}$ will be denoted by $\partial_1^2 \partial_2 g$.

We are interested in the problem of approximating functions $f:\matR^d \rightarrow \matR$ from point queries.
For some unknown sets $\univsupp \subset [\dimn], \bivsupp \subset {[\dimn] \choose 2}$, 
the function $f$ is assumed to have the following form. 
\begin{equation} \label{eq:gspam_form}
 f(x_1,\dots,x_d) = \sum_{p \in \univsupp}\phi_{p} (x_p) + \sum_{\lpair \in \bivsupp}\phi_{\lpair} \xlpair.
\end{equation}
Hence $f$ is considered to be a sum of a sparse number of univariate and bivariate functions, denoted by $\phi_p$ and $\phi_{\lpair}$ respectively. 
Here, $\phi_{\lpair}$ is considered to be ``truly bivariate'' meaning that $\partial_l \partial_{\lp} \phi_{\lpair} \not\equiv 0$. 
The set of coordinate variables that are in $\bivsupp$, is denoted by 
\begin{equation}
\bivsuppvar := \set{l \in [\dimn]: \exists l^{\prime} \in [d] \ \text{s.t} \ \lpair \in \bivsupp \ \text{or} \ \lpairi \in \bivsupp}. 
\end{equation}
For each $l \in \bivsuppvar$, we refer to the number of occurrences of $l$ in $\bivsupp$, 
as the \emph{degree} of $l$, formally denoted as follows. 
\begin{equation}
\degree(l) := \abs{\set{l^{\prime} \in \bivsuppvar : \lpair \in \bivsupp \ \text{or} \ \lpairi \in \bivsupp}}; \quad l \in \bivsuppvar. 
\end{equation}
\paragraph{Model Uniqueness.} We first note that representation \eqref{eq:gspam_form} is not unique. Firstly, we could add constants to
each $\phi_l, \phi_{\lpair}$, which sum up to zero. Furthermore, for each $l \in \bivsuppvar$ with $\degree(l) > 1$ we could add
univariates that sum to zero. We can do the same for $l \in \univsupp \cap \bivsuppvar : \degree(l) = 1$.
These ambiguities are thankfully avoided by re-writing \eqref{eq:gspam_form} \emph{uniquely} in the following ANOVA form. 
\begin{equation} \label{eq:unique_mod_rep}
f(x_1,\dots,x_d) = c + \sum_{p \in \univsupp}\phi_{p} (x_p) + \sum_{\lpair \in \bivsupp} \phi_{\lpair} \xlpair + 
\sum_{q \in \bivsuppvar: \degree(q) > 1} \phi_{q} (x_q); \quad \univsupp \cap \bivsuppvar = \emptyset.
\end{equation}
Here, $c = \expec[f]$ and $\expec_p[\phi_p] = \expec_{\lpair}[\phi_{\lpair}] = 0$; $\forall p \in \univsupp, \lpair \in \bivsupp$, 
with expectations being over uniform distributions w.r.t. variable range $[-1,1]$.   
\hemant{In addition, certain bivariate components have zero marginal mean with respect to either $l$ or $\lp$. In particular, $\expec_{l}[\phi_{\lpair}] = 0$ if $\degree(\lp) > 1$ and $\expec_{\lp}[\phi_{\lpair}] = 0$ if $\degree(l) > 1$}. 
The univariate $\phi_q$ corresponding to $q \in \bivsuppvar$ with $\degree(q) > 1$, 
represents the net marginal effect of the variable and has $\expec_q[\phi_q] = 0$. We note that $\univsupp, \bivsuppvar$ are disjoint in \eqref{eq:unique_mod_rep}.
This is due to the fact that each $p \in \univsupp \cap \bivsuppvar$ with $\degree(p) = 1$ can be merged with its bivariate form, while
each $p \in \univsupp \cap \bivsuppvar$ with $\degree(p) > 1$ can be merged with its net marginal univariate form.
The uniqueness of \eqref{eq:unique_mod_rep} is shown formally in the appendix. 

We assume the setting $\abs{\univsupp} = \univsparsity \ll \dimn$, $\abs{\bivsupp} = \bivsparsity \ll d$. 
Clearly, $\abs{\bivsuppvar} \leq 2\bivsparsity$ with equality iff elements in $\bivsupp$ are pairwise disjoint.
The set of \emph{all} active variables, \textit{i.e.}, $\univsupp \cup \bivsuppvar$ will be denoted by $\totsupp$.
We then define $\totsparsity: = \abs{\totsupp} = \univsparsity +  \abs{\bivsuppvar}$ to be 
the \emph{total sparsity} of the problem. 
The largest degree of a variable in $\bivsuppvar$, is defined to be $\maxdegree := \max_{l \in \bivsuppvar} \degree(l)$. 
Clearly, $1 \leq \maxdegree \leq \bivsparsity$. 

\paragraph{Goals.} Assuming that we have the freedom to query $f$ within its domain, our goal is now two fold.
\begin{itemize}
\item Firstly, we would like to exactly recover the unknown sets $\univsupp,\bivsupp$.

\item Secondly, we would like to estimate $c$ as well as each: (i) $\phi_p; p \in \univsupp$, (ii) $\phi_{\lpair}; \lpair \in \bivsupp$ and 
(iii) $\phi_{q}; q \in \bivsuppvar, \degree(q) > 1$, in \eqref{eq:unique_mod_rep}. 
In particular, we would like to estimate the univariate and bivariate components within compact domains 
$[-1,1]$, $[-1,1]^2$ respectively.
\end{itemize}

If $\univsupp, \bivsupp$ were known beforehand, then one can estimate $f$ via standard results from approximation theory or nonparametric 
regression \footnote{This is discussed later.}. Hence our primary focus in the paper is to recover $\univsupp, \bivsupp$.
Our main assumptions for this problem are listed below.
\begin{assumption} \label{assum:spamin_samp_reg}
We assume that $f$ can be queried from the slight enlargement: $[-(1+r),(1+r)]^d$ of $[-1,1]^d$ 
for some small $r > 0$. As will be seen later, the enlargement $r$ can be made arbitrarily close to $0$.
\end{assumption}
%

\begin{assumption} \label{assum:smooth}
We assume each $\phi_{\lpair},\phi_p$ to be three times continuously differentiable, 
within $[-(1+r),(1+r)]^2$ and $[-(1+r),(1+r)]$ respectively. Since these domains are compact, 
there then exist constants $\smconst_m \geq 0$; $m=0,1,2,3$, so that 
\begin{align}
\norm{\partial_l^{m_1} \partial_{l^{\prime}}^{m_2} \phi_{\lpair}}_{\Linfnorm[-(1+r),(1+r)]^2} \leq \smconst_m; \quad \lpair \in \bivsupp, \ m_1 + m_2 = m, \\
\norm{\partial_p^{m} \phi_{p}}_{\Linfnorm[-(1+r),(1+r)]} \leq \smconst_m; \quad p \in \univsupp \ \text{or}, \ p \in \bivsuppvar \ \& \ \degree(p) > 1.
\end{align}
\end{assumption}

Our next assumption is for the purpose of identification of active variables, \textit{i.e.}, the elements of $\univsupp \cup \bivsuppvar$.

\begin{assumption} \label{assum:actvar_iden}
For some constants $\idenconst_1, \critintmeas_1 > 0$, 
we assume that for each $\lpair \in \bivsupp$, $\exists$ 
connected $\calI_{l,1}, \calI_{l^{\prime},1}, \calI_{l,2}, \calI_{l^{\prime},2} \subset [-1,1]$, each of Lebesgue measure at least $\critintmeas_1 > 0$,
so that
\begin{align} \label{eq:parderiv_bd_assum}
\abs{\partial_l \phi_{\lpair} \xlpair}  &> \idenconst_1, \quad \forall \xlpair \in \calI_{l,1} \times \calI_{l^{\prime},1}, \\ 
\abs{\partial_{l^{\prime}} \phi_{\lpair} \xlpair}  &> \idenconst_1, \quad \forall \xlpair \in \calI_{l,2} \times \calI_{l^{\prime},2}.
\end{align}
Similarly, we assume that for each $p \in \univsupp$, $\exists$ connected $\calI_p \subset [-1,1]$,
of Lebesgue measure at least $\critintmeas_1 > 0$, such that $\abs{\partial_p \phi_p(x_p)} > \idenconst_1$, $\forall x_p \in \calI_p$.
These assumptions essentially serve to distinguish an active variable from a non-active one, and are also in a sense necessary.
For instance, if say $\partial_l \phi_{\lpair}$ was zero throughout $[-1,1]^2$, then it equivalently means that $\partial_l \phi_{\lpair}$
is only a function of $x_{l^{\prime}}$. If $\partial_l \phi_{\lpair} = \partial_{l^{\prime}} \phi_{\lpair} = 0$ in $[-1,1]^2$, then
$\phi_{\lpair} \equiv 0$  in $[-1,1]^2$. The same reasoning applies for $\phi_p$'s.
\end{assumption}

Our last assumption concerns the identification of $\bivsupp$.

\begin{assumption} \label{assum:pair_iden}
For some constants $\idenconst_2, \critintmeas_2 > 0$, we assume that 
for each $\lpair \in \bivsupp$, $\exists$ connected $\calI_{l}, \calI_{l^{\prime}} \subset [-1,1]$, each interval of 
Lebesgue measure at least $\critintmeas_2 > 0$, such that 
$\abs{\partial_l \partial_{l^{\prime}} \phi_{\lpair} \xlpair}  > \idenconst_2, \quad \forall \xlpair \in \calI_{l} \times \calI_{l^{\prime}}$.
\end{assumption}
Our problem specific parameters are: (i) $\smconst_i$; $i=0,\dots,3$, (ii) $\idenconst_j,\critintmeas_j$; $j=1,2$ and, (iii) $\totsparsity, \maxdegree$.
We do not assume $\univsparsity, \bivsparsity$ to be known but instead assume that $\totsparsity$ is known. Furthermore it suffices to use estimates for the problem 
parameters instead of exact values. In particular, we can use upper bounds for: $\totsparsity, \maxdegree$, $\smconst_i$; $i=0,\dots,3$ and lower bounds for: 
$\idenconst_j,\critintmeas_j$; $j=1,2$. 

\paragraph{Underlying interaction graph.} One might intuitively guess that the underlying ``structure'' of interactions between 
the elements in $\bivsuppvar$, shapes the difficulty of the problem. More formally, consider 
the graph $G = (V,E)$ where $V = [\dimn]$ and $E = \bivsupp \subset {V \choose 2}$ 
denote the set of vertices and edges, respectively. We refer to the induced subgraph $I_G = (\bivsuppvar, \bivsupp)$ 
of $G$, as the \emph{interaction graph}. We consider not only the general setting -- where no assumption is made on $I_G$ -- 
but also a special case where $I_G$ is a perfect matching. This is illustrated in Figure \ref{fig:inter_graphs}. 
\begin{figure}[!htp]
\begin{center}
   \subfloat[][$I_G$ is a perfect matching]{\includegraphics[width=0.3\textwidth]{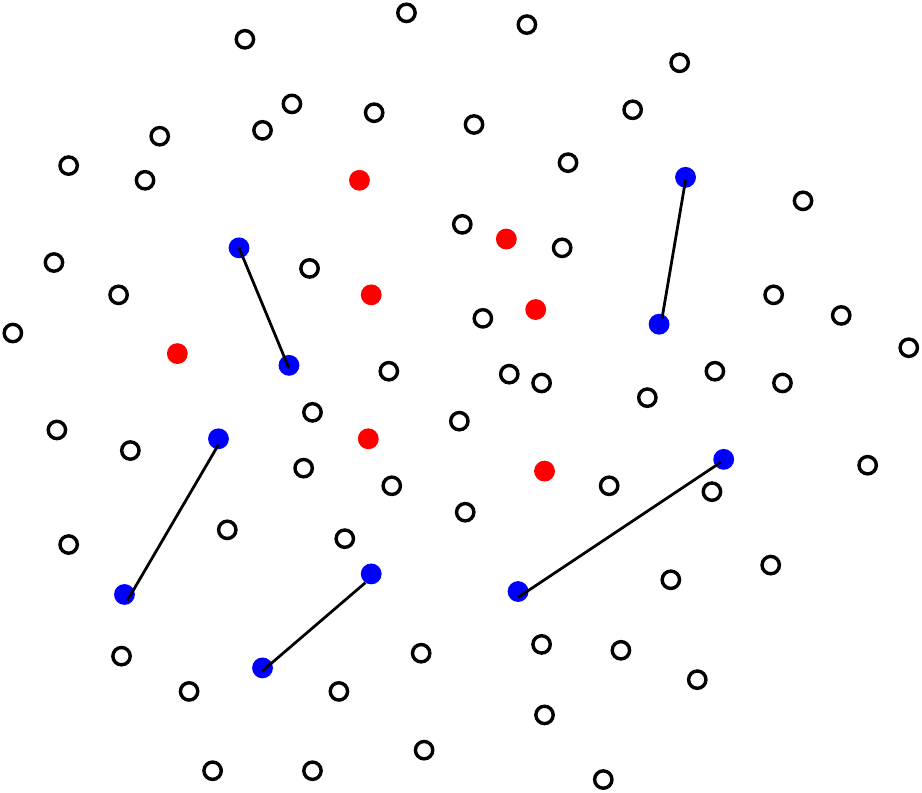} \label{subfig:perfmat}}  \hspace{4mm}
   \subfloat[][$I_G$ has arbitrary structure]{\includegraphics[width=0.3\textwidth]{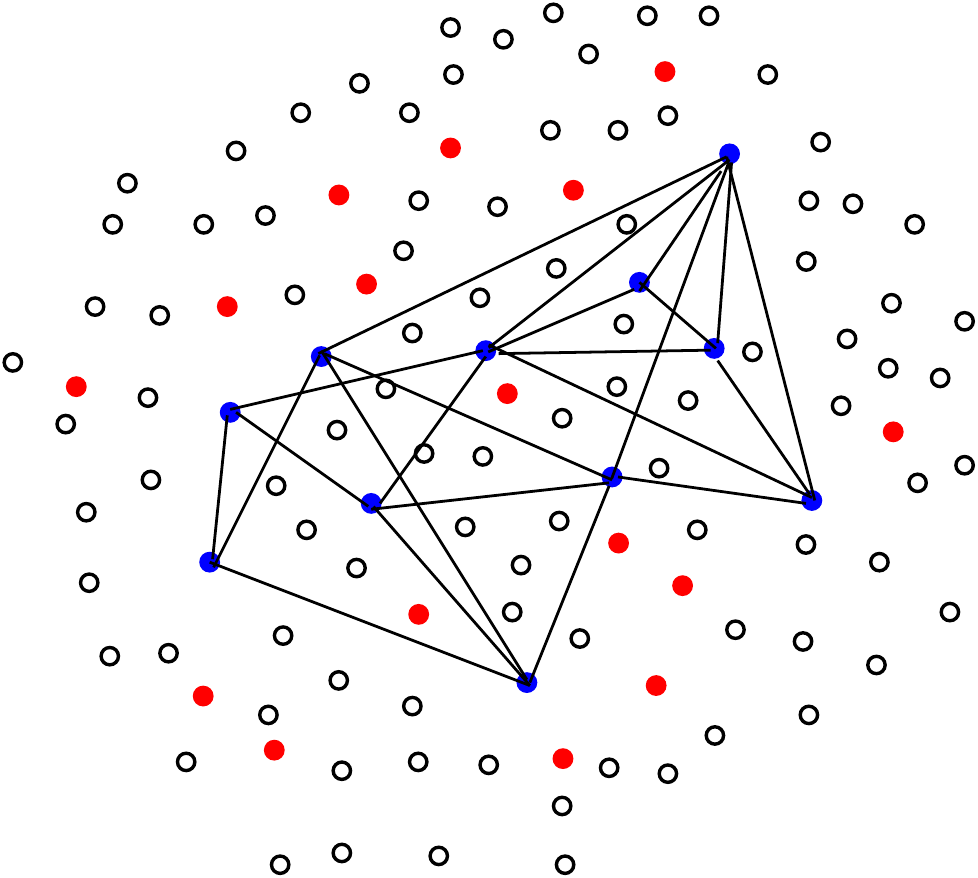} \label{subfig:arbstruc}}  \hspace{4mm}
\end{center} 
\caption{\small Blue (resp. red) disks denote elements of $\bivsuppvar$ (resp. $\univsupp$). 
Circles denote elements of $[\dimn] \setminus \{\univsupp \cup \bivsuppvar \}$. On the left, we have the special setting where 
$I_G$ is a perfect matching. On the right, we have the most general setting where no assumption is made on 
$I_G$.} \label{fig:inter_graphs}
\end{figure}
In Fig. \ref{subfig:perfmat}, $I_G$ is a perfect matching meaning that each vertex is of degree one. 
In other words, there is \emph{no overlap} between the elements of $\bivsupp$. In terms of the difficulty 
of interactions, this corresponds to the easiest setting. Fig. \ref{subfig:arbstruc} corresponds to the general setting where 
no structural assumption is placed on $I_G$. Therefore, we can now potentially \emph{have overlaps} between the elements of $\bivsupp$, 
since each element in $\bivsuppvar$ can be paired with up to $\maxdegree$ other elements. This corresponds to the hardest setting as far 
as the difficulty of interactions is concerned.

\section{Sampling scheme for the non-overlap case} \label{sec:algo_nonoverlap}
In this section we consider the special case where all elements in $\bivsupp$ are pair-wise disjoint. 
In other words, $\degree(i) = 1$, for each $i \in \bivsuppvar$. We first treat the noiseless setting in Section \ref{subsec:noiseless_no_overlap}, wherein 
the exact function values are obtained at each query. We then handle the noisy setting in Section \ref{subsec:noise_nooverlap}, where 
the function values are corrupted with external noise.

\subsection{Analysis for noiseless setting} \label{subsec:noiseless_no_overlap}
Our approach essentially consists of two
phases. In the first phase, we sample the function $f$ appropriately, and recover the \textit{complete set} of
active variables $\totsupp$. In the second phase, we focus on the reduced 
$\totsparsity$ dimensional subspace corresponding to $\totsupp$. We sample $f$ at appropriate 
points in this subspace, and consequently identify $\univsupp$ as well as $\bivsupp$. 
Let us now elaborate on these two phases in more detail.

\subsubsection{First Phase: Recovering all active variables} \label{subsec:phase_one_nooverlap} 
The crux of this phase is based on the following observation. On account 
of the structure of $f$, we see that at any $\vecx \in \matR^d$, the gradient $\grad f(\vecx) \in \matR^{\dimn}$ has the following form:
\begin{equation*}
(\grad f(\vecx))_q = \left\{
\begin{array}{rl}
\partial_q \phi_q(x_q) \quad ; & q \in \univsupp \\
\partial_q \phi_{\qpair} \xqpair \quad ; & \qpair \in \bivsupp\\
\partial_q \phi_{\qpairi} \xqpairi \quad ; & \qpairi \in \bivsupp\\
0 \quad ; & \text{otherwise}
\end{array} \right. ; \quad q=1,\dots,\dimn.
\end{equation*}
Hence $\grad f(\vecx)$ is at most $\totsparsity$-sparse, \textit{i.e.}, has at most $k$ non zero entries, 
for any $\vecx$. Note that the $q^{\text{th}}$ component of $\grad f(\vecx)$ is zero
if $q \notin \univsupp \cup \bivsuppvar$. Say we somehow recover $\grad f(\vecx)$ at sufficiently many $\vecx$'s
within $[-1,1]^{\dimn}$. Then, we would also have suitably many samples of the functions: 
$\partial_q \phi_q, \partial_{l} \phi_{\lpair}, \partial_{l^{\prime}} \phi_{\lpair}$, 
$\forall$ $p \in \univsupp, \lpair \in \bivsupp$. Specifically, if the number of samples is large enough, 
then we would have sampled each of $\partial_q \phi_q, \partial_{l} \phi_{\lpair}, \partial_{l^{\prime}} \phi_{\lpair}$, 
within their respective ``critical intervals'', as defined in Assumption \ref{assum:actvar_iden}. 
Provided that the estimation noise is sufficiently
small enough, this suggests that we should then, via a threshold operation, be able to detect all variables in $\univsupp \cup \bivsuppvar$.
We now proceed to formalize our above discussion, in a systematic manner.

\paragraph{Compressive sensing formulation.} We begin by discussing how a sparse gradient $\grad f$ can be estimated 
at any point $\vecx$, via compressive sensing (CS). 
As $f$ is $\calC^3$ smooth, therefore the Taylor's expansion of $f$ at $\vecx$, along $\vecv,-\vecv \in \matR^{\dimn}$, with step size
$\gradstep > 0$, and $\zeta = \vecx + \theta \vecv$, $\zeta^{\prime} = \vecx - \theta^{\prime} \vecv$; $\theta,\theta^{\prime} \in (0,\gradstep)$ gives us:
\begin{align} 
f(\vecx + \gradstep\vecv) &= f(\vecx) + \gradstep \dotprod{\vecv}{\grad f(\vecx)} + \frac{1}{2}\gradstep^2 \vecv^T \hess \hemant{f}(\vecx) \vecv + \thirdtayrem_3(\zeta), \label{eq:taylor_exp_1}\\
f(\vecx - \gradstep\vecv) &= f(\vecx) + \gradstep \dotprod{-\vecv}{\grad f(\vecx)} + \frac{1}{2}\gradstep^2 \vecv^T \hess \hemant{f}(\vecx) \vecv + \thirdtayrem_3(\zeta^{\prime}). \label{eq:taylor_exp_2}
\end{align}
Subtracting the above, and dividing by $2\gradstep$ leads to the standard ``central difference'' estimate of $\dotprod{\vecv}{\grad f(\vecx)}$.
\begin{align} \label{eq:taylor_exp_f}
\frac{f(\vecx + \gradstep\vecv) - f(\vecx - \gradstep\vecv)}{2\gradstep} = \dotprod{\vecv}{\grad f(\vecx)} 
+ \underbrace{\frac{\thirdtayrem_3(\zeta) - \thirdtayrem_3(\zeta^{\prime})}{2\gradstep}}_{O(\gradstep^2)}.
\end{align}
Notice that in \eqref{eq:taylor_exp_f}, the expression on the left hand side corresponds to a noisy-linear measurement of $\grad f(\vecx)$, with $\vecv$.
The ``noise'' here arises on account of the third order terms $\thirdtayrem_3(\zeta),\thirdtayrem_3(\zeta^{\prime}) = O(\gradstep^3)$, 
in the Taylor expansion. Now let the $\vecv$'s be chosen from the set:
\begin{align} \label{eq:samp_direc_set}
\calV &:= \set{v_j \in \matR^{\dimn} : v_{j,q} = \pm\frac{1}{\sqrt{\numdirec}} \ \text{w.p.} \ 1/2 \ \text{each};
 \ j=1,\dots,\numdirec \ \text{and} \ q=1,\dots,{\dimn}}. 
\end{align}
Then, employing \eqref{eq:taylor_exp_f} at each $\vecv_j \in \calV$ gives us the linear system:
\begin{equation} \label{eq:taylor_exp_f_1}
\underbrace{\frac{f(\vecx + \gradstep\vecv_j) - f(\vecx - \gradstep\vecv_j)}{2\gradstep}}_{y_j} = \dotprod{\vecv_j}{\grad f(\vecx)} 
+ \underbrace{\frac{\thirdtayrem_3(\zeta_{j}) - \thirdtayrem_3(\zeta^{\prime}_{j})}{2\gradstep}}_{\taynoissca_j}; \quad j=1,\dots,\numdirec.
\end{equation}
Denoting $\vecy = [y_1 \dots y_{\numdirec}]$, $\taynoisvec = [\taynoissca_1 \dots \taynoissca_{\numdirec}]$ and 
$\matV = [\vecv_1 \dots \vecv_{\numdirec}]^T \in \matR^{\numdirec \times {\dimn}}$, we can re-write \eqref{eq:taylor_exp_f_1}
succinctly as:
\begin{equation} \label{eq:cs_form}
 \vecy = \matV\grad f(\vecx) + \taynoisvec.
\end{equation}
As we know $\vecy, \matV$, therefore we can estimate the unknown $k$-sparse vector $\grad f(\vecx)$ via standard 
$\ell_1$ minimization \cite{Candes2006,Donoho2006}:
\begin{equation} \label{eq:l1_min_prog}
\est{\grad} f (\vecx) := \argmin{\vecy = \matV\vecz} \norm{\vecz}_1.  
\end{equation}
%
\begin{remark}
Estimating sparse gradients via compressive sensing was -- to the best of our knowledge -- first considered by Fornasier et al. \cite{Fornasier2012} 
for learning functions of the form: $f(\vecx) = g(\matA\vecx)$. It was then also employed by Tyagi et al. \cite{Tyagi14_nips} for learning 
SPAMs (without interaction terms). However, \cite{Fornasier2012,Tyagi14_nips} consider a ``forward difference'' estimate of $\dotprod{\vecv}{\grad f(\vecx)}$, 
via $(f(\vecx + \gradstep\vecv) - f(\vecx))/\gradstep$, resulting in $O(\gradstep)$ perturbation error in \eqref{eq:taylor_exp_f}.
\end{remark}

\begin{remark} \label{rem:connec_spsa_alg}
The above sampling mechanism is related to the ``simultaneous perturbation'' gradient approximation
method of \cite{Spall92}. Specifically in \cite{Spall92}, for a random $\vecv = (v_1,\dots,v_{\dimn})^T \in \matR^{\dimn}$, 
$\est{\grad} f(\vecx)$ is defined to be:

\begin{equation}
\left(\frac{f(\vecx + \gradstep\vecv) - f(\vecx - \gradstep\vecv)}{2\gradstep v_1},\dots,\frac{f(\vecx + \gradstep\vecv) - f(\vecx - \gradstep\vecv)}{2\gradstep v_{\dimn}} \right)^T 
\end{equation}
The bias of the above estimate can be shown to be $O(\gradstep^2)$ for $C^3$ smooth $f$. 
\end{remark}
The following theorem from \cite{Fornasier2012} provides guarantees for 
stable recovery via $\ell_1$ minimization: $\triangle(\vecy) := \argmin{\vecy = \matV \vecz} \norm{\vecz}_1$. 
While the first part is by now standard (see for example \cite{Baraniuk2008_simple}), 
the second result was stated in \cite{Fornasier2012} as a specialization of Theorem 1.2 from 
\cite{Wojta2012} to the case of Bernoulli measurement matrices.

\begin{theorem}[\cite{Wojta2012,Fornasier2012}] \label{thm:sparse_recon_bound}
Let $\matV$ be a $\numdirec \times \dimn$ random matrix with all entries being Bernoulli i.i.d random 
variables scaled with $1/\sqrt{\numdirec}$. Then the following results hold.
\begin{enumerate}
\item Let $0 < \ripconst < 1$. Then there are two positive constants $c_1,c_2 > 0$, such that the matrix $\matV$
has the Restricted Isometry Property
\begin{equation}
(1-\ripconst) \norm{\vecw}_2^2 \leq \norm{\matV\vecw}_2^2 \leq (1+\ripconst) \norm{\vecw}_2^2 \label{eq:RIP}
\end{equation}
for all $\vecw \in \matR^{\dimn}$ such that $|$supp($\vecw$)$|$ $\leq c_2 \numdirec/ \log(\dimn/\numdirec)$ with probability at least $1-e^{-c_1 \numdirec}$.

\item Let us suppose $\dimn > (\log 6)^2 \numdirec$. Then there are positive constants $C, c_1^{\prime}, c_2^{\prime} > 0$ such that 
with probability at least $1 - e^{-c_1^{\prime} \numdirec} - e^{-\sqrt{\numdirec\dimn}}$ the matrix $\matV$ has the following property.
For every $\vecw \in \matR^{\dimn}$, $\taynoisvec \in \matR^{\numdirec}$ and every natural number 
$K \leq c_2^{\prime} \numdirec / \log(\dimn/\numdirec)$, we have
\begin{equation} 
\norm{\triangle(\matV\vecw + \taynoisvec) - \vecw}_2 \leq C \left(K^{-1/2} \sigma_{K}(\vecw)_{1} + 
\max\set{\norm{\taynoisvec}_2, \sqrt{\log \dimn}\norm{\taynoisvec}_{\infty}}\right), \label{eq:sparse_recon_err}
\end{equation}
where
\begin{equation*}
\sigma_{K}(\vecw)_{1} := \inf\set{\norm{\vecw - \vecz}_1 : |\text{supp}(\vecz)| \leq K}
\end{equation*}
is the best $K$-term approximation of $\vecw$.
\end{enumerate}
\end{theorem}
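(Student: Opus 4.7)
The plan is to prove the two parts in sequence, following the now-standard compressed sensing machinery. For Part 1 (RIP), fix $\vecw \in \matR^{\dimn}$ with $\norm{\vecw}_2 = 1$ and write $\norm{\matV\vecw}_2^2 = \numdirec^{-1}\sum_{j=1}^{\numdirec}(\sum_q \epsilon_{jq} w_q)^2$, where $\epsilon_{jq} \in \{\pm 1\}$ are i.i.d.\ symmetric. Each summand has mean $\norm{\vecw}_2^2 = 1$ and is sub-exponential, so a Bernstein-type estimate gives $\prob(|\norm{\matV\vecw}_2^2 - 1| > \ripconst) \leq 2e^{-c\ripconst^2 \numdirec}$ for a universal $c > 0$. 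To upgrade this pointwise concentration to uniform control over all $K$-sparse unit vectors, use a standard net argument: for each fixed support $\calS$ of size $K$ take an $\epsilon$-net of the unit sphere of $\matR^{\calS}$ of cardinality $(3/\epsilon)^K$, union bound over the net, and lift from net to sphere via the classical successive-approximation trick. A final union bound over the $\binom{\dimn}{K} \leq (e\dimn/K)^K$ choices of support produces \eqref{eq:RIP} with probability $1 - e^{-c_1 \numdirec}$ as soon as $\numdirec \gtrsim K\log(\dimn/K)/\ripconst^2$, matching the claimed scaling of $K$.

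For Part 2 (noise-robust recovery), apply Part 1 at sparsity $2K$ with a fixed $\ripconst_0 < \sqrt{2}-1$ so that the classical Cand\`es framework applies. Let $\vecw^{*} = \triangle(\matV\vecw + \taynoisvec)$ and set $\mathbf{h} = \vecw^{*} - \vecw$. The $\ell_1$-optimality of $\vecw^{*}$ together with the triangle inequality yields the cone bound $\norm{\mathbf{h}_{\calT_0^c}}_1 \leq \norm{\mathbf{h}_{\calT_0}}_1 + 2\sigma_K(\vecw)_1$, where $\calT_0$ indexes the $K$ largest coordinates of $\vecw$. Splitting $\calT_0^c$ into successive $K$-blocks by decreasing magnitude and applying RIP at sparsity $2K$ to each block yields the standard estimate
\begin{equation*}
\norm{\mathbf{h}}_2 \;\lesssim\; \norm{\matV \mathbf{h}}_2 + K^{-1/2}\sigma_K(\vecw)_1.
\end{equation*}
Feasibility of $\vecw^{*}$ for the constraint $\matV \vecz = \matV\vecw + \taynoisvec$ gives $\matV\mathbf{h} = \taynoisvec$, hence $\norm{\mathbf{h}}_2 \lesssim K^{-1/2}\sigma_K(\vecw)_1 + \norm{\taynoisvec}_2$, which is the first branch of the $\max$.

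For the alternative $\sqrt{\log\dimn}\,\norm{\taynoisvec}_\infty$ contribution I would switch to a Dantzig-selector-style refinement. Instead of bounding $\norm{\matV\mathbf{h}}_2$ directly, expand $\norm{\matV\mathbf{h}_{\calT_0\cup\calT_1}}_2^2 = \langle \matV^T \matV \mathbf{h},\, \mathbf{h}_{\calT_0\cup\calT_1}\rangle \leq \norm{\matV^T \matV \mathbf{h}}_{\infty}\norm{\mathbf{h}_{\calT_0\cup\calT_1}}_1$ and note $\matV^T\matV \mathbf{h} = \matV^T\taynoisvec$. Each coordinate $(\matV^T\taynoisvec)_q = \numdirec^{-1/2}\sum_j \epsilon_{jq}\taynoissca_j$ is a weighted Rademacher sum, so Hoeffding yields $|(\matV^T\taynoisvec)_q| \lesssim \sqrt{\log\dimn}\,\norm{\taynoisvec}_\infty$ for each fixed $q$; a union bound over $q \in [\dimn]$ upgrades this to a sup-norm estimate. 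Combining with $\norm{\mathbf{h}_{\calT_0\cup\calT_1}}_1 \leq \sqrt{2K}\norm{\mathbf{h}_{\calT_0\cup\calT_1}}_2$, the cone inequality, and RIP closes the chain and produces $\norm{\mathbf{h}}_2 \lesssim K^{-1/2}\sigma_K(\vecw)_1 + \sqrt{\log\dimn}\,\norm{\taynoisvec}_\infty$.

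The chief obstacle is the \emph{uniformity} in $\taynoisvec$: the theorem asserts a single event on $\matV$ on which the bound holds for \emph{every} pair $(\vecw, \taynoisvec)$. For the $\ell_2$ branch this is automatic since the argument only uses $\matV\mathbf{h} = \taynoisvec$. For the $\ell_\infty$ branch, however, one needs a uniform estimate on $\sup_{\taynoisvec \neq 0}\norm{\matV^T\taynoisvec}_\infty / \norm{\taynoisvec}_\infty$, obtained via a net argument on the unit $\ell_\infty$-ball of $\matR^{\numdirec}$ of cardinality $\sim e^{O(\sqrt{\numdirec\dimn})}$. This is precisely the origin of the $e^{-\sqrt{\numdirec\dimn}}$ failure probability in the statement, and I would lean on the framework of \cite{Wojta2012}, which is engineered to deliver exactly this deterministic-on-$\matV$ reconstruction guarantee.
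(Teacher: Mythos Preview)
The paper does not prove this theorem; it is quoted from \cite{Wojta2012,Fornasier2012}. Your Part~1 sketch (Bernstein concentration plus net plus union bound over supports, \`a la Baraniuk et al.) is standard and fine. The gap is in Part~2.

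Your cone bound $\norm{\mathbf{h}_{\calT_0^c}}_1 \leq \norm{\mathbf{h}_{\calT_0}}_1 + 2\sigma_K(\vecw)_1$ is derived from $\norm{\vecw^*}_1 \leq \norm{\vecw}_1$, which requires $\vecw$ to be feasible for the constraint $\matV\vecz = \vecy$. But $\matV\vecw = \vecy - \taynoisvec \neq \vecy$, so $\vecw$ is \emph{not} feasible and the comparison is illegitimate. The Cand\`es RIP machinery you invoke is written for inequality-constrained BPDN ($\norm{\matV\vecz - \vecy}_2 \leq \eta$), where the true signal lies in the feasible set; it does not transfer directly to equality-constrained basis pursuit under measurement noise. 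Consequently your ``$\norm{\taynoisvec}_2$ branch'' is unsupported --- and since that branch, if valid, would already imply the bound with the $\max$, the remainder of the argument (the Dantzig-style refinement and the net over $\taynoisvec$) is moot.

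The actual route in \cite{Wojta2012} is to establish an $\ell_1$ \emph{quotient property}: with high probability over $\matV$, for every $\taynoisvec$ there exists $\vecu$ with $\matV\vecu = \taynoisvec$ and $\norm{\vecu}_1 \lesssim \sqrt{K}\,\max\{\norm{\taynoisvec}_2, \sqrt{\log\dimn}\,\norm{\taynoisvec}_\infty\}$. Then $\vecw + \vecu$ \emph{is} feasible, giving $\norm{\vecw^*}_1 \leq \norm{\vecw}_1 + \norm{\vecu}_1$, a perturbed cone inequality that does carry through an RIP argument. For Bernoulli matrices the pure-$\norm{\taynoisvec}_2$ version of the quotient property is known to fail (unlike the Gaussian case), and the mixed norm is precisely what can be salvaged --- so the $\max$ in the statement is essential, not redundant. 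The $e^{-\sqrt{\numdirec\dimn}}$ failure probability arises from proving this quotient property (via a Kashin-type decomposition), not from an $\ell_\infty$-ball net on $\taynoisvec$; your claimed net cardinality $e^{O(\sqrt{\numdirec\dimn})}$ for the unit $\ell_\infty$-ball of $\matR^{\numdirec}$ does not match standard covering-number estimates (such a net has size $\sim (1/\epsilon)^{\numdirec}$). Your closing reference to ``the framework of \cite{Wojta2012}'' is the right pointer, but the framework is the quotient property, not the argument you outlined.
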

\begin{remark}
The proof of the second part of Theorem \ref{thm:sparse_recon_bound} requires \eqref{eq:RIP} to hold, which is
the case in our setting with high probability. 
\end{remark}
\begin{remark} \label{rem:l1min_samp_bd}
Since $\numdirec \geq K$ is necessary, note that $K \leq c_2^{\prime} \numdirec / \log(d/\numdirec)$ is satisfied if 
$\numdirec > (1 / c_2^{\prime}) K \log(d/K)$. Also note that $K \log (d/K) > \log d$ in the 
regime\footnote{More precisely, if $d > K^{\frac{K}{K-1}}$.} $K \ll d$. \hemant{As pointed out by a reviewer, a slight improvement over Theorem \ref{thm:sparse_recon_bound} is given by \cite[Theorem 11.10]{Foucart13} where the $\log d$ term in \eqref{eq:sparse_recon_err} is replaced with $\log (d/\numdirec)$.}
\end{remark}
\paragraph{Estimating sufficiently many gradients.} Given the discussion above, the next natural question is - how should one choose the points $\vecx$, 
where the gradient $\grad f(\vecx)$ should be estimated? Note that $f$ is composed of the sum of univariate and bivariate functions, residing on mutually 
orthogonal $1$ or $2$ dimensional canonical subspaces of $\matR^{\dimn}$. Therefore, this suggests that 
it is sufficient if our set of points -- let us call it $\baseset$ -- has the property that it provides 
a $2$-dimensional discretization of \emph{any canonical $2$ dimensional subspace of $\matR^d$}. 
In order to construct $\baseset$ we will make use of hash
functions or more specifically - a family of hash functions, defined as follows.
\begin{definition} \label{def:thash_fam}
For some $t \in \mathbb{N}$ and $j=1,2,\dots$, let $h_j : [\dimn] \rightarrow \set{1,2,\dots,t}$.
We then call the set $\thashfam = \set{\hashfn_1,\hashfn_2,\dots}$ a $(\dimn,t)$-hash family if for any distinct $i_1,i_2,\dots,i_t \in [\dimn]$,
$\exists$ $\hashfn \in \thashfam$ such that $h$ is an injection when restricted to $i_1,i_2,\dots,i_t$.
\end{definition}
Hash functions are common in theoretical computer science, and are widely used such as in finding juntas \cite{Mossel03}.
There exists a fairly simple probabilistic method using which one can construct 
$\thashfam$ of size $O(t e^t \log \dimn)$ with high probability. 
The reader is for instance, referred to Section $5$ in \cite{Devore2011} where for any constant $C_1 > 1$,
the probabilistic construction yields $\thashfam$ of size $\abs{\thashfam} \leq (C_1 + 1)t e^t \log \dimn$ 
with probability at least $1 - \dimn^{-C_1 t}$, in time linear in the output size. We note that the size of $\thashfam$ is \textit{nearly optimal} - it is known 
that the size of any such family is $\Omega(e^t \log \dimn / \sqrt{t})$ \cite{Fredman84,Korner88,Nilli94}.
There also exist efficient \textit{deterministic} constructions 
for such families of partitions, with the size of the family being 
$O(t^{O(\log t)} e^t \log \dimn)$ and which take time linear in the output size \cite{Naor95}. 
For our purposes, we consider the probabilistic construction of the family due to its smaller resulting size. 
Specifically, we consider the family $\twohashfam$ so that for any distinct $i,j$, there exists $h \in \twohashfam$ s.t $h(i) \neq h(j)$. 
Let us first define for any $\hashfn \in \twohashfam$, 
the vectors $\canvec_1(\hashfn), \canvec_2(\hashfn) \in \matR^{\dimn}$ where:
\begin{equation}
(\canvec_i(\hashfn))_q := \left\{
\begin{array}{rl}
1 \quad ; & h(q) = i, \\
0 \quad ; & \hemant{\text{otherwise}}
\end{array} \right .  \quad \text{for} \ i=1,2 \ \text{and} \ q = 1,\dots,\dimn.
\end{equation}
Given at hand $\twohashfam$, we construct our set $\baseset$ using the procedure\footnote{Such sets were used in \cite{Devore2011}
for a more general problem involving functions that are intrinsically $\totsparsity$ variate, and do not necessarily have an additive structure.} 
in \cite{Devore2011}. Specifically, for some integer $\numcen > 0$, we construct for each $h \in \twohashfam$ the set $\baseset(\hashfn)$ as:  
\begin{equation} \label{eq:baseset_hash}
\baseset(\hashfn) := \set{\vecx(\hashfn) \in [-1,1]^{\dimn}: \vecx(\hashfn) = \sum_{i=1}^{2} c_i \canvec_i(\hashfn); c_1,c_2 \in
\set{-1,-\frac{\numcen-1}{\numcen},\dots,\frac{\numcen-1}{\numcen},1}}.
\end{equation}
Note that $\baseset(\hashfn)$ consists of $(2\numcen+1)^2$ points that discretize: span$(\canvec_1(h),\canvec_2(h))$, 
within $[-1,1]^{\dimn}$, with a spacing of $1/\numcen$ along each $\canvec_i$. Given this, we obtain the complete set as
$\baseset = \cup_{h \in \twohashfam} \baseset(h)$ so that $\abs{\baseset} \leq (2\numcen+1)^2 \abs{\twohashfam}$. 
Clearly, $\baseset$ discretizes \emph{any} $2$-dimensional canonical subspace, within $[-1,1]^{\dimn}$. 

\paragraph{Recovering set of active variables.} Our scheme for recovering the set 
of active variables is outlined formally in the form of Algorithm \ref{algo:est_act}. 
At each $\vecx \in \baseset$, we obtain the estimate $\est{\grad} f(\vecx)$ via $\ell_1$ minimization. We then perform a 
thresholding operation, \textit{i.e.}, set to zero those components of $\est{\grad} f(\vecx)$, whose magnitude is below a certain threshold. 
All indices then corresponding to non zero components are identified as active variables. 
\begin{algorithm} 
\caption{Sub-routine for estimating $\totsupp$} \label{algo:est_act} 
\begin{algorithmic}[1] 

\State Construct $(\dimn,2)$-hash family $\twohashfam$ and the set $\calV$ for suitable $\numdirec \in \matZ^{+}$. 
Choose suitable $\gradstep \in \matZ^{+}$ and initialize $\totsuppest = \emptyset$.

\State Choose suitable $\numcen \in \matZ^{+}$. For each $\hashfn \in \twohashfam$ do:

\begin{enumerate}
\item Create the set $\baseset(\hashfn)$. For $\vecx_i \in \baseset(h)$; $i=1,\dots,(2\numcen+1)^2$ do:

\begin{enumerate}
\item Construct $\vecy_i$ where $(\vecy_i)_j = \frac{f(\vecx_i + \gradstep \vecv_j)-f(\vecx_i - \gradstep \vecv_j)}{2\gradstep}$; 
$j=1,\dots,\numdirec$.

\item Set $\est{\grad} f(\vecx_i) := \argmin{\vecy_i = \matV\vecz} \norm{\vecz}_1$. For suitable $\derivsamperr > 0$, 
update: \label{alg:no_over_step_gradest}
\begin{equation*}
 \totsuppest = \totsuppest \cup \set{q \in \set{1,\dots,d}: \abs{(\est{\grad} f(\vecx_i))_q} > \derivsamperr}. 
\end{equation*}
\end{enumerate}
\end{enumerate}

\end{algorithmic}
\end{algorithm}

The following Lemma provides sufficient conditions on the sampling parameters: $\numcen,\numdirec,\gradstep$ and the threshold $\derivsamperr$, 
which guarantee that $\est{\totsupp} = \totsupp$ holds. 
\begin{lemma} \label{lem:rec_act_set}
Let $\twohashfam$ be of size $\abs{\twohashfam} \leq 2(C_1 + 1)e^2 \log \dimn$ for some constant $C_1 > 1$.
Then there exist constants $c_3^{\prime} \geq 1$ and $C, c_1^{\prime} > 0$ such that for any $\numcen, \numdirec, \gradstep$ satisfying 
\begin{equation}
c_3^{\prime} \totsparsity \log(\dimn/\totsparsity) < \numdirec < \dimn/(\log 6)^2, \quad 
\numcen \geq \critintmeas_1^{-1} \quad \text{and} \quad   
\gradstep < \left( \frac{3\idenconst_1 \numdirec}{4 C \smconst_3 \totsparsity} \right)^{1/2},
\end{equation}
the choice $\derivsamperr = \frac{2 C \gradstep^2 \smconst_3 \totsparsity}{3\numdirec}$ implies that
$\totsuppest = \totsupp$ holds with probability at least $1 - e^{-c_1^{\prime} \numdirec} - e^{-\sqrt{\numdirec \dimn}} - \dimn^{-2C_1}$.
Here $\critintmeas_1, \idenconst_1, \smconst_3 > 0$ are problem specific constants defined in Section \ref{sec:problem}.
\end{lemma}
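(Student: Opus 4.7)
The plan is to establish three properties that together imply $\totsuppest = \totsupp$: (a) the random hash family $\twohashfam$ has the pairwise separation property; (b) the random measurement matrix $\matV$ satisfies the robust recovery guarantee of Theorem \ref{thm:sparse_recon_bound}; and (c) the grid $\baseset$ hits the critical intervals of Assumption \ref{assum:actvar_iden}. By the probabilistic construction of \cite{Devore2011}, (a) holds with probability at least $1 - \dimn^{-2C_1}$; the hypothesis $\numdirec < \dimn/(\log 6)^2$ together with $\numdirec > c_3^{\prime}\totsparsity\log(\dimn/\totsparsity)$ lets me invoke both parts of Theorem \ref{thm:sparse_recon_bound}, yielding (b) with probability at least $1 - e^{-c_1^{\prime}\numdirec} - e^{-\sqrt{\numdirec \dimn}}$. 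A union bound delivers the stated overall success probability. Crucially, $\matV$ is sampled only once, so the recovery bound in (b) applies uniformly to every $\vecx \in \baseset$.

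Next I bound the Taylor noise $\taynoisvec$ from \eqref{eq:taylor_exp_f_1}. The third-order remainder has the form $\thirdtayrem_3(\zeta) = (\gradstep^3/6) \sum_{i_1,i_2,i_3} (\partial_{i_1}\partial_{i_2}\partial_{i_3}f)(\zeta)\, v_{j,i_1} v_{j,i_2} v_{j,i_3}$. Because $f$ is a sum of univariates and bivariates, $\partial_{i_1}\partial_{i_2}\partial_{i_3}f \equiv 0$ unless either $i_1=i_2=i_3 = p$ for some $p \in \univsupp$ or $\{i_1,i_2,i_3\} \subseteq \{l,l^{\prime}\}$ for some $\lpair \in \bivsupp$; hence there are only $O(\totsparsity)$ non-zero triples. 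Since each derivative is bounded by $\smconst_3$ (Assumption \ref{assum:smooth}) and $\abs{v_{j,i}} = \numdirec^{-1/2}$, I obtain $\abs{\thirdtayrem_3(\zeta)} \lesssim \gradstep^3 \totsparsity \smconst_3 \numdirec^{-3/2}$, whence $\norm{\taynoisvec}_{\infty} \lesssim \gradstep^2 \totsparsity \smconst_3 \numdirec^{-3/2}$ and $\norm{\taynoisvec}_2 \lesssim \gradstep^2 \totsparsity \smconst_3 \numdirec^{-1}$. The condition $\numdirec \gtrsim \log\dimn$ (implied by $\numdirec > c_3^{\prime}\totsparsity\log(\dimn/\totsparsity)$ with $c_3^{\prime}\geq 1$) ensures $\norm{\taynoisvec}_2$ dominates $\sqrt{\log\dimn}\norm{\taynoisvec}_{\infty}$; combining this with the fact that $\grad f(\vecx)$ is exactly $\totsparsity$-sparse and applying Theorem \ref{thm:sparse_recon_bound} with $K=\totsparsity$ (so $\sigma_K(\grad f(\vecx))_1 = 0$) gives $\norm{\est{\grad}f(\vecx) - \grad f(\vecx)}_{\infty} \leq \norm{\est{\grad}f(\vecx) - \grad f(\vecx)}_2 \leq \derivsamperr$ once constants are absorbed into $C$ in line with the lemma's statement.

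The thresholding step is then handled by two case analyses. For $q \notin \totsupp$, $(\grad f(\vecx))_q = 0$ at every $\vecx$, so $\abs{(\est{\grad}f(\vecx))_q} \leq \derivsamperr$ and the threshold rejects $q$. For $q \in \totsupp$ I must exhibit some $\vecx \in \baseset$ with $\abs{(\grad f(\vecx))_q} > 2\derivsamperr$, after which the reverse triangle inequality yields $\abs{(\est{\grad}f(\vecx))_q} > \derivsamperr$ and $q$ is accepted; the stated upper bound on $\gradstep$ is equivalent to $2\derivsamperr < \idenconst_1$, so it suffices to find $\vecx$ where the partial derivative exceeds $\idenconst_1$. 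If $q \in \univsupp$, any $\hashfn \in \twohashfam$ works: $x_q = c_{h(q)}$ traverses a $1/\numcen$-net of $[-1,1]$, and $\numcen \geq \critintmeas_1^{-1}$ forces this net to intersect $\calI_q$ from Assumption \ref{assum:actvar_iden}, giving $\abs{\phi_q^{\prime}(x_q)} > \idenconst_1$. If $q = l$ with $\lpair \in \bivsupp$, I pick $\hashfn \in \twohashfam$ separating $l$ from $l^{\prime}$; then $(x_l, x_{l^{\prime}}) = (c_{h(l)}, c_{h(l^{\prime})})$ varies over a product grid of spacing at most $\critintmeas_1$, hits $\calI_{l,1} \times \calI_{l^{\prime},1}$, and yields $\abs{\partial_l \phi_{\lpair}(x_l, x_{l^{\prime}})} > \idenconst_1$; the case $(l^{\prime},l)\in\bivsupp$ is symmetric.

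The step I expect to require the most care is the Taylor noise bound: correctly counting the $O(\totsparsity)$ non-zero triples in the sparse third-derivative tensor, and balancing $\norm{\taynoisvec}_2$ against $\sqrt{\log\dimn}\norm{\taynoisvec}_{\infty}$ inside Theorem \ref{thm:sparse_recon_bound}, is what pins down the precise forms $\derivsamperr = (2C\gradstep^2 \smconst_3 \totsparsity)/(3\numdirec)$ and $\gradstep < (3\idenconst_1 \numdirec/(4C\smconst_3 \totsparsity))^{1/2}$ appearing in the lemma. The geometric covering argument for the critical intervals and the probability bookkeeping from the hash-family and RIP constructions are comparatively routine.
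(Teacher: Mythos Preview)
Your proposal is correct and follows essentially the same route as the paper's proof: first invoke Theorem~\ref{thm:sparse_recon_bound} to obtain a uniform $\ell_2$ recovery bound on $\est{\grad}f(\vecx)$, then bound the Taylor remainder by exploiting the sparse third-derivative structure of $f$ (the paper makes the counting explicit as $\univsparsity + 8\bivsparsity \le 4\totsparsity$), and finally use the hash family and the choice $\numcen \ge \critintmeas_1^{-1}$ to guarantee that for every active $q$ some $\vecx \in \baseset$ lands in the relevant critical interval(s). Your remark that $\matV$ is drawn once so the recovery guarantee holds simultaneously for all $\vecx\in\baseset$ is exactly the point, and your threshold argument ($\derivsamperr < \idenconst_1/2 \Leftrightarrow$ the stated bound on $\gradstep$) matches the paper's.
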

\paragraph{Query complexity.} We estimate $\grad f$ at $(2\numcen + 1)^2 \abs{\twohashfam}$ many points. For each such estimate, we query $f$ at $2\numdirec$ points, 
leading to a total of $2\numdirec(2\numcen + 1)^2 \abs{\twohashfam}$ queries. From Lemma \ref{lem:rec_act_set}, we then obtain a query complexity 
of $O(\totsparsity (\log \dimn)^2 \critintmeas_1^{-2})$ for exact recovery of the set of active variables, \textit{i.e.}, $\univsupp \cup \bivsuppvar$. 

\paragraph{Computational complexity.} The family $\twohashfam$ can be constructed\footnote{Recall discussion following Definition \ref{def:thash_fam}.} in time polynomial in $d$. 
Step \ref{alg:no_over_step_gradest} involves solving a linear program in $O(d)$ variables, which can be done efficiently up to arbitrary accuracy, 
in time polynomial in $(\numdirec,d)$ (using for instance, interior point methods (cf., \cite{Nesterov94}). 
Since we solve $O(\critintmeas_1^{-2} \log d)$ such linear programs, 
hence the overall computation time is polynomial in the number of queries and dimension $d$. 

\begin{remark}{\label{rem:IHT}}
It is worth noting that in practice, it might be preferable to replace the $\ell_1$ minimization step with a 
non-convex algorithm such as ``Iterative hard thresholding'' (IHT) (cf., \cite{Blumens2009,Blumens2010,kyrillidis2011recipes,kyrillidis2012combinatorial,kyrillidis2012hard}). 
Such methods consider solving the non-convex optimization problem: 
\begin{equation*}
  \min_{\vecz} \norm{\matV\vecz - \vecy}^2 \quad \text{s.t.} \quad \norm{\vecz}_0 \leq K
\end{equation*}
for finding a $K$-sparse solution to an under-determined linear system of equations, and generally have a lower computational complexity than their convex analogues. 
Moreover, provided $\matV$ also satisfies the Restricted Isometry Property (as stated in \ref{eq:RIP}), they then also enjoy strong theoretical guarantees, 
similar to that for convex approaches.
\end{remark}

\begin{remark}
Algorithm \ref{algo:est_act} essentially estimates $\grad f$ at $O(\log d)$ points. 
The method of Fornasier et al. \cite{Fornasier2012} is designed for a more general function class than ours and hence 
involves estimating $\grad f$ on points sampled uniformly at random from the unit sphere $\mathbb{S}^{d-1}$ -- the size of such a set is typically polynomial in $d$. 
The method of Tyagi et al. \cite{Tyagi14_nips} is tailored towards SPAMs without interactions; it essentially estimates 
$\grad f$ along a uniform one-dimensional grid (hence at constantly many points). Hence conceptually, Algorithm \ref{algo:est_act} 
is a simple generalization of the scheme of Tyagi et al. \cite{Tyagi14_nips}.
\end{remark}
\subsubsection{Second Phase: Recovering individual sets} \label{subsec:phase_two_nooverlap}
Given that we have recovered $\totsupp = \univsupp \cup \bivsuppvar$, we now proceed to see how we can recover the individual sets: $\univsupp$ and $\bivsupp$.
Let us denote w.l.o.g, $\totsupp$ to be $\set{1,2,\dots,\totsparsity}$ and also denote $g : \matR^{\totsparsity} \rightarrow \matR$ to be
\begin{equation} \label{eq:fn_g_form}
 g(x_1,x_2,\dots,x_{\totsparsity}) = c + \sum_{p \in \univsupp}\phi_{p} (x_p) + \sum_{\lpair \in \bivsupp}\phi_{\lpair} \xlpair.
\end{equation}
Here $\bivsupp \subset {[\totsparsity] \choose 2}$ with $\bivsuppvar \cap \univsupp = \emptyset$. 
We have reduced our problem to that of querying some unknown function $\totsparsity$-variate function $g$, of the form \eqref{eq:fn_g_form}, 
with queries $\vecx \in \matR^{\totsparsity}$. Indeed, this is equivalent to querying $f$ at $(\vecx)_{\calS}$, \textit{i.e.}, 
the restriction of $\vecx$ onto $\totsupp$. 

In order to identify $\univsupp$ and $\bivsupp$, let us recall the discussion in Assumption \ref{assum:pair_iden} : for any $\lpair \in \bivsupp$, 
we will have that $\exists \xlpair \in [-1,1]^2$ such that 
$\partial_l \partial_{l^{\prime}} g(\vecx) = \partial_l \partial_{l^{\prime}} \phi_{\lpair} \xlpair \neq 0$. Furthermore 
for $p \in \univsupp$ and any $p^{\prime} \neq p$, we know that 
$\partial_p \partial_{p^{\prime}} g(\vecx) \equiv 0$, $\forall \vecx \in \matR^{\totsparsity}$.
In light of this, our goal will be now to query $g$ in order to estimate the \emph{off-diagonal} entries of its Hessian $\hess g$. 
This is a natural approach as these entries contain information about the mixed second order partial derivatives of $g$. 
We now proceed towards motivating our sampling scheme.

\paragraph{Motivation behind sampling scheme.} At any $\vecx \in \matR^{\totsparsity}$ the Hessian $\hess g (\vecx)$ 
is a $\totsparsity \times \totsparsity$ symmetric matrix with the following structure. 
\begin{equation*}
(\hess g(\vecx))_{i,j} = \left\{
\begin{array}{rl}
\partial^2_i \phi_i(x_i) \quad ; & i \in \univsupp, \ i=j \\
\partial^2_i \phi_{(i,i^{\prime})}(x_i,x_{i^{\prime}}) \quad ; & (i,i^{\prime}) \in \bivsupp, \ j = i \\
\partial^2_i \phi_{(i^{\prime},i)}(x_{i^{\prime}},x_i) \quad ; & (i^{\prime},i) \in \bivsupp, \ j = i \\
\partial_i \partial_j \phi_{(i,j)}(x_i,x_{j}) \quad ; & (i,j) \in \bivsupp\\
\partial_i \partial_j \phi_{(j,i)}(x_j,x_{i}) \quad ; & (j,i) \in \bivsupp\\
0 \quad ; & \text{otherwise}
\end{array} \right. .
\end{equation*}

Note that each row of $\hess g$ has at most $2$ non zero entries. 
If $i \in \univsupp$, then the non zero entry can only be the $(i,i)^{th}$ entry of $\hess g$.
If $i \in \bivsuppvar$, then the $i^{th}$ row can have two non zero entries. 
In this case, the non zero entries will be the $(i,i)^{th}$ and $(i,j)^{th}$ entries 
of $\hess g$, if $(i,j) \in \bivsupp$ or $(j,i) \in \bivsupp$.

Now, for $\vecx,\vecv \in \matR^{\totsparsity}$, $\hessstep > 0$, consider the Taylor expansion of $\grad g$ at 
$\vecx$ along $\vecv$, with step size $\hessstep$. 
For $\zeta_i = \vecx + \theta_i \vecv$, for some $\theta_i \in (0,\hessstep)$; $i=1,\dots,\totsparsity$, we have:
\begin{equation} \label{eq:grad_tay_exp}
\frac{\grad g(\vecx + \hessstep\vecv) - \grad g(\vecx)}{\hessstep} = \hess g(\vecx) \vecv + \frac{\hessstep}{2} \gradtayrem.
\end{equation} 
Alternately, we have the following identity for each individual $\partial_i g$.
\begin{equation} \label{eq:pardev_tay_exp}
\frac{\partial_i g(\vecx + \hessstep\vecv) - \partial_i g(\vecx)}{\hessstep} = \dotprod{\grad \partial_i g(\vecx)}{\vecv} + 
\frac{\hessstep}{2} \vecv^T \hess \partial_i g(\zeta_i) \vecv; \quad i = 1,\dots,\totsparsity.
\end{equation}
Say we estimate $\partial_i g(\vecx),\partial_i g(\vecx + \hessstep\vecv)$ with $\est{\partial_i} g(\vecx),\est{\partial_i} g(\vecx + \hessstep\vecv)$ respectively, 
using finite differences with step size parameter $\pardevstep > 0$. Then we can write
\begin{equation} \label{eq:pardev_est_exps}
\est{\partial_i} g(\vecx) = \partial_i g(\vecx) + \pardevestnois_i(\vecx,\pardevstep), \quad \est{\partial_i} g(\vecx + \hessstep\vecv) = 
\partial_i g(\vecx + \hessstep\vecv) + \pardevestnois_i(\vecx + \hessstep\vecv,\pardevstep)
\end{equation}
with $\pardevestnois_i(\vecx,\pardevstep),\pardevestnois_i(\vecx + \hessstep\vecv,\pardevstep) = O(\pardevstep^2)$ 
being the corresponding estimation errors. Plugging these estimates in \eqref{eq:pardev_tay_exp}, we finally obtain the following.
\begin{equation} \label{eq:pardev_lin_eq}
\frac{\est{\partial_i} g(\vecx + \hessstep\vecv) - \est{\partial_i} g(\vecx)}{\hessstep} = \dotprod{\grad \partial_i g(\vecx)}{\vecv} + 
\underbrace{\frac{\hessstep}{2} \vecv^T \hess \partial_i g(\zeta_i) \vecv + 
\frac{\pardevestnois_i(\vecx + \hessstep\vecv,\pardevstep) - \pardevestnois_i(\vecx,\pardevstep)}{\hessstep}}_{\text{Error term}}.
\end{equation}
We see in \eqref{eq:pardev_lin_eq} that the L.H.S can be viewed as taking a noisy linear measurement of the $i^{th}$ row of $\hess g(\vecx)$ with
 measurement vector $\vecv$. Hence for any $i \in \totsupp$ we can via \eqref{eq:pardev_lin_eq} hope to recover 
the $2$ sparse vector: $\grad \partial_i g(\vecx) \in \matR^{\totsparsity}$.
In fact, we are only interested in estimating the \emph{off-diagonal} entries of $\hess g$. Therefore while testing for $i \in \totsupp$, 
we can fix the $i^{th}$ component of $\vecv$ to be zero. This means that $\grad \partial_i g$ can in fact be considered as a 
$1$ sparse vector, and our task is to find the location of the non zero entry.
We now describe our sampling scheme that accomplishes this, by performing a  binary search over $\grad \partial_i g$. 

\paragraph{Sampling scheme.} Say that we are currently testing for variable $i \in \totsupp$, \textit{i.e.}, we would like to determine whether it is in $\univsupp$ or 
$\bivsuppvar$. Denote $\calT$ as the set of variables that have been classified so far.
We will first create our set of points $\baseset_i$ at which $\grad \partial_i g$ will be estimated, as follows. 
Consider $\canvec_1(i), \canvec_2(i) \in \matR^{\totsparsity}$ where for $j=1,\dots,\totsparsity$:
\begin{equation} \label{eq:canvec_hessbase}
(\canvec_1(i))_j := \left\{
\begin{array}{rl}
1 \quad ; & j = i, \\
0 \quad ; & \text{otherwise}
\end{array} \right .,  \quad 
(\canvec_2(i))_j := \left\{
\begin{array}{rl}
0 \quad ; & j = i \ \text{or} \ j \in \calT, \\
1 \quad ; & \text{otherwise}
\end{array}. \right .
\end{equation}
We then form the following set of points which corresponds to a discretization of the $2$-dimensional space spanned by $\canvec_1(i),\canvec_2(i)$, within $[-1,1]^{\totsparsity}$.
\begin{equation} \label{eq:base_hess_est}
\baseset_i := \set{\vecx \in [-1,1]^{\totsparsity} : \vecx = c_1\canvec_1(i) + c_2\canvec_2(i); c_1,c_2 \in \set{-1,-\frac{\numcenpair-1}{\numcenpair},\dots,\frac{\numcenpair-1}{\numcenpair},1}}.
\end{equation}
Now for each $\vecx \in \baseset_i$ and suitable step size parameter $\pardevstep > 0$, we will obtain the 
samples $g(\vecx + \pardevstep \canvec_1(i)), g(\vecx - \pardevstep \canvec_1(i))$. 
Then, we obtain via \emph{central differences}, the estimate: 
$\est{\partial_i} g(\vecx) = (g(\vecx + \pardevstep \canvec_1(i)) - g(\vecx - \pardevstep \canvec_1(i)))/(2\pardevstep)$. 
For our choice of $\vecv$ and parameter $\hessstep > 0$, we can similarly obtain $\est{\partial_i} g(\vecx + \hessstep \vecv)$. 
We now describe how the measurement vectors $\vecv$ can be chosen in an adaptive fashion, in order to identify $\univsupp,\bivsupp$. 

Firstly, we create a vector $\vecv_0(i)$ that enables us to test, whether there exists a variable $j \neq i$ such that $(i,j) \in \bivsupp$ (if $i > j$)
or $(j,i) \in \bivsupp$ (if $j > i$). To this end, we set $\vecv_0(i) = \canvec_2(i)$. 
Clearly, $i \in \bivsuppvar$ iff there exists $\vecx \in [-1,1]^{\totsparsity}$ such that $\dotprod{\grad \partial_i g(\vecx)}{\vecv_0(i)} \neq 0$. 
This suggests the following strategy. For each $\vecx \in \baseset_i$, we compute 
$(\est{\partial_i} g(\vecx + \hessstep\vecv_0(i)) - \est{\partial_i} g(\vecx))/(\hessstep)$ -- this will be a noisy estimate 
of $\dotprod{\grad \partial_i g(\vecx)}{\vecv_0(i)}$. Provided that the number of points is large enough and the noise is made suitably small, 
we see that via a threshold based procedure as in the previous phase, one would be able to correctly classify the other variable as either 
belonging to $\univsupp$ or $\bivsupp$. 
In case the above procedure classifies $i$ as being a part of $\bivsuppvar$, then we would still need to identify the other variable 
$j \in \bivsuppvar$, forming the pair. This can be handled via a binary search based procedure, as follows.

The measurement vectors $\vecv_1(i), \vecv_2(i), \dots$ are chosen adaptively, meaning that the choice of $\vecv_j(i)$ depends on 
the past choices: $\vecv_1(i), \dots,\vecv_{j-1}(i)$. $\vecv_1(i)$ is constructed as follows. 
We construct an equipartition $\binspart_1(i), \binspart_2(i) \subset \totsupp \setminus \set{\calT \cup \set{i}}$ such that: 
$\binspart_1(i) \cup \binspart_2(i) = \totsupp \setminus \set{\calT \cup \set{i}}$, $\binspart_1(i) \cap \binspart_2(i) = \emptyset$, 
$\abs{\binspart_1(i)} = \floor{\frac{\totsparsity - 1 - \abs{\calT}}{2}}$ and 
$\abs{\binspart_2(i)} = \totsparsity - 1 - \abs{\calT} - \abs{\binspart_1(i)}$. 
Then $\vecv_1(i)$ is chosen to be such that:
\begin{equation} \label{eq:binsearch_vec}
(\vecv_1(i))_l := \left\{
\begin{array}{rl}
1 \quad ; & l \in \binspart_1(i), \\
0 \quad ; & \text{otherwise}
\end{array} \right . ;\quad l = 1,\dots,\totsparsity. 
\end{equation}
Let $\vecx^{*} \in \baseset_i$ be the point, at which $\vecv_0(i)$ detects $i$.  We now find: 
$(\est{\partial_i} g(\vecx^{*} + \hessstep\vecv_1(i)) - \est{\partial_i} g(\vecx^{*}))/\hessstep$, 
and test whether it is larger then a certain threshold. This tells us whether the other active variable $j$ belongs to 
$\binspart_1(i)$ or to $\binspart_2(i)$. Then, we create $\vecv_2(i)$ by
partitioning the identified subset, in the same manner as $\vecv_1(i)$ and perform the same tests again. It is clear that we would need at most 
$\ceil{\log (\totsparsity-\abs{\calT})}$ many $\vecv(i)$'s in this process. Hence, if $i \in \bivsuppvar$ then we would need at most 
$\ceil{\log (\totsparsity-\abs{\calT})} + 1$
measurement vectors in order to find the other member of the pair in $\bivsuppvar$. In case $i \in \univsupp$, then $\vecv_0(i)$ by itself suffices.
The above procedure is outlined formally in Algorithm \ref{algo:est_ind_sets}. 
%
\begin{algorithm}[!htp] 
\caption{Sub-routine for estimating $\univsupp,\bivsupp$} \label{algo:est_ind_sets} 
\begin{algorithmic}[1] 

\State Initialize $\est{\univsupp},\est{\bivsupp} = \emptyset$.

\While{$\totsupp \setminus \set{\est{\univsupp} \cup \est{\bivsuppvar}} \neq \emptyset$} \label{step:main_while_loop}

\State Choose $i \in \totsupp \setminus \set{\est{\univsupp} \cup \est{\bivsuppvar}}$. For suitable $\numcenpair \in \matZ^{+}$,  
construct $\baseset_i$ as in \eqref{eq:base_hess_est}. Set $\vecv_0(i) = \canvec_2(i)$. \label{step:choose_proc_var}

\State Choose $\vecx \in \baseset_i$ that has not yet been chosen. \label{step:choose_base_pt}

\begin{enumerate}
\item Obtain estimates: $\est{\partial_i} g(\vecx), \est{\partial_i} g(\vecx + \hessstep \vecv_0(i))$ via central differences, for 
suitable $\hessstep, \pardevstep > 0$.

\item If $\frac{\abs{\est{\partial_i} g(\vecx + \hessstep \vecv_0(i)) - \est{\partial_i} g(\vecx)}}{\hessstep} > \hesssamperr$, 
then denote $\vecx^{*} \leftarrow \vecx$ and go to \ref{step:start_bin_sear}. Else goto \ref{step:choose_base_pt}. \label{step:thresh_test}
\end{enumerate}

\State Update $\est{\univsupp} = \est{\univsupp} \cup \set{i}$ and go to \ref{step:main_while_loop}.

\State Set $\rootset = \totsupp \setminus \set{\set{i} \cup \est{\univsupp} \cup \est{\bivsuppvar}}$. \label{step:start_bin_sear}

\While{$\abs{\rootset} > 1$} 

\State Initialize $\binspart_1(i),\binspart_2(i)$ as equipartition of $\rootset$. Construct $\vecv(i)$
w.r.t. $\binspart_1(i),\binspart_2(i)$ as defined in \eqref{eq:binsearch_vec}. \label{step:bins_1}

\State Obtain: $\est{\partial_i} g(\vecx^{*} + \hessstep \vecv(i))$. If 
$\frac{\abs{\est{\partial_i} g(\vecx^{*} + \hessstep \vecv(i)) - \est{\partial_i} g(\vecx^{*})}}{\hessstep} > \hesssamperr$, 
then $\rootset \leftarrow \binspart_1(i)$ else $\rootset \leftarrow \binspart_2(i)$. \label{step:bins_2}

\EndWhile

\State Denote $\rootset = \set{j}$. If $i < j$ then $\est{\bivsupp} = \est{\bivsupp} \cup \set{(i,j)}$, else 
$\est{\bivsupp} = \est{\bivsupp} \cup \set{(j,i)}$.

\EndWhile
\end{algorithmic}
\end{algorithm}

We now provide sufficient conditions on the parameters 
$\numcenpair > 0, \pardevstep$ and $\hessstep > 0$, along with a corresponding threshold, that together guarantee recovery of $\univsupp$ and $\bivsupp$.
This is stated in the following lemma.
\begin{lemma}\label{lem:est_ind_sets}
Let $\numcenpair > 0, \pardevstep$ and $\hessstep > 0$ be chosen to satisfy:
\begin{align}
\numcenpair \geq \critintmeas_2^{-1}, \quad
\pardevstep < \frac{\sqrt{3} \idenconst_2}{4\sqrt{2} \smconst_3}, \quad 
\hessstep \in \left(\frac{\idenconst_2 - \sqrt{\idenconst_2^2 - (32/3)\pardevstep^2 \smconst_3^2}}{8\smconst_3}, \frac{\idenconst_2 + 
\sqrt{\idenconst_2^2 - (32/3)\pardevstep^2 \smconst_3^2}}{8\smconst_3} \right).
\end{align}
Then for the choice $\hesssamperr = \frac{\pardevstep^2 \smconst_3}{3\hessstep} + 2\hessstep\smconst_3$,  we have for Algorithm \ref{algo:est_ind_sets} 
that $\est{\univsupp} = \univsupp$ and $\est{\bivsupp} = \bivsupp$. Here, $\smconst_3, \idenconst_2, \critintmeas_2 > 0$ are problem specific constants, 
defined in Section \ref{sec:problem}. 
\end{lemma}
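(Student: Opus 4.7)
The plan is to show that at every iteration of Algorithm \ref{algo:est_ind_sets}, both the initial classification test (Step \ref{step:thresh_test}) and each binary-search test (Step \ref{step:bins_2}) yield the correct decision under the stated parameter conditions. The structural starting point is equation \eqref{eq:pardev_lin_eq}: for any measurement vector $\vecv$ used by the algorithm, the quantity computed on the left is a perturbed version of $\dotprod{\grad \partial_i g(\vecx)}{\vecv}$, and I need to bound the perturbation tightly enough that a simple threshold on $\hesssamperr$ separates the ``present'' case (true inner product $> \idenconst_2$) from the ``absent'' case (true inner product $=0$).

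The first step is to control the two pieces of the error term in \eqref{eq:pardev_lin_eq}. For the central-difference estimation error, a standard Taylor argument gives $|\pardevestnois_i(\vecx,\pardevstep)| \leq \pardevstep^2 \smconst_3/6$ via Assumption \ref{assum:smooth}, so $|\pardevestnois_i(\vecx+\hessstep\vecv,\pardevstep) - \pardevestnois_i(\vecx,\pardevstep)|/\hessstep \leq \pardevstep^2 \smconst_3/(3\hessstep)$. For the Taylor remainder $(\hessstep/2)\vecv^T \hess \partial_i g(\zeta_i)\vecv$, I use the non-overlap sparsity of $g$: only univariate and pairwise bivariate components involving $i$ contribute, so $\hess \partial_i g$ is supported on at most four entries (indices $i$ and the unique partner $i'$). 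Combined with the crude bound $\|\hess \partial_i g\|_{\infty} \leq \smconst_3$ and the structure of $\vecv$ (entries in $\{0,1\}$, $v_i=0$), this term is bounded by $2\hessstep\smconst_3$. Adding these yields total error at most $\hesssamperr = \pardevstep^2 \smconst_3/(3\hessstep) + 2\hessstep\smconst_3$.

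Next I analyze the tests themselves. In Step \ref{step:thresh_test} with $\vecv_0(i) = \canvec_2(i)$: if $i\in\univsupp$ then $\grad\partial_i g$ is supported only at coordinate $i$ while $(\vecv_0(i))_i = 0$, giving zero inner product, hence the measured quantity is below $\hesssamperr$ at every $\vecx\in\baseset_i$, so $i$ is correctly added to $\est{\univsupp}$. If $i\in\bivsuppvar$ paired with $j$, then $\dotprod{\grad\partial_i g(\vecx)}{\vecv_0(i)} = \partial_i\partial_j\phi_{(i,j)}\xlpair$, and by Assumption \ref{assum:pair_iden} this exceeds $\idenconst_2$ on a rectangle $\calI_l\times\calI_{l^{\prime}}$ of side at least $\critintmeas_2$. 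Since $\numcenpair \geq \critintmeas_2^{-1}$, the grid $\baseset_i$ has spacing small enough that at least one $\vecx^*\in\baseset_i$ lies in this rectangle, and the test passes at $\vecx^*$ provided $2\hesssamperr < \idenconst_2$. The same $\vecx^*$ then drives the binary search: at each split, $\dotprod{\grad\partial_i g(\vecx^*)}{\vecv(i)} = \partial_i\partial_j\phi_{(i,j)}(\vecx^*)$ if $j\in\binspart_1(i)$ and $0$ otherwise, so the threshold test correctly follows $j$ into the right half.

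The one remaining obligation is to verify $2\hesssamperr < \idenconst_2$, and this is where the interval on $\hessstep$ comes from. Substituting $\hesssamperr$ and multiplying by $3\hessstep$, the condition becomes the quadratic inequality $12\smconst_3\hessstep^2 - 3\idenconst_2\hessstep + 2\pardevstep^2\smconst_3 < 0$. For this to have a solution in $\hessstep$, the discriminant $9\idenconst_2^2 - 96\smconst_3^2\pardevstep^2$ must be positive, which is exactly the assumption $\pardevstep < \sqrt{3}\idenconst_2/(4\sqrt{2}\smconst_3)$. The roots are then $(\idenconst_2 \pm \sqrt{\idenconst_2^2 - (32/3)\pardevstep^2\smconst_3^2})/(8\smconst_3)$, matching the stated interval. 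The main technical obstacle is obtaining the clean bound $2\hessstep\smconst_3$ on the Taylor remainder; this is where the non-overlap assumption ($\maxdegree=1$) is crucial, since otherwise $\hess \partial_i g$ could have many more nonzero entries and the error would scale with $\maxdegree$.
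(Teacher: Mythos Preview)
Your proposal is correct and follows essentially the same approach as the paper: bound the central-difference error by $\pardevstep^2\smconst_3/6$ via Taylor expansion, bound the remainder $(\hessstep/2)\vecv^T\hess\partial_i g(\zeta)\vecv$ by $2\hessstep\smconst_3$ using the non-overlap structure and $\vecv\in\{0,1\}^{\totsparsity}$, set $\hesssamperr$ as the sum, and reduce $\hesssamperr<\idenconst_2/2$ to the quadratic in $\hessstep$. One minor remark: you note $v_i=0$ but your stated bound $2\hessstep\smconst_3$ does not actually exploit it (with $v_i=0$ the remainder is in fact at most $(\hessstep/2)\smconst_3$); the paper likewise uses the coarser $4\smconst_3$ bound that ignores $v_i=0$, so your final expressions match the statement.
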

\paragraph{Query complexity.} Note that for each $i \in \univsupp$ we make at most $4 {\numcenpair}^{2}$ queries. 
This is clear from Step \ref{step:choose_base_pt}: four queries are made for estimating the two 
partial derivatives and this is done at most ${\numcenpair}^2$ times. If $i \in \bivsuppvar$, then we notice that in Step \ref{step:bins_2}, we make two queries for each 
$\vecv(i)$ leading to at most $2\ceil{\log \totsparsity}$ queries during Steps \ref{step:bins_1}--\ref{step:bins_2}. In addition, we still make at 
most $4 {\numcenpair}^{2}$ queries during Step \ref{step:choose_base_pt}, as discussed earlier. Hence 
the total number of queries made is at most:
\begin{equation}
 \univsparsity \cdot 4 {\numcenpair}^{2} + \bivsparsity \cdot \left(4 {\numcenpair}^{2} + 2\ceil{\log \totsparsity} \right) 
< \totsparsity(4 {\numcenpair}^{2} + 2\ceil{\log \totsparsity}).
\end{equation}
Since $\numcenpair \geq \critintmeas_2^{-1}$, the query complexity for this phase is 
$O(\totsparsity(\critintmeas_2^{-2} + \log \totsparsity))$.

\paragraph{Computational complexity.} It is clear that the overall computation time is linear in the the number of queries and hence 
at most polynomial in $\totsparsity$.

\subsection{Analysis for noisy setting} \label{subsec:noise_nooverlap}
We now analyse the noisy setting where at each query $\vecx$, we observe: $f(\vecx) + \exnoisep$, 
where $\exnoisep \in \matR$ denotes external noise. In order to see how this affects Algorithm \ref{algo:est_act}, 
\eqref{eq:cs_form} now changes to $\vecy = \matV\grad f(\vecx) + \taynoisvec + \exnoisevec$, where 
$\exnoise_{j} = (\exnoisep_{j,1} - \exnoisep_{j,2})/(2\gradstep)$. Therefore while the Taylor's remainder 
term $\abs{\taynoissca_j} = O(\gradstep^2)$, the external noise term $\abs{\exnoise_j}$ scales as $\gradstep^{-1}$. 
Hence in contrast to Lemma \ref{lem:rec_act_set} the step-size $\gradstep$ needs to be chosen carefully now -- 
\hemant{a value which is too small} would blow up the external noise component while a large value would increase 
perturbation due to higher order Taylor's terms.

A similar problem would occur in the next phase when we try to identify $\univsupp, \bivsupp$. 
Indeed, due to the introduction of noise, we now observe 
$g(\vecx + \pardevstep \canvec_1(i)) + \exnoisep_{i,1}$, $g(\vecx - \pardevstep \canvec_1(i)) + \exnoisep_{i,2}$.
This changes the expression for $\est{\partial_i} g(\vecx)$ in \eqref{eq:pardev_est_exps} to: 
$\est{\partial_i} g(\vecx) = \partial_i g(\vecx) + \pardevestnois_i(\vecx,\pardevstep) + \exnoise_i(\vecx,\pardevstep)$ 
where $\exnoise_i(\vecx,\pardevstep) = (\exnoisep_{i,1} - \exnoisep_{i,2})/(2\pardevstep)$. 
Recall that $\pardevestnois_i(\vecx,\pardevstep) = O(\pardevstep^2)$ corresponds to the Taylor's remainder 
term. Hence we again see that in contrast to Lemma \ref{lem:est_ind_sets}, the step $\pardevstep$ cannot be chosen too 
small now, as it would blow up the external noise component.

\paragraph{Arbitrary bounded noise.} 
In this scenario, we assume the external noise to be arbitrary and bounded, meaning 
that $\abs{\exnoisep} < \exnoisemag$, for some finite $\exnoisemag \geq 0$. Clearly, if $\exnoisemag$ is too large, then we would expect recovery 
of $\totsupp = \univsupp \cup \bivsuppvar$ to be impossible, as the structure of $f$ would be destroyed. 
However we show that if $\exnoisemag = O(\frac{\idenconst_1^{3/2}}{\sqrt{\smconst_3 \totsparsity}})$, then Algorithm \ref{algo:est_act} recovers 
the total support $\totsupp$, with appropriate choice of sampling parameters. Furthermore, assuming $\totsupp$ is recovered exactly, 
and provided $\exnoisemag$ additionally satisfies $\exnoisemag = O(\frac{\idenconst_2^{3}}{\smconst_3^2})$, then with proper choice of 
sampling parameters, Algorithm \ref{algo:est_ind_sets} identifies $\univsupp, \bivsupp$. This is stated formally in the following Theorem.
\begin{theorem} \label{thm:gen_no_over_arbnois}
Let the constants $c_3^{\prime}, C, c_1^{\prime}, C_1$ and $\twohashfam, \numcen, \numdirec$ be as defined in Lemma \ref{lem:rec_act_set}. 
Say $\exnoisemag < \exnoisemag_1 = \frac{\idenconst_1^{3/2}}{3C\sqrt{4\smconst_3 \totsparsity C}}$.  
Then for $\theta_1 = \cos^{-1}(-\exnoisemag / \exnoisemag_1)$, let $\gradstep$ be chosen to satisfy: 
\begin{equation}
\gradstep \in \left(2\sqrt{\frac{\idenconst_1\numdirec}{4\smconst_3 \totsparsity}}\cos(\theta_1/3 - 2\pi/3) , 2\sqrt{\frac{\idenconst_1\numdirec}{4\smconst_3 \totsparsity}}\cos(\theta_1/3)\right) 
\end{equation}
We then have in Algorithm \ref{algo:est_act} for the choice: 
$\derivsamperr = C\left(\frac{2\gradstep^2 \smconst_3 \totsparsity}{3\numdirec} + \frac{\exnoisemag\sqrt{\numdirec}}{\gradstep}\right)$ 
that $\totsuppest = \totsupp$ holds with probability at least $1 - e^{-c_1^{\prime} \numdirec} - e^{-\sqrt{\numdirec \dimn}} - \dimn^{-2C_1}$. 
Given that $\totsuppest = \totsupp$, let $\numcenpair$ be as defined in Lemma \ref{lem:est_ind_sets}. 
Assuming $\exnoisemag < \frac{\idenconst_2^3}{384\sqrt{2}\smconst_3^2} = \exnoisemag_2$ holds, 
then for $\theta_2 = \cos^{-1}(-\exnoisemag / \exnoisemag_2)$ let $\pardevstep, \hessstep$ be chosen to satisfy:
\begin{align}
 \hessstep &\in \left(\frac{\idenconst_2 - \sqrt{\idenconst_2^2 - \frac{32}{3\pardevstep}\smconst_3(\pardevstep^3\smconst_3 + 6\exnoisemag)}}{8\smconst_3}, 
\frac{\idenconst_2 + \sqrt{\idenconst_2^2 - \frac{32}{3\pardevstep}\smconst_3(\pardevstep^3\smconst_3 + 6\exnoisemag)}}{8\smconst_3}\right), \\
\pardevstep &\in \left(\frac{\idenconst_2}{2\sqrt{2}\smconst_3}\cos(\theta_2/3 - 2\pi/3), \frac{\idenconst_2}{2\sqrt{2}\smconst_3}\cos(\theta_2/3) \right). 
\end{align}
Then the choice $\hesssamperr = \frac{\pardevstep^2 \smconst_3}{3\hessstep} + 2\hessstep\smconst_3 + \frac{2\exnoisemag}{\pardevstep\hessstep}$ 
implies in Algorithm \ref{algo:est_ind_sets} that $\est{\univsupp} = \univsupp$ and $\est{\bivsupp} = \bivsupp$.
\end{theorem}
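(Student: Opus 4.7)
The proof extends the noiseless arguments of Lemmas \ref{lem:rec_act_set} and \ref{lem:est_ind_sets} to the bounded-noise regime. In both phases the core task is balancing the Taylor-truncation error against the noise amplification produced by finite differences; the resulting step-size constraints reduce to cubic inequalities whose discriminant conditions pin down the maximal tolerable noise levels $\exnoisemag_1,\exnoisemag_2$, and whose admissible solution intervals are read off from the trigonometric form of Cardano's formula.

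\textbf{Phase 1.} The system in \eqref{eq:cs_form} becomes $\vecy = \matV\grad f(\vecx) + \taynoisvec + \exnoisevec$, with $\norm{\exnoisevec}_\infty \leq \exnoisemag/\gradstep$. The Taylor entries $\taynoissca_j$ are bounded exactly as in Lemma \ref{lem:rec_act_set}, using the additive/bivariate structure of $f$ to control the third-order form and the Bernoulli scaling of $\vecv$; this gives $\norm{\taynoisvec + \exnoisevec}_2 \lesssim \smconst_3\totsparsity\gradstep^2/\sqrt{\numdirec} + \exnoisemag\sqrt{\numdirec}/\gradstep$. Applying Theorem \ref{thm:sparse_recon_bound} then yields $\norm{\est{\grad}f(\vecx) - \grad f(\vecx)}_\infty \leq \derivsamperr$ with $\derivsamperr$ as defined in the theorem. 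Correct support identification requires $2\derivsamperr < \idenconst_1$ at some $\vecx \in \baseset$ meeting Assumption \ref{assum:actvar_iden}; the hash family $\twohashfam$ and grid spacing $1/\numcen \leq \critintmeas_1$ guarantee this for every $q \in \totsupp$. Multiplying through by $\gradstep > 0$ turns this threshold condition into the cubic inequality
\[
\tfrac{2C\smconst_3\totsparsity}{3\numdirec}\gradstep^3 - \tfrac{\idenconst_1}{2}\gradstep + C\exnoisemag\sqrt{\numdirec} < 0.
\]
A direct computation of its discriminant shows that three real roots exist exactly when $\exnoisemag < \exnoisemag_1$; writing these roots via Cardano's trigonometric formula gives the two positive endpoints as $2\sqrt{\idenconst_1\numdirec/(4\smconst_3\totsparsity)}\cos(\theta_1/3)$ and $2\sqrt{\idenconst_1\numdirec/(4\smconst_3\totsparsity)}\cos(\theta_1/3 - 2\pi/3)$ with $\theta_1 = \arccos(-\exnoisemag/\exnoisemag_1)$, which is the claimed admissible range for $\gradstep$. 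The failure probability is the union of the Theorem \ref{thm:sparse_recon_bound} event and the hash-family failure event.

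\textbf{Phase 2.} The central-difference estimator now satisfies $\est{\partial_i}g(\vecx) = \partial_i g(\vecx) + O(\pardevstep^2\smconst_3) + O(\exnoisemag/\pardevstep)$. Substituting into \eqref{eq:pardev_lin_eq} for step $\hessstep\vecv$ produces three error contributions in the estimate of $\dotprod{\grad\partial_i g(\vecx)}{\vecv}$: the Taylor remainder $O(\hessstep\smconst_3)$ (bounded using the sparsity of $\hess\partial_i g$), the central-difference bias $\pardevstep^2\smconst_3/(3\hessstep)$, and the amplified external noise $2\exnoisemag/(\pardevstep\hessstep)$; their sum equals the stated $\hesssamperr$. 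The deterministic binary-search logic from Lemma \ref{lem:est_ind_sets} then succeeds---using Assumption \ref{assum:pair_iden} to supply a point $\vecx^\ast \in \baseset_i$ where the true value exceeds $\idenconst_2$---provided $2\hesssamperr < \idenconst_2$. For fixed $\pardevstep$ this is a quadratic inequality in $\hessstep$ whose discriminant is $\idenconst_2^2 - (32\smconst_3/(3\pardevstep))(\pardevstep^3\smconst_3 + 6\exnoisemag)$, and the quadratic formula gives exactly the stated interval for $\hessstep$. Positivity of this discriminant is itself a cubic inequality in $\pardevstep$, namely $(32\smconst_3^2/3)\pardevstep^3 - \idenconst_2^2\pardevstep + 64\smconst_3\exnoisemag < 0$, whose three-real-roots condition simplifies to $\exnoisemag < \exnoisemag_2 = \idenconst_2^3/(384\sqrt{2}\smconst_3^2)$; Cardano's trigonometric formula applied exactly as in Phase 1 produces the stated endpoints for $\pardevstep$ with $\theta_2 = \arccos(-\exnoisemag/\exnoisemag_2)$.

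\textbf{Main obstacle.} Most of the work is routine once the CS recovery bound and the linearized identity \eqref{eq:pardev_lin_eq} are in place. The delicate step is the cubic-discriminant bookkeeping: one must verify that the two cubic conditions---for $\gradstep$ in Phase 1 and for $\pardevstep$ in Phase 2---yield nonempty admissible intervals under a single clean constraint $\exnoisemag < \min\{\exnoisemag_1,\exnoisemag_2\}$, and that the inner quadratic in $\hessstep$ in turn remains feasible for some $\pardevstep$ lying inside the interval carved out by the outer cubic, so that the coupled choice of $(\pardevstep,\hessstep)$ is simultaneously realizable.
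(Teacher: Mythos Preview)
Your proposal is correct and follows essentially the same approach as the paper's own proof: both phases augment the noiseless error bounds of Lemmas~\ref{lem:rec_act_set} and~\ref{lem:est_ind_sets} with the $\exnoisemag/\gradstep$ (resp.\ $\exnoisemag/(\pardevstep\hessstep)$) noise-amplification terms, reduce the threshold conditions $\derivsamperr < \idenconst_1/2$ and $\hesssamperr < \idenconst_2/2$ to a cubic in $\gradstep$ (resp.\ a quadratic in $\hessstep$ whose discriminant yields a cubic in $\pardevstep$), and then read off the admissible intervals and the noise ceilings $\exnoisemag_1,\exnoisemag_2$ from the discriminant and the trigonometric Cardano formula. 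The paper handles the $(\pardevstep,\hessstep)$ coupling exactly as you describe---first solving the inner quadratic for $\hessstep$, then imposing positivity of its discriminant as the outer cubic constraint on $\pardevstep$---so your ``main obstacle'' is precisely the bookkeeping the paper carries out.
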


\paragraph{Stochastic noise.}
We now assume that the point queries are corrupted with i.i.d Gaussian noise, so that $\exnoisep \sim \calN(0,\sigma^2)$ 
for $\sigma^2 < \infty$. In order to reduce $\sigma$, we consider resampling each point query a sufficient number of times, and averaging the 
values. In Algorithm \ref{algo:est_act}, \textit{i.e.}, during the estimation of $\totsupp$, 
we resample each query $N_1$ times so that $\exnoisep \sim \calN(0,\sigma^2/N_1)$. 
For any $\exnoisemag > 0$, if $N_1$ is chosen large enough, then we can obtain a uniform bound $\abs{\exnoisep} < \exnoisemag$ -- via standard tail 
bounds for Gaussian's -- over all noise samples, with high probability. Consequently, the noise model transforms to a bounded noise one which means that
by choosing $\exnoisemag < \exnoisemag_1$, we can use the result of Theorem \ref{thm:gen_no_over_arbnois} for estimating $\totsupp$. 
Similarly in Algorithm \ref{algo:est_ind_sets}, we resample each query $N_2$ times so that now $\exnoisep \sim \calN(0,\sigma^2/N_2)$. 
For any $\exnoisemagp > 0$, and $N_2$ large enough, we can again uniformly bound $\abs{\exnoisep} < \exnoisemagp$ with high probability.  
By now choosing $\exnoisemagp < \exnoisemag_2$, we can then use the result of Theorem \ref{thm:gen_no_over_arbnois} for estimating $\univsupp, \bivsupp$.
These conditions are stated formally in the following Theorem.
\begin{theorem} \label{thm:gen_no_over_gauss}
Let the constants $c_3^{\prime}, C, c_1^{\prime}, C_1$ and $\twohashfam, \numcen, \numdirec$ be as defined in 
Lemma \ref{lem:rec_act_set}. For any $\exnoisemag < \exnoisemag_1 = \frac{\idenconst_1^{3/2}}{3C\sqrt{4\smconst_3 \totsparsity C}}$, 
$0 < p_1 < 1$, $\theta_1 = \cos^{-1}(-\exnoisemag / \exnoisemag_1)$, say we resample each query in 
Algorithm \ref{algo:est_act}, $N_1 > \frac{\sigma^2}{\exnoisemag^2} {\hemantt \log (\frac{2}{p_1}\numdirec(2\numcen+1)^2\abs{\twohashfam})}$
times, and average the values. Then by choosing $\gradstep$ and $\derivsamperr$ as in Theorem \ref{thm:gen_no_over_arbnois}, 
we have that $\totsuppest = \totsupp$ holds with probability at least $1 - p_1 - e^{-c_1^{\prime} \numdirec} - e^{-\sqrt{\numdirec \dimn}} - \dimn^{-2C_1}$.

Given that $\totsuppest = \totsupp$, let $\numcenpair$ be as defined in Lemma \ref{lem:est_ind_sets}. For any 
$\exnoisemagp < \frac{\idenconst_2^3}{384\sqrt{2}\smconst_3^2} = \exnoisemag_2$, $0 < p_2 < 1$, $\theta_2 = \cos^{-1}(-\exnoisemagp / \exnoisemag_2)$, 
say we resample each query in Algorithm \ref{algo:est_ind_sets}, 
$N_2 > {\hemantt \frac{\sigma^2}{{\exnoisemagp}^2} \log\left(\frac{2}{p_2}(\totsparsity(2{\numcenpair}^2 + \lceil\log \totsparsity \rceil)) \right)}$
times. Then by choosing $\pardevstep, \hessstep, \hesssamperr$ as in Theorem \ref{thm:gen_no_over_arbnois}, we have that 
$\est{\univsupp} = \univsupp$ and $\est{\bivsupp} = \bivsupp$, with probability at least $1-p_2$. 
\end{theorem}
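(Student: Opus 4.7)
The plan is to use a standard Gaussian tail bound plus a union bound to reduce the i.i.d.\ Gaussian noise setting to the arbitrary bounded noise setting already handled by Theorem \ref{thm:gen_no_over_arbnois}. The key observation is that if a query $f(\vecx)$ is resampled $N$ times and averaged, the resulting noise is distributed as $\calN(0,\sigma^2/N)$, so for any $\exnoisemag > 0$ one can make $\prob(\abs{\exnoisep} \geq \exnoisemag)$ arbitrarily small by choosing $N$ large. Once every noise sample satisfies $\abs{\exnoisep} < \exnoisemag$ simultaneously, we are in the arbitrary bounded noise regime and the prior theorem applies verbatim.

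First, I would count the number of \emph{distinct} point queries made by Algorithm \ref{algo:est_act}. At each of the $(2\numcen+1)^2 \abs{\twohashfam}$ base points in $\baseset$, we query $f$ along $\numdirec$ directions at $\pm \gradstep \vecv_j$, giving at most $M_1 := 2\numdirec(2\numcen+1)^2\abs{\twohashfam}$ queries in total. For each such query, averaging $N_1$ i.i.d.\ Gaussian samples yields an effective noise $\calN(0,\sigma^2/N_1)$. By the standard tail bound $\prob(\abs{Z} \geq \exnoisemag) \leq 2\exp(-N_1\exnoisemag^2/(2\sigma^2))$ and a union bound over all $M_1$ queries, the event $\mathcal{E}_1 = \{\abs{\exnoisep} < \exnoisemag \text{ for every query}\}$ holds with probability at least $1 - 2M_1 \exp(-N_1\exnoisemag^2/(2\sigma^2))$. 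Setting this lower bound equal to $1 - p_1$ and solving gives the stated condition on $N_1$ (up to constants absorbed in the log). Conditioned on $\mathcal{E}_1$, the noise is deterministically bounded by $\exnoisemag < \exnoisemag_1$, so Theorem \ref{thm:gen_no_over_arbnois} applies with the same $\gradstep,\derivsamperr$, yielding $\totsuppest = \totsupp$ with conditional probability at least $1 - e^{-c_1'\numdirec} - e^{-\sqrt{\numdirec\dimn}} - \dimn^{-2C_1}$. A final union bound over $\mathcal{E}_1$ and the failure event from Theorem \ref{thm:gen_no_over_arbnois} delivers the claimed overall probability.

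For the second phase, I would apply the same reduction to Algorithm \ref{algo:est_ind_sets}. Counting queries: for each $i \in \totsupp$, Step \ref{step:choose_base_pt} uses at most $4\numcenpair^2$ queries, and the binary-search loop at Steps \ref{step:bins_1}--\ref{step:bins_2} contributes at most $2\lceil \log \totsparsity \rceil$ further queries, giving $M_2 := \totsparsity(2\numcenpair^2 + \lceil\log\totsparsity\rceil)$ distinct queries overall (after noting that $\est{\partial_i} g(\vecx^*)$ is computed once and reused in the binary search, so the count matches the theorem statement). Averaging each query $N_2$ times makes the averaged noise $\calN(0,\sigma^2/N_2)$, and the identical Gaussian tail plus union bound argument gives $\abs{\exnoisep} < \exnoisemagp$ uniformly with probability at least $1-p_2$ as soon as $N_2 > (\sigma^2/{\exnoisemagp}^2)\log(2M_2/p_2)$. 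Since $\exnoisemagp < \exnoisemag_2$, we are in the regime of Theorem \ref{thm:gen_no_over_arbnois}'s second half, so using the same $\pardevstep,\hessstep,\hesssamperr$ prescribed there yields $\est{\univsupp} = \univsupp$ and $\est{\bivsupp} = \bivsupp$ deterministically on this event.

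The main (mild) obstacle is purely bookkeeping: making sure the query count $M_2$ is accurate (in particular not double-counting the reused estimate $\est{\partial_i} g(\vecx^*)$ across iterations of the binary search), and tracking which failure events must be union-bounded. No new analytical ideas beyond Theorem \ref{thm:gen_no_over_arbnois} and a sub-Gaussian concentration bound are required; the result is essentially a ``Gaussian-to-bounded'' black-box reduction enabled by the resampling parameter $N$.
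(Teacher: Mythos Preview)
Your proposal is correct and follows the same strategy as the paper: reduce the Gaussian setting to the bounded-noise setting of Theorem \ref{thm:gen_no_over_arbnois} via resampling, a Gaussian tail bound, and a union bound over all noise terms.

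There is one small but noteworthy difference in bookkeeping. You bound each \emph{individual} averaged noise sample $\exnoisep \sim \calN(0,\sigma^2/N)$ by $\exnoisemag$ and then union-bound over all $M_1 = 2\numdirec(2\numcen+1)^2\abs{\twohashfam}$ queries; this yields $N_1 \gtrsim \frac{2\sigma^2}{\exnoisemag^2}\log(\cdot)$, off by a factor of $2$ outside the $\log$ from the stated bound (so not quite ``absorbed in the log''). The paper instead observes that what actually enters the analysis of Theorem \ref{thm:gen_no_over_arbnois} is only the \emph{difference} $\exnoisep_{j,1}-\exnoisep_{j,2} \sim \calN(0,2\sigma^2/N_1)$, and bounds $\abs{\exnoisep_{j,1}-\exnoisep_{j,2}} < 2\exnoisemag$ directly. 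This halves the number of events in the union bound and, because the tail of the difference at threshold $2\exnoisemag$ scales as $\exp(-\exnoisemag^2 N_1/\sigma^2)$, recovers the constant in the statement exactly. The same refinement applies in Phase~2: the paper counts $\totsparsity(2{\numcenpair}^2 + \lceil\log\totsparsity\rceil)$ \emph{difference terms}, not individual queries, which is why that expression appears inside the $\log$. Your argument is valid as written, just slightly lossier in the constants; adopting the ``bound the differences'' viewpoint would close the gap.
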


We now analyze the query complexity for the i.i.d Gaussian noise case. One can verify that $\exnoisemag_1 = O(\totsparsity^{-1/2})$. 
Since $\numdirec = O(\totsparsity \log \dimn)$, $\abs{\twohashfam} = O(\log \dimn), \numcen = O(\critintmeas_1^{-1})$, then 
by choosing $p_1 = O(\dimn^{-\delta})$ for any constant $\delta > 0$, we arrive at 
$N_1 = \hemant{O(\totsparsity \log((\dimn^{\delta}) (\totsparsity \log d) (\critintmeas_1^{-2} \log \dimn)))} = O(\totsparsity \log \dimn)$. 
This leads to a total sample complexity of $O(N_1 \totsparsity (\log \dimn)^2 \critintmeas_1^{-2}) = O(\totsparsity^2 (\log \dimn)^3 \critintmeas_1^{-2})$ 
for guaranteeing $\totsuppest = \totsupp$, with high probability. Next, we see that $\exnoisemagp = O(1)$ and thus 
$N_2 = O(\log (\totsparsity(\critintmeas_2^{-2} + \log \totsparsity)/p_2))$. 
Therefore with an additional 
$O(N_2 \totsparsity(\critintmeas_2^{-2} + \log \totsparsity)) = O(\totsparsity(\critintmeas_2^{-2} + \log \totsparsity) \log(\totsparsity/p_2))$ samples, 
we are guaranteed with probability at least $1-p_2$ that $\est{\univsupp} = \univsupp$ and $\est{\bivsupp} = \bivsupp$.

\section{Sampling scheme for the general overlap case} \label{sec:algo_gen_overlap}
We now analyze the general scenario where overlaps can occur amongst the elements of $\bivsupp$.
Therefore the degrees of the variables occurring in $\bivsuppvar$, can be greater than one. 
Contrary to the non-overlap case, we now sample $f$ in order to directly estimate 
its $\dimn \times \dimn$ Hessian $\hess{f}$, at suitably chosen points. In particular, this 
enables us to subsequently identify $\bivsupp$. Once $\bivsupp$ is identified, we are left with a SPAM -- with no variable interactions --
on the set $[\dimn] \setminus \bivsupp$. We then identify $\univsupp$ by employing the sampling scheme from \cite{Tyagi14_nips} 
on this reduced space.

\subsection{Analysis for noiseless setting} \label{subsec:noiseless_overlap_set_est}
In this section, we consider the noiseless scenario, \textit{i.e.}, we assume the exact sample $f(\vecx)$ is obtained 
for any query $\vecx$. To begin with, we explain why the sampling scheme for the non overlap case does not directly apply here. 
To this end, note that the gradient of $f$ has the following structure for each $q \in [\dimn]$.
\begin{equation*}
(\grad f(\vecx))_q = \left\{
\begin{array}{rl}
\partial_q \phi_q(x_q) \quad ; & q \in \univsupp \\
\partial_q \phi_{\qpair} \xqpair \quad ; & \qpair \in \bivsupp \ \& \ \degree(q) = 1, \\
\partial_q \phi_{\qpairi} \xqpairi \quad ; & \qpairi \in \bivsupp \ \& \ \degree(q) = 1, \\
\partial_q \phi_q(x_q) + \sum\limits_{\qpair \in \bivsupp} \partial_q \phi_{\qpair} \xqpair \\ 
+ \sum\limits_{\qpairi \in \bivsupp} \partial_q \phi_{\qpairi} \xqpairi \quad ; & q \in \bivsuppvar \ \& \ \degree(q) > 1, \\
0 \quad ; & \text{otherwise.}
\end{array} \right. 
\end{equation*}
Therefore, for any $q \in \bivsuppvar$ with $\degree(q) > 1$, we notice that $(\grad f(\vecx))_q$ is by itself the sum of
$\degree(q)$ many bivariate functions, and $\partial_q \phi_q$. This causes an issue as far as identifying $q$ -- 
via estimating $\grad{f}$ followed by thresholding -- is concerned, as was done for the non-overlap case. 
While we assume the magnitudes of $\partial_q \phi_{\qpair}$ to be sufficiently large within respective 
subsets of $[-1,1]^2$, it is not clear what that implies for $\abs{(\grad f(\vecx))_q}$. Note that 
$(\grad f(\vecx))_q \not\equiv 0$ since $q$ is an active variable.  However a lower bound on: $\abs{(\grad f(\vecx))_q}$, 
and also on the measure of the interval where it is attained, appears to be non-trivial to obtain.

\paragraph{Estimating sparse Hessian matrices} In light of the above discussion, we consider an alternative approach, 
wherein we directly estimate the 
Hessian $\hess{f}(\vecx) \in \matR^{\dimn \times \dimn}$, at suitably chosen $\vecx \in [-1,1]^{\dimn}$. 
Observe that $\hess{f}(\vecx)$ has the following structure for $i \in \bivsuppvar$ and $j=1,\dots,\dimn$:
\begin{equation*}
(\hess{f}(\vecx))_{i,j} = \left\{
\begin{array}{rl}
\partial_i^2 \phi_{(i,i^{\prime})} (x_i,x_{i^{\prime}}) \quad \quad ; & \degree(i) = 1, (i,i^{\prime}) \in \bivsupp, i = j, \\
\partial_i^2 \phi_{(i^{\prime},i)} (x_{i^{\prime}},x_i) \quad \quad ; & \degree(i) = 1, (i^{\prime},i) \in \bivsupp, i = j, \\
\partial_i^2 \phi_i(x_i) + \sum\limits_{(i,i^{\prime}) \in \bivsupp} \partial_i^2 \phi_{(i,i^{\prime})} (x_i,x_{i^{\prime}}) \\ 
+ \sum\limits_{(i^{\prime},i) \in \bivsupp} \partial_i^2 \phi_{(i^{\prime},i)} (x_{i^{\prime}},x_i) \quad \quad ; & \degree(i) > 1, i = j, \\
\partial_i \partial_j \phi_{(i,j)} (x_i,x_j) \quad \quad ; & (i,j) \in \hemant{\bivsupp}, \\
\partial_i \partial_j \phi_{(j,i)} (x_j,x_i) \quad \quad ; & (j,i) \in \hemant{\bivsupp}, \\
0 \quad ; & \text{otherwise}
\end{array} \right. ,
\end{equation*}
while if $i \in \univsupp$, we have for $j=1,\dots,\dimn$:
\begin{equation*}
(\hess{f}(\vecx))_{i,j} = \left\{
\begin{array}{rl}
\partial_i^2 \phi_{i} (x_i) \quad ; & i = j, \\
0 \quad ; & \text{otherwise}
\end{array} \right. .
\end{equation*}
The $l^{th}$ row of $\hess{f}(\vecx)$ can be denoted by 
$\grad \partial_l f(\vecx)^T \in \matR^{\dimn}$. If $l \in \univsupp$, then $\grad \partial_l f(\vecx)^T$ 
has at most one non-zero entry, namely the $l^{th}$ entry, and has all other entries equal to zero. In other words, 
$\grad \partial_l f(\vecx)^T$ is $1$-sparse for $l \in \univsupp$.
If $l \in \bivsuppvar$, then we see that $\grad \partial_l f(\vecx)^T$ will have at most 
$(\degree(l) + 1)$ non-zero entries, implying that it is $(\degree(l) + 1) \leq (\maxdegree + 1)$-sparse. 

At suitably chosen $\vecx$'s, our aim specifically is to detect the non-zero off diagonal 
entries of $\hess{f}(\vecx)$ since they correspond precisely to $\bivsupp$. To this end, we consider the 
``difference of gradients'' based approach used in Section \ref{subsec:phase_two_nooverlap}. Contrary to the setting in 
Section \ref{subsec:phase_two_nooverlap} however, we now have a 
$\dimn \times \dimn$ Hessian and have \emph{no knowledge} about the set of active variables: $\univsupp \cup \bivsuppvar$.
Therefore, the Hessian estimation problem is harder now, and requires a different sampling scheme.

\paragraph{Sampling scheme for estimating $\bivsupp$.} For $\vecx,\vecvp \in \matR^{\dimn}$, $\hessstep > 0$, consider the Taylor expansion of 
$\grad f$ at $\vecx$ along $\vecvp$, with step size $\hessstep$.
For $\zeta_i = \vecx + \theta_i \vecvp$, for some $\theta_i \in (0,\hessstep)$; $i=1,\dots,\dimn$, we obtain the following identity.
\begin{align} \label{eq:grad_tay_exp_f}
\frac{\grad f(\vecx + \hessstep\vecvp) - \grad f(\vecx)}{\hessstep} = \hess f(\vecx) \vecvp + \frac{\hessstep}{2} \gradtayremf 
= \hessrowips + \frac{\hessstep}{2} \gradtayremf.
\end{align} 
%
\begin{figure}
     \centering
     \subfloat[][]{\includegraphics[width=0.45\linewidth]{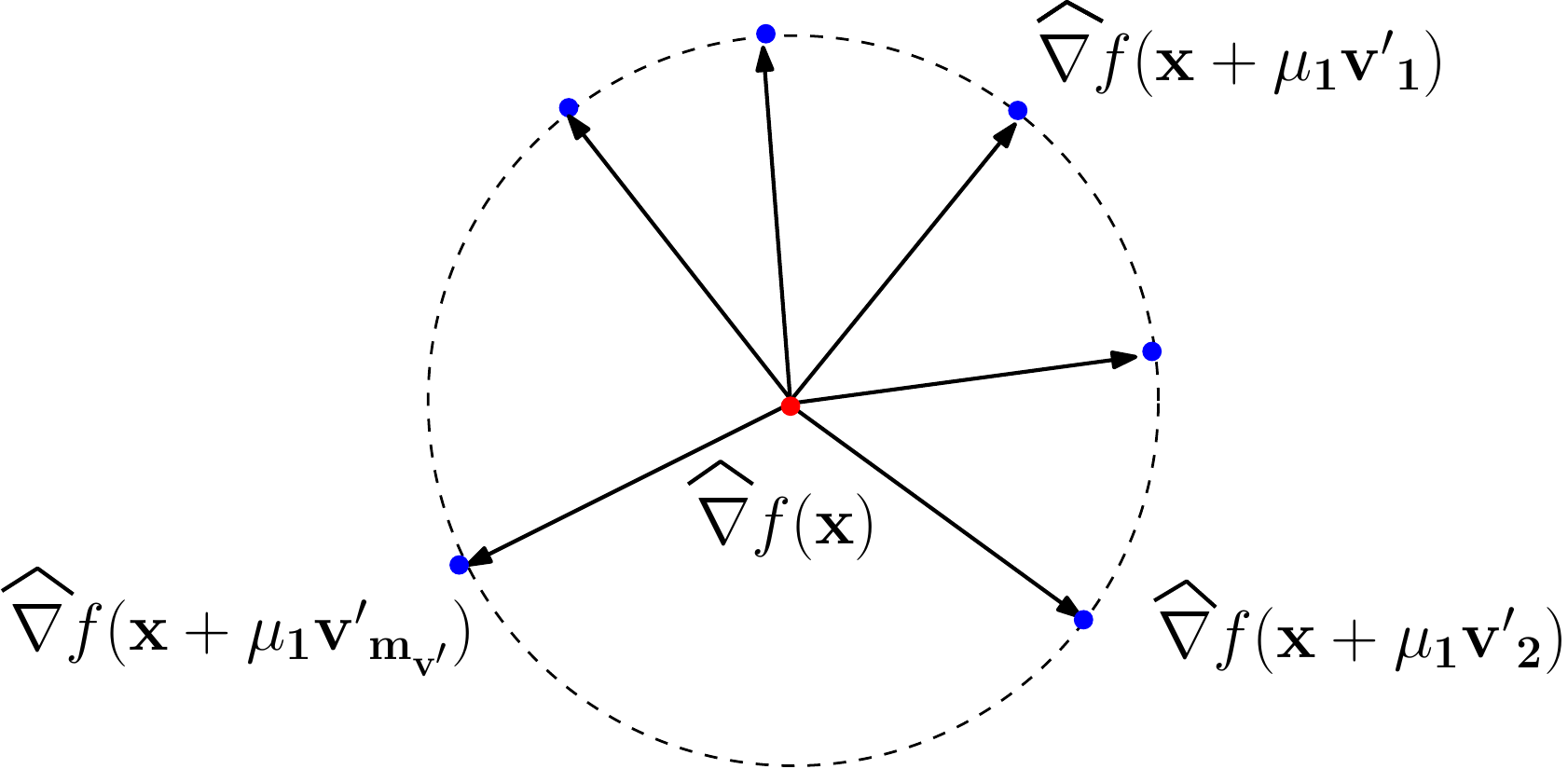}\label{fig:hess_samp}} \hspace{7mm}
     \subfloat[][]{\includegraphics[width=0.2\linewidth]{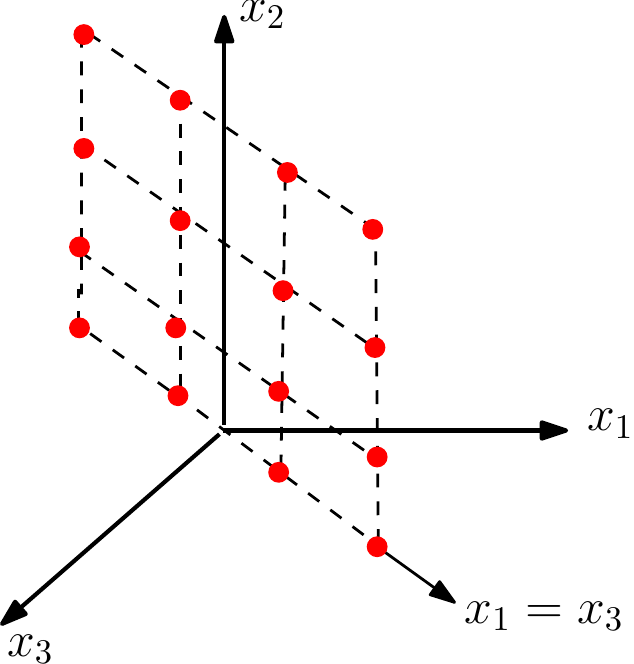}\label{fig:hash_samp}} 
     \caption{\small (a) $\hess f(\vecx)$ estimated using: $\est{\grad}f(\vecx)$ (at red disk) and 
		  neighborhood gradient estimates (at blue disks)
	    (b) Geometric picture: $\dimn = 3$, $\hashfn \in \calH_2^3$ with $\hashfn(1) = \hashfn(3) \neq \hashfn(2)$. 
	    Red disks are points in $\baseset(\hashfn)$. }
\end{figure}
We see from \eqref{eq:grad_tay_exp_f} that the $l^{th}$ entry of $(\grad f(\vecx + \hessstep\vecvp) - \grad f(\vecx))/\hessstep$,
corresponds to a linear measurement of the $l^{th}$ row of $\hess f(\vecx)$ with $\vecvp$. From the preceding discussion, we also know
that each row of $\hess f(\vecx)$ is at most $(\maxdegree + 1)$-sparse. This suggests the following idea: for any $\vecx$, if we obtain sufficiently 
many linear measurements of each row of $\hess f(\vecx)$, then we can \textit{estimate each row separately} via $\ell_1$ minimization. 
To this end, we first need an efficient way for estimating $\grad f(\vecx) \in \matR^{\dimn}$, 
at any point $\vecx$. Note that $\grad f(\vecx)$ is $\totsparsity$-sparse, therefore we can estimate it via the randomized scheme, explained 
in Section \ref{subsec:phase_one_nooverlap}, with $O(\totsparsity \log \dimn)$ queries of $f$. 
This gives us: $\est{\grad} f(\vecx) = \grad f(\vecx) + \vecw(\vecx)$, 
where $\vecw(\vecx) \in \matR^{\dimn}$ denotes the estimation noise. Plugging this in \eqref{eq:grad_tay_exp_f} results in 
the following identity.
\begin{align} \label{eq:hessrows_est_lineq}
\frac{\est{\grad} f(\vecx + \hessstep\vecvp) - \est{\grad} f(\vecx)}{\hessstep} 
= \hessrowips + \underbrace{\frac{\hessstep}{2} \gradtayremf + \frac{\vecw(\vecx + \hessstep\vecvp) - \vecw(\vecx)}{\hessstep}}_{\text{``Noise''}}.
\end{align} 
Now let $\vecvp$ be chosen from the set:
\begin{align} \label{eq:samp_direc_set_hess}
\calVp &:= \set{\vecvp_j \in \matR^{\dimn} : \vp_{j,q} = \pm\frac{1}{\sqrt{\numdirecp}} \ \text{w.p.} \ 1/2 \ \text{each};
 \ j=1,\dots,\numdirecp \ \text{and} \ q=1,\dots,{\dimn}}. 
\end{align}
Then, employing \eqref{eq:hessrows_est_lineq} at each $\vecvp_j \in \calVp$, and denoting 
$\matV^{\prime} = [\vecvp_1 \dots \vecvp_{\numdirecp}]^T \in \matR^{\numdirecp \times \dimn}$, we obtain $\dimn$ linear systems for $q=1,\dots,\dimn$:  
\begin{align} \label{eq:hessrow_est_linfin}
\underbrace{\hessrowmeas}_{\vecy_q} 
= \matV^{\prime} \grad \partial_q f(\vecx) + \underbrace{\frac{\hessstep}{2} \gradtayremfq}_{\hessestnoisa} + \underbrace{\gradestnoisq}_{\hessestnoisb}.
\end{align} 
Given the measurement vector $\vecy_q$, we can obtain the estimate $\est{\grad} \partial_q f(\vecx)$ individually for each $q$, 
via $\ell_1$ minimization:
\begin{equation} \label{eq:l1_min_prog_1}
\est{\grad} \partial_q f(\vecx) := \argmin{\vecy_q = \matV^{\prime} \vecz} \norm{\vecz}_1; \quad q=1,\dots,\dimn.
\end{equation}
Hence, we have obtained an estimate $\est{\hess}f(\vecx) := [\est{\grad} \partial_1 f(\vecx) \cdots \est{\grad} \partial_d f(\vecx)]^T$ 
of the Hessian $\hess f(\vecx)$, at the point $\vecx$. Next, we would like to have a suitable set of points $\vecx$, in the sense that it provides 
a sufficiently fine discretization, of any canonical $2$-dimensional subspace of $\matR^{\dimn}$.
To this end, we can simply consider the set $\baseset$ as defined in \eqref{eq:baseset_hash}, for the same reasons as before.

\paragraph{Sampling scheme for estimating $\univsupp$.} While the above sampling scheme enables us to recover $\bivsupp$, we can recover $\univsupp$ as follows.
Let $\est{\bivsuppvar}$ denote the set of variables in the estimated set $\est{\bivsupp}$, and let $\calP := [\dimn] \setminus \est{\bivsuppvar}$.
Assuming $\est{\bivsupp} = \bivsupp$, we have $\univsupp \subset \calP$. Therefore the model we are left with now is a 
SPAM with no variable interactions on the \emph{reduced} variable set $\calP$. 
For identification of $\univsupp$, we employ the sampling scheme of \cite{Tyagi14_nips}, 
wherein the gradient of $f$ is estimated along a discrete set of points on the line: 
$\set{(x,\dots,x) \in \matR^{\dimn} : x \in [-1,1]}$. For some $\numcenpair \in \matZ^{+}$, we denote this discrete set by:
\begin{equation}
\baseset_{\text{diag}} := \set{\vecx = (x \ x \ \cdots \ x) \in \matR^{\dimn}: x \in \set{-1,-\frac{\numcenpair-1}{\numcenpair},\dots,\frac{\numcenpair-1}{\numcenpair},1}}.
\end{equation}
Note that $\abs{\baseset_{\text{diag}}} = 2\numcenpair+1$. The motivation for  estimating $\grad f$ at $\vecx \in \baseset_{\text{diag}}$ is 
that we obtain estimates of $\partial_p \phi_p$ at equispaced points within $[-1,1]$, for $p \in \univsupp$. With a sufficiently fine discretization, 
we would ``hit'' the critical regions associated with each $\partial_p \phi_p$, as defined in Assumption \ref{assum:actvar_iden}. 
By applying a thresholding operation, we would then be able to identify each $p \in \univsupp$.
Let us denote $\calVpp$ to be the set of sampling directions in $\matR^{\dimn}$ -- analogous to 
$\calV,\calVp$ defined in \eqref{eq:samp_direc_set}, \eqref{eq:samp_direc_set_hess} respectively -- with $\abs{\calVpp} = \numdirecpp$:
\begin{equation} \label{eq:samp_direc_s1_fin}
\calVpp := \set{\vecvpp_j \in \matR^{\dimn} : \vpp_{j,q} = \pm\frac{1}{\sqrt{\numdirecpp}} \ \text{w.p.} \ 1/2 \ \text{each};
 \ j=1,\dots,\numdirecpp \ \text{and} \ q=1,\dots,{\dimn}}. 
\end{equation}
For each $\vecx \in \baseset_{\text{diag}}$, we will query $f$ at points 
$(\vecx + \gradstepp \vecvpp_j)_{\calP}, (\vecx - \gradstepp \vecvpp_j)_{\calP}$; $\vecvpp_j \in \calVpp$, restricted to  $\calP$.
Then by obtaining the measurements: 
$y_j = (f((\vecx + \gradstepp \vecvpp_j)_{\calP}) - f((\vecx - \gradstepp \vecvpp_j)_{\calP}))/(2\gradstepp); \ j=1,\dots,\numdirecpp$, 
and denoting $(\matVpp)_{\calP} = [(\vecvpp_1)_{\calP} \cdots (\vecvpp_{\numdirecpp})_{\calP}]^T$, we obtain the estimate 
$(\est{\grad} f((\vecx)_{\calP}))_{\calP} := \argmin{\vecy = (\matVpp)_{\calP}(\vecz)_{\calP}} \norm{(\vecz)_{\calP}}_1$. 
This notation simply means that we search over $\vecz \in \matR^{{\calP}}$, to form
the estimate $(\est{\grad} f((\vecx)_{\calP}))_{\calP}$. 

\begin{algorithm*}[!ht]
\caption{Algorithm for estimating $\univsupp,\bivsupp$} \label{algo:gen_overlap} 
\begin{algorithmic}[1] 
\State \textbf{Input:} $\numdirec,\numdirecp, \numcen, \numcenpair \in \matZ^{+}$; $\gradstep, \hessstep, \gradstepp > 0$; 
$\hesssamperr > 0, \derivsamperrpp > 0$.
\State \textbf{Initialization:} $\est{\univsupp}, \est{\bivsupp} = \emptyset$.
\State \textbf{Output:} Estimates $\est{\bivsupp}$, $ \est{\univsupp}$. \\
\hrulefill
\State Construct $(\dimn,2)$-hash family $\twohashfam$ and sets $\calV,\calV^{\prime}$. \label{algover:s2_step_1}
\For{$\hashfn \in \twohashfam$}
 	\State Construct the set $\baseset(\hashfn)$. 
	\For {$i = 1,\dots,(2\numcen+1)^2$ and $\vecx_i \in \baseset(\hashfn)$} \label{algover:s2_step_2}
			\State $(\vecy_i)_j = \frac{f(\vecx_i + \gradstep \vecv_j)-f(\vecx_i - \gradstep \vecv_j)}{2\gradstep}$; $j=1,\dots,\numdirec$; $\vecv_j \in \calV$. \label{algover:s2_query_1}
		\State $\est{\grad} f(\vecx_i) := \argmin{\vecy_i = \matV\vecz} \norm{\vecz}_1$. \label{algover:s2_grad_base}
		\For{$p = 1,\dots,\numdirecp$} 
			\State $(\vecy_{i,p})_j = \frac{f(\vecx_i + \hessstep\vecvp_p + \gradstep \vecv_j)-f(\vecx_i + \hessstep\vecvp_p - \gradstep \vecv_j)}{2\gradstep}$; \label{algover:s2_query_2}
			$j=1,\dots,\numdirec$; $\vecvp_p \in \calVp$.  \hfill \textsc{// Estimation of } $\bivsupp$
			\State $\est{\grad} f(\vecx_i + \hessstep\vecvp_p) := \argmin{\vecy_{i,p} = \matV \vecz} \norm{\vecz}_1$.\label{algover:s2_grad_1}	
		\EndFor
		\For{$q = 1,\dots,\dimn$}
			\State $(\vecy_q)_j = \frac{(\est{\grad} f(\vecx_i + \hessstep\vecvp_j) - \est{\grad} f(\vecx_i))_q}{\hessstep}$; 
			$j=1,\dots,\numdirecp$. \label{algover:s2_grad_2}
			\State $\est{\grad} \partial_q f(\vecx_i) := \argmin{\vecy_q = \matV^{\prime} \vecz} \norm{\vecz}_1$. \label{algover:s2_grad_hess_row}
			\State $\est{\bivsupp} = \est{\bivsupp} \cup \set{(q,q^{\prime}) : q^{\prime} \in \set{q+1,\dots,d} \ \& \ \abs{(\est{\grad} \partial_q f(\vecx_i))_{q^{\prime}}} > \hesssamperr}$.
		\EndFor
	\EndFor 
\EndFor \\
\hrulefill
\State Construct the sets $\baseset_{\text{diag}}, \calVpp$ and initialize $\calP := [\dimn] \setminus \est{\bivsuppvar}$.
\For {$i=1,\dots,(2\numcenpair+1)$ and $\vecx_i \in \baseset_{\text{diag}}$} \label{algover:s1_step} 
		\State $(\vecy_i)_j = \frac{f((\vecx_i + \gradstepp \vecvpp_j)_{\calP})-f((\vecx_i - \gradstepp \vecvpp_j)_{\calP})}{2\gradstepp}$; 
		$j=1,\dots,\numdirecpp$; $\vecv_j \in \calVpp$. \label{algover:s1_grad}
		\State $(\est{\grad} f((\vecx_i)_{\calP}))_{\calP} := \argmin{\vecy_i = (\matVpp)_{\calP}(\vecz)_{\calP}} \norm{(\vecz)_{\calP}}_1$. \hfill 
		\textsc{// Estimation of } $\univsupp$
		\State $\est{\univsupp} = \est{\univsupp} \cup \set{q \in \calP : \abs{((\est{\grad} f((\vecx_i)_{\calP})_q} > \derivsamperrpp}.$
\EndFor
\end{algorithmic}
\end{algorithm*}
The complete procedure for estimating $\univsupp,\bivsupp$, is described formally in Algorithm \ref{algo:gen_overlap}. 
Next, we provide sufficient conditions on our sampling parameters that guarantee exact recovery of $\univsupp, \bivsupp$ 
by the algorithm. This is stated in the following Theorem.
\begin{theorem} \label{thm:gen_overlap}
Let $\twohashfam$ be of size $\abs{\twohashfam} \leq 2(C + 1)e^2 \log \dimn$ for some constant $C > 1$. 
Then $\exists$ constants $c_1^{\prime},c_2^{\prime} \geq 1$ and $C_1, C_2, c_4^{\prime}, c_5^{\prime} > 0$, 
such that the following is true. Let $\numcen, \numdirec, \numdirecp$ satisfy 
\begin{equation*}
\numcen \geq \critintmeas_2^{-1}, \quad 
c_1^{\prime} \totsparsity \log\left(\frac{\dimn}{\totsparsity}\right) < \numdirec < \frac{\dimn}{(\log 6)^2}, \quad 
c_2^{\prime} \maxdegree \log\left(\frac{\dimn}{\maxdegree}\right) < \numdirecp < \frac{\dimn}{(\log 6)^2}. 
\end{equation*}
Denoting $a = \frac{(4\maxdegree+1)\smconst_3}{2\sqrt{\numdirecp}}$, $b = \frac{C_1\sqrt{\numdirecp}((4\maxdegree+1)\totsparsity)\smconst_3}{3\numdirec}$, 
let $\gradstep, \hessstep$ satisfy
\begin{align*}
\gradstep^2 < \frac{\idenconst_2^2}{16 a b C_2^2}, \ 
\hessstep \in \left((\idenconst_2/(4aC_2)) - \sqrt{(\idenconst_2/(4aC_2))^2 - (b\gradstep^2/a)}, (\idenconst_2/(4aC_2)) + \sqrt{(\idenconst_2/(4aC_2))^2 - (b\gradstep^2/a)} \right). 
\end{align*}
We then have that the choice
\begin{align*}
\hesssamperr = C_2 \left(\frac{\hessstep(4\maxdegree + 1)\smconst_3}{2\sqrt{\numdirecp}} + \frac{C_1\sqrt{\numdirecp} \gradstep^2((4\maxdegree+1)\totsparsity)\smconst_3}{3\hessstep\numdirec} \right), 
\end{align*}
implies $\est{\bivsupp} = \bivsupp$ with probability at least 
$1 - e^{-c_4^{\prime} \numdirec} - e^{-\sqrt{\numdirec \dimn}} - e^{-c_5^{\prime} \numdirecp} - e^{-\sqrt{\numdirecp \dimn}} - \dimn^{-2C}$.

Given that $\est{\bivsupp} = \bivsupp$, then $\exists$ constants $c_3^{\prime} \geq 1$ and $C_3, c_6^{\prime} > 0$, such that for $\numcenpair, \numdirecpp, \gradstepp$ satisfying 
\begin{equation*}
\numcenpair \geq \critintmeas_1^{-1}, \quad 
c_3^{\prime} (\totsparsity-\abs{\est{\bivsuppvar}}) \log\left(\frac{\abs{\calP}}{\totsparsity-\abs{\est{\bivsuppvar}}}\right) < \numdirecpp < \frac{\abs{\calP}}{(\log 6)^2}, \quad 
{\gradstepp}^2 < \frac{3\numdirecpp \idenconst_1}{C_3 (\totsparsity-\abs{\est{\bivsuppvar}}) \smconst_3}, 
\end{equation*}
the choice: $\derivsamperrpp = \frac{C_3 (\totsparsity-\abs{\est{\bivsuppvar}}) {\gradstepp}^2 \smconst_3}{6\numdirecpp}$, implies 
$\est{\univsupp} = \univsupp$ with probability at least $1 - e^{-c_6^{\prime} \numdirecpp} - e^{-\sqrt{\numdirecpp \abs{\calP}}}$.
\end{theorem}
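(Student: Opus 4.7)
My plan is to handle the two halves of the theorem separately, first establishing $\est{\bivsupp}=\bivsupp$ and then conditioning on this event to obtain $\est{\univsupp}=\univsupp$. For the first half, the core idea is a two-layer compressive sensing argument: the outer layer recovers each $(\maxdegree+1)$-sparse row of $\hess f(\vecx)$ from its noisy linear measurements \eqref{eq:hessrow_est_linfin}, while the inner layer feeds these measurements by recovering the $\totsparsity$-sparse vectors $\grad f(\vecx)$ and $\grad f(\vecx + \hessstep \vecvp_j)$ from the central-difference queries in Steps \ref{algover:s2_grad_base} and \ref{algover:s2_grad_1}. I would first apply Theorem \ref{thm:sparse_recon_bound} to the inner problem, using the setting $\numdirec > c_1' \totsparsity \log(\dimn/\totsparsity)$ and the fact that the perturbation in \eqref{eq:taylor_exp_f_1} is bounded in $\ell_\infty$ by a constant multiple of $\gradstep^2 \smconst_3/\sqrt{\numdirec}$. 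This yields an $\ell_\infty$ bound on the gradient-estimation error $\vecw(\cdot)$ of order $\sqrt{\numdirec}\gradstep^2 \smconst_3/\numdirec$ times $\sqrt{\log d}$, with failure probability absorbed into the $e^{-c_4'\numdirec}-e^{-\sqrt{\numdirec \dimn}}$ terms via a union bound across the $O((2\numcen+1)^2 \abs{\twohashfam})$ base points and their $\numdirecp$ perturbations.

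Next, I would bound the noise term $\hessestnoisa+\hessestnoisb$ in the row equation \eqref{eq:hessrow_est_linfin}: the Taylor remainder $\hessestnoisa$ is $O(\hessstep \smconst_3/\sqrt{\numdirecp})$ componentwise, and $\hessestnoisb$ inherits the inner-recovery error divided by $\hessstep$. Applying Theorem \ref{thm:sparse_recon_bound} once more, now to the outer problem with sparsity $\maxdegree+1$ and $\numdirecp > c_2'\maxdegree\log(\dimn/\maxdegree)$ measurements, gives an $\ell_\infty$ guarantee of the form
\[
\norm{\est{\grad}\partial_q f(\vecx) - \grad\partial_q f(\vecx)}_\infty \le C_2\left(\frac{\hessstep (4\maxdegree+1)\smconst_3}{2\sqrt{\numdirecp}} + \frac{C_1 \sqrt{\numdirecp}\gradstep^2 ((4\maxdegree+1)\totsparsity)\smconst_3}{3 \hessstep \numdirec}\right),
\]
which is exactly the stated $\hesssamperr$. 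To convert recovery of the Hessian into recovery of $\bivsupp$, I use Assumption \ref{assum:pair_iden} together with the hash-family construction of $\baseset$: for every $(l,l') \in \bivsupp$ there is some $\hashfn \in \twohashfam$ separating $l$ and $l'$, and the $\numcen \ge \critintmeas_2^{-1}$ discretization of $\baseset(\hashfn)$ must land in $\calI_l \times \calI_{l'}$, so $\abs{(\hess f(\vecx))_{l,l'}} > \idenconst_2$ at some $\vecx \in \baseset$. The conditions relating $\gradstep$ and $\hessstep$ in the theorem are precisely those forcing $2\hesssamperr < \idenconst_2$, which ensures that thresholding at $\hesssamperr$ separates true off-diagonal entries from all spurious ones; solving this quadratic-in-$\hessstep$ inequality yields the stated interval for $\hessstep$.

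For the second half, I would condition on $\est{\bivsupp}=\bivsupp$, which implies $\univsupp \subset \calP = [\dimn] \setminus \est{\bivsuppvar}$. On this reduced coordinate set, the restricted function $\vecx \mapsto f((\vecx)_\calP)$ effectively behaves as a SPAM in $\calP$ augmented by constants from the frozen variables: at each $\vecx_i \in \baseset_{\text{diag}}$, the gradient $\grad f((\vecx_i)_\calP)$ is $(\totsparsity - \abs{\est{\bivsuppvar}})$-sparse and supported exactly on $\univsupp$. A single application of Theorem \ref{thm:sparse_recon_bound} to the system in Step \ref{algover:s1_grad}, combined with Assumption \ref{assum:actvar_iden} and the discretization density $\numcenpair \geq \critintmeas_1^{-1}$, then guarantees that thresholding at $\derivsamperrpp = C_3(\totsparsity-\abs{\est{\bivsuppvar}}){\gradstepp}^2 \smconst_3/(6\numdirecpp)$ exactly recovers $\univsupp$; the upper bound on ${\gradstepp}^2$ is what forces $\derivsamperrpp < \idenconst_1/2$ so that each $p \in \univsupp$ is detected at its critical interval $\calI_p$. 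The probability statement follows by a final union bound.

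I expect the main obstacle to be the careful propagation of the inner CS recovery error into the outer CS problem — specifically, tracking how the $\ell_\infty$ norm of $\vecw$ (scaled by $1/\hessstep$) interacts with the $\ell_2$ vs. $\ell_\infty$ terms in the Theorem \ref{thm:sparse_recon_bound} bound, and establishing the RIP-like event for both $\matV$ and $\matV^{\prime}$ simultaneously uniformly over the $O(\log \dimn)$ base points with the correct failure probability. The quadratic-in-$\hessstep$ inequality that produces the stated interval then becomes the algebraic bottleneck, but once this is solved the rest is a routine union bound over the hash family and the query locations.
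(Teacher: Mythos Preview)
Your proposal follows essentially the same two-layer compressive sensing approach as the paper: bound the inner gradient-recovery error via Theorem~\ref{thm:sparse_recon_bound}, feed it into the outer Hessian-row recovery, derive $\hesssamperr$, and solve $\hesssamperr<\idenconst_2/2$ as a quadratic in $\hessstep$. The second half (restriction to $\calP$ and one more application of Theorem~\ref{thm:sparse_recon_bound} on $\baseset_{\text{diag}}$) is also the paper's route.

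One point deserves correction, however. You write that the failure probability is ``absorbed into the $e^{-c_4'\numdirec}-e^{-\sqrt{\numdirec\dimn}}$ terms via a union bound across the $O((2\numcen+1)^2\abs{\twohashfam})$ base points and their $\numdirecp$ perturbations.'' Taken literally this would multiply the failure probability by a polynomial factor that the stated bound does not carry. The crucial feature of Theorem~\ref{thm:sparse_recon_bound} is that the recovery guarantee \eqref{eq:sparse_recon_err} holds \emph{uniformly over all signals} once the single random matrix $\matV$ (resp.\ $\matVp$) is drawn; the algorithm reuses the same $\matV$ at every base point and every perturbation $\vecx+\hessstep\vecvp_p$, so no union bound over query locations is needed. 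The five terms in the success probability correspond precisely to three separate events: $\matV$ satisfying the recovery property, $\matVp$ satisfying it, and the probabilistic construction of $\twohashfam$ succeeding (the $\dimn^{-2C}$ term). Once you make this adjustment your outline is correct, and the $(4\maxdegree+1)$ constants in the paper's bounds on $\norm{\hessestnoisa}_\infty$ and on the inner Taylor remainder come from a short case analysis on whether $q\in\univsupp$, $q\in\bivsuppvar$ with $\degree(q)=1$, or $\degree(q)>1$.
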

%
\paragraph{Query complexity.} Estimating $\grad f(\vecx)$ at 
some fixed $\vecx$ requires $2\numdirec = O(\totsparsity\log \dimn)$ queries. Estimating $\hess f(\vecx)$ involves the estimation of $\grad f(\vecx)$ --  
along with an additional $\numdirecp$ gradient vectors in a neighborhood of $\vecx$ -- implying $O(\numdirec\numdirecp) = O(\totsparsity\maxdegree(\log \dimn)^2)$ 
point queries of $f$. Since $\hess f$ is estimated at all points in $\baseset$ in the worst case, this consequently implies a total query complexity of 
$O(\totsparsity\maxdegree(\log \dimn)^2 \abs{\baseset}) = O(\critintmeas_2^{-2}\totsparsity\maxdegree(\log \dimn)^3)$, for estimating $\bivsupp$. 
We make an additional $O(\critintmeas_1^{-1} (\totsparsity-\abs{\est{\bivsuppvar}}) \log (\dimn - \abs{\est{\bivsuppvar}}))$ queries of $f$, 
in order to estimate $\univsupp$. Therefore, the overall query complexity for estimating $\univsupp,\bivsupp$ is 
$O(\critintmeas_2^{-2}\totsparsity\maxdegree(\log \dimn)^3)$. 

\paragraph{Computational complexity.} The family $\twohashfam$ can be 
constructed\footnote{Recall discussion following Definition \ref{def:thash_fam}.} 
in time polynomial in $d$. For each $\vecx \in \baseset$, we first solve $\numdirecp + 1$ 
linear programs in $O(d)$ variables (Steps \ref{algover:s2_grad_base}, \ref{algover:s2_grad_1}), each solvable in time polynomial in $(\numdirec, d)$. 
We then solve $d$ linear programs in $O(d)$ variables (Step \ref{algover:s2_grad_hess_row}), each of which takes time polynomial in $(\numdirecp, d)$. 
Since this is done at $\abs{\baseset} = O(\critintmeas_2^{-2} \log d)$ many points, hence the 
overall computation time for estimation of $\bivsupp$ (and subsequently $\univsupp$) is polynomial in the number of queries, and in $d$.

\begin{remark}
In Algorithm \ref{thm:gen_overlap}, we could have optimized the procedure for identifying $\univsupp$ as follows. 
Observe that for each $\hashfn \in \twohashfam$, 
we always have a subset of points (\textit{i.e.}, $\subset \baseset(\hashfn)$) that discretize $\set{(x,\dots,x) \in \matR^{\dimn} : x \in [-1,1]}$. 
Therefore for each $\vecx$ lying in this subset, we could go through $\est{\grad} f(\vecx)$, and check via a thresholding operation,  
whether there exists a variables(s) in $\univsupp$. If $\numcen$ is large enough ($\geq \critintmeas_1^{-1}$), 
then it would also enable us to recover $\univsupp$ completely. A downside of this approach is that we would require additional, stronger conditions 
on the step size parameter $\gradstep$ to guarantee identification of $\univsupp$. Since the estimation procedure for $\univsupp$ in 
Algorithm \ref{algo:gen_overlap} comes \hemant{at the same order-wise} sampling cost, therefore we choose to query $f$ again, in order to identify $\univsupp$.
\end{remark}

\begin{remark} \label{rem:gen_over_s1_step_impr}
We also note that the condition on $\gradstepp$ is less strict than in \cite{Tyagi14_nips} for identifying 
$\univsupp$. This is because in \cite{Tyagi14_nips}, the gradient is estimated via a forward difference procedure, 
while we perform a central difference procedure in \eqref{eq:taylor_exp_f}.
\end{remark}
\subsection{Analysis for noisy setting} \label{subsec:noise_overlap_set_est}
We now consider the case where at each query $\vecx$, 
we observe $f(\vecx) + \exnoisep$, with $\exnoisep \in \matR$ denoting external noise. In order to estimate $\grad f(\vecx)$, 
we obtain the samples : $f(\vecx + \gradstep \vecv_j) + \exnoisep_{j,1}$ and $f(\vecx - \gradstep \vecv_j) + \exnoisep_{j,2}$; 
$j = 1,\dots,\numdirec$. 
This changes \eqref{eq:cs_form} to the linear system $\vecy = \matV\grad f(\vecx) + \taynoisvec + \exnoisevec$, where 
$\exnoise_{j} = (\exnoisep_{j,1} - \exnoisep_{j,2})/(2\gradstep)$. 

\paragraph{Arbitrary bounded noise.} In this scenario, we assume the external noise to be arbitrary and bounded, meaning 
that $\abs{\exnoisep} < \exnoisemag$, for some finite $\exnoisemag \geq 0$. Theorem \ref{thm:gen_overlap_arbnois} 
shows that Algorithm \ref{algo:gen_overlap} recovers $\univsupp,\bivsupp$ with 
appropriate choice of sampling parameters, provided $\exnoisemag$ is not too large.
\begin{theorem} \label{thm:gen_overlap_arbnois}
Assuming the notation in Theorem \ref{thm:gen_overlap}, let $a, b, \numcen, \numdirec, \numdirecp, \twohashfam$ be as defined in 
Theorem \ref{thm:gen_overlap}. Say $\exnoisemag < \exnoisemag_1 = \frac{\idenconst_2^{3}}{192\sqrt{3} C_1 C_2^3 \sqrt{a^3 b \numdirecp \numdirec}}$. 
Then for $\theta_1 = \cos^{-1}(-\exnoisemag / \exnoisemag_1)$, let $\gradstep, \hessstep$ satisfy: 
\begin{align}
\gradstep &\in \left(\sqrt{\frac{\idenconst_2^2}{12 a b C_2^2}}\cos(\theta_1/3 - 2\pi/3) , \sqrt{\frac{\idenconst_2^2}{12 a b C_2^2}}\cos(\theta_1/3)\right), \label{eq:gradstep_arb_nois}\\ 
\hessstep &\in \left(\frac{\idenconst_2}{4aC_2} - \sqrt{\left(\frac{\idenconst_2}{4aC_2}\right)^2 - \left(\frac{b\gradstep^2 + 2C_1\sqrt{\numdirec\numdirecp}\exnoisemag}{a}\right)}, 
\frac{\idenconst_2}{4aC_2} + \sqrt{\left(\frac{\idenconst_2}{4aC_2}\right)^2 - \left(\frac{b\gradstep^2 + 2C_1\sqrt{\numdirec\numdirecp}\exnoisemag}{a}\right)} \right). \label{eq:hessstep_arb_nois}
\end{align}
We then have in Algorithm \ref{algo:gen_overlap} for the choice
\begin{equation} \label{eq:thresh_s2_arbnois}
\hesssamperr = C_2 \left(\frac{\hessstep(4\maxdegree + 1)\smconst_3}{2\sqrt{\numdirecp}} + 
\frac{C_1\sqrt{\numdirecp} \gradstep^2((4\maxdegree+1)\totsparsity)\smconst_3}{3\hessstep\numdirec} + 
\frac{2C_1\exnoisemag\sqrt{\numdirecp\numdirec}}{\gradstep\hessstep}\right), 
\end{equation}
that $\est{\bivsupp} = \bivsupp$ with probability at least 
$1 - e^{-c_4^{\prime} \numdirec} - e^{-\sqrt{\numdirec \dimn}} - e^{-c_5^{\prime} \numdirecp} - e^{-\sqrt{\numdirecp \dimn}} - \dimn^{-2C}$. 
Given that $\est{\bivsupp} = \bivsupp$, let $\numcenpair, \numdirecpp$ be chosen as in Theorem \ref{thm:gen_overlap}.
Let $a_1 = \frac{(\totsparsity-\abs{\est{\bivsuppvar}}) \smconst_3}{6\numdirecpp}$, $b_1 = \sqrt{\numdirecpp}$ 
and assume $\exnoisemag < \exnoisemag_2 = \frac{\idenconst_1^{3/2}}{3\sqrt{6 a_1 C_3^3 b_1^2}}$. For $\theta_2 = \cos^{-1}(-\exnoisemag / \exnoisemag_2)$, 
let $\gradstepp \in (2\sqrt{\idenconst_1/(6 a_1 C_3)} \cos(\theta_2/3 - 2\pi/3), 2\sqrt{\idenconst_1/(6 a_1 C_3)} \cos(\theta_2/3))$. 
We then have in Algorithm \ref{algo:gen_overlap} for the choice 
$\derivsamperrpp = C_3\left(\frac{(\totsparsity-\abs{\est{\bivsuppvar}}) {\gradstepp}^2 \smconst_3}{6\numdirecpp} + \frac{b_1\exnoisemag}{\gradstep}\right)$
that $\est{\univsupp} = \univsupp$ with probability at least $1 - e^{-c_6^{\prime} \numdirecpp} - e^{-\sqrt{\numdirecpp \abs{\calP}}}$.
\end{theorem}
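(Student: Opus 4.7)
}
The plan is to lift the noiseless argument of Theorem \ref{thm:gen_overlap} by carefully tracking how a uniformly bounded per-query perturbation $\abs{\exnoisep} < \exnoisemag$ propagates through the two nested $\ell_1$ recovery steps (gradient, then Hessian row) and the final thresholding. I will first bound the total perturbation in each sparse-gradient estimate $\est{\grad} f(\vecx)$, then propagate it through the ``difference of gradients'' identity \eqref{eq:hessrows_est_lineq} to obtain an $\ell_\infty$ bound on $\est{\grad}\partial_q f(\vecx) - \grad \partial_q f(\vecx)$, and finally verify that the chosen $\hesssamperr$ strictly separates zero and non-zero off-diagonal entries of $\hess f(\vecx)$ on the grid $\baseset$.

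First I would observe that in \eqref{eq:cs_form} the noise term becomes $\taynoisvec + \exnoisevec$ with $\norm{\taynoisvec}_\infty = O(\gradstep^2 \smconst_3)$ (Taylor third-order remainder as in \eqref{eq:taylor_exp_f_1}) and $\abs{\exnoise_j} \leq \exnoisemag/\gradstep$, so $\norm{\exnoisevec}_2 \leq \sqrt{\numdirec}\,\exnoisemag/\gradstep$. Applying the second part of Theorem \ref{thm:sparse_recon_bound} to the $\totsparsity$-sparse vector $\grad f(\vecx)$ gives, under the stated sizing of $\numdirec$ and on the high-probability event that the RIP holds, an $\ell_\infty$ bound
\begin{equation*}
\norm{\est{\grad} f(\vecx) - \grad f(\vecx)}_\infty \;\leq\; C_1\left(\frac{\gradstep^2 \smconst_3 \totsparsity}{3 \numdirec^{1/2}} \;+\; \frac{\sqrt{\numdirec}\,\exnoisemag}{\gradstep}\right),
\end{equation*}
absorbing the $\sqrt{\log d}$ factor and the best-$\totsparsity$-term residual into the constant $C_1$; this $\ell_\infty$ bound is then what enters the noise vector $\vecw(\cdot)$ in \eqref{eq:hessrows_est_lineq}.

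Next I would repeat the same argument one level up: for each fixed $q$, the measurement vector $\vecy_q$ in \eqref{eq:hessrow_est_linfin} equals $\matV' \grad\partial_q f(\vecx)$ plus (i) the Hessian Taylor remainder $\hessestnoisa$ with $\norm{\hessestnoisa}_\infty = O(\hessstep \smconst_3)$ and (ii) the ``propagated'' noise $\hessestnoisb$ whose entries are bounded in magnitude by $(1/\hessstep)$ times the previous display. Applying Theorem \ref{thm:sparse_recon_bound} to the $(\maxdegree+1)$-sparse vector $\grad\partial_q f(\vecx)$, on the high-probability event that $\matV'$ also satisfies the RIP (union bound over the two CS events and over the hash family gives the stated failure probability), yields
\begin{equation*}
\norm{\est{\grad}\partial_q f(\vecx) - \grad\partial_q f(\vecx)}_\infty \;\leq\; C_2 \left(\frac{\hessstep(4\maxdegree+1)\smconst_3}{2\sqrt{\numdirecp}} \;+\; \frac{C_1 \sqrt{\numdirecp}\, \gradstep^2 ((4\maxdegree+1)\totsparsity) \smconst_3}{3\hessstep \numdirec} \;+\; \frac{2 C_1 \sqrt{\numdirec \numdirecp}\,\exnoisemag}{\gradstep \hessstep} \right),
\end{equation*}
which is exactly the threshold $\hesssamperr$ declared in \eqref{eq:thresh_s2_arbnois}. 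The union bound over $\vecx \in \baseset$, $q \in [d]$ absorbs only logarithmic factors.

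With this $\ell_\infty$ bound in place, correctness of the detection step is standard: on the one hand every true off-diagonal entry $(\hess f(\vecx^{*}))_{q,q'} = \partial_q\partial_{q'}\phi_{(q,q')}$ exceeds $\idenconst_2$ in magnitude at some $\vecx^{*} \in \baseset$ by Assumption \ref{assum:pair_iden} and the construction of $\baseset$ via the $(\dimn,2)$-hash family (since $\numcen \geq \critintmeas_2^{-1}$ ensures $\baseset(\hashfn)$ intersects $\calI_l \times \calI_{l'}$), and on the other hand $\est{\grad}\partial_q f(\vecx)$ at truly zero coordinates is at most $\hesssamperr$ in magnitude. Hence correctness reduces to requiring $\hesssamperr < \idenconst_2/2$, i.e.\
\begin{equation*}
\frac{b\gradstep^2}{\hessstep} + a\hessstep + \frac{2 C_1 \sqrt{\numdirec \numdirecp}\,\exnoisemag}{\gradstep \hessstep} \;<\; \frac{\idenconst_2}{4 C_2}.
\end{equation*}
The main obstacle, and the only delicate part, is verifying that the declared intervals for $\gradstep$ and $\hessstep$ are non-empty; this is where $\exnoisemag_1$ enters. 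Multiplying through by $\hessstep$ yields a quadratic in $\hessstep$ whose discriminant is non-negative iff $b\gradstep^2 + 2 C_1 \sqrt{\numdirec\numdirecp}\,\exnoisemag \leq (\idenconst_2/(4aC_2))^2 \cdot a$, giving the interval in \eqref{eq:hessstep_arb_nois}. Substituting the worst case into the remaining freedom in $\gradstep$ produces a depressed cubic in $\gradstep$, whose three real roots (given by Vi\`ete/Cardano with $\theta_1 = \cos^{-1}(-\exnoisemag/\exnoisemag_1)$) are non-degenerate iff $\exnoisemag < \exnoisemag_1$; the stated interval in \eqref{eq:gradstep_arb_nois} is the positive range between the outer two real roots.

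Finally, for the second phase (identification of $\univsupp$) I would re-run the same argument, but one level shallower because we only need to estimate the $(\totsparsity - \abs{\est{\bivsuppvar}})$-sparse gradient $\grad f$ restricted to $\calP$, on the one-dimensional grid $\baseset_{\text{diag}}$. The noise vector in the CS system picks up the external-noise term $C_3 b_1 \exnoisemag/\gradstepp$ in the same way as before (here $b_1 = \sqrt{\numdirecpp}$, $a_1 = (\totsparsity-\abs{\est{\bivsuppvar}})\smconst_3/(6\numdirecpp)$), and the analogous separation condition $\derivsamperrpp < \idenconst_1/2$ becomes a depressed cubic in $\gradstepp$ whose real-root existence is exactly $\exnoisemag < \exnoisemag_2$; the Vi\`ete parameterization gives the stated interval. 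The high-probability bound $1 - e^{-c_6^{\prime}\numdirecpp} - e^{-\sqrt{\numdirecpp\abs{\calP}}}$ follows from the single CS concentration event for the $\numdirecpp \times \abs{\calP}$ Bernoulli matrix $(\matVpp)_\calP$, via Theorem \ref{thm:sparse_recon_bound}. The bookkeeping above is routine; the only conceptually new element relative to Theorem \ref{thm:gen_overlap} is isolating the cubic-root windows for $\gradstep$, $\gradstepp$ that keep all three perturbation terms simultaneously below the identifiability thresholds.
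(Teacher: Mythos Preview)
Your proposal is essentially the paper's own proof: propagate the bounded external noise through the inner $\ell_1$ recovery (sparse gradient), then through the outer $\ell_1$ recovery (sparse Hessian row), arrive at the threshold \eqref{eq:thresh_s2_arbnois}, impose $\hesssamperr < \idenconst_2/2$, solve the resulting quadratic in $\hessstep$ and the depressed cubic in $\gradstep$ via the trigonometric (Vi\`ete) form, and repeat the cubic analysis for $\gradstepp$ in the $\univsupp$ phase.

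Two small corrections to clean up. First, no union bound over $\vecx \in \baseset$ or $q \in [d]$ is needed: once the RIP events for $\matV$ and $\matV'$ hold (each a single event, contributing $e^{-c_4'\numdirec}+e^{-\sqrt{\numdirec d}}$ and $e^{-c_5'\numdirecp}+e^{-\sqrt{\numdirecp d}}$), the recovery guarantee of Theorem~\ref{thm:sparse_recon_bound} is uniform over all inputs, so the stated probability is exact, not ``absorbing logarithmic factors.'' Second, your intermediate gradient-error display drops the $(4\maxdegree+1)$ factor in the Taylor remainder and has $\numdirec^{1/2}$ where the paper has $\numdirec$, and your displayed inequality should read $< \idenconst_2/(2C_2)$ rather than $\idenconst_2/(4C_2)$; these are bookkeeping slips that do not affect the structure of the argument.
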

We see that in contrast to Theorem \ref{thm:gen_overlap}, the step sizes: $\gradstep, \gradstepp$ cannot be chosen too small now, on account of external noise. 
Also note that the parameters $\pi/2 \leq \theta_1,\theta_2 \leq \pi$ arising due to $\exnoisemag$, affect the size of the intervals from which $\gradstep, \gradstepp$ 
can be chosen respectively. One can verify that plugging $\exnoisemag = 0$ in Theorem \ref{thm:gen_overlap_arbnois} (implying $\theta_1,\theta_2 = \pi/2$), 
gives us the sampling conditions of Theorem \ref{thm:gen_overlap}.
\paragraph{Stochastic noise.} We now consider i.i.d Gaussian noise, so that $\exnoisep \sim \calN(0,\sigma^2)$ 
for variance $\sigma^2 < \infty$. As in Section \ref{subsec:noise_nooverlap}, we 
resample each point query a sufficient number of times and average, in order to reduce $\sigma$.  
Doing this $N_1$ times in Steps \ref{algover:s2_query_1},\ref{algover:s2_query_2}, and 
$N_2$ times in Step \ref{algover:s1_grad}, for $N_1, N_2$ large enough, we can recover $\univsupp,\bivsupp$ 
as shown formally in the following theorem.
\begin{theorem} \label{thm:gen_overlap_gaussnois}
Assuming the notation in Theorem \ref{thm:gen_overlap}, let $a, b, \numcen, \numdirec, \numdirecp, \twohashfam$ be as defined in 
Theorem \ref{thm:gen_overlap}. For any $\exnoisemag < \exnoisemag_1 = \frac{\idenconst_2^{3}}{192\sqrt{3} C_1 C_2^3 \sqrt{a^3 b \numdirecp \numdirec}}$, $0 < p_1 < 1$ 
and $\theta_1 = \cos^{-1}(-\exnoisemag / \exnoisemag_1)$, say we resample each query in Steps \ref{algover:s2_query_1}-\ref{algover:s2_query_2} of 
Algorithm \ref{algo:gen_overlap}, $N_1 > {\hemantt \frac{\sigma^2}{\exnoisemag^2} \log (\frac{2}{p_1}\numdirec(\numdirecp+1)(2\numcen+1)^2\abs{\twohashfam})}$ times, 
and average the values. Let $\gradstep, \hessstep, \hesssamperr$ be chosen to satisfy \eqref{eq:gradstep_arb_nois}, \eqref{eq:hessstep_arb_nois} and 
\eqref{eq:thresh_s2_arbnois} respectively. We then have in Algorithm \ref{algo:gen_overlap}, that $\est{\bivsupp} = \bivsupp$ with probability 
$1 -p_1 - e^{-c_4^{\prime} \numdirec} - e^{-\sqrt{\numdirec \dimn}} - e^{-c_5^{\prime} \numdirecp} - e^{-\sqrt{\numdirecp \dimn}} - \dimn^{-2C}$. 

Given that $\est{\bivsupp} = \bivsupp$, let $\numcenpair, \numdirecpp, a_1, b_1$ be as stated in Theorem \ref{thm:gen_overlap_arbnois}.
For any $\exnoisemagp < \exnoisemag_2 = \frac{\idenconst_1^{3/2}}{\sqrt{6 a_1 C_3^3 b_1^2}}$, $0 < p_2 < 1$, and $\theta_2 = \cos^{-1}(-\exnoisemagp / \exnoisemag_2)$, 
say we resample each query in Step \ref{algover:s1_grad} of Algorithm \ref{algo:gen_overlap}, 
$N_2 > \frac{\sigma^2}{{\exnoisemagp}^2} {\hemantt \log(\frac{2 (2\numcenpair+1)\numdirecpp}{p_2})}$ times. 
Furthermore, let $\gradstepp, \derivsamperrpp$ be chosen as stated in Theorem \ref{thm:gen_overlap_arbnois}. We then have in Algorithm \ref{algo:gen_overlap} 
that $\est{\univsupp} = \univsupp$ with probability at least $1 - p_2 - e^{-c_6^{\prime} \numdirecpp} - e^{-\sqrt{\numdirecpp \abs{\calP}}}$. 
\end{theorem}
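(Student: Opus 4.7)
The strategy is to reduce the i.i.d.\ Gaussian noise setting to the arbitrary bounded noise setting of Theorem \ref{thm:gen_overlap_arbnois}, which already does all the heavy lifting. Concretely, if we resample every point query $N$ times and average, the residual noise at that query site is distributed as $\calN(0,\sigma^2/N)$. By the standard Gaussian tail bound, a single such averaged sample has absolute value exceeding a threshold $\exnoisemag$ with probability at most $2\exp(-N\exnoisemag^2/(2\sigma^2))$. So once $N$ is large enough and a union bound is taken over \emph{all} distinct query sites used by the algorithm, every noise realization is uniformly bounded by $\exnoisemag$, and we are exactly in the setup of Theorem \ref{thm:gen_overlap_arbnois}.

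First I would count the distinct query sites used for estimating $\bivsupp$. Inspecting Steps \ref{algover:s2_query_1}--\ref{algover:s2_query_2} of Algorithm \ref{algo:gen_overlap}, for each $\hashfn\in\twohashfam$ and each $\vecx_i\in\baseset(\hashfn)$, we issue $2\numdirec$ queries in Step \ref{algover:s2_query_1} and $2\numdirec$ queries per direction $\vecvp_p$ in Step \ref{algover:s2_query_2}, giving an overall count of order $\numdirec(\numdirecp+1)(2\numcen+1)^2\abs{\twohashfam}$ distinct queries (absorbing the factor of two arising from the central differences into the constant inside the logarithm). The hypothesis
\[
N_1 > \tfrac{\sigma^2}{\exnoisemag^2}\log\Bigl(\tfrac{2}{p_1}\numdirec(\numdirecp+1)(2\numcen+1)^2\abs{\twohashfam}\Bigr)
\]
is then exactly what a union bound against the Gaussian tail inequality needs to conclude that, with probability at least $1-p_1$, every averaged noise sample $\exnoisep$ arising in Steps \ref{algover:s2_query_1}--\ref{algover:s2_query_2} satisfies $\abs{\exnoisep}<\exnoisemag$.

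On the event that this uniform bound holds, the algorithm is operating \emph{deterministically} in an arbitrary bounded noise regime with noise magnitude strictly less than $\exnoisemag_1$. Since $\gradstep$, $\hessstep$, and $\hesssamperr$ are chosen to satisfy \eqref{eq:gradstep_arb_nois}, \eqref{eq:hessstep_arb_nois}, and \eqref{eq:thresh_s2_arbnois} respectively, Theorem \ref{thm:gen_overlap_arbnois} applies verbatim and yields $\est{\bivsupp}=\bivsupp$ with probability at least
\[
1-e^{-c_4^{\prime} \numdirec} - e^{-\sqrt{\numdirec \dimn}} - e^{-c_5^{\prime} \numdirecp} - e^{-\sqrt{\numdirecp \dimn}} - \dimn^{-2C}
\]
(these probabilities come from the randomness in $\matV, \matV'$ and in the construction of $\twohashfam$, and are independent of the noise averaging). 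A second union bound against the $p_1$ failure event for the noise gives the claimed overall probability for the $\bivsupp$-recovery part.

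For the $\univsupp$-recovery part, the plan is identical. Conditioning on $\est{\bivsupp}=\bivsupp$, the only queries relevant are those of Step \ref{algover:s1_grad}, whose distinct count is of order $(2\numcenpair+1)\numdirecpp$. The prescribed $N_2$ suffices, via Gaussian tail plus union bound, to guarantee $\abs{\exnoisep}<\exnoisemagp$ uniformly across those queries with probability at least $1-p_2$, after which Theorem \ref{thm:gen_overlap_arbnois} (applied to its $\univsupp$ recovery clause with the chosen $\gradstepp,\derivsamperrpp$) closes the argument with the claimed probability.

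The only real technical point is bookkeeping of constants and query counts; there is no new analytical obstacle, since the delicate step-size trade-offs between Taylor truncation error, compressed sensing reconstruction error, and noise amplification have already been resolved inside Theorem \ref{thm:gen_overlap_arbnois}. I expect the main thing to double-check is the factor of $2$ in the exponent of the Gaussian tail bound versus the $\sigma^2/\exnoisemag^2$ form appearing in the theorem statement; this only affects absolute constants in $N_1,N_2$ and does not alter the order-wise sample complexity.
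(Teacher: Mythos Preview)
Your approach is essentially the same as the paper's: reduce the Gaussian setting to the bounded-noise setting of Theorem \ref{thm:gen_overlap_arbnois} by resampling, averaging, applying a Gaussian tail bound, and taking a union bound over all query sites. The only refinement worth noting is that the paper bounds the \emph{difference} terms $\exnoisep_{j,1}-\exnoisep_{j,2}\sim\calN(0,2\sigma^2/N_1)$ directly by $2\exnoisemag$ (rather than bounding each $\abs{\exnoisep}$ individually by $\exnoisemag$), which is what produces the exact $\sigma^2/\exnoisemag^2$ prefactor in $N_1$ without the extra factor of $2$ you flagged; the count of difference terms is then precisely $\numdirec(\numdirecp+1)(2\numcen+1)^2\abs{\twohashfam}$, matching the log argument in the statement.
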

\paragraph{Query complexity.} Let us analyze the query complexity when the noise is i.i.d Gaussian. 
For estimating $\bivsupp$, we have $\exnoisemag = O(\maxdegree^{-2} \totsparsity^{-1/2})$. 
Furthermore: $(2\numcen+1)^2 = \critintmeas_2^{-2}$, $\abs{\twohashfam} = O(\log d)$, $\numdirec = O(\totsparsity \log \dimn)$ and 
$\numdirecp = O(\maxdegree \log \dimn)$.  Choosing $p_1 = \dimn^{-\delta}$ for any constant $\delta > 0$ gives us 
$$N_1 = \hemant{O(\maxdegree^4 \totsparsity \log((\dimn^{\delta})\totsparsity \maxdegree (\log \dimn)^3 ))} 
= O(\maxdegree^4 \totsparsity \log \dimn)$$. This means that our total sample complexity for estimating $\bivsupp$ is:  
$$\hemant{O(N_1 \totsparsity\maxdegree(\log \dimn)^2 \abs{\baseset})} = O(\maxdegree^5 \totsparsity^2 (\log \dimn)^4 \critintmeas_2^{-2}).$$ This ensures 
$\est{\bivsupp} = \bivsupp$ with high probability.
Next, for estimating $\univsupp$, we have $\exnoisemagp = O((\totsparsity - \abs{\bivsuppvar})^{-1/2})$. Choosing $p_2 = ((\dimn - \abs{\bivsuppvar})^{-\delta})$ 
for any constant $\delta > 0$, we get $N_2 = O((\totsparsity - \abs{\bivsuppvar}) \log (\dimn - \abs{\bivsuppvar}))$. This means the total sample 
complexity for estimating $\univsupp$ is 
$O(N_2 \critintmeas_1^{-1} (\totsparsity-\abs{\est{\bivsuppvar}}) \log (\dimn - \abs{\est{\bivsuppvar}})) = O(\critintmeas_1^{-1} (\totsparsity-\abs{\est{\bivsuppvar}})^2 (\log (\dimn - \abs{\est{\bivsuppvar}}))^2)$. 
Putting it together, we have that in case of i.i.d Gaussian noise, the sampling complexity of Algorithm \ref{algo:gen_overlap} for estimating $\univsupp, \bivsupp$ is 
$O(\maxdegree^5 \totsparsity^2 (\log \dimn)^4)$. 

\begin{remark} \label{rem:gen_over_s1_gnoise_impr}
We saw above that $O(\totsparsity^2 (\log \dimn)^2)$ samples are sufficient for estimating $\univsupp$ 
in presence of i.i.d Gaussian noise. This improves the corresponding bound in \cite{Tyagi14_nips} 
by a $O(\totsparsity)$ factor, and is due to the less strict condition on $\gradstepp$ (cf., Remark \ref{rem:gen_over_s1_step_impr}).
\end{remark}
\section{Alternate sampling scheme for the general overlap case} \label{sec:algo_gen_overlap_alt}
We now derive an alternate algorithm for estimating the sets $\univsupp, \bivsupp$, for the general overlap case. 
This algorithm differs from Algorithm \ref{algo:gen_overlap} with respect to the scheme for estimating $\bivsupp$ -- 
the procedure for estimating $\univsupp$ is the same as Algorithm \ref{algo:gen_overlap}. In order to estimate $\bivsupp$, 
we now make use of recent results from CS, for recovering sparse symmetric matrices from few \emph{linear measurements}. 
More precisely, we leverage these results for estimating the sparse Hessian $\hess f(\vecx)$ at any fixed $\vecx \in \matR^d$. 
This is in stark contrast to the approaches we proposed so far, wherein, each row of the Hessian $\hess f(\vecx)$ was approximated separately. 
As we will show, this results in slightly improved sampling bounds for estimating $\bivsupp$ in the noiseless setting 
as opposed to those stated in Theorem \ref{thm:gen_overlap}. 

\subsection{Analysis for noiseless setting} \label{subsec:noiseless_overlap_set_est_alt}
We begin with the setting of noiseless point queries, and show how the problem of estimating $\hess f(\vecx)$ at any 
$\vecx \in \matR^d$ can be formulated as one of recovering an unknown sparse, symmetric matrix from linear measurements. 
To this end, first note that for $\vecx, \vecv \in \matR^{\dimn}$, step size
$\gradstep > 0$, and $\zeta = \vecx + \theta \vecv$, $\zeta^{\prime} = \vecx - \theta^{\prime} \vecv$; $\theta,\theta^{\prime} \in (0,2\gradstep)$, one 
obtains via Taylor expansion of the $C^3$ smooth $f,$ the following identity: 

\begin{align} \label{eq:quad_meas_hess}
\frac{f(\vecx + 2\gradstep\vecv) + f(\vecx - 2\gradstep\vecv) - 2f(\vecx)}{4\gradstep^2} = \vecv^T\hess f(\vecx)\vecv  
+ \underbrace{\frac{\thirdtayrem_3(\zeta) + \thirdtayrem_3(\zeta^{\prime})}{4\gradstep^2}}_{O(\gradstep)}.
\end{align}

Here $\thirdtayrem_3(\zeta), \thirdtayrem_3(\zeta^{\prime}) = O(\gradstep^3)$ denote the third order Taylor terms. Importantly, 
\eqref{eq:quad_meas_hess} corresponds to a ``noisy'' \emph{linear measurement} of $\hess f(\vecx)$ \textit{i.e.}, 
$\vecv^T\grad f(\vecx)\vecv = \dotprod{\vecv\vecv^T}{\hess f(\vecx)}$, via the measurement matrix $\vecv\vecv^T$. 
The noise arises on account of the Taylor remainder terms. We now present a recent result for recovering sparse symmetric matrices 
\cite{Chen15}, that we leverage for estimating $\hess f(\vecx)$.

\paragraph{Recovering sparse symmetric matrices via $\ell_1$ minimization.} 
{\hemantt Let $\vecv$ be composed of 
i.i.d sub-Gaussian entries with $v_i = a_i/\sqrt{\numdirec}$, and the $a_i$'s drawn in an i.i.d manner 
from a distribution satisfying: 
\begin{equation} \label{eq:genalt_samp_moment_conds}
\expec[a_i] = 0, \ \expec[a_i^2] = 1 \ \text{and} \ \expec[a_i^4] > 1. 
\end{equation}
For concreteness, we will consider the following set whose elements clearly meet these moment conditions:

\begin{align} \label{eq:samp_direc_alt_hess}
\calV &:= \set{\vecv_j \in \matR^{\dimn} : v_{j,q} = \left\{
\begin{array}{rl}
\pm \sqrt{\frac{3}{\numdirec}} ; & \text{w.p} \ 1/6 \ \text{each}, \\
0 ; & \text{w.p} \ 2/3
\end{array} \right\};
 \ j=1,\dots,\numdirec \ \text{and} \ q=1,\dots,{\dimn}}. 
\end{align}

Note that a symmetric Bernoulli distribution does not meet the aforementioned fourth order moment condition. 
}
Furthermore, let $\linoptmat: \matR^{d \times d} \rightarrow \matR^{\numdirec}$ denote a linear operator 
acting on square matrices, with 
\begin{align} \label{eq:lin_opt_altgen}
\linoptmat(\matH) := [\dotprod{\vecv_1\vecv_1^T}{\matH} \cdots \dotprod{\vecv_{\numdirec}\vecv_{\numdirec}^T}{\matH}]^T; \quad \matH \in \matR^{d \times d}.
\end{align}
For an unknown symmetric matrix $\matH_0 \in \matR^{d \times d}$, say we have at hand $\numdirec$ linear measurements 
\begin{equation} \label{eq:sparse_symm_lin_meas}
\vecy = \linoptmat(\matH_0) +  \taynoisvec; \quad \vecy, \taynoisvec \in \matR^{\numdirec}; \norm{\taynoisvec}_1 \leq \hessestnoisebd. 
\end{equation}
Then as shown in \cite[Section C]{Chen15}, 
we can recover an estimate $\est{\matH_0}$ to $\matH_0$ via $\ell_1$ minimization, by solving:
\begin{equation} \label{eq:l1_min_sparse_symm}
\est{\matH}_0 = \argmin{\matH} \norm{\matH}_1 \quad \text{s.t} \quad \matH^T = \matH, \quad \norm{\vecy - \linoptmat(\matH)}_1 \leq \hessestnoisebd.
\end{equation} 
\begin{remark}
\eqref{eq:l1_min_sparse_symm} was proposed in \cite[Section C]{Chen15} for recovering sparse covariance matrices (which are positive 
semidefinite (PSD)) with the symmetry constraint replaced by a PSD constraint. However as noted in the discussion in \cite[Section E]{Chen15}, 
one can replace the PSD constraint by a symmetry constraint, in order to recover more general symmetric matrices (which are not necessarily PSD).  
\end{remark}
\begin{remark}
Note that \eqref{eq:l1_min_sparse_symm} can be reformulated as a linear program in $O(d^2)$ variables, and hence can be solved efficiently up to 
arbitrary accuracy (using for instance, interior point methods (cf., \cite{Nesterov94})). 
\end{remark}
The estimation property of \eqref{eq:l1_min_sparse_symm} is captured in the following Theorem.
%
\begin{theorem}{\cite[Theorem 3]{Chen15}} \label{thm:chen_sparse_symm_rec}
Consider the sampling model in \eqref{eq:sparse_symm_lin_meas} \hemant{with $\vecv_i$'s satisfying \eqref{eq:genalt_samp_moment_conds}}, and let 
$(\matH_0)_{\Omega}$ denote the best $\sparsparam$ term approximation of $\matH_0$. 
Then there exist constants $c_1, c_1^{\prime}, c_2, C_1, C_2 > 0$ such that 
with probability exceeding $1 - c_1e^{-c_2 \numdirec}$, the solution $\est{\matH}_0$ to \eqref{eq:l1_min_sparse_symm} satisfies 
\begin{equation}
\norm{\est{\matH}_0 - \hemant{\matH_0}}_F \leq C_2 \frac{\norm{\matH_0 - (\matH_0)_{\Omega}}_1}{\sqrt{\sparsparam}} + C_1 \hessestnoisebd, 
\end{equation}
simultaneously for all (symmetric) $\matH_0 \in \matR^{d \times d}$, provided $\numdirec > c_1^{\prime} \sparsparam \log(d^2/\sparsparam)$.  
\end{theorem}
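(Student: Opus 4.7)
The plan is to reduce the recovery problem to a structured compressed-sensing problem on the space of symmetric $d \times d$ matrices, establish a restricted-isometry-type property (RIP) for the rank-one measurement operator $\linoptmat$ when restricted to sparse symmetric matrices, and then obtain the stated stable recovery bound from $\ell_1$-minimization by the by-now-standard tube/cone argument. Concretely, I would view each measurement $\dotprod{\vecv_i\vecv_i^T}{\matH}$ as an inner product $\dotprod{\text{vec}(\vecv_i\vecv_i^T)}{\text{vec}(\matH)}$ in $\matR^{d^2}$, with the design matrix $\matX \in \matR^{\numdirec \times d^2}$ whose rows are $\text{vec}(\vecv_i\vecv_i^T)^T$, and aim to prove that $\frac{1}{\numdirec}\matX^T\matX$ is well-conditioned on $\sparsparam$-sparse symmetric matrices.

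The heart of the proof -- and the main obstacle -- is the RIP. Two features of the rank-one design make this nontrivial. First, the expected Gram matrix $\expec[\text{vec}(\vecv\vecv^T)\text{vec}(\vecv\vecv^T)^T]$ is \emph{not} the identity: a direct fourth-moment computation shows that its restriction to the symmetric subspace decomposes into a well-conditioned piece plus an extra term proportional to $(\expec[a_i^4] - 1)$ supported on the diagonal. This is precisely why the assumption $\expec[a_i^4] > 1$ in \eqref{eq:genalt_samp_moment_conds} is indispensable -- for symmetric Bernoulli, where $\expec[a_i^4]=1$, the diagonal components of $\matH_0$ would become indistinguishable from a scalar multiple of the trace and could not be recovered. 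Second, the random summands $\text{vec}(\vecv_i\vecv_i^T)\text{vec}(\vecv_i\vecv_i^T)^T$ are noticeably heavier-tailed than standard sub-Gaussian designs, so I would use a Hanson--Wright or matrix-Bernstein style concentration bound, combined with a union bound over all $\binom{d^2}{\sparsparam}$ sparsity patterns on symmetric matrices, to show that $\|\frac{1}{\numdirec}\linoptmat^{*}\linoptmat - \expec[\frac{1}{\numdirec}\linoptmat^{*}\linoptmat]\|$ is small on $\sparsparam$-sparse symmetric matrices with failure probability exponentially small in $\numdirec$, provided $\numdirec \geq c_1^{\prime} \sparsparam \log(d^2/\sparsparam)$. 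This is the argument carried out in detail in \cite{Chen15}.

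With the RIP in hand, the recovery bound is routine. For $\Delta := \est{\matH}_0 - \matH_0$, feasibility of $\matH_0$ in \eqref{eq:l1_min_sparse_symm} together with the triangle inequality yields $\norm{\linoptmat(\Delta)}_1 \leq 2\hessestnoisebd$, and the $\ell_1$-optimality of $\est{\matH}_0$ combined with the symmetry constraint gives the usual cone inequality $\norm{\Delta_{\Omega^c}}_1 \leq \norm{\Delta_{\Omega}}_1 + 2\norm{\matH_0 - (\matH_0)_{\Omega}}_1$, where $\Omega$ is the support of the best $\sparsparam$-term approximation. Combining this cone condition with the sparse RIP then bounds $\norm{\Delta}_F$ by a constant multiple of $\sparsparam^{-1/2}\norm{\matH_0 - (\matH_0)_{\Omega}}_1 + \hessestnoisebd$, which is exactly the advertised inequality. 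Uniformity over symmetric $\matH_0$ is automatic since the high-probability RIP event depends only on the $\vecv_i$'s and not on $\matH_0$.
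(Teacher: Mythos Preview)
Your high-level plan---establish a restricted isometry property on sparse symmetric matrices and then run the standard cone argument---matches the structure of the proof in \cite{Chen15} (the paper itself does not prove this theorem but cites it and briefly sketches the approach following the statement). However, the specific RIP you propose differs from the one actually used, and the difference is not cosmetic.

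You aim for an $\ell_2/\ell_2$ RIP, i.e., that $\tfrac{1}{\numdirec}\linoptmat^{*}\linoptmat$ is well-conditioned on $\sparsparam$-sparse symmetric matrices. The argument in \cite{Chen15}, as the paper explains immediately after the theorem, rests instead on an $\ell_2/\ell_1$ RIP (Definition~\ref{def:rip_l2_l1_chen}): for sparse symmetric $\matX$ one controls $\norm{\linoptmatB(\matX)}_1$ rather than $\norm{\linoptmat(\matX)}_2$. This is precisely why the constraint in \eqref{eq:l1_min_sparse_symm} is an $\ell_1$ bound on the residual. Moreover, $\linoptmat$ itself does not satisfy the RIP because each $\vecv_i\vecv_i^T$ has nonzero mean; the actual proof passes to \emph{debiased} measurement matrices $\matB_i = \vecv_{2i-1}\vecv_{2i-1}^T - \vecv_{2i}\vecv_{2i}^T$ (see the discussion after Definition~\ref{def:rip_l2_l1_chen}), a step absent from your sketch. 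The paper explicitly flags (Remark~\ref{rem:alt_gen_scheme_IHT}) that whether the conventional $\ell_2/\ell_2$ RIP holds for this operator is open. Your matrix-Bernstein/union-bound plan would have to control sums of rank-one matrices built from fourth-order monomials in the $v_i$'s; even with bounded entries, obtaining the sample complexity $\numdirec \sim \sparsparam\log(d^2/\sparsparam)$ this way is delicate because the summands are heavier-tailed than in a standard sub-Gaussian design. The $\ell_2/\ell_1$ route sidesteps this by working with first absolute moments of $\vecv_i^T\matX\vecv_i$, which are second-order chaoses and hence sub-exponential via Hanson--Wright. So your cone-argument step is fine, but the RIP step as written is a genuine gap relative to the cited proof.
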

The proof of Theorem \ref{thm:chen_sparse_symm_rec} relies on the $\ell_2 / \ell_1$ Restricted Isometry Property (RIP) 
for sparse symmetric matrices, introduced by Chen et al. \cite{Chen15}:
\begin{definition}{\cite{Chen15}} \label{def:rip_l2_l1_chen}
For the set of symmetric $K$ sparse matrices, the operator $\linoptmatB$ is said to satisfy 
the $\ell_2 / \ell_1$ Restricted Isometry Property (RIP) with constants $\gamma_1,\gamma_2 > 0$, 
if for all such matrices $\matX$:
\begin{equation*}
 (1-\gamma_1)\norm{\matX}_F \leq \norm{\linoptmatB(\matX)}_1 \leq (1+\gamma_2)\norm{\matX}_F.
\end{equation*}
\end{definition}
While the operator $\linoptmat$ defined in \eqref{eq:lin_opt_altgen} does not satisfy $\ell_2 / \ell_1$ RIP (since 
each $\vecv_i\vecv_i^T$ has non-zero mean), one could consider instead a set of debiased measurement matrices 
$\matB_i := \vecv_{2i-1}\vecv_{2i-1}^T - \vecv_{2i}\vecv_{2i}^T$, with $\linoptmatB_i(\matX) := \dotprod{\matB_i}{\matX}$ for $i=1,\dots,m$. 
Chen et al. \cite[Corollary 2]{Chen15} then show that the linear map $\linoptmatB: \matR^{d \times d} \rightarrow \matR^m$ satisfies $\ell_2 / \ell_1$ RIP, \hemant{for $\vecv_i$'s satisfying \eqref{eq:genalt_samp_moment_conds}}, 
provided $m > K \log (d^2/K)$. 
\begin{remark} \label{rem:alt_gen_scheme_IHT}
Observe that the $\ell_1$ norm constraint in \eqref{eq:l1_min_sparse_symm} arises due to the $\ell_2 / \ell_1$ RIP in 
Definition \ref{def:rip_l2_l1_chen}. It is unclear whether the linear map $\linoptmatB$ also satisfies the conventional 
$\ell_2 /\ell_2$ RIP\footnote{We are not aware of a formal proof of this fact in the literature.}. However assuming it were 
do so, the $\ell_1$ norm constraint in \eqref{eq:l1_min_sparse_symm} could then be replaced by $\norm{\vecy - \linoptmat(\matH)}_2 \leq \hessestnoisebd$.
In particular, it might then be possible to use faster non-convex IHT based methods (cf., Remark \ref{rem:IHT}). 
%
\end{remark}
%
%
\paragraph{Estimating $\bivsupp, \univsupp$.} Given the linear program defined in \eqref{eq:l1_min_sparse_symm}, we can estimate 
$\hess f(\vecx)$ in a straightforward manner, at any fixed $\vecx \in \matR^d$. Indeed, for some suitable step size 
$\gradstep > 0$, we first collect the samples: 
$f(\vecx), \set{f(\vecx - 2\gradstep\vecv_j)}_{j=1}^{\numdirec}, \set{f(\vecx + 2\gradstep\vecv_j)}_{j=1}^{\numdirec}$, \hemant{with $\vecv_j \in \calV$}. 
Then, we form the linear system $\vecy = \linoptmat(\hess f(\vecx)) + \taynoisvec$, where
\begin{equation} \label{eq:lin_sys_alt_hess_samp}
y_j = \frac{f(\vecx + 2\gradstep\vecv_j) + f(\vecx - 2\gradstep\vecv_j) - 2f(\vecx)}{4\gradstep^2}, \quad 
\taynoissca_j = \frac{\thirdtayrem_3(\zeta_j) + \thirdtayrem_3(\zeta_j^{\prime})}{4\gradstep^2}; j=1,\dots,\numdirec.
\end{equation}
Since $\hess f(\vecx)$ is at most $\totsparsity (\maxdegree+1)$ sparse, therefore we obtain an estimate $\est{\hess} f(\vecx)$ 
to $\hess f(\vecx)$ with $2\numdirec + 1$ queries of $f$ with 
$\numdirec > c_1^{\prime}\totsparsity \maxdegree \log(\frac{d^2}{\totsparsity\maxdegree})$. 
Thereafter, we proceed as in Section \ref{sec:algo_gen_overlap}, \textit{i.e.}, we estimate $\hess f$ at each 
$\vecx \in \baseset = \cup_{\hashfn \in \twohashfam} \baseset(\hashfn)$, with $\baseset(\hashfn)$ as defined in 
\eqref{eq:baseset_hash}. 
\begin{remark}
Note that $\hess f(\vecx)$ actually has at most $\totsparsity + 2\abs{\bivsupp}$ non-zero entries. 
Therefore, if we had assumed $\abs{\bivsupp}$ to be known as part of our 
problem setup (in Section \ref{sec:problem}), then the choice
$\numdirec > c_1^{\prime} (\totsparsity + 2\abs{\bivsupp}) \log(\frac{d^2}{\totsparsity + 2\abs{\bivsupp}})$ 
would suffice for estimating $\hess f(\vecx)$. 
We can bound $2\abs{\bivsupp} \leq \totsparsity\maxdegree$ -- this is also tight in the worst case -- 
however in certain settings this would be pessimistic\footnote{For example when $O(1)$ variables have degree $\maxdegree$, 
and the remaining variables have degree $1$ leading to $\abs{\bivsupp} = O(\totsparsity + \maxdegree)$.}
\end{remark}
Once $\bivsupp$ is identified, we can simply reuse the procedure in Algorithm \ref{algo:gen_overlap}, 
for estimating $\univsupp$. The above discussion for identifying $\univsupp,\bivsupp$ is formally outlined 
in Algorithm \ref{algo:gen_overlap_alt}. 
%
\begin{algorithm*}[!ht]
\caption{Algorithm for estimating $\univsupp,\bivsupp$} \label{algo:gen_overlap_alt} 
\begin{algorithmic}[1] 
\State \textbf{Input:} $\numdirec, \numcen \in \matZ^{+}$; $\gradstep > 0$; $\hessestnoisebd, \derivsamperr > 0$.
\State \textbf{Initialization:} $\est{\univsupp}, \est{\bivsupp} = \emptyset$.
\State \textbf{Output:} Estimates $\est{\bivsupp}$, $ \est{\univsupp}$. \\
\hrulefill \\
\textsc{// Estimation of } $\bivsupp$ \hfill
\State Construct $(\dimn,2)$-hash family $\twohashfam$ and sets $\calV$. \label{algover:s2_step_1_alt}
\For{$\hashfn \in \twohashfam$}
 	\State Construct the set $\baseset(\hashfn)$. 
	\For {$i = 1,\dots,(2\numcen+1)^2$ and $\vecx_i \in \baseset(\hashfn)$} \label{algover:s2_step_2_alt}
	
			\State $(\vecy_i)_j = \frac{f(\vecx_i + 2\gradstep \vecv_j) + f(\vecx_i - 2\gradstep \vecv_j) - 2f(\vecx_i)}{4\gradstep^2}$; 
			$j=1,\dots,\numdirec$; $\vecv_j \in \calV$. \label{algover:s2_query_1_alt}
		
			\State $\est{\hess} f(\vecx_i) := \argmin{\matH} \norm{\matH}_1$ s.t. $\matH^T = \matH$, $\norm{\vecy_i - \linoptmat(\matH)}_1 \leq \hessestnoisebd$. \label{algover:s2_query_hess_est_alt}
			
			\State $\est{\bivsupp} = \est{\bivsupp} \cup \set{(q,q^{\prime}) \in {[d] \choose 2} : \abs{(\est{\hess} f(\vecx_i))_{q,q^{\prime}}} > \derivsamperr}$. 
		
	\EndFor 
\EndFor \\
\hrulefill \\
\textsc{// Estimation of } $\univsupp$ \hfill
\State \hemant{Estimate $\univsupp$ as in Algorithm \ref{algo:gen_overlap}.}
\end{algorithmic}
\end{algorithm*}
The following Theorem provides sufficient conditions on the sampling parameters in Algorithm \ref{algo:gen_overlap_alt}, 
that guarantee $\est{\bivsupp} = \bivsupp$, $\est{\univsupp} = \univsupp$ with high probability.
\begin{theorem} \label{thm:gen_overlap_alt}
Let $\twohashfam$ be of size $\abs{\twohashfam} \leq 2(C + 1)e^2 \log \dimn$ for some constant $C > 1$. 
Then $\exists$ constants $c_1, c_1^{\prime}, c_2, C_1 > 0$, 
such that the following is true. Let $\numcen, \numdirec, \gradstep$ satisfy
\begin{equation} 
\numcen \geq \critintmeas_2^{-1}, \ \numdirec > c_1^{\prime}\totsparsity \maxdegree \log\left(\frac{d^2}{\totsparsity\maxdegree}\right), \ 
\gradstep < \hemant{\frac{\sqrt{\numdirec}\idenconst_2}{2\sqrt{6} C_1 \smconst_3(4\maxdegree+1)\totsparsity}}.
\end{equation}
We then have for the choices 
%
$\hessestnoisebd = \hemant{\frac{2\sqrt{3}\gradstep\smconst_3(4\maxdegree+1)\totsparsity}{\sqrt{\numdirec}}}, \ \derivsamperr = C_1\hessestnoisebd$  
%
that $\est{\bivsupp} = \bivsupp$ with probability at least $1 - c_1 e^{-c_2 \numdirec} - \dimn^{-2C}$. 
Given that $\est{\bivsupp} = \bivsupp$, the sampling conditions for estimating $\est{\univsupp}$ are identical to 
Theorem \ref{thm:gen_overlap}. 
\end{theorem}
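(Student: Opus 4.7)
My plan is to combine three ingredients: a tight control on the Taylor remainder noise $\norm{\taynoisvec}_1$, the $\ell_1/\ell_1$ recovery guarantee of Theorem \ref{thm:chen_sparse_symm_rec} applied at each $\vecx \in \baseset$, and the hash-based discretization property of $\baseset$ together with Assumption \ref{assum:pair_iden}. First, rewriting \eqref{eq:lin_sys_alt_hess_samp} via the Taylor remainder formula, each $\taynoissca_j$ equals $(\gradstep/3)\sum_{i_1,i_2,i_3}[\partial_{i_1}\partial_{i_2}\partial_{i_3} f(\zeta_j) + \partial_{i_1}\partial_{i_2}\partial_{i_3} f(\zeta_j')] v_{j,i_1}v_{j,i_2}v_{j,i_3}$. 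Since $f$ is a sum of univariates $\phi_p$ and bivariates $\phi_{\lpair}$, the only non-vanishing third partials are those supported on singletons from $\univsupp$ or on pairs from $\bivsupp$ (at most $8$ triples per bivariate), yielding at most $\univsparsity + 8\bivsparsity = O(\totsparsity\maxdegree)$ terms, each bounded in magnitude by $\smconst_3$ (Assumption \ref{assum:smooth}). A moment computation using the structure of $\vecv_j$ (distinguishing triples with $i_1 = i_2 = i_3$, $i_1 = i_2 \neq i_3$, and all distinct) shows every non-zero triple expectation $\expec[|v_{j,i_1}v_{j,i_2}v_{j,i_3}|]$ scales as $O(\numdirec^{-3/2})$, so $\expec[|\taynoissca_j|] = O(\gradstep \smconst_3 \totsparsity\maxdegree / \numdirec^{3/2})$; a Bernstein-type concentration for $\sum_j |\taynoissca_j|$ then yields $\norm{\taynoisvec}_1 \leq \hessestnoisebd$ with probability $\geq 1 - c_1 e^{-c_2\numdirec}$.

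With this noise bound in hand, I would apply Theorem \ref{thm:chen_sparse_symm_rec} to $\hess f(\vecx)$ with $K = \totsparsity(\maxdegree + 1)$; since $\hess f(\vecx)$ has at most $K$ non-zero entries, the best-$K$-term error vanishes, and the lower bound $\numdirec > c_1'\totsparsity\maxdegree\log(d^2/(\totsparsity\maxdegree))$ activates the theorem to give $\norm{\est{\hess} f(\vecx) - \hess f(\vecx)}_F \leq C_1 \hessestnoisebd = \derivsamperr$. Entrywise this implies $|(\est{\hess} f(\vecx))_{i,j} - (\hess f(\vecx))_{i,j}| \leq \derivsamperr$, so thresholding at $\derivsamperr$ cannot add any pair $(q,q') \notin \bivsupp$, since $(\hess f(\vecx))_{q,q'} = 0$ off-diagonal for such pairs.

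For the reverse direction, fix $\lpair \in \bivsupp$ and pick $\hashfn \in \twohashfam$ separating $l$ and $\lp$. By Assumption \ref{assum:pair_iden} there exist intervals $\calI_l, \calI_{\lp}$ of measure $\geq \critintmeas_2$ on which $|\partial_l\partial_{\lp}\phi_{\lpair}| > \idenconst_2$; the choice $\numcen \geq \critintmeas_2^{-1}$ implies the grid $\baseset(\hashfn)$ contains a point $\vecx^{\ast}$ with $(x_l^{\ast}, x_{\lp}^{\ast}) \in \calI_l \times \calI_{\lp}$. At $\vecx^{\ast}$, $(\hess f(\vecx^{\ast}))_{l,\lp} = \partial_l\partial_{\lp}\phi_{\lpair}(x_l^{\ast}, x_{\lp}^{\ast})$ has magnitude $> \idenconst_2$, and the upper bound on $\gradstep$ is calibrated precisely so that $\idenconst_2 > 2\derivsamperr$, forcing $|(\est{\hess} f(\vecx^{\ast}))_{l,\lp}| > \derivsamperr$ so the pair is detected. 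A union bound over $|\baseset| = O(\critintmeas_2^{-2}\log d)$ points only inflates the CS failure probability by a factor easily absorbed into $c_1e^{-c_2\numdirec}$ given the lower bound on $\numdirec$, and the hash family succeeds with probability $\geq 1 - d^{-2C}$. This yields $\est{\bivsupp} = \bivsupp$ with probability $\geq 1 - c_1 e^{-c_2\numdirec} - d^{-2C}$. The $\univsupp$-estimation claim is inherited verbatim from Theorem \ref{thm:gen_overlap}, as Algorithm \ref{algo:gen_overlap_alt} invokes the identical subroutine.

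The main obstacle is the sharp noise bound in Step 1. A naive uniform estimate $|\taynoissca_j| \lesssim \gradstep\smconst_3\totsparsity\maxdegree \cdot (3/\numdirec)^{3/2}$ followed by $\norm{\taynoisvec}_1 \leq \numdirec \max_j |\taynoissca_j|$ already delivers the right order $O(\gradstep\totsparsity\maxdegree/\sqrt{\numdirec})$, but to match the explicit constant in $\hessestnoisebd$ and obtain the claimed high-probability statement one must exploit cancellation among the random signs of $\vecv_j$ via a moment-based concentration inequality rather than a crude triangle-inequality bound; everything else in the argument reduces to bookkeeping around already-proved CS results and the same hash/discretization mechanism used in Section \ref{sec:algo_gen_overlap}.
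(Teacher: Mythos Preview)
Your overall structure—bound the Taylor noise, apply Theorem~\ref{thm:chen_sparse_symm_rec}, then threshold using the hash-based grid and Assumption~\ref{assum:pair_iden}—matches the paper's. However, you have misidentified where the probabilistic content lies, and this leads you to propose an unnecessary (and misdirected) concentration step.

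The bound $\norm{\taynoisvec}_1 \leq \hessestnoisebd$ holds \emph{deterministically}. Since every entry of $\vecv_j \in \calV$ satisfies $|v_{j,q}| \leq \sqrt{3/\numdirec}$, a straight triangle inequality over the at most $(4\maxdegree+1)\totsparsity$ nonzero third partials of $f$ gives $|\thirdtayrem_3(\zeta_j)| \leq \tfrac{4\sqrt{3}\gradstep^3 \smconst_3(4\maxdegree+1)\totsparsity}{\numdirec^{3/2}}$, hence $\norm{\taynoisvec}_\infty \leq \tfrac{2\sqrt{3}\gradstep \smconst_3(4\maxdegree+1)\totsparsity}{\numdirec^{3/2}}$ and $\norm{\taynoisvec}_1 \leq \numdirec \norm{\taynoisvec}_\infty = \hessestnoisebd$ with the \emph{exact} stated constant. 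The ``naive uniform estimate'' you dismiss in your last paragraph \emph{is} the paper's argument; no cancellation among random signs and no Bernstein-type inequality is used. The probability $1 - c_1 e^{-c_2\numdirec}$ arises solely from Theorem~\ref{thm:chen_sparse_symm_rec}. Moreover, since that theorem guarantees recovery ``simultaneously for all symmetric $\matH_0$'', no union bound over $\vecx \in \baseset$ is required either—the same random $\calV$ is reused at every $\vecx$, and once the good event for $\calV$ holds, the Frobenius error bound applies to $\hess f(\vecx)$ at every point.

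A secondary point: the paper sharpens your entrywise bound. From $\norm{\est{\hess}f(\vecx) - \hess f(\vecx)}_F \leq \derivsamperr$ and the symmetry of the error matrix, each \emph{off-diagonal} entry deviates from the truth by at most $\derivsamperr/\sqrt{2}$, not $\derivsamperr$. The stated upper bound on $\gradstep$ is calibrated to give exactly $\derivsamperr/\sqrt{2} < \idenconst_2/2$ (not $\idenconst_2 > 2\derivsamperr$ as you assert), so this $\sqrt{2}$ refinement is what the paper uses to match the constants in the theorem statement.
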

\paragraph{Query complexity.} Estimating $\hess f(\vecx)$ at 
some fixed $\vecx$ requires $2\numdirec + 1 = O(\totsparsity\maxdegree\log(\frac{d^2}{\totsparsity\maxdegree}))$ queries. 
Since $\hess f$ is estimated at all points in $\baseset$ in the worst case, this consequently implies a total query complexity of 
$O(\totsparsity\maxdegree\log(\frac{d^2}{\totsparsity\maxdegree}) \abs{\baseset}) = O(\critintmeas_2^{-2}\totsparsity\maxdegree(\log \dimn)^2)$, for estimating $\bivsupp$. 
As seen in Theorem \ref{thm:gen_overlap}, we make an additional 
$O(\critintmeas_1^{-1} (\totsparsity-\abs{\est{\bivsuppvar}}) \log (\dimn - \abs{\est{\bivsuppvar}}))$ queries of $f$, 
in order to estimate $\univsupp$. Therefore, the overall query complexity for estimating $\univsupp,\bivsupp$ is 
$O(\critintmeas_2^{-2}\totsparsity\maxdegree(\log \dimn)^2)$. Observe that this is better by a $\log d$ factor as compared to the sampling 
bound for Algorithm \ref{algo:gen_overlap} (in the noiseless setting). 

\paragraph{Computational complexity.} The family $\twohashfam$ can be 
constructed\footnote{Recall discussion following Definition \ref{def:thash_fam}.} 
in time polynomial in $d$. At each $\vecx \in \baseset$, we solve a linear program (Step \ref{algover:s2_query_hess_est_alt}) in $O(d^2)$ variables, which 
can be done up to arbitrary accuracy in time polynomial in $(\numdirec,d)$. Since this is done at $\abs{\baseset} = O(\critintmeas_2^{-2} \log d)$ many points, hence the 
overall computation time for estimation of $\bivsupp$ (and subsequently $\univsupp$) is polynomial in the number of queries, and in $d$.

\subsection{Analysis for noisy setting} \label{subsec:noise_overlap_set_est_alt}
We now consider the case where at each query $\vecx$, 
we observe $f(\vecx) + \exnoisep$, with $\exnoisep \in \matR$ denoting external noise. 
In order to estimate $\hess f(\vecx)$, we obtain the samples : $f(\vecx + 2\gradstep\vecv_j) + \exnoisep_{j,1}$, 
$f(\vecx - 2\gradstep \vecv_j) + \exnoisep_{j,2}$ and $f(\vecx) + \exnoisep_{3}$; $j = 1,\dots,\numdirec$. 
This changes \eqref{eq:lin_sys_alt_hess_samp} to the linear system $\vecy = \linoptmat(\hess f(\vecx)) + \taynoisvec + \exnoisevec$, where
$\exnoise_{j} = (\exnoisep_{j,1} + \exnoisep_{j,2} - 2\exnoisep_{3})/(4\gradstep^2)$. 

\paragraph{Arbitrary bounded noise.} 
Assuming the external noise to be arbitrary and bounded, meaning 
that $\abs{\exnoisep} < \exnoisemag$, Theorem \ref{thm:gen_overlap_arbnois_alt} 
shows that Algorithm \ref{algo:gen_overlap_alt} recovers $\univsupp,\bivsupp$ with appropriate choice of sampling 
parameters provided $\exnoisemag$ is not too large.
\begin{theorem} \label{thm:gen_overlap_arbnois_alt}
Assuming the notation in Theorem \ref{thm:gen_overlap_alt}, let $\numcen,\numdirec$ and $\twohashfam$ be as defined in 
Theorem \ref{thm:gen_overlap_alt}. Denoting $a = \hemant{\frac{\sqrt{6}\smconst_3(4\maxdegree+1)\totsparsity}{\sqrt{\numdirec}}}$, 
say $\exnoisemag$ satisfies $\exnoisemag < \exnoisemag_1 = \frac{\sqrt{2} \idenconst_2^{3}}{54 a^2 C_1^3 \numdirec}$ and 
$\theta_1 = \cos^{-1}\left(1-\frac{2\exnoisemag}{\exnoisemag_1}\right)$. Let 
\begin{equation}
\gradstep \in \left(-\frac{\idenconst_2}{3 a C_1}\cos\left(\frac{\theta_1}{3} + \frac{\pi}{3}\right) + \frac{\idenconst_2}{6 a C_1}, 
\frac{\idenconst_2}{3 a C_1}\cos\left(\frac{\theta_1}{3}\right) + \frac{\idenconst_2}{6 a C_1} \right).
\end{equation}
We then have in Algorithm \ref{algo:gen_overlap_alt} for the choices 
$\hessestnoisebd = \left(\hemant{\frac{2\sqrt{3}\gradstep\smconst_3(4\maxdegree+1)\totsparsity}{\sqrt{\numdirec}}} + \frac{\exnoisemag\numdirec}{\gradstep^2}\right)$, 
$\derivsamperr = C_1 \hessestnoisebd$, that $\est{\bivsupp} = \bivsupp$ with probability at least $1 - c_1 e^{-c_2 \numdirec} - \dimn^{-2C}$. 
Given that $\est{\bivsupp} = \bivsupp$, the sampling conditions for estimating $\est{\univsupp}$ are identical to 
Theorem \ref{thm:gen_overlap_arbnois}.
\end{theorem}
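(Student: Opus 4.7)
The plan is to adapt the proof of Theorem \ref{thm:gen_overlap_alt} to the bounded-noise setting: absorb the external noise into the $\ell_1$ perturbation budget $\hessestnoisebd$ driving the compressive sensing recovery guarantee \eqref{eq:l1_min_sparse_symm}, and then determine how the admissible step-size region for $\gradstep$ shrinks as a function of $\exnoisemag$.

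First, I would observe that with noisy samples in place of exact ones, the Hessian measurement system at any fixed $\vecx \in \baseset$ takes the form $\vecy = \linoptmat(\hess f(\vecx)) + \taynoisvec + \exnoisevec$, where $\taynoisvec$ is the Taylor-remainder noise already bounded in the noiseless analysis and $\exnoise_j = (\exnoisep_{j,1} + \exnoisep_{j,2} - 2\exnoisep_{3})/(4\gradstep^2)$ is the external-noise contribution. Using $\abs{\exnoisep} < \exnoisemag$ on each of the three samples yields the per-coordinate bound $\abs{\exnoise_j} \leq \exnoisemag/\gradstep^2$, hence $\norm{\exnoisevec}_1 \leq \numdirec \exnoisemag/\gradstep^2$. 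Added to the noiseless bound on $\norm{\taynoisvec}_1$, this produces exactly the augmented noise budget $\hessestnoisebd$ stated in the theorem. Applying Theorem \ref{thm:chen_sparse_symm_rec} with this $\hessestnoisebd$ -- the best $K$-term approximation error vanishes since $\hess f(\vecx)$ is exactly at most $\totsparsity(\maxdegree+1)$-sparse -- then gives $\norm{\est{\hess} f(\vecx) - \hess f(\vecx)}_F \leq C_1 \hessestnoisebd$ with probability at least $1 - c_1 e^{-c_2 \numdirec}$, and a union bound over $\vecx \in \baseset$ and $\hashfn \in \twohashfam$ delivers the failure probability claimed.

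Next, the thresholding argument used for the noiseless case carries over once the separation condition relating $\hessestnoisebd$ to $\idenconst_2$ is verified. With $\derivsamperr = C_1 \hessestnoisebd$, no inactive entry of $\hess f(\vecx)$ can survive the test since its true value is zero and the entrywise error is at most $\derivsamperr$. Conversely, for every pair $\lpair \in \bivsupp$, the discretization $\baseset$ combined with a hash function in $\twohashfam$ separating $l$ and $\lp$ ensures that some $\vecx \in \baseset(\hashfn)$ hits the critical rectangle $\calI_{l} \times \calI_{\lp}$ of Assumption \ref{assum:pair_iden}, at which $\abs{(\hess f(\vecx))_{l,\lp}} > \idenconst_2$; the entrywise error bound then forces the estimated entry above $\derivsamperr$ as long as the true value has enough slack above $\derivsamperr$. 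Writing the first term of $\hessestnoisebd$ as $\sqrt{2} a \gradstep$, with $a$ as defined in the theorem, this separation requirement reduces to a cubic inequality of the shape $\alpha_1 C_1 a \gradstep^3 - \idenconst_2 \gradstep^2 + \alpha_2 C_1 \exnoisemag \numdirec < 0$ in the positive variable $\gradstep$, for explicit numerical constants $\alpha_1, \alpha_2$.

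The core new step, and the main obstacle, is the analysis of this cubic. I would proceed in three stages: (i) locate the interior critical point by differentiating, obtaining a $\gradstep^{*}$ proportional to $\idenconst_2/(C_1 a)$; (ii) evaluate the cubic at $\gradstep^{*}$ to extract the sharp necessary-and-sufficient condition $\exnoisemag < \exnoisemag_1$ for the cubic to be strictly negative somewhere on $\gradstep > 0$; and (iii) in the resulting three-real-root regime, apply Vi\`ete's substitution $\gradstep = u + \idenconst_2/(6 C_1 a)$ to depress the cubic into $u^3 + p u + q = 0$ with $p < 0$, and invoke Cardano's trigonometric formula, yielding three real roots of the form $u_k = \frac{\idenconst_2}{3 a C_1}\cos\bigl((\theta_1 - 2 k \pi)/3\bigr)$ with $\cos \theta_1 = 1 - 2\exnoisemag/\exnoisemag_1$. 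Translating back to $\gradstep$ and selecting the two positive roots that bracket the feasibility region produces exactly the open interval for $\gradstep$ stated in the theorem. Finally, once $\est{\bivsupp} = \bivsupp$ has been established, the $\univsupp$-estimation phase of Algorithm \ref{algo:gen_overlap_alt} reuses verbatim the SPAM-style sampling scheme analyzed for Algorithm \ref{algo:gen_overlap} on the reduced variable set $\calP = [\dimn] \setminus \bivsuppvar$, where $f$ behaves as an ordinary SPAM with no interactions. Its success under bounded noise is therefore governed by the same sampling conditions and probabilistic guarantees as the $\univsupp$-portion of Theorem \ref{thm:gen_overlap_arbnois}, which can be invoked as a black box, and a final union bound over the two phases completes the argument.
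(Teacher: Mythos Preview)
Your proposal is correct and follows essentially the same approach as the paper: absorb the bounded external noise into the $\ell_1$ budget $\hessestnoisebd$, apply Theorem~\ref{thm:chen_sparse_symm_rec}, reduce the thresholding separation condition to the cubic $\gradstep^3 - \frac{\idenconst_2}{2aC_1}\gradstep^2 + \frac{b}{a} < 0$, and extract the admissible interval for $\gradstep$ via depression and Cardano's trigonometric formula. Two minor remarks: no union bound over $\vecx \in \baseset$ is actually needed (Theorem~\ref{thm:chen_sparse_symm_rec} holds simultaneously for all symmetric $\matH_0$, and the $\dimn^{-2C}$ term comes only from the randomized hash-family construction), and your derivation of $\exnoisemag_1$ by evaluating the cubic at its interior critical point is equivalent to the paper's direct invocation of the discriminant condition $\frac{p^3}{27}+\frac{q^2}{4}<0$.
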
 

\paragraph{Stochastic noise.} 
We now consider i.i.d Gaussian noise, so that $\exnoisep \sim \calN(0,\sigma^2)$ 
for variance $\sigma^2 < \infty$. As in Sections \ref{subsec:noise_nooverlap}, \ref{subsec:noise_overlap_set_est}, 
we reduce $\sigma$ via resampling and averaging. Doing this $N_1$ times in Step \ref{algover:s2_query_1_alt}, 
and $N_2$ times during estimation of $\univsupp$, for $N_1, N_2$ large enough, we can recover 
$\univsupp,\bivsupp$ as shown formally in the following Theorem.
\begin{theorem} \label{thm:gen_overlap_gaussnois_alt}
Assuming the notation in Theorem \ref{thm:gen_overlap_alt}, let $\numcen,\numdirec$ and $\twohashfam$ be as defined in 
Theorem \ref{thm:gen_overlap_alt}. For any $\exnoisemag < \exnoisemag_1 = \frac{\sqrt{2} \idenconst_2^{3}}{54 a^2 C_1^3 \numdirec}$, $0 < p_1 < 1$, 
say we resample each query in Step \ref{algover:s2_query_1_alt} of Algorithm \ref{algo:gen_overlap_alt}, 
\hemant{$N_1 > \frac{3\sigma^2}{4\exnoisemag^2} \log (\frac{2}{p_1}\numdirec(2\numcen+1)^2\abs{\twohashfam})$} 
times, and average the values. We then have in Algorithm \ref{algo:gen_overlap_alt} for the choices of $\hessestnoisebd$, $\derivsamperr$, $\gradstep$ 
as in Theorem \ref{thm:gen_overlap_arbnois_alt}, that $\est{\bivsupp} = \bivsupp$ with probability at least $1 - c_1 e^{-c_2 \numdirec} - \dimn^{-2C} - p_1$.
Given that $\est{\bivsupp} = \bivsupp$, the sampling conditions for estimating $\est{\univsupp}$ are identical to 
Theorem \ref{thm:gen_overlap_gaussnois}. 
\end{theorem}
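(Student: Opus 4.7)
The plan is to reduce the i.i.d.\ Gaussian noise setting to the arbitrary bounded noise setting already handled by Theorem \ref{thm:gen_overlap_arbnois_alt}. The key observation is that by resampling each point query $N_1$ times and averaging the observations, the effective noise at each distinct query point is a zero-mean Gaussian with variance $\sigma^2/N_1$, which can be made arbitrarily small by increasing $N_1$. Hence if $N_1$ is large enough, every averaged noise term is uniformly bounded by $\exnoisemag$ with high probability, at which point we are precisely in the regime of Theorem \ref{thm:gen_overlap_arbnois_alt}, and the same choices of $\gradstep, \hessestnoisebd, \derivsamperr$ then yield $\est{\bivsupp} = \bivsupp$.

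More concretely, the first step is to enumerate the distinct query points used in Step \ref{algover:s2_query_1_alt} of Algorithm \ref{algo:gen_overlap_alt}. Over all $\hashfn \in \twohashfam$, all $\vecx_i \in \baseset(\hashfn)$, and all $j = 1,\dots,\numdirec$, we query $f$ at points of the form $\vecx_i \pm 2\gradstep \vecv_j$ together with $\vecx_i$ itself, giving at most $M := \numdirec(2\numcen+1)^2\abs{\twohashfam}$ independent Gaussian noise samples to control (up to small multiplicative constants that are absorbed into the logarithm). After averaging $N_1$ i.i.d.\ copies at each such point, the effective noise $\bar{\exnoisep}$ satisfies $\bar{\exnoisep} \sim \calN(0, \sigma^2/N_1)$.

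The second step is a union bound via the standard Gaussian tail inequality $\prob(|\bar{\exnoisep}| > \exnoisemag) \leq 2 \exp\bigl(-N_1 \exnoisemag^2/(2\sigma^2)\bigr)$. Applying this to all $M$ averaged noise terms and requiring the union bound to be at most $p_1$ yields a sufficient condition of the form $N_1 \gtrsim (\sigma^2/\exnoisemag^2) \log(2M/p_1)$, which is exactly the lower bound on $N_1$ stated in the theorem (up to the displayed constant). Call $\mathcal{E}$ the event that $|\bar{\exnoisep}| < \exnoisemag$ holds at every queried point; then $\prob(\mathcal{E}) \geq 1 - p_1$.

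The third and final step is to condition on $\mathcal{E}$. Under $\mathcal{E}$, every averaged query can be rewritten as a noiseless query plus an adversarial perturbation of magnitude at most $\exnoisemag$, which is exactly the arbitrary bounded noise model analysed in Theorem \ref{thm:gen_overlap_arbnois_alt}. Since the choices of $\gradstep$, $\hessestnoisebd$, and $\derivsamperr$ here are identical to those prescribed there, we conclude that $\est{\bivsupp} = \bivsupp$ with probability at least $1 - c_1 e^{-c_2 \numdirec} - \dimn^{-2C}$ conditional on $\mathcal{E}$. Combining with $\prob(\mathcal{E}) \geq 1 - p_1$ via a final union bound gives the claimed success probability $1 - c_1 e^{-c_2 \numdirec} - \dimn^{-2C} - p_1$. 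Finally, the identification of $\univsupp$ given $\est{\bivsupp} = \bivsupp$ is handled verbatim by the argument of Theorem \ref{thm:gen_overlap_gaussnois}, since that phase of Algorithm \ref{algo:gen_overlap_alt} is inherited unchanged from Algorithm \ref{algo:gen_overlap}. The main technical point is simply bookkeeping the number of queries to resample correctly; no new probabilistic arguments are needed beyond concentration of Gaussians and the previously established bounded-noise result.
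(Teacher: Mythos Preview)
Your proposal is correct and follows the same reduction-to-bounded-noise strategy as the paper: resample and average, apply a Gaussian tail bound plus union bound to force all effective noise terms below $\exnoisemag$, then invoke Theorem~\ref{thm:gen_overlap_arbnois_alt}. The only refinement in the paper's argument is that it bounds the specific linear combination $\exnoisep_{j,1} + \exnoisep_{j,2} - 2\exnoisep_{3}$ (which has variance $6\sigma^2/N_1$) directly by $4\exnoisemag$, rather than each averaged sample individually by $\exnoisemag$; this is what yields the exact constant $\tfrac{3}{4}$ in the stated lower bound on $N_1$, whereas your per-term bound would give a slightly larger constant.
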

%
%
\paragraph{Query complexity.} We now analyze the query complexity for Algorithm \ref{algo:gen_overlap_alt}, when the noise is i.i.d Gaussian. 
For estimating $\bivsupp$, we have $\exnoisemag = O(\maxdegree^{-2} \totsparsity^{-2})$. 
Furthermore: $(2\numcen+1)^2 = \critintmeas_2^{-2}$, $\abs{\twohashfam} = O(\log d)$, $\numdirec = O(\totsparsity\maxdegree \log \dimn)$.  
Choosing $p_1 = \dimn^{-\delta}$ for any constant $\delta > 0$ gives us 
$$N_1 = O(\maxdegree^4 \totsparsity^4 \log(\dimn^{\delta} (\totsparsity\maxdegree \log d) \critintmeas_2^{-2} \log d)) 
= O(\maxdegree^4 \totsparsity^4 \log \dimn).$$ 
This means that our total sample complexity for ensuring $\est{\bivsupp} = \bivsupp$ with high probability is:  
$$\hemant{O(N_1 \totsparsity\maxdegree \log \dimn \abs{\baseset})} = O(\maxdegree^5 \totsparsity^5 (\log \dimn)^3 \critintmeas_2^{-2}).$$

Lastly, by noting the sample complexity for estimating $\univsupp$ from Theorem \ref{thm:gen_overlap_gaussnois}, 
we conclude that the overall sample complexity for ensuring $\est{\univsupp} = \univsupp$ and $\est{\bivsupp} = \bivsupp$, 
in the presence of i.i.d Gaussian noise, is $O(\maxdegree^5 \totsparsity^5 (\log \dimn)^3 \critintmeas_2^{-2})$. 
Observe that this bound has a relatively worse scaling w.r.t $\maxdegree$ compared to that for Algorithm \ref{algo:gen_overlap} (derived after Theorem \ref{thm:gen_overlap_gaussnois}); 
specifically, by a factor of \hemant{$\totsparsity^3$}. On the other hand, the scaling w.r.t $d$ is better by a logarithmic factor, compared to that for Algorithm \ref{algo:gen_overlap}. 
\section{Learning individual components of model} \label{sec:est_comp}
Recall from \eqref{eq:unique_mod_rep} the unique representation of the model:
\begin{equation} 
f(x_1,\dots,x_d) = c + \sum_{p \in \univsupp}\phi_{p} (x_p) + \sum_{\lpair \in \bivsupp} \phi_{\lpair} \xlpair + \sum_{q \in \bivsuppvar: \degree(q) > 1} \phi_{q} (x_q), 
\end{equation}
where $\univsupp \cap \bivsuppvar = \emptyset$. Having estimated the sets $\univsupp$ and $\bivsupp$, 
we now show how the individual univariate and bivariate functions in the model can be estimated. 
We will see this for the settings of noiseless, as well as noisy (arbitrary, bounded noise and stochastic noise) point queries.

\subsection{Noiseless queries}
In this scenario, we obtain the exact value $f(\vecx)$ at each query $\vecx \in \matR^{\dimn}$.
Let us first see how each $\phi_p$; $p \in \univsupp$ can be estimated.
For some $-1 = t_1 < t_2 < \dots < t_n = -1 $, consider the set
\begin{equation} \label{eq:univ_est_set}
\baseset_p := \left\{\vecx_i \in \matR^{\dimn}: (\vecx_i)_j =  \left\{
\begin{array}{rl}
t_i ; & j = p, \\
0 ; & j \neq p 
\end{array} \right\} ; 1 \leq i \leq n; 1 \leq j \leq \dimn\right\}; \quad p \in \univsupp.
\end{equation}
We obtain the samples $\set{f(\vecx_i)}_{i=1}^{n}$; $\vecx_i \in \baseset_p$. Here $f(\vecx_i) = \phi_p(t_i) + C$ with $C$ being a constant that 
depends on the other components in the model. Given the samples, one can then employ spline based ``quasi interpolant operators'' \cite{deBoor78}, 
to obtain an estimate $\phitil_p :[-1,1] \rightarrow \matR$, to $\phi_p + C$. 
Construction of such operators can be found for instance in \cite{deBoor78} (see also \cite{Gyorfi2002}). 
One can suitably choose the $t_i$'s and construct quasi interpolants that approximate any $C^m$ 
smooth univariate function with optimal $\Linfnorm[-1,1]$ error rate $O(n^{-m})$ \cite{deBoor78, Gyorfi2002}. 
Having obtained $\phitil_p$, we then define  
\begin{equation} \label{eq:univ_est}
\est{\phi}_p := \phitil_p -\expec_p[\phitil_p]; \quad p \in \univsupp, 
\end{equation}
to be the estimate of $\phi_p$. The bivariate components corresponding to each $\lpair \in \bivsupp$ 
can be estimated in a similar manner as above. To this end, for some 
strictly increasing sequences: $(-1 = \tp_1,\tp_2,\dots,\tp_{n_1} = 1)$, $(-1 = t_1,t_2,\dots,t_{n_1} = 1)$, 
consider the set 
\begin{equation} \label{eq:biv_est_set}
\baseset_{\lpair} := \left\{\vecx_{i,j} \in \matR^{\dimn}: (\vecx_{i,j})_q =  \left\{
\begin{array}{rl}
\tp_i ; & q = l, \\
t_j ; & q = l^{\prime}, \\
0 ; & q \neq l,l^{\prime} 
\end{array} \right\}  ; 1 \leq i,j \leq n_1; 1 \leq q \leq \dimn \right\}; \quad \lpair \in \bivsupp.
\end{equation}
We then obtain the samples $\set{f(\vecx_{i,j})}_{i,j=1}^{n_1}$; $\vecx_{i,j} \in \baseset_{\lpair}$ where
\begin{align} \label{eq:biv_part_fn_exp}
 f(\vecx_{i,j}) &= \phi_{\lpair}(\tp_i,t_j) + \sum_{\substack{l_1:(l,l_1) \in \bivsupp \\ l_1 \neq \lp}} \phi_{(l,l_1)} (\tp_i,0) + \sum_{\substack{l_1:(l_1,l) \in \bivsupp \\ l_1 \neq \lp}} 
\phi_{(l_1,l)} (0,\tp_i) \nonumber \\
&+ \sum_{\substack{\lp_1:(\lp,\lp_1) \in \bivsupp \\ \lp_1 \neq l}} \phi_{(\lp,\lp_1)} (t_j,0) + \sum_{\substack{\lp_1:(\lp_1,\lp) \in \bivsupp \\ \lp_1 \neq l}} \phi_{(\lp_1,\lp)} (0,t_j) + 
\phi_l(\tp_i) + \phi_{\lp}(t_j) + C, \\
&= g_{\lpair}(\tp_i,t_j) + C,
\end{align}
with $C$ being a constant. \eqref{eq:biv_part_fn_exp} is a general expression -- if for example $\degree(l) = 1$, then 
the terms $\phi_l,\phi_{(l,l_1)},\phi_{(l_1,l)}$ will be zero.
Given this, we can again obtain estimates $\phitil_{\lpair}:[-1,1]^2 \rightarrow \matR$ to $g_{\lpair} + C$, 
via spline based quasi interpolants. 
Let us denote $n = n_1^2$ to be the total number of samples of $f$. For an appropriate choice of $(\tp_i,t_j)$'s, 
one can construct bivariate quasi interpolants that approximate any $C^m$ smooth bivariate function, with optimal 
$\Linfnorm[-1,1]^2$ error rate $O(n^{-m/2})$ \cite{deBoor78, Gyorfi2002}. 
Subsequently, we define the final estimates $\est{\phi}_{\lpair}$ to $\phi_{\lpair}$ as follows.
\begin{equation} \label{eq:biv_est}
\est{\phi}_{\lpair} :=  \left\{
\begin{array}{rl}
\phitil_{\lpair} - \expec_{\lpair}[\phitil_{\lpair}] ; & \degree(l), \degree(l^{\prime}) = 1, \\
\phitil_{\lpair} - \expec_{l}[\phitil_{\lpair}] ; &  \degree(l) = 1, \degree(l^{\prime}) > 1, \\
\phitil_{\lpair} - \expec_{l^{\prime}}[\phitil_{\lpair}] ; & \degree(l) > 1, \degree(l^{\prime}) = 1, \\
\phitil_{\lpair} - \expec_{l}[\phitil_{\lpair}] - \expec_{l^{\prime}}[\phitil_{\lpair}] + \expec_{\lpair}[\phitil_{\lpair}] ; & \degree(l) > 1, \degree(l^{\prime}) > 1.
\end{array} \right. 
\end{equation}
Lastly, we require to estimate the univariate's : $\phi_l$ for each $l \in \bivsuppvar$ such that $\degree(l) > 1$.
As above, for some strictly increasing sequences: $(-1 = \tp_1,\tp_2,\dots,\tp_{n_1} = 1)$, $(-1 = t_1,t_2,\dots,t_{n_1} = 1)$, 
consider the set 
\begin{equation} \label{eq:biv_univ_est_set}
\baseset_{l} := \Biggl\{\vecx_{i,j} \in \matR^{\dimn}: (\vecx_{i,j})_q =  \left\{
\begin{array}{rl}
\tp_i ; & q = l, \\
t_j ; & q \neq l \ \& \ q \in \bivsuppvar, \\
0; & q \notin \bivsuppvar, 
\end{array} \right\} ; \\ 1 \leq i,j \leq n_1; 1 \leq q \leq \dimn \Biggr\}; \quad l \in \bivsuppvar: \degree(l) > 1.
\end{equation}
We obtain $\set{f(\vecx_{i,j})}_{i,j=1}^{n_1}$; $\vecx_{i,j} \in \baseset_{l}$ where this time 
\begin{align}
 f(\vecx_{i,j}) &= \phi_l(\tp_i) + \sum_{\degree(\lp) > 1, \lp \neq l} \phi_{\lp}(t_j) + \sum_{\lp:\lpair \in \bivsupp} \phi_{\lpair}(\tp_i,t_j) \\
&+ \sum_{\lp:\lpairi \in \bivsupp} \phi_{\lpairi}(t_j,\tp_i) + \sum_{\qpair \in \bivsupp : q,\qp \neq l} \phi_{\qpair}(t_j,t_j) + C \\
&= g_l(\tp_i,t_j) + C
\end{align}
for a constant, $C$. Denoting $n = n_1^2$ to be the total number of samples of $f$, we can again obtain an estimate 
$\phitil_l(x_l,x)$ to $g_l(x_l,x) + C$, with $\Linfnorm[-1,1]^2$ error rate $O(n^{-3/2})$.
Then with $\phitil_l$ at hand, we define the estimate $\est{\phi}_l: [-1,1] \rightarrow \matR$ as
\begin{equation} \label{eq:biv_univ_est}
\est{\phi}_l := \expec_x[\phitil_l] - \expec_{(l,x)}[\phitil_l]; \quad l \in \bivsuppvar : \degree(l) > 1. 
\end{equation}
The following proposition formally describes the error rates for the aforementioned estimates.

\begin{proposition} \label{prop:no_nois_est_comp}
For $C^3$ smooth components $\phi_p, \phi_{\lpair},\phi_{l}$,  let $\est{\phi}_p$, $\est{\phi}_{\lpair}, \est{\phi}_{l}$ 
be the respective estimates as defined in \eqref{eq:univ_est}, \eqref{eq:biv_est} and \eqref{eq:biv_univ_est} respectively.
Also, let $n$ denote the number of queries (of $f$) made per component. We then have that: 
\begin{enumerate}
\item $\norm{\est{\phi}_p - \phi_p}_{\Linfnorm[-1,1]} = O(n^{-3}); \forall p \in \univsupp$, 
\item $\norm{\est{\phi}_{\lpair} - \phi_{\lpair}}_{\Linfnorm[-1,1]^2} = O(n^{-3/2}); \forall \lpair \in \bivsupp$, and 
\item $\norm{\est{\phi}_{l} - \phi_{l}}_{\Linfnorm[-1,1]} = O(n^{-3/2}); \forall l \in \bivsuppvar: \degree(l) > 1$.
\end{enumerate}
\end{proposition}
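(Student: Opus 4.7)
The plan is to decompose each error into (i) the intrinsic quasi-interpolant approximation error on the sampled function, and (ii) the residual after applying the normalization operators in \eqref{eq:univ_est}, \eqref{eq:biv_est}, \eqref{eq:biv_univ_est}. By the uniqueness properties of the ANOVA representation \eqref{eq:unique_mod_rep}, the normalization will cancel every term except the target component exactly, so the overall error reduces to a bounded linear combination of the quasi-interpolant error. The approximation-theoretic input is standard \cite{deBoor78,Gyorfi2002}: for $C^3$-smooth univariates sampled at $n$ points one can build quasi-interpolants with $\Linfnorm$-error $O(n^{-3})$, and for $C^3$-smooth bivariates sampled on an $n_1 \times n_1$ grid with $n = n_1^2$, with $\Linfnorm$-error $O(n^{-3/2})$.

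First, for $\est{\phi}_p$ with $p \in \univsupp$, I write $\phitil_p = \phi_p + C + \epsilon$ with $\norm{\epsilon}_{\Linfnorm[-1,1]} = O(n^{-3})$. Using $\expec_p[\phi_p] = 0$ from \eqref{eq:unique_mod_rep}, I get $\est{\phi}_p = \phi_p + \epsilon - \expec_p[\epsilon]$, whence $\norm{\est{\phi}_p - \phi_p}_{\Linfnorm[-1,1]} \leq 2\norm{\epsilon}_{\Linfnorm[-1,1]} = O(n^{-3})$.

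For $\est{\phi}_{\lpair}$, I would rewrite \eqref{eq:biv_part_fn_exp} as $g_{\lpair}(\tp, t) = \phi_{\lpair}(\tp, t) + A(\tp) + B(t)$, where $A$ collects all the terms of \eqref{eq:biv_part_fn_exp} depending only on $x_l$ (nontrivial only when $\degree(l) > 1$) and $B$ collects those depending only on $x_{\lp}$ (nontrivial only when $\degree(\lp) > 1$). The four cases in \eqref{eq:biv_est} correspond precisely to the four combinations of $\degree(l), \degree(\lp)$. In each case, the operator $\phitil_{\lpair} \mapsto \est{\phi}_{\lpair}$ is an idempotent projection that annihilates $A$, $B$, and the constant while fixing $\phi_{\lpair}$; this uses the marginal-mean-zero conditions stated after \eqref{eq:unique_mod_rep}, namely $\expec_l[\phi_{\lpair}] = 0$ when $\degree(\lp) > 1$, and symmetrically. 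Writing $\phitil_{\lpair} = g_{\lpair} + C + \epsilon$ with $\norm{\epsilon}_{\Linfnorm[-1,1]^2} = O(n^{-3/2})$, the residual is a signed sum of at most four copies of $\epsilon$ (or its partial expectations), hence $\norm{\est{\phi}_{\lpair} - \phi_{\lpair}}_{\Linfnorm[-1,1]^2} \leq 4\norm{\epsilon}_{\Linfnorm[-1,1]^2} = O(n^{-3/2})$.

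For $\est{\phi}_l$ with $l \in \bivsuppvar$ and $\degree(l) > 1$, I decompose $g_l(\tp, t) = \phi_l(\tp) + U(t) + V(\tp, t)$, where $U(t)$ collects the purely $t$-dependent pieces of \eqref{eq:biv_part_fn_exp} (the univariates $\phi_{\lp}$ and the diagonal evaluations $\phi_{\qpair}(t,t)$) and $V(\tp,t) = \sum_{\lp:\lpair \in \bivsupp}\phi_{\lpair}(\tp,t) + \sum_{\lp:\lpairi \in \bivsupp}\phi_{\lpairi}(t,\tp)$. The key verification is that $\expec_x[V(\tp,\cdot)] \equiv 0$: since $\degree(l) > 1$, each of $\expec_{\lp}[\phi_{\lpair}]$ and $\expec_{\lp}[\phi_{\lpairi}]$ appearing in $V$ vanishes by the uniqueness conditions. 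Hence $\expec_x[g_l](\tp) = \phi_l(\tp) + \alpha_0$ and $\expec_{(l,x)}[g_l] = \alpha_0$ for a single constant $\alpha_0$ (using $\expec_l[\phi_l] = 0$ for the second), so the operator in \eqref{eq:biv_univ_est} exactly reproduces $\phi_l$ from $g_l$. With $\phitil_l = g_l + C + \epsilon$ and $\norm{\epsilon}_{\Linfnorm[-1,1]^2} = O(n^{-3/2})$, I obtain $\est{\phi}_l - \phi_l = \expec_x[\epsilon] - \expec_{(l,x)}[\epsilon]$, bounded by $2\norm{\epsilon}_{\Linfnorm[-1,1]^2} = O(n^{-3/2})$. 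The main subtlety throughout is more bookkeeping than obstacle: carefully tracking which partial expectations kill which terms, as a function of $\degree(l), \degree(\lp)$ and the uniqueness conditions of \eqref{eq:unique_mod_rep}.
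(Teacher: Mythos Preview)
Your proposal is correct and follows essentially the same approach as the paper's proof: both verify that the normalization operators in \eqref{eq:univ_est}--\eqref{eq:biv_univ_est} exactly recover the target component from $g_{\lpair}+C$ (resp.\ $g_l+C$) via the marginal-mean-zero conditions of \eqref{eq:unique_mod_rep}, and then bound the residual by a constant times the quasi-interpolant error. Your presentation is slightly more compact---you package the nuisance terms as $A(\cdot)+B(\cdot)$ (resp.\ $U+V$) and invoke the ANOVA projection structure---whereas the paper writes out the partial expectations $\expec_l[g_{\lpair}],\expec_{\lp}[g_{\lpair}],\expec_{\lpair}[g_{\lpair}]$ explicitly (eqs.\ \eqref{eq:temp1}--\eqref{eq:temp4}) and similarly for $g_l$; but the content is the same.
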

%
%
\subsection{Noisy queries}
We now look at the case where for each query $\vecx \in \matR^d$, we obtain a noisy value $f(\vecx) + \exnoisep$.
\paragraph{Arbitrary bounded noise.} We begin with the scenario where $\exnoisep_i$ is arbitrary and bounded 
with $\abs{\exnoisep_i} < \exnoisemag; \ \forall i$. Since the noise is arbitrary in nature, therefore we simply proceed 
\emph{as in the noiseless case}, \textit{i.e.}, by approximating each component via a quasi-interpolant. As the magnitude of the noise is bounded 
by $\exnoisemag$, it results in an additional $O(\exnoisemag)$ term in the approximation error rates of Proposition \ref{prop:no_nois_est_comp}. 

To see this for the univariate case, let us denote $Q: C(\matR) \rightarrow \spacespl$ to be a quasi-interpolant operator. 
This a linear operator, with $C(\matR)$ denoting the space of continuous functions defined over $\matR$ and $\spacespl$ denoting a univariate 
spline space. Consider $u \in C^m[-1,1]$ for some positive integer $m$, and let $g:[-1,1] \rightarrow \matR$ be an arbitrary continuous function with 
$\norm{g}_{\Linfnorm[-1,1]} < \exnoisemag$. Denote $\est{u} = u + g$ to be the ``corrupted'' version of $u$, 
and let $n$ be the number of samples of $\est{u}$ used by $Q$. We then have by linearity of $Q$ that: 

\begin{equation}
\norm{Q(\est{u}) - u}_{\Linfnorm[-1,1]} = \norm{Q(u) + Q(g) - u}_{\Linfnorm[-1,1]} \leq 
\underbrace{\norm{Q(u) - u}_{\Linfnorm[-1,1]}}_{= O(n^{-m})} + \norm{Q}\underbrace{\norm{g}_{\Linfnorm[-1,1]}}_{\leq \norm{Q}\exnoisemag}, 
\end{equation}

with $\norm{Q}$ being the operator norm of $Q$. One can construct $Q$ with $\norm{Q}$ 
bounded\savefootnote{foot:quasinterp_ref}{For instance, see Theorems $14.4, 15.2$ in \cite{Gyorfi2002}}
from above by a constant depending only on $m$. The above argument can be extended easily to the multivariate case. We state this for the bivariate 
case for completeness. Denote $Q_1: C(\matR^2) \rightarrow \spacespl$ to be a quasi-interpolant operator, with $\spacespl$ denoting a bivariate spline space. 
Consider $u_1 \in C^m[-1,1]^2$ for some positive integer $m$, and let $g_1:[-1,1] \rightarrow \matR$ be an arbitrary continuous function with 
$\norm{g_1}_{\Linfnorm[-1,1]^2} < \exnoisemag$. Let $\est{u}_1 = u_1 + g_1$ and let $n$ be the number of samples of $\est{u_1}$ used by $Q_1$. 
We then have by linearity of $Q_1$ that:  

\begin{equation}
\norm{Q_1(\est{u_1}) - u_1}_{\Linfnorm[-1,1]^2} = \norm{Q_1(u_1) + Q_1(g_1) - u_1}_{\Linfnorm[-1,1]^2} \leq 
\underbrace{\norm{Q_1(u_1) - u_1}_{\Linfnorm[-1,1]^2}}_{= O(n^{-m/2})} + \norm{Q_1}\underbrace{\norm{g_1}_{\Linfnorm[-1,1]^2}}_{\leq \norm{Q_1}\exnoisemag}, 
\end{equation}

with $\norm{Q_1}$ being the operator norm of $Q_1$. As for the univariate case, one can construct $Q_1$ with $\norm{Q_1}$ 
bounded\repeatfootnote{foot:quasinterp_ref} from above by a constant depending only on $m$. 

Let us define our final estimates $\est{\phi}_p$, $\est{\phi}_{\lpair}$ and $\est{\phi}_{l}$ as in 
\eqref{eq:univ_est}, \eqref{eq:biv_est} and \eqref{eq:biv_univ_est}, respectively. 
The following proposition formally states the error bounds, for this particular noise model.

\begin{proposition}[Arbitrary bounded noise] \label{prop:arb_nois_est_comp}
For $C^3$ smooth components $\phi_p, \phi_{\lpair},\phi_{l}$,  let $\est{\phi}_p$, $\est{\phi}_{\lpair}, \est{\phi}_{l}$ 
be the respective estimates as defined in \eqref{eq:univ_est}, \eqref{eq:biv_est} and \eqref{eq:biv_univ_est} respectively.
Also, let $n$ denote the number of noisy queries (of $f$) made per component with the external noise magnitude being bounded 
by $\exnoisemag$. We then have that 

\begin{enumerate}
\item $\norm{\est{\phi}_p - \phi_p}_{\Linfnorm[-1,1]} = O(n^{-3}) + O(\exnoisemag); \forall p \in \univsupp$, 
\item $\norm{\est{\phi}_{\lpair} - \phi_{\lpair}}_{\Linfnorm[-1,1]^2} = O(n^{-3/2}) + O(\exnoisemag); \forall \lpair \in \bivsupp$, and 
\item $\norm{\est{\phi}_{l} - \phi_{l}}_{\Linfnorm[-1,1]} = O(n^{-3/2}) + O(\exnoisemag); \forall l \in \bivsuppvar: \degree(l) > 1$. 
\end{enumerate}
\end{proposition}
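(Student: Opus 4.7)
The plan is to leverage the linearity of the quasi-interpolant operators (exactly as sketched in the discussion preceding the statement), together with Proposition \ref{prop:no_nois_est_comp}, to reduce the arbitrary-bounded-noise case to the noiseless case up to an additive $O(\exnoisemag)$ term. The key structural fact is that a quasi-interpolant $Q$ (univariate or bivariate) is a bounded linear operator on $C[-1,1]$ (resp.\ $C[-1,1]^2$) whose operator norm is controlled by a constant depending only on the smoothness order of the target spline space, so corrupting the samples by perturbations of magnitude at most $\exnoisemag$ corrupts the output by at most $\norm{Q}\exnoisemag = O(\exnoisemag)$ in $\Linfnorm$.

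First, for each $p \in \univsupp$, I would write the noisy samples as $f(\vecx_i) + \exnoisep_i = (\phi_p(t_i) + C) + \exnoisep_i$ for $\vecx_i \in \baseset_p$, and view $\phitil_p = Q(\est{u}_p)$ where $\est{u}_p := (\phi_p + C) + g_p$ for some continuous $g_p:[-1,1] \rightarrow \matR$ interpolating the values $\exnoisep_i$, with $\norm{g_p}_{\Linfnorm[-1,1]} < \exnoisemag$. By linearity,
\[
\norm{\phitil_p - (\phi_p + C)}_{\Linfnorm[-1,1]} \leq \norm{Q(\phi_p + C) - (\phi_p + C)}_{\Linfnorm[-1,1]} + \norm{Q}\,\norm{g_p}_{\Linfnorm[-1,1]} = O(n^{-3}) + O(\exnoisemag).
\]
Since $\expec_p[\phi_p] = 0$, the centering in \eqref{eq:univ_est} gives $\est{\phi}_p - \phi_p = (\phitil_p - (\phi_p + C)) - \expec_p[\phitil_p - (\phi_p + C)]$, and the operator $\expec_p$ has operator norm $1$ on $\Linfnorm$; hence the univariate bound follows from the triangle inequality.

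For the bivariate estimates, I would apply the same argument to the bivariate operator $Q_1$, with target $g_{\lpair} + C$ given in \eqref{eq:biv_part_fn_exp}, obtaining $\norm{\phitil_{\lpair} - (g_{\lpair} + C)}_{\Linfnorm[-1,1]^2} = O(n^{-3/2}) + O(\exnoisemag)$ where $n = n_1^2$. The definition \eqref{eq:biv_est} is designed so that, in the noiseless case of Proposition \ref{prop:no_nois_est_comp}, the marginal subtractions exactly isolate $\phi_{\lpair}$ from the remaining terms of $g_{\lpair}$ (using the normalization conditions following \eqref{eq:unique_mod_rep}). Since each of the marginals $\expec_l, \expec_{\lp}, \expec_{\lpair}$ has operator norm $1$ on $\Linfnorm$, the perturbation propagates additively: $\est{\phi}_{\lpair} - \phi_{\lpair}$ is a fixed linear combination of at most four terms, each bounded by $\norm{\phitil_{\lpair} - (g_{\lpair} + C)}_{\Linfnorm[-1,1]^2}$, yielding the stated $O(n^{-3/2}) + O(\exnoisemag)$ bound. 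The $\phi_l$ case with $\degree(l) > 1$ is handled identically, applying $Q_1$ to obtain $\phitil_l$ approximating $g_l + C$ with error $O(n^{-3/2}) + O(\exnoisemag)$ and then invoking the marginal identities built into \eqref{eq:biv_univ_est}.

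The only non-routine step is the bookkeeping for the bivariate case: one must verify that the exact algebraic identities used in the noiseless proof of Proposition \ref{prop:no_nois_est_comp} to cancel the undesired summands of \eqref{eq:biv_part_fn_exp} continue to hold when $\phitil_{\lpair}$ is replaced by its noisy counterpart, with the residual controlled by $\norm{\phitil_{\lpair} - (g_{\lpair} + C)}_{\Linfnorm[-1,1]^2}$. This is immediate from the linearity of each marginal operator, but it is the one place where sloppiness would propagate constants incorrectly; beyond this, the proof is a direct extension of Proposition \ref{prop:no_nois_est_comp} via the triangle inequality and the $O(1)$ operator-norm bound on quasi-interpolants recalled in the preamble.
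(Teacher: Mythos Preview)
Your proposal is correct and follows exactly the approach the paper intends: the paper explicitly skips the proof, stating it is ``similar to that of Proposition \ref{prop:no_nois_est_comp}'', and the preceding discussion already spells out the linearity-plus-bounded-operator-norm argument for the quasi-interpolants that you carry through. Your handling of the centering/marginal operators via their unit $\Linfnorm$ operator norms is precisely the bookkeeping needed to transfer the $O(n^{-m}) + O(\exnoisemag)$ bound on $\phitil$ to the final estimates $\est{\phi}$, matching the structure of the proof of Proposition \ref{prop:no_nois_est_comp} in Appendix \ref{subsec:proof_prop_noiseless_comp_est}.
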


The proof is similar to that of Proposition \ref{prop:no_nois_est_comp} and hence skipped.

\paragraph{Stochastic noise.} We now consider the setting where $\exnoisep_i \sim \calN(0,\sigma^2)$ 
are i.i.d Gaussian random variables. Similar to the noiseless case, 
estimating the individual components again involves sampling $f$ along the 
subspaces corresponding to $\univsupp$, $\bivsupp$. Due to the presence of stochastic noise however, 
we now make use of \emph{nonparametric regression} techniques to compute the estimates. 
While there exist a number of methods that could be used for this 
purpose (cf. \cite{tsyba08}), we only discuss a specific one for clarity of exposition.

To elaborate, we again construct the sets defined in \eqref{eq:univ_est_set},\eqref{eq:biv_est_set} and\eqref{eq:biv_univ_est_set}. 
In particular, we uniformly discretize the domains 
$[-1,1]$ and $[-1,1]^2$, by choosing the respective $t_i$'s and $(\tp_i,t_j)$'s accordingly. 
This is the so called ``fixed design'' setting in nonparametric statistics. 
Upon collecting the samples $\set{f(\vecx_i) + \exnoisep_i}_{i=1}^{n}$ one can then derive estimates $\phitil_p$, $\phitil_{\lpair}, \phitil_l$, to 
$\phi_p + C$, $g_{\lpair} + C$ and $g_l + C$ respectively, by using \emph{local polynomial 
estimators} (cf. \cite{tsyba08, Fan96} and references within). 
It is known that these estimators achieve the (minimax optimal) $\Linfnorm$ error rate: 
$\Omega((n^{-1} \log n)^{\frac{m}{2m+\dimn}})$, for estimating $d$-variate, $C^m$ smooth functions over compact 
domains\footnote{See \cite{tsyba08} for $\dimn=1$, and \cite{Nemi00} for $\dimn \geq 1$}. 
Translated to our setting, we then have that the functions: $\phi_p + C$, $g_{\lpair} + C$ and $g_l + C$ are estimated at the rates: 
$O((n^{-1} \log n)^{\frac{3}{7}})$ and $O((n^{-1} \log n)^{\frac{3}{8}})$ respectively.

Denoting the above intermediate estimates by $\phitil_{p}$, $\phitil_{\lpair}$, $\phitil_l$, we define our final estimates 
$\est{\phi}_p$, $\est{\phi}_{\lpair}$ and $\est{\phi}_{l}$ as in 
\eqref{eq:univ_est}, \eqref{eq:biv_est} and \eqref{eq:biv_univ_est}, respectively. 
The following Proposition describes the error rates of these estimates. 

\begin{proposition}[i.i.d Gaussian noise] \label{prop:gauss_nois_est_comp}
For $C^3$ smooth components $\phi_p, \phi_{\lpair},\phi_{l}$,  let $\est{\phi}_p$, $\est{\phi}_{\lpair}, \est{\phi}_{l}$ 
be the respective estimates as defined in \eqref{eq:univ_est}, \eqref{eq:biv_est} and \eqref{eq:biv_univ_est} respectively.
Let $n$ denote the number of noisy queries (of $f$) made per component, with noise samples $\exnoisep_1,\exnoisep_2,\dots,\exnoisep_n$ 
being i.i.d Gaussian. Furthermore, let $\expec_{z}[\cdot]$ denote expectation w.r.t the 
joint distribution of $\exnoisep_1,\exnoisep_2,\dots,\exnoisep_n$. We then have that 

\begin{enumerate}
\item $\expec_{z}[\norm{\est{\phi}_p - \phi_p}_{\Linfnorm[-1,1]}] = O((n^{-1} \log n)^{\frac{3}{7}}); \forall p \in \univsupp$, 
\item $\expec_{z}[\norm{\est{\phi}_{\lpair} - \phi_{\lpair}}_{\Linfnorm[-1,1]^2}] = O((n^{-1} \log n)^{\frac{3}{8}}); \forall \lpair \in \bivsupp$, and 
\item $\expec_{z}[\norm{\est{\phi}_{l} - \phi_{l}}_{\Linfnorm[-1,1]}] = O((n^{-1} \log n)^{\frac{3}{8}}); \forall l \in \bivsuppvar: \degree(l) > 1$.
\end{enumerate}
\end{proposition}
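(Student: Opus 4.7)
The plan is to reduce each claim to a known minimax $L_\infty$ rate for fixed-design local polynomial regression, applied to the intermediate estimators $\phitil_p$, $\phitil_{\lpair}$, $\phitil_l$, and then argue that the centering operations defining the final estimates in \eqref{eq:univ_est}, \eqref{eq:biv_est}, \eqref{eq:biv_univ_est} only inflate the error by a universal constant. As the excerpt already points out, for any $C^m$ smooth $d$-variate target sampled on a regular grid of size $n$ with i.i.d Gaussian noise, an appropriately tuned local polynomial estimator $Q$ satisfies $\expec_{z}[\norm{Q - u}_{\Linfnorm}] = O((n^{-1}\log n)^{m/(2m+d)})$ (see \cite{Nemi00,tsyba08}). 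Specializing to $m=3$ with $d=1,2$ yields $(n^{-1}\log n)^{3/7}$ and $(n^{-1}\log n)^{3/8}$, respectively, as the rates at which $\phitil_p$ approximates $\phi_p+C$ and $\phitil_{\lpair}, \phitil_l$ approximate $g_{\lpair}+C$, $g_l+C$.

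For the univariate case $p \in \univsupp$, the ANOVA uniqueness condition $\expec_p[\phi_p]=0$ from \eqref{eq:unique_mod_rep} gives $\phi_p = (\phi_p+C) - \expec_p[\phi_p+C]$, so
\begin{equation*}
\est{\phi}_p - \phi_p = (\phitil_p - (\phi_p+C)) - \expec_p[\phitil_p - (\phi_p+C)].
\end{equation*}
Since $\expec_p$ is an $L_\infty$ contraction, the triangle inequality yields $\norm{\est{\phi}_p - \phi_p}_{\Linfnorm[-1,1]} \leq 2\norm{\phitil_p - (\phi_p+C)}_{\Linfnorm[-1,1]}$, and taking $\expec_z$ gives the first claim.

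For the bivariate claim, I would verify case by case (matching the four cases in \eqref{eq:biv_est}) the algebraic identity: when $\phitil_{\lpair}$ is replaced by its noiseless target $g_{\lpair}+C$, the right-hand side of \eqref{eq:biv_est} equals $\phi_{\lpair}$. This is where the uniqueness properties of \eqref{eq:unique_mod_rep} are used: $\expec_l[\phi_{\lpair}]=0$ whenever $\degree(\lp)>1$ (and symmetrically), and the auxiliary univariate pieces appearing in $g_{\lpair}$ via \eqref{eq:biv_part_fn_exp} depend on only one of $x_l, x_{\lp}$, so the marginal-subtractions in each case cancel them along with the constant $C$. Once this is checked, linearity and the triangle inequality give $\norm{\est{\phi}_{\lpair} - \phi_{\lpair}}_{\Linfnorm[-1,1]^2} \leq c\norm{\phitil_{\lpair} - (g_{\lpair}+C)}_{\Linfnorm[-1,1]^2}$ for an absolute constant $c\leq 4$, and the bivariate local polynomial rate concludes the argument. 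The third claim (marginal univariates $\phi_l$) is analogous: I would check that $\expec_x[g_l+C]-\expec_{(l,x)}[g_l+C]=\phi_l$, using $\expec_q[\phi_q]=0$ for $q\in\bivsuppvar$ with $\degree(q)>1$ and the vanishing-marginal conditions on the $\phi_{\lpair}$'s in \eqref{eq:unique_mod_rep}; the rate is then inherited from the \emph{bivariate} estimator $\phitil_l$, which is why the $(n^{-1}\log n)^{3/8}$ exponent appears even though $\est{\phi}_l$ is univariate.

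The main obstacle is the case analysis verifying the algebraic cancellations in cases 2 and 3. The probabilistic content is entirely packaged in the black-box minimax rate, so once the centering operations are matched correctly against the uniqueness structure of the ANOVA decomposition, the bounds follow by linearity of $Q$ and the triangle inequality.
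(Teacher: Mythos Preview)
Your proposal is correct and follows essentially the same route as the paper's proof: invoke the minimax $L_\infty$ rate for local polynomial estimation on the intermediate targets $\phi_p+C$, $g_{\lpair}+C$, $g_l+C$, verify the algebraic centering identities (the paper does only the $\degree(l),\degree(\lp)>1$ case explicitly, referring back to the noiseless Proposition \ref{prop:no_nois_est_comp} for the identity \eqref{eq:temp4}), and finish with the triangle inequality. The only cosmetic difference is that the paper bounds $\expec_z[|\expec_p[b_p]|]$ by swapping expectations via Tonelli, whereas your observation that $\expec_p$ is an $L_\infty$ contraction gives the same conclusion more directly and makes the Tonelli step unnecessary.
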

%
%

\section{Simulation results} \label{sec:sims}
We now provide some simulation results for our methods on synthetic examples. The main goal of our 
experiments is to provide a proof of concept, validating some of the theoretical results that were derived 
earlier. We consider both non-overlapping (Section \ref{sec:sims_non}) and overlapping settings (Section \ref{sec:sims_over}). 
In our experiments, we use the ALPS algorithm \cite{kyrillidis2011recipes} 
as our CS solver -- an efficient first-order method.

Starting with the non-overlapping case, we present phase transition results and also show the dependence of $d$ on the number of samples, 
for recovery of $\univsupp$, $\bivsupp$. We then empirically demonstrate the dependence of the number of samples on $k$. 
In both cases, our findings support our theory for sample complexities. 
We conduct similar experiments for the overlapping case, and also additionally demonstrate empirically 
the dependence of the number of samples on the parameter $\rho_m$.

\subsection{Non-overlapping setting} \label{sec:sims_non}
We consider the following experimental setup: $\univsupp = \set{1, 2}$ and $\bivsupp = \set{(3, 4), (5, 6)}$,
which implies $\univsparsity = 2$, $\bivsparsity = 2$ and $\totsparsity = 6$.  
Moreover, we consider three different types of $f$ namely:
\begin{itemize}
\item [$(i)$] {\small $f_1(\vecx) = 2x_1 - 3x_2^2 + 4x_3x_4 - 5x_5x_6$}, 
\item [$(ii)$] {\small $f_2(\vecx) = 10 \sin(\pi \cdot x_1) + 5 e^{-2x_2} + 10 \sin(\pi \cdot x_3 x_4) + 5 e^{-2x_5 x_6}$},
\item [$(iii)$] {\small $f_3(\vecx) = \frac{10}{3} \cos(\pi \cdot x_1) + 8 x_1^2 + 5 ( x_2^4 - x_2^2 + \frac{4}{5} x_4) + \frac{10}{3} \cos(\pi \cdot x_3x_4) 
+ 8 (x_3x_4)^2 + 5 ( (x_5x_6)^4 - (x_5x_6)^2 + \frac{4}{5} x_5x_6)$}.
\end{itemize} 
For all cases, we use Algorithms \ref{algo:est_act} and \ref{algo:est_ind_sets}. For $f_1$, the problem parameters are set to
$\critintmeas_1 = 0.3$, $\critintmeas_2 = 1$, $\idenconst_1 = 2$, $\idenconst_2 = 3$, $\smconst_3 = 6$, while for 
$f_2, f_3$: $\critintmeas_1 = \critintmeas_2 = 0.3$, $\idenconst_1 = 8$, $\idenconst_2 = 4$, $\smconst_3 = 35$.
Given these constants, we obtain $\numcen = 1$, $\numcenpair = 4$ for $f_1$ and $\numcen = \numcenpair = 4$ for $f_2, f_3$. 
We use constant $\widetilde{C}$ (to be defined next)
when we set $\numdirec := \widetilde{C} \totsparsity \log\left(\dimn/\totsparsity\right)$. 
For the construction of the hash functions, we set the size to $|\twohashfam| = C^{\prime} \log d$ with $C^{\prime} = 1.7$, 
leading to $|\mathcal{H}_2^d| \in [8, 12]$ for $10^2 \leq d \leq 10^3$. 
For the noiseless setting, we choose step sizes: $\gradstep, \hessstep, \pardevstep$ and thresholds: $\hesssamperr, \derivsamperr$ 
as in Lemma \ref{lem:rec_act_set} and Lemma \ref{lem:est_ind_sets}. 

\begin{figure}[!ht]
\begin{center}
	\includegraphics[width=0.32\textwidth]{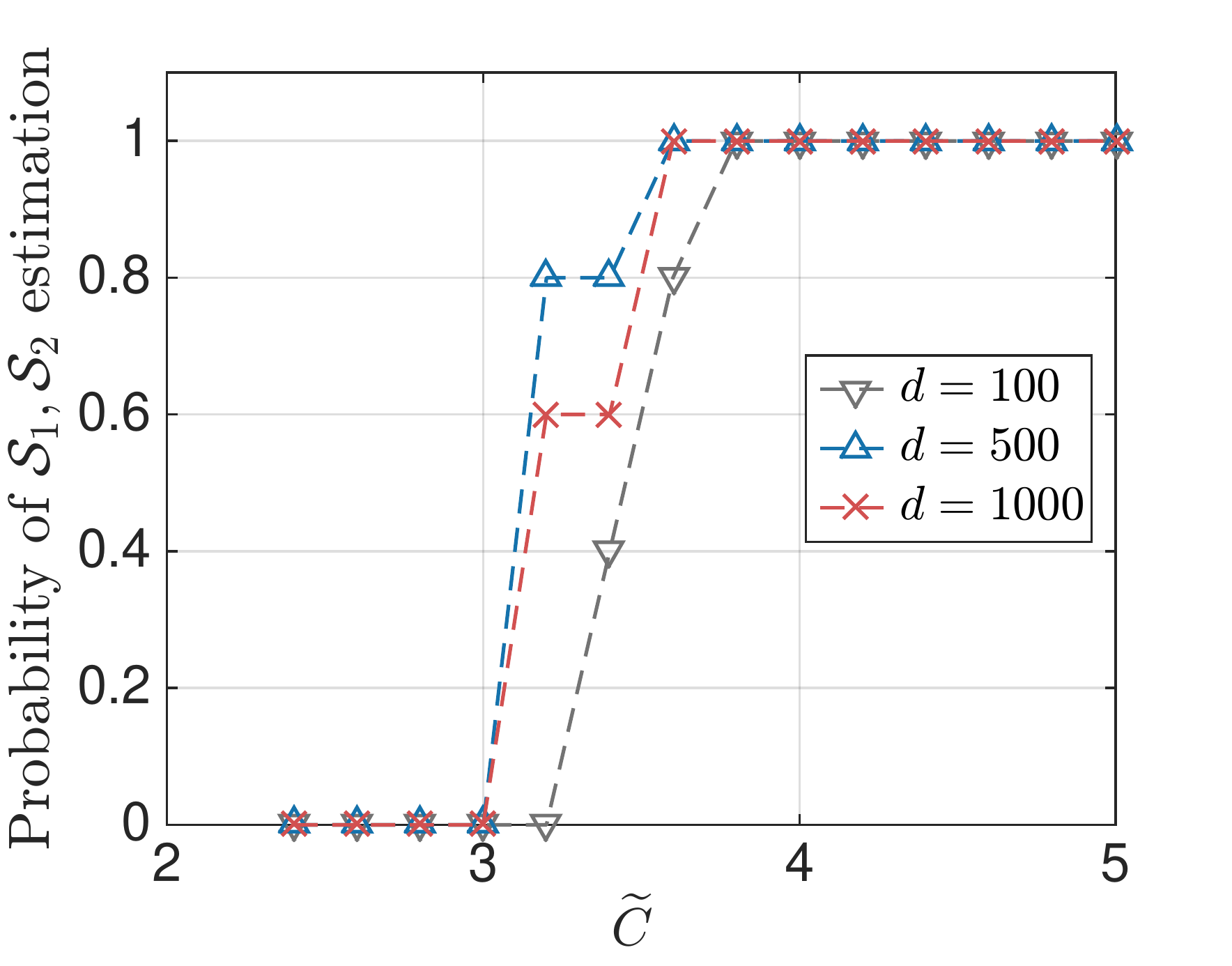} \includegraphics[width=0.32\textwidth]{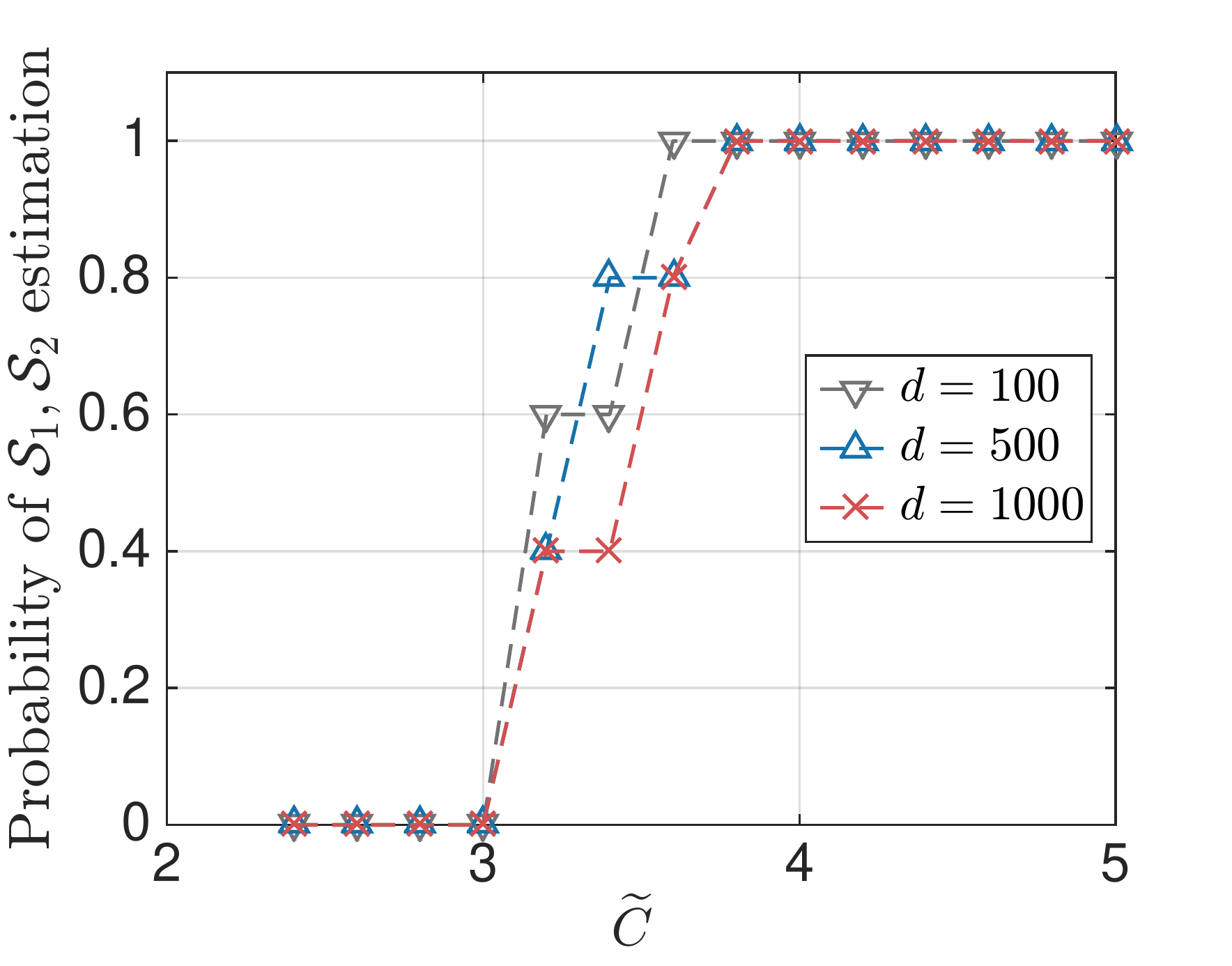} \includegraphics[width=0.32\textwidth]{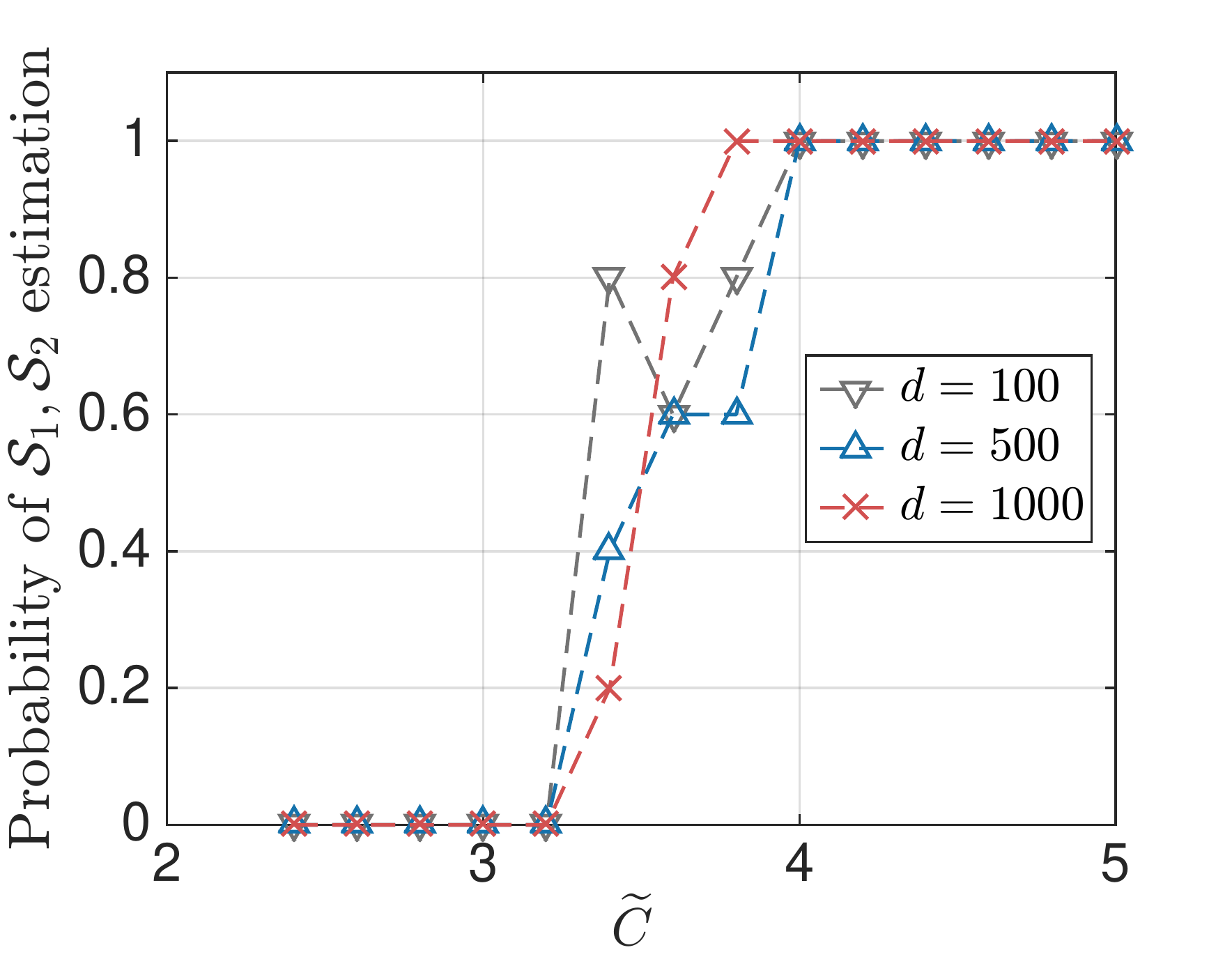}  \\ 
		\includegraphics[width=0.32\textwidth]{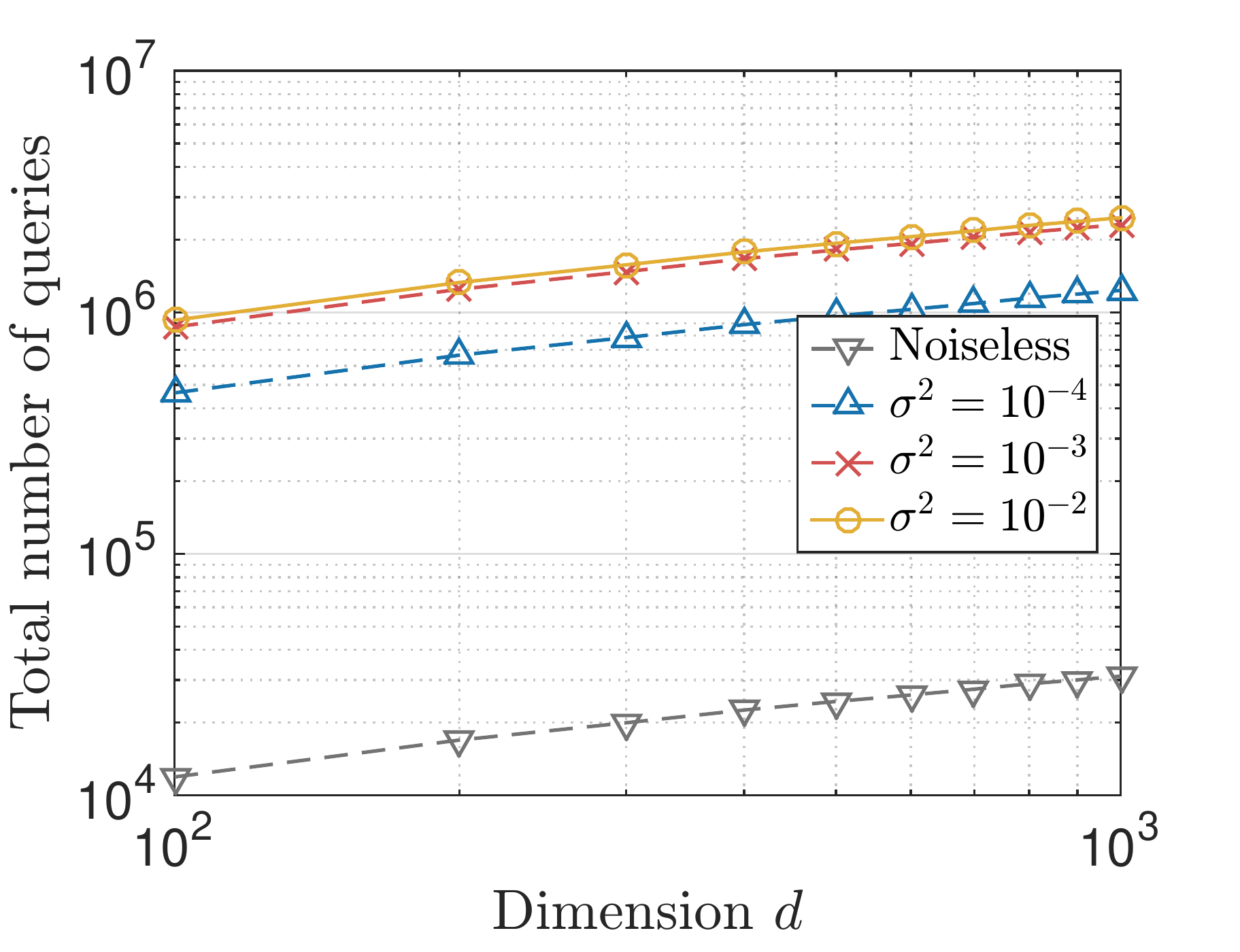} \includegraphics[width=0.32\textwidth]{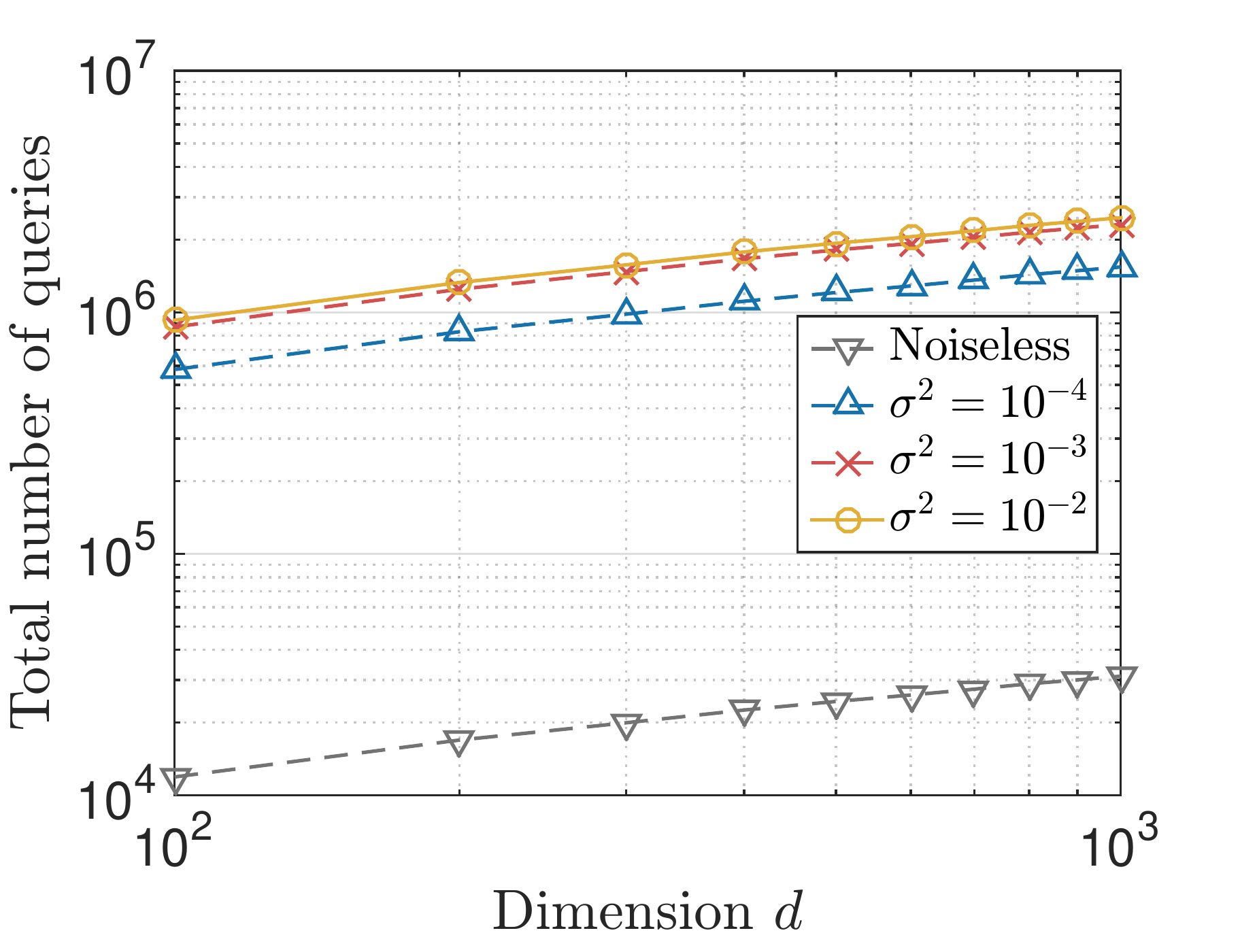} 
		\includegraphics[width=0.32\textwidth]{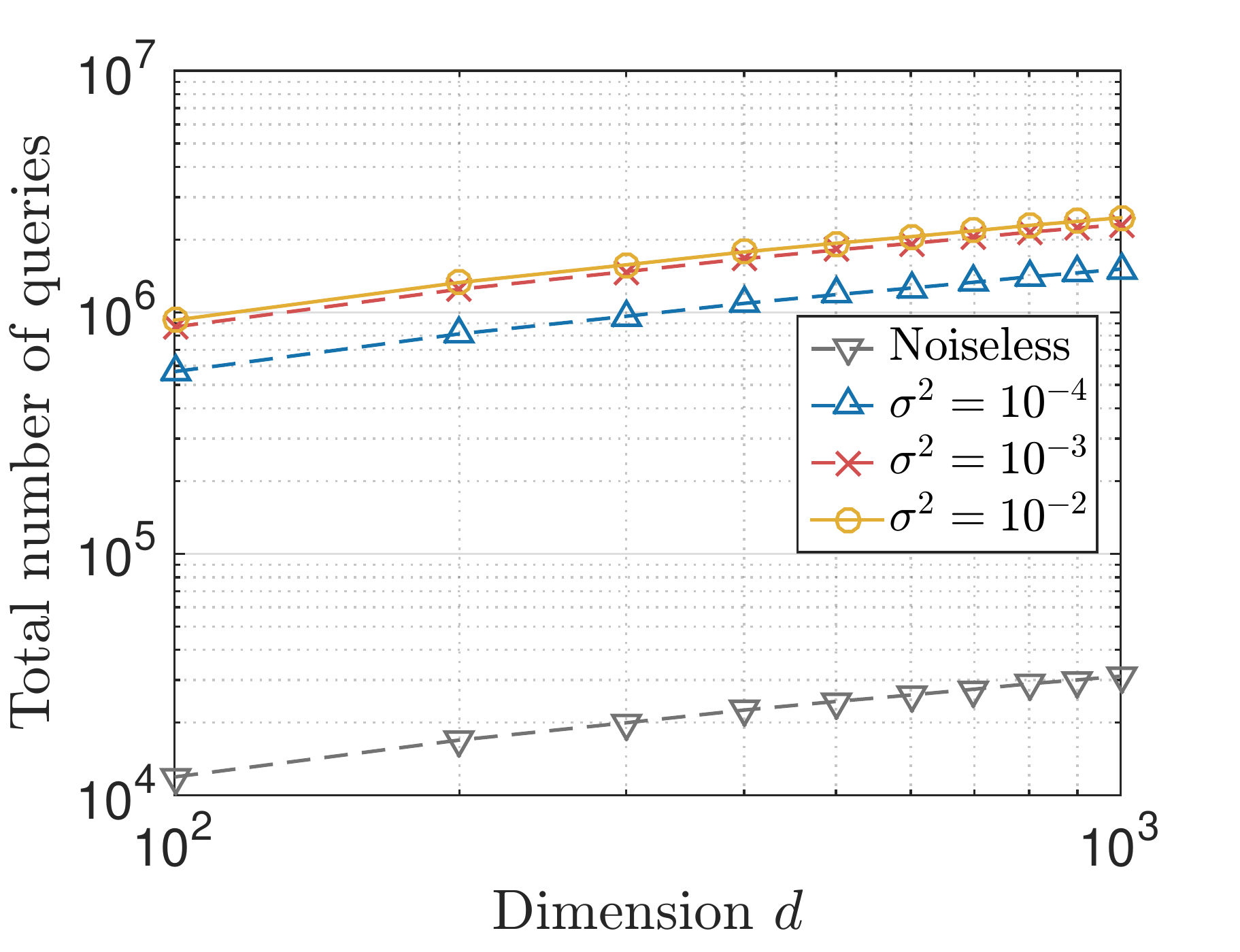}
\end{center} 
\caption{\small First (resp. second and third) column is for $f_1$ (resp. $f_2$ and $f_3$). Top row depicts
the success probability of identifying \empty{exactly} $\univsupp,\bivsupp$, in the noiseless case. 
$x$-axis represent the constant $\widetilde{C}$. The bottom panel depicts 
total queries vs. $\dimn$ for exact recovery, with $\widetilde{C} = 3.8$ and various noise settings.
All results are over $5$ independent Monte Carlo trials.} \label{exp:f1_f2_f3_plots_non}
\end{figure}

For the noisy setting, we consider the function values to be corrupted with i.i.d. Gaussian noise. 
We reduce the noise variance by repeating each query $N_1$ and $N_2$ times respectively, and averaging. 
The noise variance values considered are $\sigma^2 \in \left\{10^{-4}, 10^{-3}, 10^{-2}\right\}$ for which we choose: 
\begin{align*} 
(N_1,N_2) &\in \set{(40,15), (75,31), (80,35)} \quad \text{for} \quad f_1, \\ 
(N_1,N_2) &\in \set{(60,30), (85,36), (90,40)} \quad \text{for} \quad f_2, \\
\text{and} \quad (N_1,N_2) &\in \set{(59,30), (85,35), (90,40)} \quad \text{for} \quad f_3.
\end{align*}
Moreover, we now choose parameters $\gradstep, \hessstep, \pardevstep, \hesssamperr, \derivsamperr$ 
as in Theorem \ref{thm:gen_no_over_arbnois}. 

\paragraph{Dependence on $\dimn$.} 
We see in Fig. \ref{exp:f1_f2_f3_plots_non}, that for $\widetilde{C} \approx 3.8$ the probability of successful identification (noiseless case)
undergoes a phase transition and becomes close to $1$, for different values of $\dimn$. This validates 
the statements of Lemmas \ref{lem:rec_act_set}-\ref{lem:est_ind_sets}. Fixing $\widetilde{C} = 3.8$, we then see that with the total number of queries 
growing slowly with $\dimn$, we have successful identification. For the noisy case, the total number of queries is roughly 
$10^2$ times that in the noiseless setting, however the scaling with $\dimn$ is similar to that for noiseless case. 
Focusing on the function models $f_2$ and $f_3$, observe that the number of queries is seen to be slightly larger 
than that for $f_1$ in the noisy settings; this fact becomes more obvious in the overlapping case later on.

\paragraph{Dependence on $\totsparsity$.} We now demonstrate the scaling of the total number of queries versus 
the sparsity $\totsparsity$ for identification of $\univsupp,\bivsupp$. Consider the model  
\begin{small}
\begin{align}
f(\vecx) &= \sum_{i = 1}^T \Big(\alpha_1 \vecx_{(i-1)5 + 1} - \alpha_2\vecx_{(i-1)5 + 2}^2
+ \alpha_3 \vecx_{(i-1)5 + 3} \vecx_{(i-1)5 + 4} - \alpha_4 \vecx_{(i-1)5 + 5} \vecx_{(i-1)5 + 6}\Big) \label{eq:f_diff_k_exp_non}
\end{align} 
\end{small} 
where $\vecx \in \matR^{\dimn}$ for $d  = 500$. Here, $\alpha_i \in [2, 5], \forall i$; \textit{i.e.}, 
we randomly selected $\alpha_i$'s within range and kept the values fixed for all $5$ Monte Carlo iterations. 
Note that sparsity $\totsparsity = 6T$; we consider $T \in \left\{1, 2, \dots, 10\right\}$. 
We set $\critintmeas_1 = 0.3$, $\critintmeas_2 = 1$, $\idenconst_1 = 2$, $\idenconst_2 = 3$, $\smconst_3 = 6$ and 
$\widetilde{C} = 3.8$, \textit{i.e.}, the same setting with model $f_1$ above. 
For the noisy cases, we consider $\sigma^2$ as before, and choose the same values for $(N_1,N_2)$ as for $f_1$. 
In Figure \ref{fig:exp_k_non}, we see that the number of queries scales as $\sim \totsparsity \log(\dimn/\totsparsity)$, and 
is roughly $10^2$ more in the noisy case as compared to the noiseless setting.

\begin{figure}[!htp]
    \begin{center}
        \includegraphics[width=0.5\textwidth]{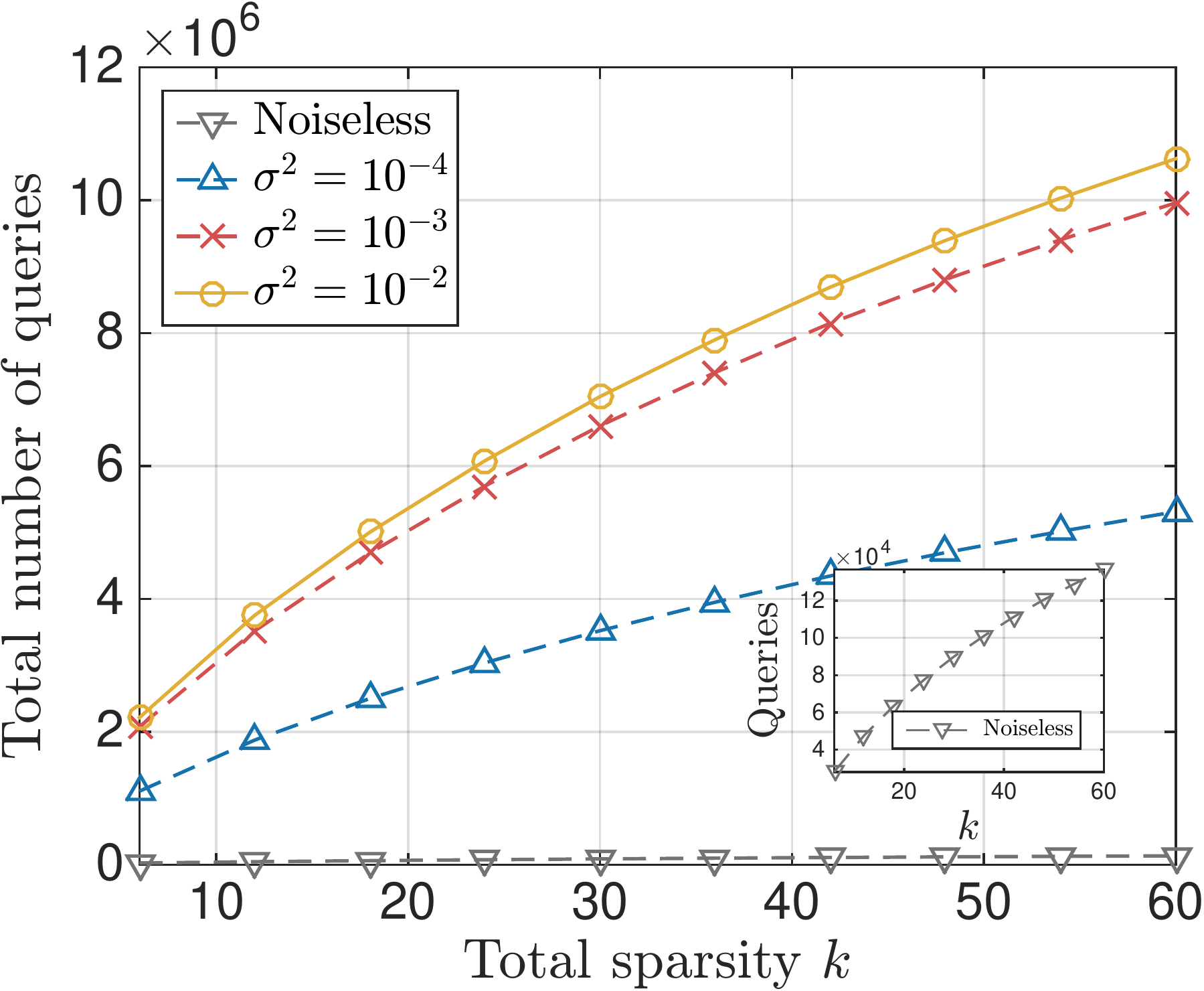}
    \end{center}
    \caption{\small Total number of queries versus different sparsity values $k$, for \eqref{eq:f_diff_k_exp_non}.
    This is for both noiseless and noisy cases (i.i.d Gaussian) with variances
    $\sigma^2 \in \left\{10^{-4}, 10^{-3}, 10^{-2}\right\}$.} \label{fig:exp_k_non}
\end{figure}

\subsection{Overlapping setting} \label{sec:sims_over}
For the overlapping case, we set $\univsupp = \left\{1, 2 \right\}$ and $\bivsupp = \{(3, 4), (4, 5)\}$,
which implies $\univsparsity = 2$, $\bivsparsity = 2$, $\maxdegree = 2$ and $\totsparsity = 5$. 
Due to the presence of overlap between the elements of $\bivsupp$, we now employ Algorithm \ref{algo:gen_overlap} 
for identifying $\univsupp,\bivsupp$. 
\begin{remark} \label{rem:overlap_exps_spams}
We deliberately avoid using 
Algorithm \ref{algo:gen_overlap_alt} on account of Remark \ref{rem:alt_gen_scheme_IHT} -- it is unclear to us whether 
IHT based methods could be employed for solving \eqref{eq:l1_min_sparse_symm}, with provable recovery guarantees. 
While we could instead use standard interior point solvers, they will be slow, especially for the range of values of dimension $d$
that we will be considering.
\end{remark}
For an easier comparison with the non-overlapping case, we consider similar models as the previous subsection; 
observe that there are now common variables across the components of $f$.
\begin{itemize}
\item [$(i)$] {\small $f_1(\vecx) = 2x_1 - 3x_2^2 + 4x_3x_4 - 5x_4x_5$}, 
\item [$(ii)$] {\small $f_2(\vecx) = 10 \sin(\pi \cdot x_1) + 5 e^{-2x_2} + 10\sin(\pi \cdot x_3 x_4) + 5 e^{-2x_4 x_5}$},
\item [$(iii)$] {\small $f_3(\vecx) = \frac{10}{3} \cos(\pi \cdot x_1) + 8 x_1^2 + 5 ( x_2^4 - x_2^2 + \frac{4}{5} x_4) + \frac{10}{3} \cos(\pi \cdot x_3x_4) 
+ 8 (x_3x_4)^2 + 5 ( (x_4x_5)^4 - (x_4x_5)^2 + \frac{4}{5} x_4x_5)$}.
\end{itemize} 

Parameters $\critintmeas_1, \critintmeas_2, \idenconst_1, \idenconst_2, \smconst_3$ are set as in the previous subsection.
For a constant $\widetilde{C}$ (chosen later), we set $\numdirec := \widetilde{C} \totsparsity \log\left(\dimn/\totsparsity\right)$, 
$\numdirecp := \widetilde{C} \maxdegree \log(\dimn/\maxdegree),$ and 
$\numdirecpp := \widetilde{C}  (\totsparsity-\abs{\est{\bivsuppvar}}) \log(\frac{\abs{\calP}}{\totsparsity-\abs{\est{\bivsuppvar}}})$. 
The size of the hash family $\abs{\twohashfam}$ for different values of $d$ is set as before for the non-overlapping setting.
For the noiseless setting, we choose step sizes: $\gradstep, \hessstep, \gradstepp$ and thresholds: $\hesssamperr, \derivsamperrpp$ 
as in Theorem \ref{thm:gen_overlap}. 

For the noisy setting, we consider the function values to be corrupted with i.i.d. Gaussian noise. 
We reduce the noise variance by repeating each query $N_1$ and $N_2$ times respectively, and averaging. 
The noise variance values considered are: $\sigma^2 \in \left\{10^{-4}, 10^{-3}, 10^{-2}\right\}$ for which we choose: 
\begin{align*} 
(N_1,N_2) &\in \set{(50,20), (85,36), (90,40)} \quad \text{for} \quad f_1, \\ 
(N_1,N_2) &\in \set{(60,30), (90,40), (95,43)} \quad \text{for} \quad f_2, \\
\text{and} \quad (N_1,N_2) &\in \set{(59,30), (89,40), (93,43)} \quad \text{for} \quad f_3.
\end{align*}
Moreover, we now choose the parameters: $\gradstep, \hessstep, \gradstepp, \hesssamperr, \derivsamperrpp$ as 
in Theorem \ref{thm:gen_overlap_arbnois}. 

\paragraph{Dependence on $\dimn$.} 
We see in Fig. \ref{exp:f1_f2_f3_plots}, that for $\widetilde{C} \approx 5.6$ the probability of successful identification (noiseless case)
undergoes a phase transition and becomes close to $1$, for different values of $\dimn$, as in the non-overlapping case. This validates 
the statement of Theorem \ref{thm:gen_overlap}. As in the non-overlapping case, in the presence of noise, the total number of queries is 
roughly $10^2$ times that in the noiseless setting, however the scaling with $\dimn$ is similar to that for the noiseless setting. 
\begin{figure}[!ht]
\begin{center}
	\includegraphics[width=0.32\textwidth]{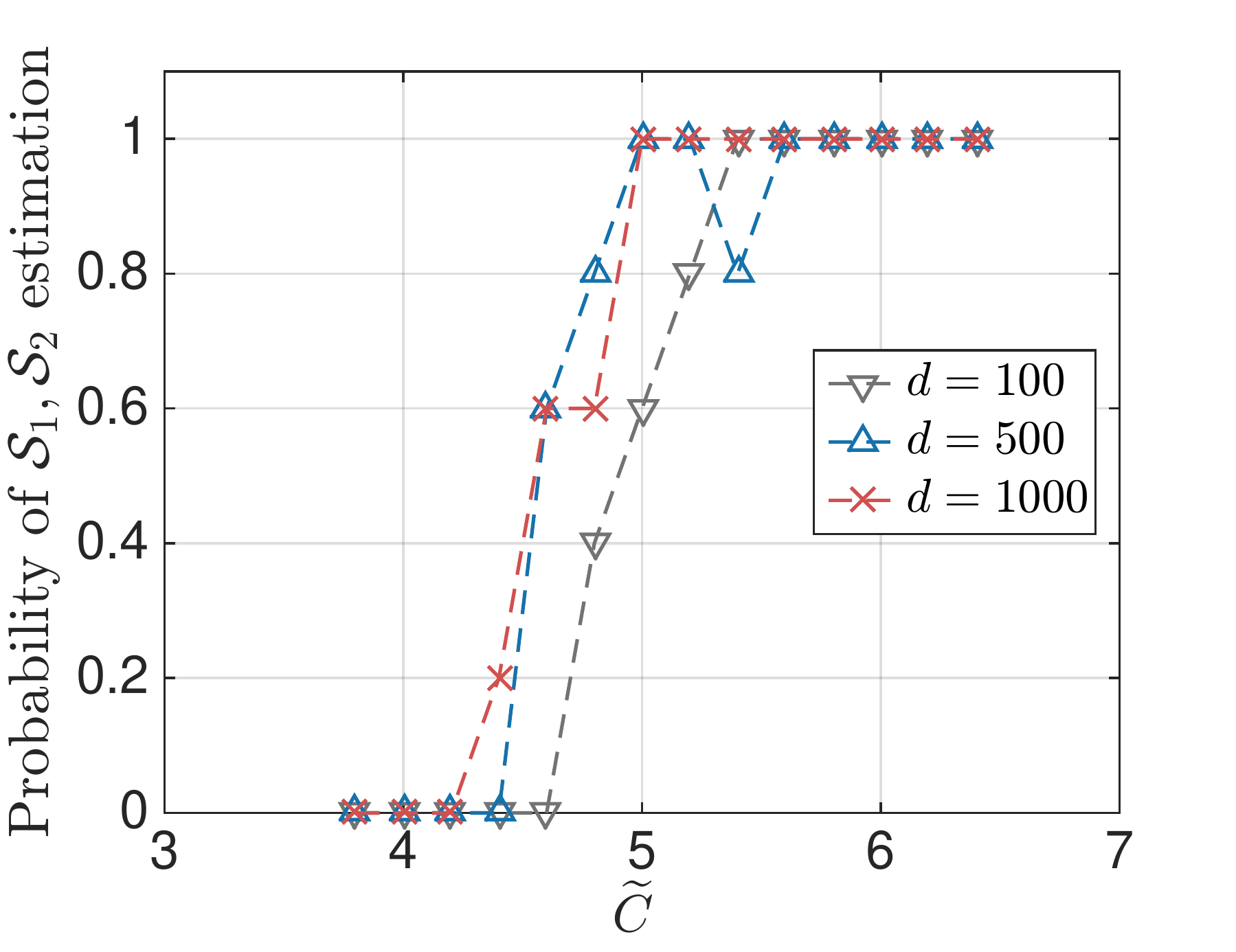} \includegraphics[width=0.32\textwidth]{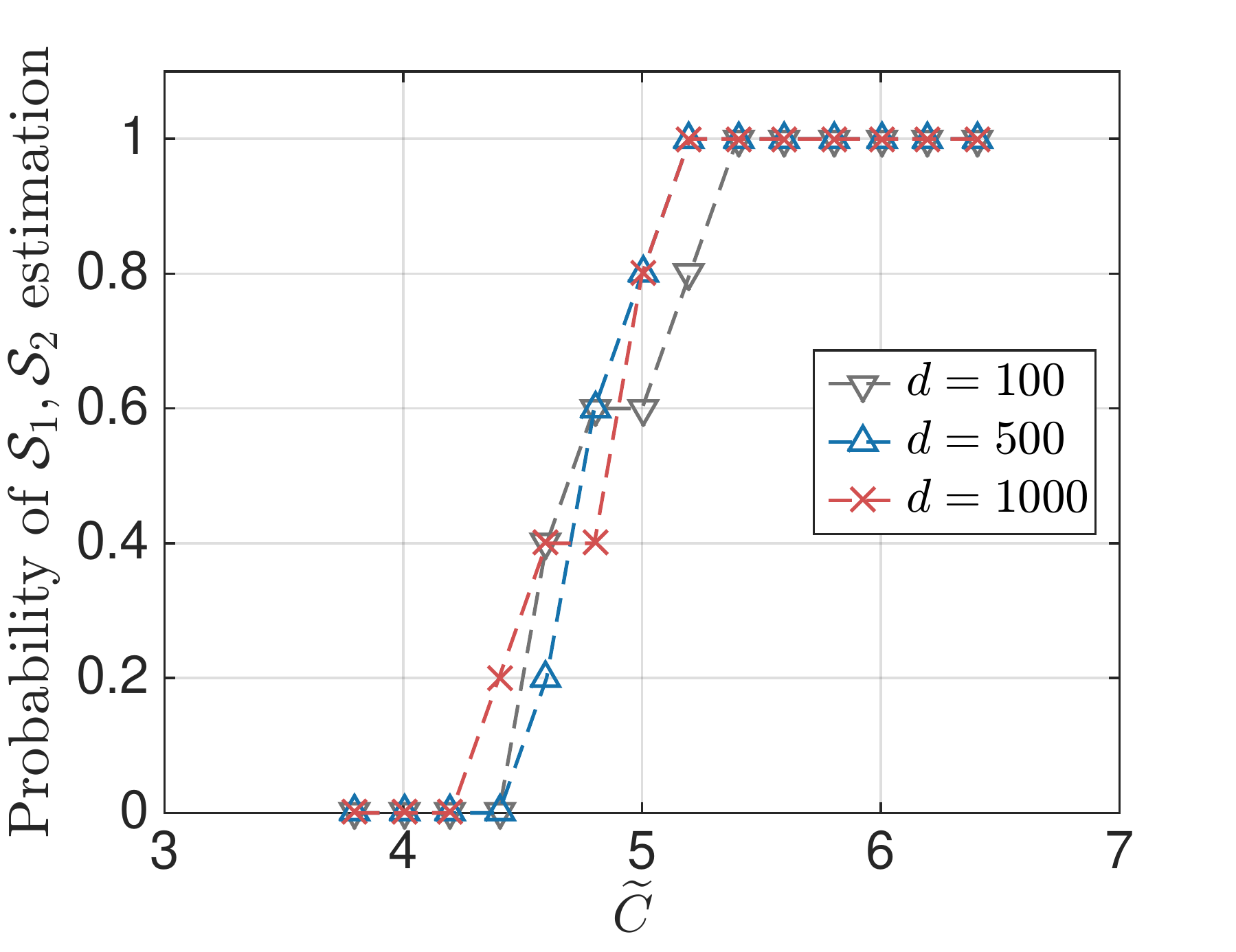} \includegraphics[width=0.32\textwidth]{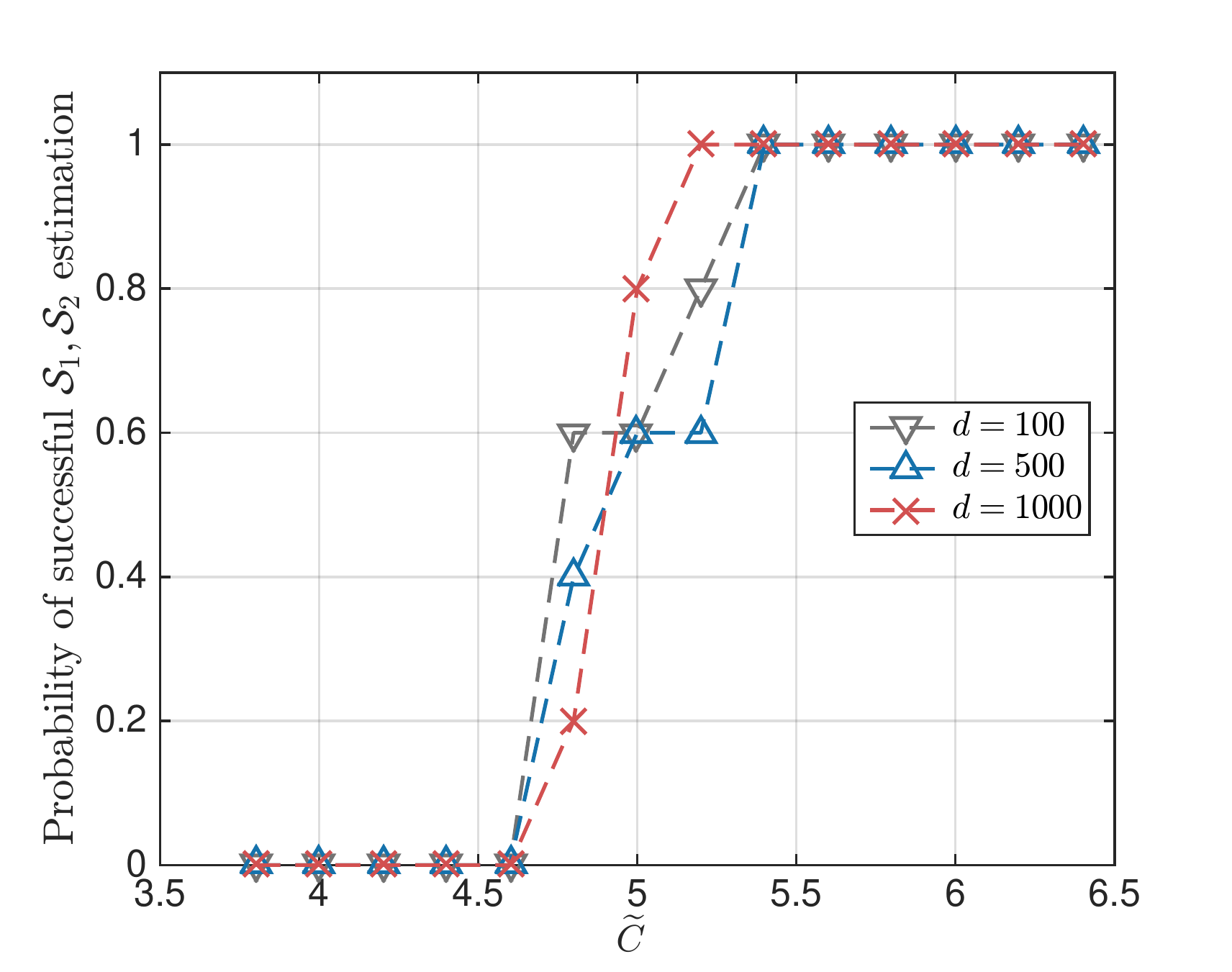}  \\ 
		\includegraphics[width=0.32\textwidth]{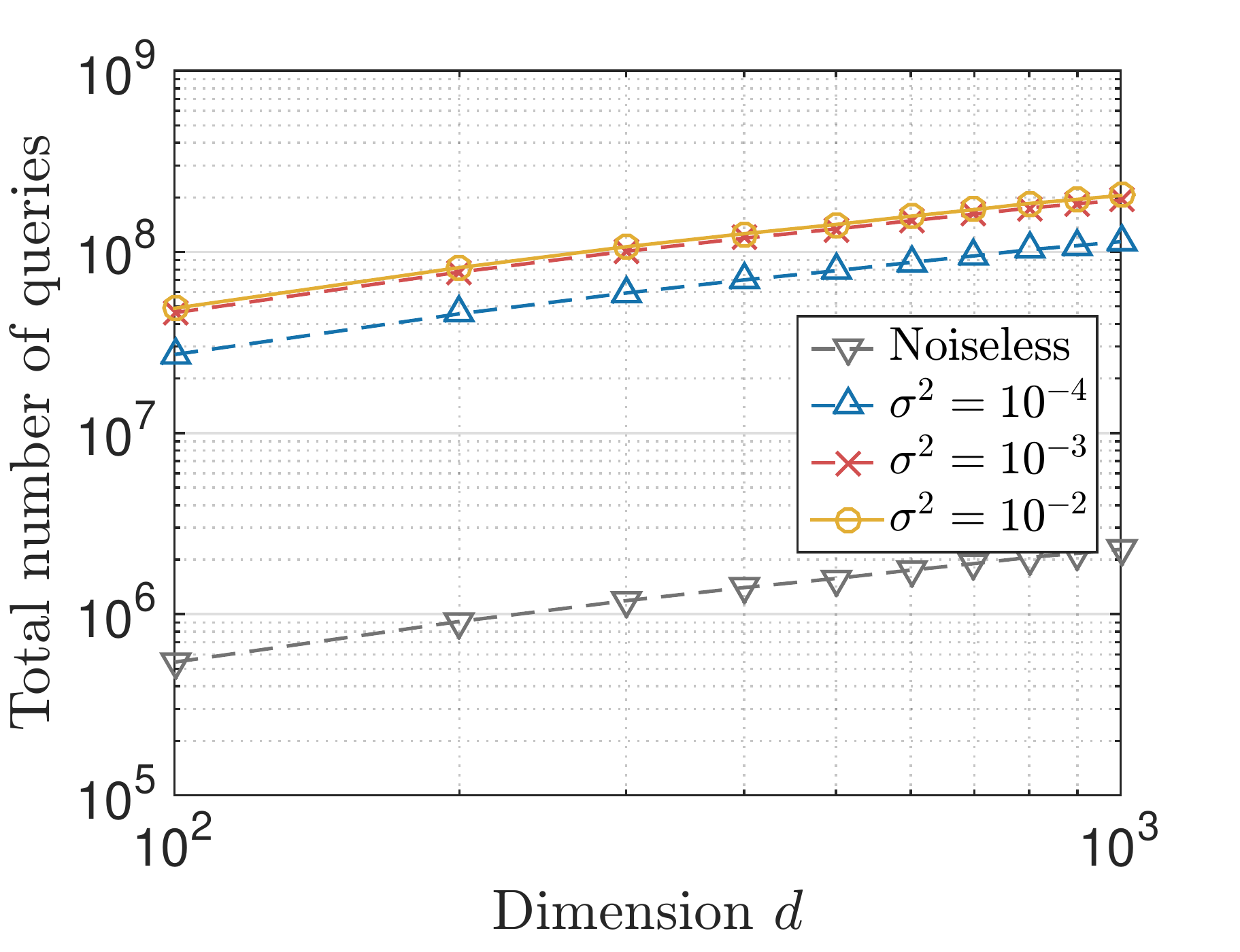} \includegraphics[width=0.32\textwidth]{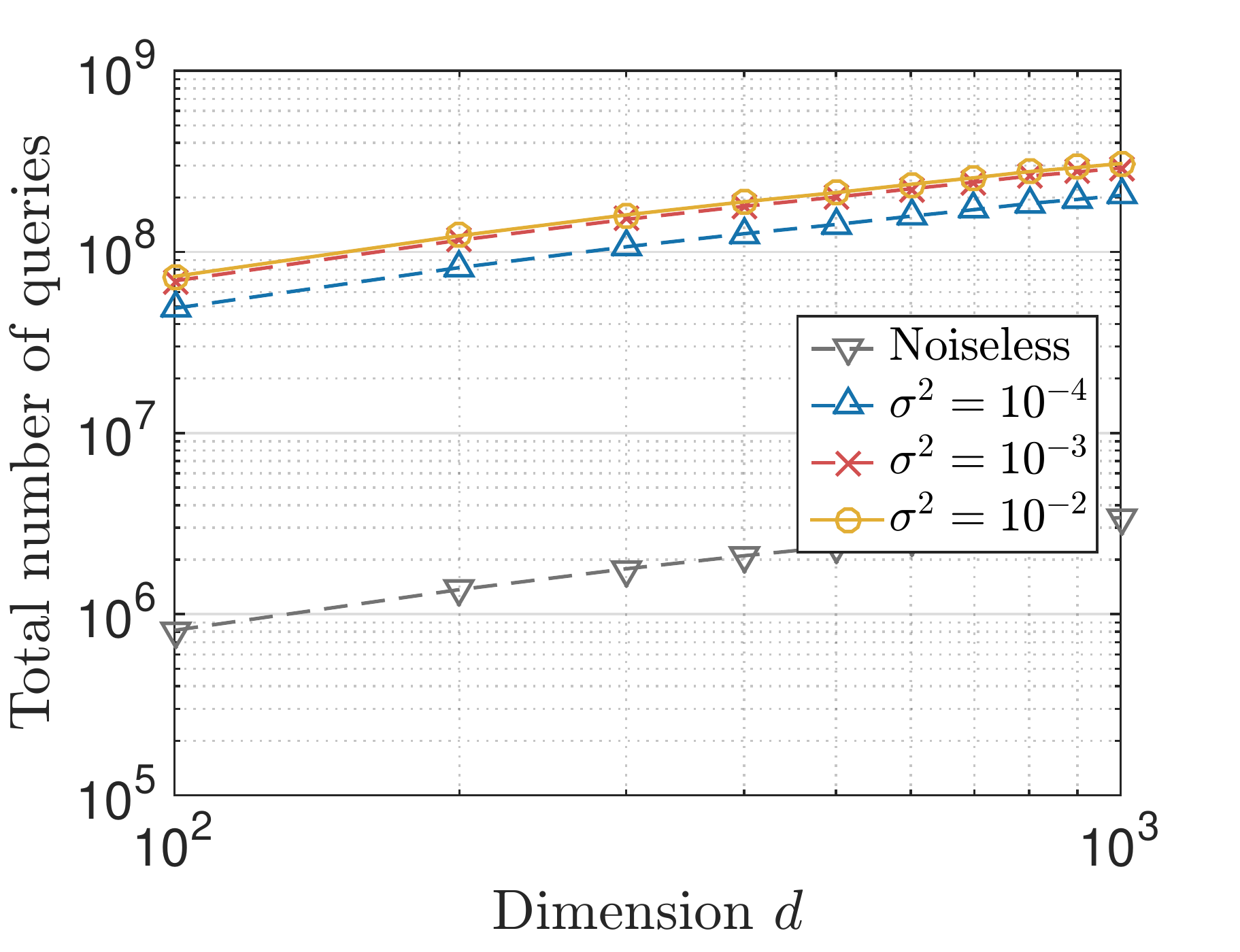} 
		\includegraphics[width=0.32\textwidth]{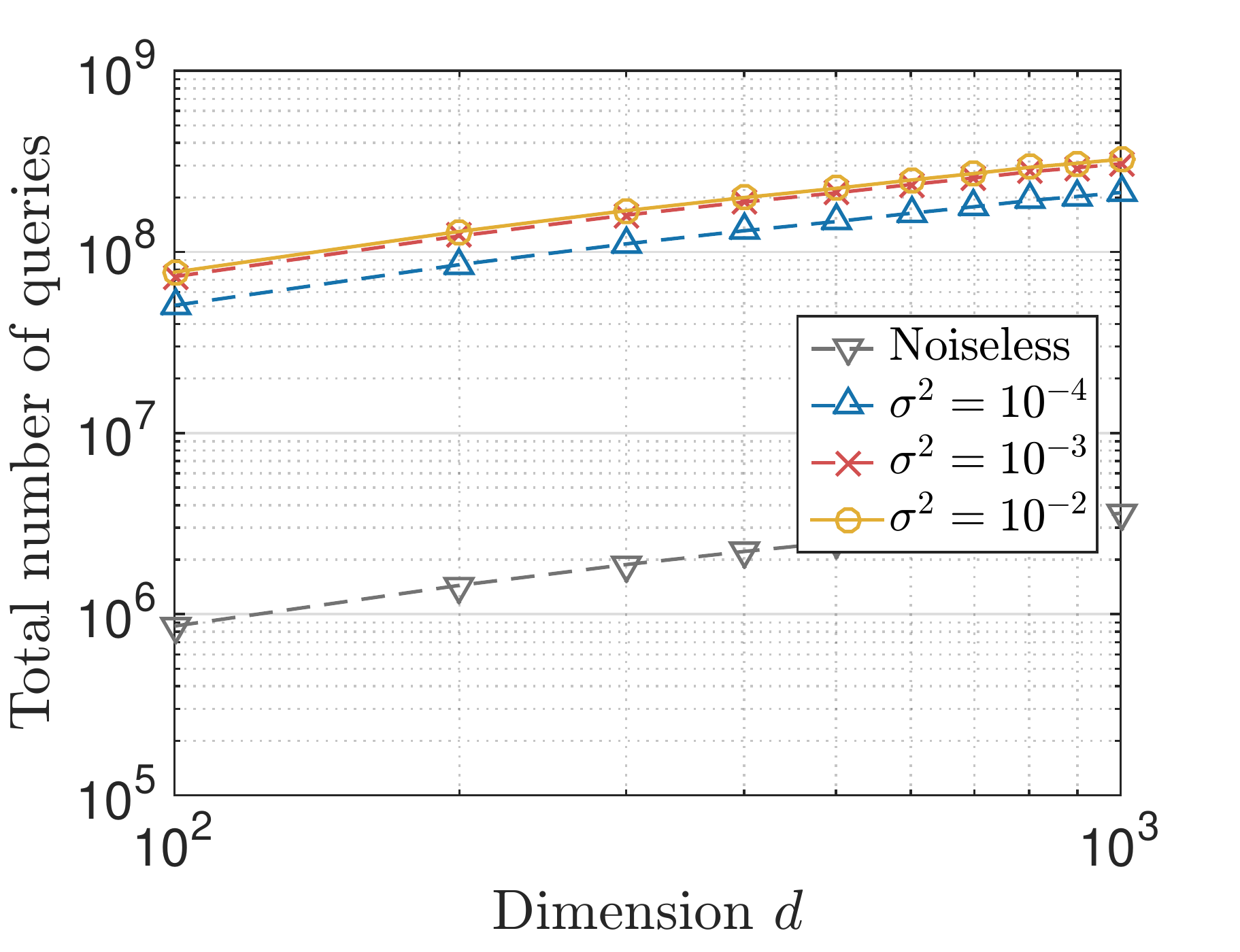}
\end{center} 
\caption{\small First (resp. second and third) column is for $f_1$ (resp. $f_2$ and $f_3$). Top row depicts
the success probability of identifying \empty{exactly} $\univsupp,\bivsupp$, in the noiseless case. 
$x$-axis represent the constant $\widetilde{C}$. The bottom panel depicts 
total queries vs. $\dimn$ for exact recovery, with $\widetilde{C} = 5.6$ and various noise settings.
All results are over $5$ independent Monte Carlo trials.} \label{exp:f1_f2_f3_plots}
\end{figure}

\paragraph{Dependence on $\totsparsity$.} We now demonstrate the scaling of the total number of queries versus 
the sparsity $\totsparsity$ for identification of $\univsupp,\bivsupp$. Consider the model  
\begin{small}
\begin{align}
f(\vecx) &= \sum_{i = 1}^T \Big(\alpha_1 \vecx_{(i-1)5 + 1} - \alpha_2\vecx_{(i-1)5 + 2}  ^2 
+ \alpha_3 \vecx_{(i-1)5 + 3} \vecx_{(i-1)5 + 4} - \alpha_4 \vecx_{(i-1)5 + 4} \vecx_{(i-1)5 + 5}\Big) \label{eq:f_diff_k_exp}
\end{align} 
\end{small} 
where $\vecx \in \matR^{\dimn}$ for $d  = 500$. Here, $\alpha_i \in [2, 5], \forall i$; \textit{i.e.}, 
we randomly selected $\alpha_i$'s within range and kept the values fixed for all $5$ Monte Carlo iterations. 
Note that $\maxdegree = 2$ and the sparsity $\totsparsity = 5T$; we consider $T \in \left\{1, 2, \dots, 10\right\}$. 
We set $\critintmeas_1 = 0.3$, $\critintmeas_2 = 1$, $\idenconst_1 = 2$, $\idenconst_2 = 3$, $\smconst_3 = 6$ and $\widetilde{C} = 5.6$. 
For the noisy cases, we consider $\sigma^2$ as before, and choose the same values for $(N_1,N_2)$ as for $f_1$. 
In Figure \ref{fig:exp_k}(Left panel), we again see that the number of queries scales as $\sim \totsparsity \log(\dimn/\totsparsity)$, and 
is roughly $10^2$ more in the noisy case as compared to the noiseless setting. 
\begin{figure}[!ht]
	\begin{center}
		\includegraphics[width=0.41\textwidth]{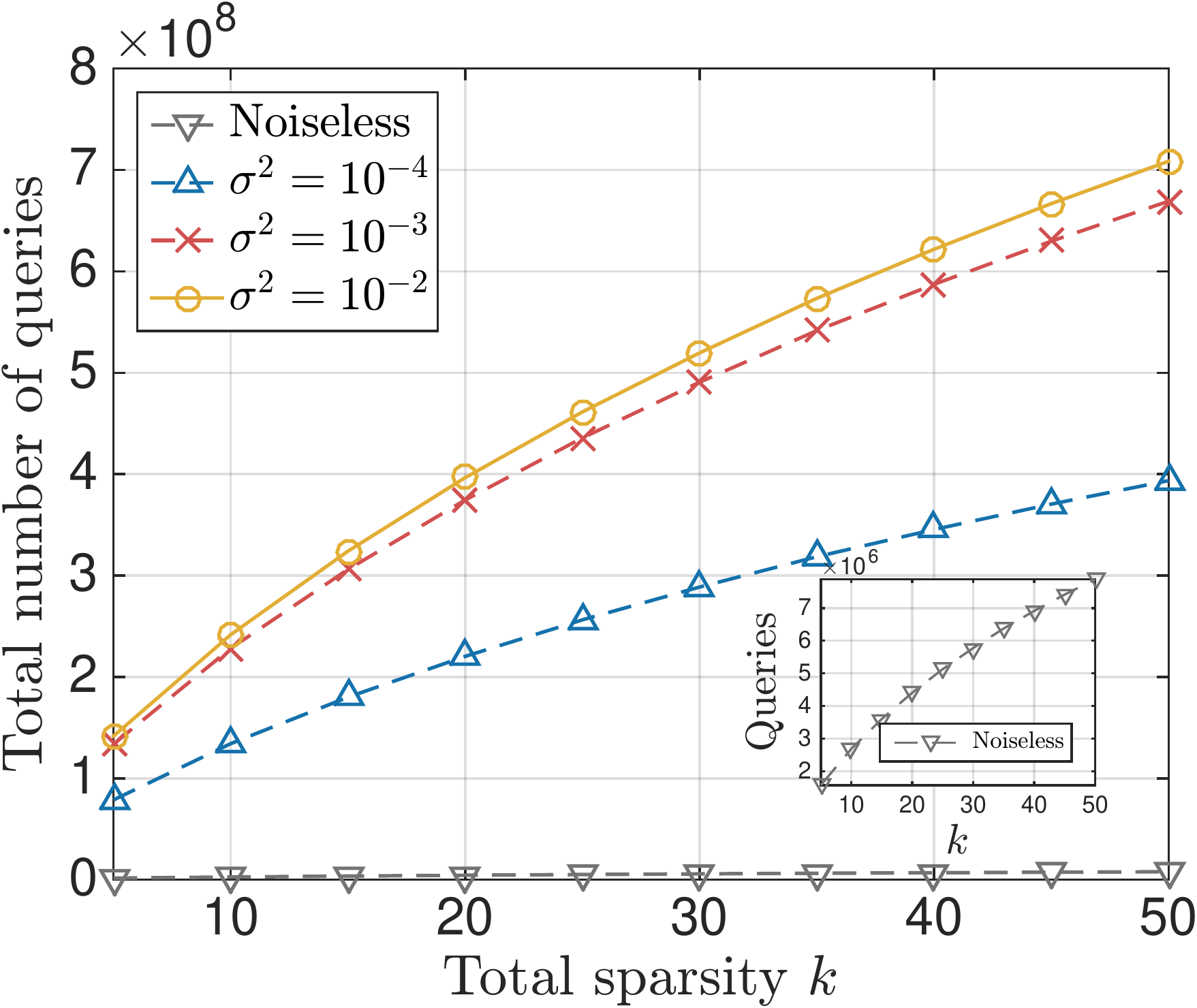} \hspace{0.5cm} \includegraphics[width=0.42\textwidth]{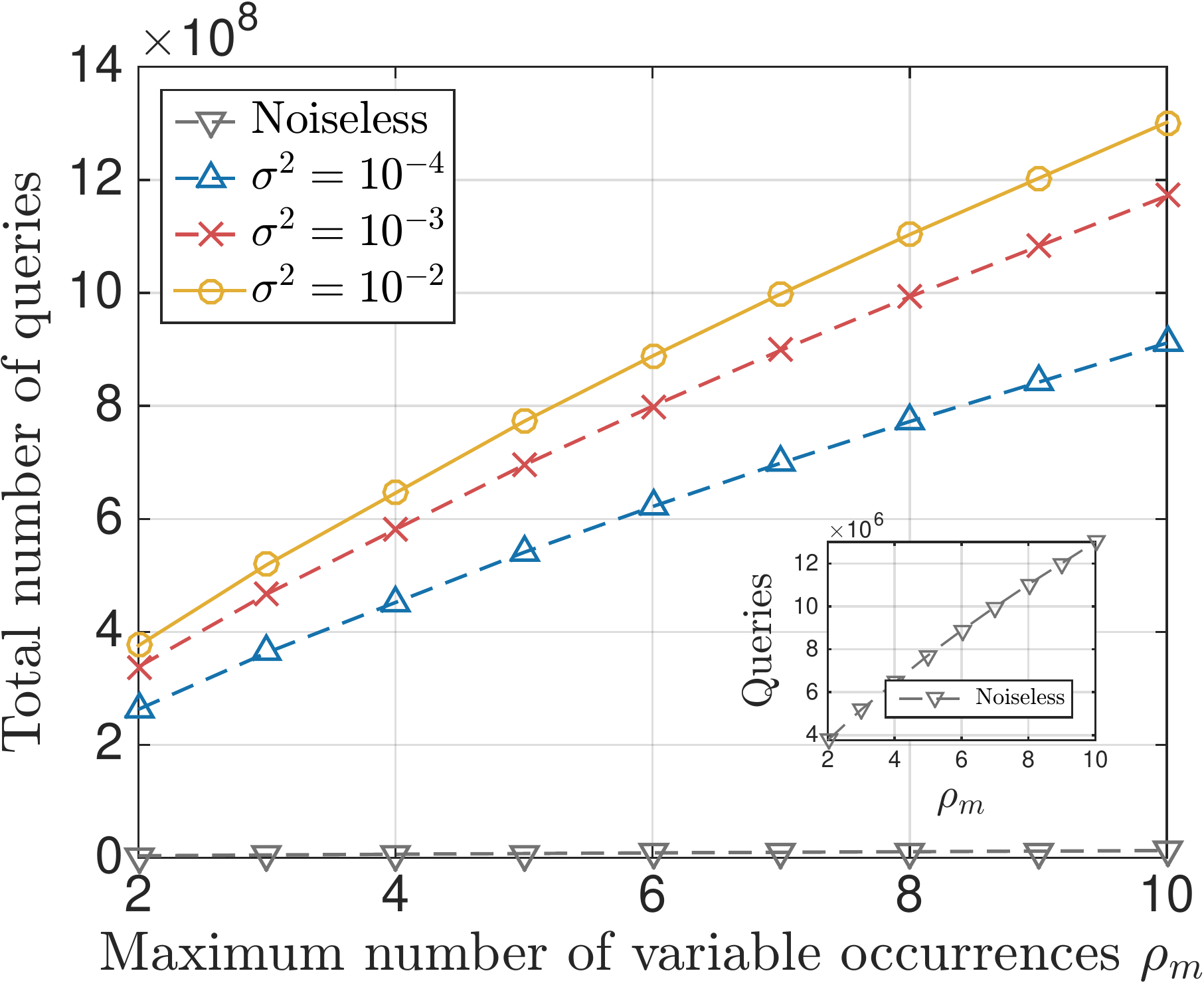} 
	\end{center} 
	\caption{\small Left panel: Total number of queries versus different sparsity values $k$, for \eqref{eq:f_diff_k_exp}. 
	Right panel: Total number of queries versus $\maxdegree$ 
	for \eqref{eq:f_diff_rho_exp}. This is for both noiseless and noisy cases (i.i.d Gaussian) with variances 
	$\sigma^2 \in \left\{10^{-4}, 10^{-3}, 10^{-2}\right\}$.} \label{fig:exp_k}
\end{figure}
\paragraph{Dependence on $\maxdegree$.}
We now demonstrate the scaling of the total queries versus 
the maximum degree $\maxdegree$ for identification of $\univsupp,\bivsupp$. Consider the model $f(\vecx) = $
\begin{small}
\begin{align}
\alpha_1 \vecx_{1} - \alpha_2\vecx_{2}^2 + \sum_{i = 1}^{T} \left(\alpha_{3,i} \vecx_{3} \vecx_{i+3}\right) 
+ \sum_{i=1}^{5}\left(\alpha_{4,i}\vecx_{2+2i}\vecx_{3+2i} \right). \label{eq:f_diff_rho_exp}
\end{align} 
\end{small}
We choose $d = 500$, $\widetilde{C} = 6$, $\alpha_i \in [2,\dots, 5], \forall i$ (as earlier) and set  
$\critintmeas_1 = 0.3$, $\critintmeas_2 = 1$, $\idenconst_1 = 2$, $\idenconst_2 = 3$, $\smconst_3 = 6$.
For $T \geq 2$, we have $\maxdegree = T$; we choose $T \in \set{2,3,\dots,10}$. 
Also note that $\totsparsity = 13$ throughout. For the noisy cases, we consider $\sigma^2$ as before, and choose 
$(N_1,N_2) \in \set{(70,40), (90,50), (100,70)}$. In Figure \ref{fig:exp_k}(Right panel), 
we see that the number of queries scales as $\sim \maxdegree \log(\dimn/\maxdegree)$, and 
is roughly $10^2$ more in the noisy case as compared to the noiseless setting.
%
\section{Discussion} \label{sec:discuss}
We now provide a more detailed discussion with respect to related work, \hemant{starting with results for learning 
SPAMs.} 
\paragraph{Learning SPAMs.}
{\hemantt Ravikumar et al. \cite{Ravi2009}, Meier et al. \cite{Meier2009} proposed methods based on least
squares loss regularized with sparsity and smoothness constraints. While Ravikumar et al. 
show their method to be sparsistent for second order Sobolev smooth $f$, 
one can obtain a rough estimate of how the number of samples $n$ behaves 
with respect to $\totsparsity, d$. Indeed, from Corollary 1 of Theorem 2 in \cite{Ravi2009}, we see that the 
probability of incorrect identification of $\totsupp$ approximately 
scales\footnote{Here, we set the term $\rho^{*}_n$ capturing the minimum magnitude of the 
univariate components (as defined in \cite[Theorem 2]{Ravi2009}) to O(1).} as: 
$\frac{\log d}{(\log n)^2} + \frac{\totsparsity}{\log n} + \frac{\log d\sqrt{\totsparsity}}{n^{1/6}}$.
This means that $n$ roughly scales as $\max\{\totsparsity^3 (\log d)^6, e^{\totsparsity}, e^{\sqrt{\log d}}\}$,
for a constant probability of error. In contrast, our $O(k^2 (\log d)^2)$ bound (recall Theorem \ref{thm:gen_overlap_gaussnois}) 
has a clearly better scaling. 
 
Meier et al. \cite{Meier2009} derive error rates of $O(\totsparsity (\log d / n)^{2/5})$ for estimating $C^2$ smooth $f$ 
in the empirical $L_2(\prob_n)$ norm. They also show conditions under which their method is guaranteed to
recover $\est{\totsupp} \subset \totsupp$.

Huang et al. \cite{Huang2010} proposed a method based on the adaptive group Lasso,
and show that it is sparsistent. In contrast to \cite{Ravi2009}, it is unclear here how exactly $n$ 
scales with $\totsparsity, d$. They also derive $L_2$ error rates for estimating the individual components of the SPAM. 

Wahl \cite{Wahl15} consider the variable selection problem for SPAMs. They propose an estimator 
that essentially involves looking at all subsets of $\{1,\dots,d\}$ of size $\totsparsity$, and hence is practically 
infeasible. They show that for the periodic Sobolev class of functions (with smoothness parameter $\alpha > 1/2$), 
their estimator recovers $\totsupp$ w.h.p with $O(\totsparsity^{\frac{2\alpha+1}{2\alpha}} (\log d)^4)$ samples \cite[Corollary 3]{Wahl15}.  
Consequently, they are also able to estimate each individual component of the model in the $L_2(\prob)$ norm.
We observe that the dependency of their bound on $d$ is worse than ours by a factor of $(\log d)^2$, however the 
scaling with $k$ is better for all $\alpha > 1/2$.
}

\paragraph{Learning generalized SPAMs.} 
Radchenko et al. \cite{Rad2010} proposed the VANISH algorithm -- a least squares method with sparsity constraints. 
\hemant{Assuming $f$ to be second order Sobolev smooth, they show their method to be sparsistent. 
They also show a consistency result for estimating $f$, similar to \cite{Ravi2009}. 
One can obtain a rough estimate of how their sampling bounds scale with $d,\abs{\univsupp},\abs{\bivsupp}$ for 
exact identification of $\univsupp,\bivsupp$. 
Denoting $m = \abs{\univsupp} + \abs{\bivsupp}$, and $n$ to be the number of samples, we see from Corollary 1 
of \cite[Theorem 2]{Rad2010} that the probability of failure, i.e., incorrect identification of $\univsupp,\bivsupp$, 
approximately scales\footnote{Here, we set the term $b$ capturing the minimum magnitude of the 
univariate and bivariate components (as defined in \cite[Section 3.2]{Rad2010}) to O(1). } 
as $\frac{\sqrt{m}}{\log n} + \frac{(\log d)^3}{n^{3/5}}$. This implies that $n$ roughly scales 
as $\max\{e^{m}, (\log d)^5\}$ for a constant probability of error. 
In contrast, as seen from Theorems \ref{thm:gen_overlap_gaussnois},\ref{thm:gen_overlap_gaussnois_alt}, 
our bounds are polynomial in $m$, and have a better scaling with dimension $d$.}

Dalalyan et al. \cite{Dala2014} studied a generalization of \eqref{eq:intro_gspam_form} that allows for the presence of 
a sparse number ($m$) of $s$-wise interaction terms for some additional sparsity parameter $s$. Specifically, they studied this in the 
Gaussian white noise model\footnote{This is known to be asymptotically equivalent to the nonparametric regression model as the number of samples 
$n \rightarrow \infty.$}. \hemant{Assuming $f$ to lie in a Sobolev space with smoothness parameters $\beta,L > 0$, 
and some $\epsilon \in (0,1)$\footnote{$\epsilon$ corresponds to $\sigma^2/\sqrt{n}$ in regression, where $\sigma^2$ denotes variance of noise.}, 
they derive a non-asymptotic $L_2$ error rate (in expectation) of: 
$\max\{m L^{\frac{s}{2\beta+s}}\epsilon^{\frac{4\beta}{2\beta+s}}, ms\epsilon^2 \log(d/(sm^{1/s}))\}$}, 
which is also shown to be minimax optimal. However, they do not guarantee unique identification of 
the interaction terms for any value of $s$. Furthermore, the computational complexity of their estimator is exponential in
$d,s,m$, although they discuss possible ways to reduce this complexity. 

\hemant{The above model was also recently studied by 
Yang et al. \cite{Yang2015}; they consider a Bayesian estimation of $f$ in the Gaussian process (GP) setting wherein 
a GP prior is placed on $f$, and inference on $f$ is carried out by summarizing the resulting posterior probability given the data.
They derived minimax estimation rates for H\"older smooth $f$ in the $L_2$ norm, 
along with a method that nearly achieves the optimal estimation rate (modulo some log factors) 
in the empirical $L_2(\prob_n)$ norm}. However they do 
not guarantee unique identification of the interaction terms. Suzuki \cite{Suzuki12} studied a special case where $[d]$ 
is pre-divided into $m$ disjoint subsets, with an additive component\footnote{Thus for $m = d$, we obtain a Sparse additive model (SPAM).} 
defined on each subset. Assuming a sparse number of 
components, they derived PAC Bayesian bounds for estimation of $f$ in the $L_2(\prob_n)$ norm. 

A special case of \eqref{eq:intro_gspam_form} -- where $\phi_p$'s are linear and each $\phi_{\lpair}$ is of the form $x_l x_{\lp}$ --
has been studied considerably. Within this setting, there exist algorithms that recover $\univsupp,\bivsupp$, along with 
convergence rates for estimating $f$, in the limit of large $n$ \cite{Choi2010,Rad2010,Bien2013}. 
\hemant{Kekatos et al. \cite{Kekatos11} show that exact recovery is possible (w.h.p) via $\ell_1$ minimization 
with $O((\abs{\univsupp} + \abs{\bivsupp}) (\log d)^4)$ noiseless point queries. This is based on the Restricted 
Isometry Property (RIP) for structured random matrices as developed in \cite{RauhutStruct2010}.}
Nazer et al. \cite{Nazer2010} generalized this to the setting of sparse multilinear systems -- albeit in the noiseless setting -- 
and derived non-asymptotic sampling bounds for identifying the interaction terms, via $\ell_1$ minimization. 
Upon translating Theorem $1.1$ from their paper into our setting, with general overlap (so $\maxdegree \geq 1$), 
we obtain a sample complexity\footnote{This sample complexity implies exact recovery of $\univsupp,\bivsupp$ w.h.p} of 
$O((\abs{\univsupp} + \abs{\bivsupp})^2 \log(d/(\abs{\univsupp} + \abs{\bivsupp}))) = O(\totsparsity^2\maxdegree^2 \log(d/(\totsparsity\maxdegree)))$. 
On the other hand, for the case of no overlap, their sample complexity turns out to be 
$O((\abs{\univsupp} + \abs{\bivsupp}) \log(d/(\abs{\univsupp} + \abs{\bivsupp}))) = O(\totsparsity\log(d/\totsparsity))$ 
for recovering $\univsupp,\bivsupp$ w.h.p. However finite sample bounds for the non-linear model \eqref{eq:intro_gspam_form} are not known in general.

\hemant{We also note that it is common in the statistics literature to impose a heredity constraint on the interactions, wherein an interaction term is present only if the corresponding main effect terms (i.e. those in $\univsupp$) are present (cf., \cite{Choi2010,Rad2010,Bien2013}). This is typically done to make the model interpretable, as interaction terms are difficult to interpret compared to main effect terms.} 

\paragraph{Other low-dimensional function models.} 
We now provide a comparison with existing work related to other low dimensional models from the literature, 
starting with the approximation theoretic setting.  
Devore et al. \cite{Devore2011} consider functions depending on a small subset $\totsupp$ 
of the variables. The functions do not necessarily possess an additive structure, 
thus the setting is more general than \eqref{eq:intro_gspam_form}. 
They provide algorithms that recover $\totsupp$ exactly w.h.p, with $O(c^{\totsparsity} \totsparsity \log d)$ noiseless queries of $f$, 
for some constant $c > 1$. Their methods essentially make use of a $(\dimn,\totsparsity)$-hash family: $\khashfam$ (cf. Definition \ref{def:thash_fam}). 
for constructing their sampling sets, and while these methods could be used for identifying $\totsupp$, the sample complexity 
would be exponential in $\totsparsity$.

Schnass et al. \cite{karin2011} consider the same model for $f$ in the noiseless setting, 
and derive a simple algorithm that recovers $\totsupp$ w.h.p, 
with $O(\frac{C_1^4}{\alpha^4} \totsparsity (\log \dimn)^2)$ noiseless queries. 
Here, $C_1 = \max_{i \in \totsupp} \norm{\partial_i f}_{\infty}$ and 
$\alpha = \min_{i \in \totsupp} \norm{\partial_i f}_1$ with $\norm{\cdot}_1$ denoting the $L_1$ norm. 
While the $C_1$ term is a constant depending on the smoothness of $f$, 
one can construct examples of $f$ for which $\alpha = c^{-\totsparsity}$, for some constant $c > 1$. This implies  
that the sample bounds could be exponential in $\totsparsity$ for general $\totsparsity$ variate functions (as one would expect). 
This method could be applied to \eqref{eq:intro_gspam_form}, 
to learn the set of active variables. In particular, for the general overlap case ($\maxdegree \geq 1$), their algorithm will identify 
the support $\totsupp$ w.h.p, with $O(\frac{{C_1^{\prime}}^4 \maxdegree^4}{\alpha^4} \totsparsity (\log \dimn)^2)$ noiseless 
queries where now: $C_1 = \max_{i \in \totsupp} \norm{\partial_i f}_{\infty} \leq C_1^{\prime}\maxdegree$, with $C_1^{\prime}$ a constant depending on the smoothness of $f$. 
For the general overlap case, we see that their bounds in the noiseless setting are worse by a $\maxdegree^3$ factor 
compared to those for Algorithms \ref{algo:gen_overlap}, \ref{algo:gen_overlap_alt}, however better by a $\log d$ factor compared 
to Algorithm \ref{algo:gen_overlap}. 
Moreover, it is not clear how the $\alpha$ term scales with respect to $\maxdegree$ here. For the non-overlap case, the 
scaling of their sampling bounds with respect to $\totsparsity, d$ matches ours for the noiseless setting, up to an additional $\frac{C_1^4}{\alpha^4}$ 
term. While $\alpha$ does not depend on $\totsparsity$ now, their sampling bound increases for small values of $\alpha$ or large values of $C_1$. 
The dependence of the sampling bound on the parameters $C_1,\alpha$ is 
not necessary in the noiseless setting, as seen from our sampling bounds that (in the noiseless case) 
depend on the \emph{measure} of the region where $\partial_i f$ ($i \in \totsupp$) and/or $\partial_l \partial_{\lp} f$ ($\lpair \in \bivsupp$) 
are large. 

This model was considered by Comminges et al. \cite{Comming2011, Comming2012} in the regression setting. 
Assuming $f$ to be differentiable, and the joint density of the covariates to be known, 
they propose an estimator that identifies the 
unknown subset $\totsupp$ w.h.p, with sample complexity $O(c^{\totsparsity} \totsparsity \log d)$. 
This bound is shown to be tight although the estimator that achieves it is impractical -- in the worst case 
it looks at all subsets of $\set{1,\dots,d}$ of size $\totsparsity$. 

Fornasier et al. \cite{Fornasier2012}, Tyagi et al. \cite{Tyagi2012_nips} generalized this model class to functions $f$ of the form 
$f(\vecx) = g(\matA\vecx)$, for unknown $\matA \in \matR^{k \times d}$. They derive algorithms that approximately 
recover the row-span of $\matA$, with sample complexities\footnote{These were derived predominantly in the noiseless 
setting, with some discussion in \cite{Tyagi2012_nips} about handling Gaussian noise via resampling and averaging.} 
typically \emph{polynomial} in $\totsparsity, \dimn$. 
Specifically, \cite{Fornasier2012} considers the setting where the rows of $\matA$ are sparse. They propose a method 
that essentially estimates the gradient of $f$ -- via $\ell_1$ minimization -- at suitably (typically polynomially in $d$) 
many points on the unit sphere $\mathbb{S}^{d-1}$. 
\cite{Tyagi2012_nips} generalized this result to the setting where $\matA$ is not necessarily sparse, by making use of low rank matrix recovery techniques. 

\paragraph{Estimation of sparse Hessian matrices.} There exists related work for estimating sparse Hessian matrices in the optimization 
literature. Powell et al. \cite{Powell79} and Coleman et al. \cite{Cole84} consider the setting where the sparsity structure
of $\hess f(\vecx)$ \emph{is known}, and aim to estimate $\hess f(\vecx)$ via gradient differences. Their aim is to minimize 
the number of gradient evaluations, needed for this purpose. In particular, Coleman et al. \cite{Cole84} approach the problem from a 
graph theoretic point of view and provide a graph coloring interpretation. Bandeira et al. \cite{Bandeira12} consider derivative free 
optimization (DFO) problems, wherein they approximate the underlying objective function $f$, by a quadratic polynomial interpolation model. 
Specifically, they build such a model by assuming $\hess f$ to be sparse, but do not assume the sparsity pattern to be known.  
Their approach is to minimize the $\ell_1$ norm of the entries of the model Hessian, subject to interpolation conditions. 
As they do not assume $\grad f$ to be sparse, they arrive at a sampling bound of 
$O(\dimn (\log \dimn)^4)$ \cite[Corollary $4.1$]{Bandeira12}, for recovering $\grad f(\vecx)$, $\hess f(\vecx)$, with high probability. 
In case $\grad f$ were also sparse, one can verify that their bound changes to 
$O((\abs{\totsupp} + 2\abs{\bivsupp}) (\log(\abs{\totsupp} + 2\abs{\bivsupp}))^2 (\log d)^2)$ = 
$O(\totsparsity\maxdegree (\log(\totsparsity\maxdegree))^2 (\log d)^2)$. \hemant{They essentially make use of 
the Restricted Isometry Property (RIP) for structured random matrices as outlined in Theorem 4.4 of \cite{RauhutStruct2010}.}

\paragraph{Bounded orthonormal systems.} One of the reviewers pointed out another interesting approach that could be used 
for identifying $\univsupp,\bivsupp$, that we now discuss. Note that this is only a rough sketch and verifying the details is 
left for future work. Let $\psi_k(\vecx)$ be a bounded orthonormal 
system\footnote{$\psi_0 \equiv 1$, i.e., it is the constant function.} 
in $L_2([-1,1]^d)$, for $k = 0,1,\dots,N$, consisting of univariate and bivariate functions. This could 
for example be constructed using a subset of the real trigonometric basis functions (see \cite[Section 1.2]{Dala2014}), 
with the $\psi_k$'s satisfying the zero (marginal) mean conditions. In our model, there are a total of $d$ univariate and ${d \choose 2}$ 
bivariate functions. Say we take $N_1$ basis functions per coordinate, and $N_2$ basis functions per coordinate-tuple, so that 
$N = d N_1 + {d \choose 2}N_2$. 

Now, $f(\vecx) = \sum_{k=0}^{N} \alpha_k \psi_k(\vecx) + r(\vecx)$ where $r$ denotes the remainder term. 
Since $f$ is $C^3$ smooth, we can uniformly approximate each univariate and bivariate $\phi$ with error rates: 
$N_1^{-p_1}$ (for some $p_1 > 0$) and $N_2^{-p_2}$ (for some $p_2 > 0$) respectively. 
Using triangle inequality, we then obtain for any $\vecx \in [-1,1]^d$ the bound:  
\begin{eqnarray}
\abs{r(\vecx)} \lesssim \abs{\univsupp} N_1^{-p_1} + \abs{\bivsupp} N_2^{-p_2}. \label{eq:bound_rem}
\end{eqnarray}
So for bounding $\abs{r(\vecx)}$ by a sufficiently small constant, we require $N_1 \sim \abs{\univsupp}^{\frac{1}{p_1}}$ and 
$N_2 \sim \abs{\bivsupp}^{\frac{1}{p_1}}$. 
By querying $f$ at $\vecx_1,\dots,\vecx_m$ (sampled uniformly at random), we get $y_l = f(\vecx_l) + z_l$; $l=1,\dots,m$, which 
in matrix form can be written as $y = \matA \alpha + \vece$. Here, $e_l = z_l + r_l(\vecx)$ and, $\alpha \in \matR^{N}$ is 
$N_1 \abs{\univsupp} + N_2 \abs{\bivsupp}$ sparse. Since the rows of $\matA$ correspond to a bounded orthonormal system (BOS), 
one can recover $\alpha$ via $\ell_1$ minimization\footnote{Consequently, we would be able to identify $\univsupp,\bivsupp$ by 
thresholding.}; using the RIP result for BOS \cite[Theorem 4.4]{RauhutStruct2010}, we 
obtain the bound:
\begin{align}
m &\geq C_1 (\abs{\univsupp}N_1 + \abs{\bivsupp}N_2) \log^2(\abs{\univsupp}N_1 + \abs{\bivsupp}N_2) \log^2(d N_1 + {d \choose 2} N_2) \label{eq:not_blk_sparse_1} \\
&\gtrsim  (\abs{\univsupp}^{\frac{1}{p_1} + 1} + \abs{\bivsupp}^{\frac{1}{p_2} + 1}) \log^2(\abs{\univsupp}^{\frac{1}{p_1} + 1} + \abs{\bivsupp}^{\frac{1}{p_2} + 1}) \log^2(d). \label{eq:not_blk_sparse_2} 
\end{align}
Note that the above bound is super-linear in the sparsity: $\abs{\univsupp} + \abs{\bivsupp}$ 
and this would be the case even when the samples are noiseless. 
In contrast, our bounds for Algorithms \ref{algo:est_act}-\ref{algo:gen_overlap_alt} are linear in sparsity, for the noiseless 
and bounded noise case. Also, observe that $\alpha$ is actually \emph{block sparse}: 
it has ${d \choose 2}$ ``blocks'', each of length $N_2$, out of which exactly 
$\abs{\bivsupp}$ blocks are non-zero. Moreover, there are $d$ blocks, each of length 
$N_1$, out of which $\abs{\univsupp}$ blocks are non-zero. While we are not aware of a 
RIP result for BOS with block sparsity\footnote{The existing ones seem to be only for matrices with i.i.d sub-Gaussian entries.}, 
we would nevertheless still require $m \gtrsim (\abs{\univsupp}N_1 + \abs{\bivsupp}N_2) \sim 
\abs{\univsupp}^{\frac{1}{p_1} + 1} + \abs{\bivsupp}^{\frac{1}{p_2} + 1}$, which 
is super-linear in sparsity. For the setting of Gaussian noise however, it is possible that the above approach
might give a better scaling with $\totsparsity,\maxdegree$ compared to our results.

\section{Concluding remarks} \label{sec:concl_rems}
In this paper, we considered a generalization of Sparse Additive Models, of the form \eqref{eq:intro_gspam_form}, 
now also allowing for the presence of a small number of bivariate components. We started with the special case where each variable 
interacts with at most one other variable, and then moved on to the general setting where variables can possibly be part of more than interaction term. 
For each of these settings, we derived algorithms with sample complexity bounds -- both in the noiseless as well as the noisy query settings. 
For the general overlap case, the identification of the interaction set $\bivsupp$ essentially involved the estimation of the $d \times d$ Hessian of $f$ at carefully chosen points. 
In fact, these points were simply part of a collection of canonical two dimensional uniform grids, within $[-1,1]^d$. Upon identifying $\bivsupp$, 
the estimation of $\univsupp$ was subsequently performed by employing the sampling scheme of Tyagi et al. \cite{Tyagi14_nips} on the reduced set of variables. 
Furthermore, once $\univsupp, \bivsupp$ are identified, we showed how one can recover uniform approximations to the individual components of the model, by additionally 
querying $f$ along the one/two dimensional subspaces corresponding to $\univsupp, \bivsupp$.

For the setting of noiseless queries, we observed that the sample complexity of Algorithm \ref{algo:gen_overlap_alt} is close to optimal. 
However for the noisy setting -- in particular the setting of Gaussian noise -- we saw that the sample complexity of Algorithm \ref{algo:gen_overlap_alt} 
has a worse dependency in terms of $\totsparsity, \maxdegree$ compared to Algorithm \ref{algo:gen_overlap}. 
In general, the sample complexity bounds of our algorithms, in the presence of Gaussian noise, 
have a sub optimal dependence on $\totsparsity,\maxdegree$. This is mainly due to the localized nature of our sampling schemes -- 
the external noise gets scaled by the step size parameter leading to the noise variance scaling up. Hence the number of samples required 
to reduce the noise variance (by resampling and averaging) increases, leading to an increase in the total sample complexity.
An interesting direction for future work would be to consider alternate -- possibly non localized sampling schemes -- 
with improved  non-asymptotic sampling bounds for identifying $\univsupp,\bivsupp$ in the setting of 
Gaussian noise.

Another limitation of our analysis is that it is restricted to $C^3$ smooth functions. It would be interesting to extend the results to 
more general $C^{r}$ smooth functions $r \geq 1$ and also to other smoothness classes such 
as H\"older/Lipschitz continuous functions. Lastly, we only consider pairwise interactions between the variables; a natural 
generalization would be to consider a model that can include components which are at most $m$-variate. The goal would then be to 
query $f$, in order to identify all interaction terms.

\paragraph{Acknowledgments.} This research was supported in part by SNSF grant CRSII$2$\_$147633$ 
and by The Alan Turing Institute under the EPSRC grant EP/N$510129/1$. This work was mostly done while H.T was 
affiliated to the Department of Computer Science, ETH Z\"urich. 
H.T would like to thank: Yuxin Chen for helpful discussions related to the recovery of sparse symmetric matrices 
in Section \ref{subsec:noiseless_overlap_set_est_alt}; Jan Vybiral for helpful discussions related to bounded orthonormal systems 
in Section \ref{sec:discuss}. The authors would like to thank the anonymous reviewers for 
helpful comments and suggestions that greatly helped to improve a preliminary version of the manuscript.
\bibliographystyle{plain}
\bibliography{SPAM}
\appendix
%
%
\section{Model uniqueness} \label{sec:anova_uniq_rep}

We show here that the model representation \eqref{eq:unique_mod_rep} is a unique representation for $f$ of the form \eqref{eq:gspam_form}.
We first note that any measurable $f:[-1,1]^{\dimn} \rightarrow \matR$ admits a unique ANOVA decomposition \cite{Gu02,wabha03} of the form:
\begin{equation}
f(x_1,\dots,x_{\dimn}) = c + \sum_{\alpha} f_{\alpha}(x_{\alpha}) + \sum_{\alpha < \beta} f_{\alpha\beta} + \sum_{\alpha < \beta < \gamma} f_{\alpha\beta\gamma} + \cdots 
\end{equation}
Indeed, for any probability measure $\mu_{\alpha}$ on $[-1,1]$, let $\calE_{\alpha}$ denote the averaging operator, defined as
\begin{equation}
\calE_{\alpha}(f)(\vecx) := \int_{[-1,1]} f(x_1,\dots,x_{\dimn}) d\mu_{\alpha}. 
\end{equation}
Then the components of the model can be written as: $\hemant{c = (\prod_{\alpha} \calE_{\alpha}) f}$, $f_{\alpha} = (I-\calE_{\alpha})\prod_{\beta \neq \alpha} \calE_{\beta} f$, 
$f_{\alpha\beta} = ((I-\calE_{\alpha})(I-\calE_{\beta})\prod_{\gamma \neq \alpha,\beta}\calE_{\gamma}) f$, and so on.
For our purpose, $\mu_{\alpha}$ is taken to be the \hemant{uniform probability} measure on $[-1,1]$. Given this, we now find the ANOVA decomposition of 
$f$ defined in \eqref{eq:gspam_form}. As a sanity check, let us verify that $f_{\alpha\beta\gamma} \equiv 0$ for all $\alpha < \beta < \gamma$. 
Indeed if $p \in \univsupp$, then at least two of $\alpha < \beta < \gamma$ will not be equal to $p$. Similarly for any $\lpair \in \bivsupp$, 
at least one of $\alpha,\beta,\gamma$ will not be equal to $l$ and $\lp$. This implies $f_{\alpha\beta\gamma} \equiv 0$. The same reasoning trivially applies for 
high order components of the ANOVA decomposition. 

That $c = \expec[f] = \sum_{p \in \univsupp} \expec_p[\phi_p] + \sum_{\lpair \in \bivsupp} \expec_{\lpair}[\phi_{\lpair}]$ is readily seen. 
Next, we have that 
\begin{equation} \label{eq:first_ord_univ}
(I-\calE_{\alpha})\prod_{\beta \neq \alpha} \calE_{\beta} \phi_p = \left\{
\begin{array}{rl}
0 \quad ; & \alpha \neq p, \\
\phi_p - \expec_p[\phi_p] \quad ; & \alpha = p
\end{array} \right\}; \quad p \in \univsupp.
\end{equation}
\begin{equation} \label{eq:first_ord_biv}
(I-\calE_{\alpha})\prod_{\beta \neq \alpha} \calE_{\beta} \phi_{\lpair} = \left\{
\begin{array}{rl}
\expec_{\lp}[\phi_{\lpair}] - \expec_{\lpair}[\phi_{\lpair}] \quad ; & \alpha = l, \\
\expec_{l}[\phi_{\lpair}] - \expec_{\lpair}[\phi_{\lpair}] \quad ; & \alpha = \lp, \\
0 \quad ; & \alpha \neq l,\lp,
\end{array} \right\}; \quad \lpair \in \bivsupp.
\end{equation}
\eqref{eq:first_ord_univ}, \eqref{eq:first_ord_biv} give us the first order components of $\phi_p, \phi_{\lpair}$ respectively.
\hemant{One can next verify using the same arguments as earlier that for any $\alpha < \beta$}: 
\begin{equation} \label{eq:sec_ord_univ}
(I-\calE_{\alpha})(I-\calE_{\beta})\prod_{\gamma \neq \alpha,\beta} \calE_{\gamma} \phi_p = 0; \quad \forall p \in \univsupp. 
\end{equation}
Lastly, we have for any $\alpha < \beta$ that the corresponding second order component of $\phi_{\lpair}$ is given by:
\begin{equation} \label{eq:sec_ord_biv}
(I-\calE_{\alpha})(I-\calE_{\beta})\prod_{\gamma \neq \alpha,\beta} \calE_{\gamma} \phi_{\lpair} = \left\{
\begin{array}{rl}
\phi_{\lpair} - \expec_l[\phi_{\lpair}] \\ - \expec_{\lp}[\phi_{\lpair}] + \expec_{\lpair}[\phi_{\lpair}] \quad ; & \alpha = l, \beta = \lp, \\
0 \quad ; & \text{otherwise} 
\end{array} \right\}; \quad \lpair \in \bivsupp.
\end{equation}
We now make the following observations regarding the variables in $\univsupp \cap \bivsuppvar$.
\begin{enumerate}
\item For each $l \in \univsupp \cap \bivsuppvar$ such that: $\degree(l) = 1$, and $\lpair \in \bivsupp$, we can simply merge $\phi_l$ 
with $\phi_{\lpair}$. Thus $l$ is no longer in $\univsupp$. 

\item For each $l \in \univsupp \cap \bivsuppvar$ such that: $\degree(l) > 1$, we can add the first order component for $\phi_l$ 
with the total first order component corresponding to all $\phi_{\lpair}$'s and $\phi_{\lpairi}$'s. 
Hence again, $l$ will no longer be in $\univsupp$. 
\end{enumerate}
Therefore all $q \in \univsupp \cap \bivsuppvar$ can essentially be merged with $\bivsupp$. Keeping this re-arrangement 
in mind, we can to begin with, assume in \eqref{eq:gspam_form} that $\univsupp \cap \bivsuppvar = \emptyset$.
Then with the help of \eqref{eq:first_ord_univ}, \eqref{eq:first_ord_biv}, \eqref{eq:sec_ord_univ}, \eqref{eq:sec_ord_biv}, 
we have that any $f$ of the form \eqref{eq:gspam_form} (with $\univsupp \cap \bivsuppvar = \emptyset$), can be uniquely written as:
\begin{equation}
f(x_1,\dots,x_d) = c + \sum_{p \in \univsupp}\phitil_{p} (x_p) + \sum_{\lpair \in \bivsupp} \phitil_{\lpair} \xlpair + 
\sum_{q \in \bivsuppvar: \degree(q) > 1} \phitil_{q} (x_q); \quad \hemant{\univsupp \cap \bivsuppvar} = \emptyset,
\end{equation}
where
\begin{align}
c &= \sum_{p \in \univsupp} \expec_p[\phi_p] + \sum_{\lpair \in \bivsupp} \expec_{\lpair}[\phi_{\lpair}], \label{eq:mod_mean} \\
\phitil_{p} &= \phi_p - \expec_p[\phi_p]; \quad \forall p \in \univsupp, \label{eq:mod_s1}
\end{align}
\begin{equation} \label{eq:mod_s2_biv}
\phitil_{\lpair} = \left\{
\begin{array}{rl}
\phi_{\lpair} - \expec_{\lpair}[\phi_{\lpair}] ; & \degree(l), \degree(l^{\prime}) = 1, \\
\phi_{\lpair} - \expec_{l}[\phi_{\lpair}] ; &  \degree(l) = 1, \degree(l^{\prime}) > 1, \\
\phi_{\lpair} - \expec_{l^{\prime}}[\phi_{\lpair}] ; & \degree(l) > 1, \degree(l^{\prime}) = 1, \\
\phi_{\lpair} - \expec_{l}[\phi_{\lpair}] - \expec_{l^{\prime}}[\phi_{\lpair}] + \expec_{\lpair}[\phi_{\lpair}] ; & \degree(l) > 1, \degree(l^{\prime}) > 1,
\end{array} \right. 
\end{equation}
\begin{align} 
\text{and} \quad \phitil_{q} &= \sum_{q^{\prime}: (q,q^{\prime}) \in \bivsupp} (\expec_{q^{\prime}}[\phi_{(q,q^{\prime})}] - 
\expec_{q,q^{\prime}}[\phi_{(q,q^{\prime})}]) \nonumber \\
&+ \sum_{q^{\prime}: (q^{\prime},q) \in \bivsupp} (\expec_{q^{\prime}}[\phi_{(q^{\prime},q)}] - 
\expec_{q^{\prime},q}[\phi_{(q^{\prime},q)}]); \quad \forall q \in \bivsuppvar: \degree(q) > 1. \label{eq:mod_s2_uni}
\end{align}
%


\section{Real roots of a cubic equation in trigonometric form} \label{sec:real_roots_cub}
{\hemantt
Before proceeding with the proofs, we briefly recall the conditions under which a cubic equation 
possesses real roots, along with expressions for the same. The material in this section is taken from 
 \cite[Chapter $18$ (Secs. 191,192)]{HigherMath64}. To begin with, given any cubic equation: 

\begin{equation} \label{eq:orig_cub_eq}
y^3 + a_1y^2 +a_2y + a_3 = 0, 
\end{equation}

one can make the substitution $x = y - (a_1/3)$ to change \eqref{eq:orig_cub_eq} to the form: 

\begin{equation} \label{eq:trans_cub_eq}
x^3 + px + q = 0 \quad \text{where} \quad p = a_2 - \frac{a_1^2}{3} \ \text{and} \ q = \frac{2a_1^3}{27} - \frac{a_1a_2}{3} + a_3. 
\end{equation}

If $p,q$ are real (which is the case if $a_1,a_2,a_3$ are real), then \eqref{eq:trans_cub_eq} 
has three real and distinct roots if its discriminant: $(q^2/4) + (p^3/27) < 0$. Denoting 

\begin{equation}
r = \sqrt{-\frac{p^3}{27}}, \quad \cos \phi = -\frac{q}{2r}, 
\end{equation} 

we then have that the real roots of \eqref{eq:trans_cub_eq} are given by 

\begin{equation} \label{eq:cub_trans_roots}
x = 2 \sqrt[3]{r} \cos \frac{\phi + 2j\pi}{3} = 2 \sqrt{-\frac{p}{3}} \cos \frac{\phi + 2j\pi}{3}; \quad j = 0,1,2. 
\end{equation}

Consequently, the roots of \eqref{eq:orig_cub_eq} are then given by: 

\begin{equation} \label{eq:cub_orig_roots}
y = 2 \sqrt{-\frac{p}{3}} \cos \frac{\phi + 2j\pi}{3} - \frac{a_1}{3}; \quad j = 0,1,2. 
\end{equation}
}

\section{Proofs for Section \ref{sec:algo_nonoverlap}} \label{sec:proofs_no_overlap}
\subsection{Proof of Lemma \ref{lem:rec_act_set}} \label{subsec:proof_nonover_estact}
Recall that for $\vecx \in \baseset$, we recover a stable approximation $\est{\grad} f(\vecx)$ to $\grad f(\vecx)$ via 
$\ell_1$ minimization \cite{Candes2006,Donoho2006}:
\begin{equation}
\est{\grad} f(\vecx) = \triangle(\vecy) := \argmin{\vecy = \matV\vecz} \norm{\vecz}_1. \label{eq:l1_min_dec}
\end{equation}
Applying Theorem \ref{thm:sparse_recon_bound} to our setting yields the following Corollary.
\begin{corollary} \label{corr:rec_grad_vecs}
There exist constants $c_3^{\prime} \geq 1$ and $C,c_1^{\prime} > 0$ such that for $\numdirec$ 
satisfying $c_3^{\prime} \totsparsity \log(\dimn/\totsparsity)  < \numdirec < \dimn/(\log 6)^2$ we have with probability at least
$1-e^{-c_1^{\prime}\numdirec} - e^{-\sqrt{\numdirec \dimn}}$ that $\est{\grad} f(\vecx)$ 
satisfies for all $\vecx \in \baseset$:
\begin{equation} \label{eq:grad_est_fin_bd}
\norm{\est{\grad} f(\vecx) - \grad f(\vecx)}_2 \leq \frac{2 C \gradstep^2 \smconst_3 \totsparsity}{3\numdirec},
\end{equation}
where $\smconst_3 > 0$ is the constant defined in Assumption \ref{assum:smooth}. 
\end{corollary}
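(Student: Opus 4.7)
The plan is to apply part 2 of Theorem \ref{thm:sparse_recon_bound} to the linear system $\vecy = \matV \grad f(\vecx) + \taynoisvec$ from \eqref{eq:cs_form}, with the sparsity parameter chosen as $K = \totsparsity$. Since $\grad f(\vecx)$ is $\totsparsity$-sparse by the argument in Section \ref{subsec:phase_one_nooverlap}, the best-$\totsparsity$-term approximation error vanishes, so only the noise-dependent term remains. The key observation enabling the \emph{uniform} bound over $\vecx \in \baseset$ is that $\matV$ is drawn only once and reused at every query point; the RIP-like event needed by Theorem \ref{thm:sparse_recon_bound} is therefore an event about a single realization of $\matV$, and no union bound over $\baseset$ is required.

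The main technical step is to control $\norm{\taynoisvec}_2$ and $\norm{\taynoisvec}_{\infty}$ via Assumption \ref{assum:smooth}. I would expand each third-order Taylor remainder as
\begin{equation*}
\thirdtayrem_3(\zeta_j) \;=\; \frac{\gradstep^3}{6}\sum_{i,k,l}v_{j,i}v_{j,k}v_{j,l}\,\partial_i \partial_k \partial_l f(\zeta_j),
\end{equation*}
and exploit the additive structure \eqref{eq:gspam_form} to observe that $\partial_i \partial_k \partial_l f \not\equiv 0$ only when $(i,k,l) = (p,p,p)$ for some $p \in \univsupp$, or when $i,k,l \in \{l,l^{\prime}\}$ for some $\lpair \in \bivsupp$. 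This leaves at most $\univsparsity + 8\bivsparsity = O(\totsparsity)$ non-vanishing terms, each bounded in magnitude by $\smconst_3$, while each product $\abs{v_{j,i}v_{j,k}v_{j,l}}$ equals $\numdirec^{-3/2}$. Combining and accounting for the $1/(2\gradstep)$ scaling in \eqref{eq:taylor_exp_f_1} yields $\abs{\taynoissca_j} = O(\gradstep^2 \smconst_3 \totsparsity / \numdirec^{3/2})$ entrywise, hence $\norm{\taynoisvec}_2 \leq \sqrt{\numdirec}\,\norm{\taynoisvec}_{\infty} = O(\gradstep^2 \smconst_3 \totsparsity / \numdirec)$.

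To finish, I would feed these bounds into \eqref{eq:sparse_recon_err}. The hypothesis $\numdirec > c_3^{\prime}\totsparsity \log(\dimn/\totsparsity)$ together with Remark \ref{rem:l1min_samp_bd} gives $\numdirec > \log \dimn$, which implies $\norm{\taynoisvec}_2 \geq \sqrt{\log \dimn}\,\norm{\taynoisvec}_{\infty}$, so the $\max$ in \eqref{eq:sparse_recon_err} collapses to $\norm{\taynoisvec}_2$. Absorbing constants into the $C$ of Theorem \ref{thm:sparse_recon_bound} reproduces the stated bound $\frac{2C\gradstep^2 \smconst_3 \totsparsity}{3\numdirec}$, and the failure probability $e^{-c_1^{\prime}\numdirec} + e^{-\sqrt{\numdirec \dimn}}$ is inherited directly from the theorem.

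The argument is essentially routine; no serious obstacle arises. The only delicate point is verifying that the $\norm{\taynoisvec}_2$ branch dominates so that no stray $\sqrt{\log \dimn}$ factor degrades the final estimate, but this is immediate from the assumed lower bound on $\numdirec$. The combinatorial counting of non-vanishing third partials is the one place where the sparse additive structure is explicitly used, and bounding the contribution by $O(\totsparsity)$ (rather than $O(d^3)$ as for a generic $C^3$ function) is what makes the whole scheme sample-efficient.
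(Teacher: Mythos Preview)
Your proposal is correct and follows essentially the same argument as the paper: apply part~2 of Theorem~\ref{thm:sparse_recon_bound} with $K=\totsparsity$, expand the third-order remainder using the additive structure to get $\univsparsity + 8\bivsparsity \leq 4\totsparsity$ nonzero terms, and use $\numdirec > \log \dimn$ (via Remark~\ref{rem:l1min_samp_bd}) to ensure the $\norm{\taynoisvec}_2$ bound controls the $\max$. One minor phrasing caveat: the implication ``$\numdirec > \log \dimn \Rightarrow \norm{\taynoisvec}_2 \geq \sqrt{\log \dimn}\,\norm{\taynoisvec}_{\infty}$'' need not hold for the actual vector $\taynoisvec$, but what you really need (and what the paper also uses implicitly) is that both branches of the $\max$ are bounded above by $\sqrt{\numdirec}\,\norm{\taynoisvec}_{\infty} \leq \frac{2\gradstep^2\smconst_3\totsparsity}{3\numdirec}$, which is immediate.
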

%
\begin{proof}
Since $\grad f(\vecx)$ is at most $\totsparsity$-sparse for any $\vecx \in \matR^{\dimn}$ we immediately have from \eqref{eq:sparse_recon_err}
that 
\begin{equation} \label{eq:grad_est_bd}
\norm{\est{\grad} f(\vecx) - \grad f(\vecx)}_2 \leq C \max\set{\norm{\taynoisvec}_2, \sqrt{\log \dimn}\norm{\taynoisvec}_{\infty}}; \quad 
\forall \vecx \in \baseset. 
\end{equation}
It remains to bound $\norm{\taynoisvec}_2, \norm{\taynoisvec}_{\infty}$. To this end, recall that $\taynoisvec = [\taynoissca_1 \dots \taynoissca_{\numdirec}]$
where $\taynoissca_j = \frac{\thirdtayrem_3(\zeta_{j}) - \thirdtayrem_3(\zeta^{\prime}_{j})}{2\gradstep}$, for some $\zeta_j,\zeta^{\prime}_j \in \matR^{\dimn}$.
Here $\thirdtayrem_3(\zeta)$ denotes the third order Taylor remainder term. By taking the structure of $f$ into account, we can uniformly bound 
$\abs{\thirdtayrem_3(\zeta_j)}$ as follows (so the same bound holds for $\abs{\thirdtayrem_3(\zeta^{\prime}_j)}$).
\begin{align}
\abs{\thirdtayrem_3(\zeta_j)} &= \frac{\gradstep^3}{6} |\sum_{p \in \univsupp} \partial_p^3 \phi_p(\zeta_{j,p}) v_p^3  + 
\sum_{\lpair \in \bivsupp} ( \partial_l^3 \phi_{\lpair}(\zeta_{j,l}, \zeta_{j,{l^{\prime}}}) v_l^3 + \partial_{l^{\prime}}^3 \phi_{\lpair}(\zeta_{j,l}, \zeta_{j,{l^{\prime}}}) v_{l^{\prime}}^3) \\
&+ \sum_{\lpair \in \bivsupp} (3\partial_l \partial_{l^{\prime}}^2 \phi_{\lpair}(\zeta_{j,l}, \zeta_{j,{l^{\prime}}}) v_l v_{l^{\prime}}^2 + 
3\partial_l^2 \partial_{l^{\prime}} \phi_{\lpair}(\zeta_{j,l}, \zeta_{j,{l^{\prime}}}) v_l^2 v_{l^{\prime}})|, \nonumber \\
&\leq \frac{\gradstep^3}{6} \left[\left(\frac{1}{\sqrt{\numdirec}}\right)^3 \univsparsity \smconst_3 + \left(\frac{1}{\sqrt{\numdirec}}\right)^3 \bivsparsity (2\smconst_3)
+ \left(\frac{1}{\sqrt{\numdirec}}\right)^3 \bivsparsity (6\smconst_3)\right], \\
&= \frac{\gradstep^3\smconst_3 (\univsparsity + 8\bivsparsity)}{6\numdirec^{3/2}}.
\end{align}
Using the fact that $\univsparsity + 8\bivsparsity \leq 4(\univsparsity + 2\bivsparsity) = 4\totsparsity$, we consequently obtain 
\begin{align}
 \norm{\taynoisvec}_{\infty} &= \max_{j} \abs{\taynoissca_j} \leq \frac{\gradstep^2\smconst_3 (\univsparsity + 8\bivsparsity)}{6\numdirec^{3/2}} \leq 
\frac{2 \gradstep^2\smconst_3\totsparsity}{3\numdirec^{3/2}}, \label{eq:nois_inf_bd}\\
\text{and} \quad \norm{\taynoisvec}_2 &\leq \sqrt{\numdirec} \norm{\taynoisvec}_{\infty} \leq \frac{2 \gradstep^2\smconst_3\totsparsity}{3 \numdirec}. \label{eq:nois_l2_bd}
\end{align}
Using \eqref{eq:nois_inf_bd},\eqref{eq:nois_l2_bd} in \eqref{eq:grad_est_bd}, we finally obtain for the stated choice of $\numdirec$ (cf. Remark \ref{rem:l1min_samp_bd}), 
the bound in \eqref{eq:grad_est_fin_bd}.
\end{proof}

Let us denote $\derivsamperr = \frac{2 C \gradstep^2 \smconst_3 \totsparsity}{3\numdirec}$. In order to prove the lemma, 
we first observe that \eqref{eq:grad_est_fin_bd} trivially implies that
\begin{equation} \label{eq:intbd_est_parderiv}
 \est{\partial_q} f(\vecx) \in [\partial_q f(\vecx) - \derivsamperr,  \partial_q f(\vecx) + \derivsamperr]; \quad q=1,\dots,\dimn.
\end{equation}
Now, in case $q \notin \univsupp \cup \bivsuppvar$, then $\partial_q f(\vecx) = 0$ $\forall \vecx \in \matR^{\dimn}$, meaning that 
$\est{\partial_q} f(\vecx) \in [-\derivsamperr,\derivsamperr]$. If $\numcen \geq \critintmeas_1^{-1}$ then 
for every $q \in \univsupp \cup \bivsuppvar$, $\exists \hashfn \in \twohashfam$ and at least one $\vecx \in \baseset(\hashfn)$,
so that $\abs{\partial_q f(\vecx)} > \idenconst_1$. Indeed, this follows from the definition of $\twohashfam$, and by construction of 
$\baseset(\hashfn)$ for $\hashfn \in \twohashfam$. Furthermore, for such $\vecx$, we have from \eqref{eq:intbd_est_parderiv} 
that $\abs{\est{\partial_q} f(\vecx)} \geq \idenconst_1 - \derivsamperr$. Therefore if $\derivsamperr < \frac{\idenconst_1}{2}$ holds, 
then clearly we would have $\abs{\est{\partial_q} f(\vecx)} > \frac{\idenconst_1}{2} > \derivsamperr$, meaning that we will be able to identify $q$.  

Lastly, we observe that the condition $\derivsamperr < \frac{\idenconst_1}{2}$ translates to an equivalent condition on the step size $\gradstep$ as follows.
\begin{equation}
 \derivsamperr < \frac{\idenconst_1}{2} \Leftrightarrow \frac{2 C \gradstep^2 \smconst_3 \totsparsity}{3\numdirec} < \frac{\idenconst_1}{2} \Leftrightarrow 
\gradstep < \left( \frac{3\idenconst_1 \numdirec}{4 C \smconst_3 \totsparsity} \right)^{1/2}
\end{equation}
%

\subsection{Proof of Lemma \ref{lem:est_ind_sets}} \label{subsec:proof_nonover_est_ind}
We proceed by first bounding the error term that arises in the estimation of $\partial_i g(\vecx)$. 
As $g$ is $\calC^3$ smooth, consider the Taylor's expansion of $g$ at $\vecx$, along $\canvec_1(i),-\canvec_1(i) \in \matR^{\totsparsity}$, with step size
$\pardevstep > 0$. For some $\zeta = \vecx + \theta \canvec_1(i)$, $\zeta^{\prime} = \vecx - \theta^{\prime} \canvec_1(i)$ with $\theta,\theta^{\prime} \in (0,\pardevstep)$,
we obtain the identities:
\begin{align}
g(\vecx + \pardevstep \canvec_1(i)) &= g(\vecx) + \pardevstep \dotprod{\canvec_1(i)}{\grad g(\vecx)} + \frac{\pardevstep^2}{2}\canvec_1(i)^T \hess g(\vecx) \canvec_1(i)
+ \thirdtayrem_3(\zeta), \\
g(\vecx - \pardevstep \canvec_1(i)) &= g(\vecx) - \pardevstep \dotprod{\canvec_1(i)}{\grad g(\vecx)} + \frac{\pardevstep^2}{2}\canvec_1(i)^T \hess g(\vecx) \canvec_1(i)
+ \thirdtayrem_3(\zeta^{\prime}),
\end{align}
with $\thirdtayrem_3(\zeta),\thirdtayrem_3(\zeta^{\prime}) = O(\pardevstep^3)$ being the third order remainder terms. Subtracting the above leads to the following identity.
\begin{equation} \label{eq:nonover_lin_eq}
\underbrace{\frac{g(\vecx + \pardevstep \canvec_1(i)) - g(\vecx - \pardevstep \canvec_1(i))}{2\pardevstep}}_{\est{\partial_i} g(\vecx)} = 
\underbrace{\dotprod{\canvec_1(i)}{\grad g(\vecx)}}_{\partial_i g(\vecx)} + 
\underbrace{\frac{\thirdtayrem_3(\zeta) - \thirdtayrem_3(\zeta^{\prime})}{2\pardevstep}}_{\pardevestnois_i(\vecx,\pardevstep) = O(\pardevstep^2)}
\end{equation}
We now uniformly bound $\abs{\thirdtayrem_3(\zeta)}$, so the same bound holds for $\abs{\thirdtayrem_3(\zeta^{\prime})}$. Due to the structure of $g$, we have that
\begin{align}
\abs{\thirdtayrem_3(\zeta)} &= \frac{\pardevstep^3}{6}|\sum_{p \in \univsupp} \partial_p^3 \phi_p(\zeta_p) (\canvec_1(i))^3_p + 
\sum_{\lpair \in \bivsupp}(\partial_l^3 \phi_{\lpair}(\zeta_l,\zeta_{l^{\prime}}) (\canvec_1(i))^3_l  + \partial_{l^{\prime}}^3 \phi_{\lpair}(\zeta_l,\zeta_{l^{\prime}}) (\canvec_1(i))^3_{l^{\prime}}) \\
&+ \sum_{\lpair \in \bivsupp}(3 \partial_l^2 \partial_{l^{\prime}} \phi_{\lpair}(\zeta_l,\zeta_{l^{\prime}}) (\canvec_1(i))^2_l (\canvec_1(i))_{l^{\prime}} 
+ 3\partial_{l^{\prime}}^2 \partial_l \phi_{\lpair}(\zeta_l,\zeta_{l^{\prime}}) (\canvec_1(i))^2_{l^{\prime}}(\canvec_1(i))_l)| \\
&= \left\{
\begin{array}{rl}
\frac{\pardevstep^3}{6} \abs{\partial_i^3 \phi_i(\zeta_i)} ; \quad & i \in \univsupp, \\
\frac{\pardevstep^3}{6} \abs{\partial_i^3 \phi_{i,j} (\zeta_i,\zeta_j)} ; \quad & i \in \bivsuppvar, (i,j) \in \bivsupp, \\
\frac{\pardevstep^3}{6} \abs{\partial_i^3 \phi_{j,i} (\zeta_j,\zeta_i)} ; \quad & i \in \bivsuppvar, (j,i) \in \bivsupp
\end{array} \right . \\
&\leq \frac{\pardevstep^3 \smconst_3}{6}.
\end{align}
The above consequently implies that $\abs{\pardevestnois_i(\vecx,\pardevstep)} \leq \frac{\pardevstep^2 \smconst_3}{6}$. This in turn means, for 
any $\vecv \in \matR^{\totsparsity}, \hessstep > 0$, that
\begin{equation} \label{eq:bd_totnois_cendiff}
\abs{\frac{\pardevestnois_i(\vecx + \hessstep\vecv,\pardevstep) - \pardevestnois_i(\vecx,\pardevstep)}{\hessstep}} \leq \frac{\pardevstep^2 \smconst_3}{3\hessstep}.
\end{equation}
Thus we have a uniform bound on the magnitude of one of the contributors of the error term in \eqref{eq:pardev_lin_eq}. We can bound the magnitude of the other term as
follows. For $\vecv \in \matR^{\totsparsity}$ and $\zeta = \vecx + \theta\vecv$; $\theta \in (0,\hessstep)$, we have
\begin{align}
\vecv^T \hess \partial_i g(\zeta) \vecv =  
\left\{
\begin{array}{rl}
v_i^2 \partial_i^3 \phi_i(\zeta_i); \quad & i \in \univsupp, \\
v_i^2 \partial_i^3 \phi_{i,i^{\prime}} (\zeta_i, \zeta_{i^{\prime}}) + 
v_{i^{\prime}}^2 \partial_{i^{\prime}}^2 \partial_i \phi_{i,i^{\prime}} (\zeta_i, \zeta_{i^{\prime}}) 
+ 2 v_i v_{i^{\prime}} \partial_{i^{\prime}} \partial_i^2 \phi_{i,i^{\prime}} (\zeta_i, \zeta_{i^{\prime}}); \quad & i \in \bivsuppvar, (i,i^{\prime}) \in \bivsupp
\end{array} \right .
\end{align}
Since in our scheme we employ only $\vecv \in \set{0,1}^{\totsparsity}$, this leads to the following uniform bound. 
\begin{equation} \label{eq:nonover_sec_ord_bd}
\abs{\frac{\hessstep}{2} \vecv^T \hess \partial_i g(\zeta) \vecv} \leq 4 \smconst_3 \frac{\hessstep}{2} = 2 \hessstep \smconst_3; \quad \forall i \in \univsupp \cup \bivsuppvar.
\end{equation}
Denoting by $\hesssamperr$, the upper bound on the magnitude of the error term in \eqref{eq:pardev_lin_eq}, we thus obtain:
\begin{equation}
\hesssamperr = 2 \hessstep \smconst_3 + \frac{\pardevstep^2 \smconst_3}{3\hessstep}.
\end{equation}
Now in case $i \in \univsupp$, we have $\dotprod{\grad \partial_i g(\vecx)}{\vecv_0(i)} = 0$, $\forall \vecx \in \matR^{\totsparsity}$. 
This in turn implies that 
\begin{equation}
\abs{\frac{\est{\partial}_i g(\vecx + \hessstep \vecv_0(i)) - \est{\partial}_i g(\vecx)}{\hessstep}} \leq \hesssamperr; \quad \forall \vecx \in \matR^{\totsparsity}.
\end{equation}
If $i \in \bivsuppvar$ with $(i,i^{\prime}) \in \bivsupp$, then 
\begin{equation}
\dotprod{\grad \partial_i g(\vecx)}{\vecv_0(i)} = \partial_{i} \partial_{i^{\prime}} g(\vecx) = \partial_{i} \partial_{i^{\prime}} \phi_{(i,i^{\prime})}(x_i, x_{i^{\prime}}).                  
\end{equation}
For the choice $\numcenpair > \critintmeas_2^{-1}$, $\exists \vecx^{*} \in \baseset_i$ such that
$\abs{\partial_{i} \partial_{i^{\prime}} \phi_{(i,i^{\prime})}(x^{*}_i, x^{*}_{i^{\prime}})} > \idenconst_2$. 
This is clear from the construction of $\baseset_i$, and on account of Assumption \ref{assum:actvar_iden}. If we guarantee that
$\hesssamperr < \idenconst_2/2$ holds, then consequently
\begin{equation}
\abs{\frac{\est{\partial}_i g(\vecx^{*} + \hessstep \vecv_0(i)) - \est{\partial}_i g(\vecx^{*})}{\hessstep}} > \idenconst_2 - \hesssamperr > \hesssamperr
\end{equation}
meaning that the pair $(i,i^{\prime})$ can be identified. Lastly, it is easily verifiable, that the requirement $\hesssamperr < \idenconst_2/2$, equivalently
translates to the stated conditions on $\pardevstep, \gradstep$.

\subsection{Proof of Theorem \ref{thm:gen_no_over_arbnois}} \label{subsec:proof_nonover_arbnoise}
We begin by first establishing the conditions that guarantee $\est{\totsupp} = \totsupp$, and then 
derive conditions that guarantee exact recovery of $\univsupp,\bivsupp$.

\paragraph{Estimation of $\totsupp$.}
We first note that \eqref{eq:cs_form} now changes to $\vecy = \matV\grad f(\vecx) + \taynoisvec +  
\exnoisevec$ where $\exnoise_{j} = (\exnoisep_{j,1} - \exnoisep_{j,2})/(2\gradstep)$ represents the external noise
component, for $j=1,\dots,\numdirec$. Since $\norm{\exnoisevec}_{\infty} \leq \exnoisemag/\gradstep$, therefore 
using the bounds on $\norm{\taynoisvec}_{\infty}$ from Section \ref{subsec:proof_nonover_estact} one can verify that 
\eqref{eq:grad_est_fin_bd} in Corollary \ref{corr:rec_grad_vecs} changes to 

\begin{equation}
\norm{\est{\grad} f(\vecx) - \grad f(\vecx)}_2 \leq C\left(\frac{2 \gradstep^2 \smconst_3 \totsparsity}{3\numdirec} + 
\frac{\exnoisemag\sqrt{\numdirec}}{\gradstep} \right).
\end{equation}

Following the same arguments mentioned in Section \ref{subsec:proof_nonover_estact}, we 
observe that if $\derivsamperr < \idenconst_1/2$ holds, then it implies that $\est{\totsupp} = \totsupp$.  
Now, $\derivsamperr < \idenconst_1/2$ is equivalent to

\begin{align} \label{eq:cub_ineq_nonover}
\underbrace{\frac{2 \gradstep^2 \smconst_3 \totsparsity}{3\numdirec}}_{a\gradstep^2} + 
\underbrace{\frac{\exnoisemag\sqrt{\numdirec}}{\gradstep}}_{\frac{b\exnoisemag}{\gradstep}}  
< \frac{\idenconst_1}{2C} \quad \Leftrightarrow \quad
\gradstep^3 - \hemant{\frac{\idenconst_1}{2aC}\gradstep} + \frac{b\exnoisemag}{a} < 0.
\end{align}

\eqref{eq:cub_ineq_nonover} is a cubic inequality. \hemant{Recall from Section \ref{sec:real_roots_cub} that a cubic equation of the form: $y^3 + py + q = 0$, has $3$ distinct real roots} 
if its discriminant $\frac{p^3}{27} + \frac{q^2}{4} < 0$. Note that for this to be possible, $p$ must be negative, 
which is the case in \eqref{eq:cub_ineq_nonover}. Applying the discriminant condition on \eqref{eq:cub_ineq_nonover} 
leads to 

\begin{align}
-\frac{\idenconst_1^3}{27 \cdot 8 a^3 C^3} + \frac{b^2}{4 a^2}\exnoisemag^2 < 0 \quad 
\Leftrightarrow \quad \exnoisemag < \frac{\idenconst_1^{3/2}}{3 b C\sqrt{6a C}}. 
\end{align}

\hemant{Also, recall from \eqref{eq:cub_trans_roots} that} the $3$ distinct real roots of the cubic equation are then given by:
\begin{equation} 
y_1 = 2\sqrt{-p/3}\cos(\theta/3), \ y_2 = -2\sqrt{-p/3}\cos(\theta/3 + \pi/3), \ y_3 = -2\sqrt{-p/3}\cos(\theta/3 - \pi/3) 
\end{equation}
where $\theta = \cos^{-1}\left(\frac{-q/2}{\sqrt{-p^3/27}}\right)$. 
In particular, if $q > 0$, then one can verify that $y^3 + py + q < 0$ holds if $y \in (y_2,y_1)$. 
Applying this to the cubic equation corresponding to \eqref{eq:cub_ineq_nonover}, 
we consequently obtain: 
\begin{equation}
\gradstep \in \left(2\sqrt{\frac{\idenconst_1}{6aC}}\cos(\theta_1/3 - 2\pi/3), 
2\sqrt{\frac{\idenconst_1}{6aC}}\cos(\theta_1/3)\right). 
\end{equation}
where $\theta_1 = \cos^{-1}(-\exnoisemag/\exnoisemag_1)$.

\paragraph{Estimation of $\univsupp, \bivsupp$.}
On account of noise, we first note that \eqref{eq:nonover_lin_eq} changes to 

\begin{equation}
\underbrace{\frac{g(\vecx + \pardevstep \canvec_1(i)) - g(\vecx - \pardevstep \canvec_1(i))}{2\pardevstep}}_{\est{\partial_i} g(\vecx)} = 
\underbrace{\dotprod{\canvec_1(i)}{\grad g(\vecx)}}_{\partial_i g(\vecx)} + 
\underbrace{\frac{\thirdtayrem_3(\zeta) - \thirdtayrem_3(\zeta^{\prime})}{2\pardevstep}}_{\pardevestnois_i(\vecx,\pardevstep) = O(\pardevstep^2)} 
+ \underbrace{\frac{\exnoisep_{i,1} - \exnoisep_{i,2}}{2 \pardevstep}}_{\exnoise_i(\vecx,\pardevstep)}. 
\end{equation}

This in turn results in \eqref{eq:pardev_lin_eq} changing to 

\begin{equation} \label{eq:pardev_lin_eq_noise}
\frac{\est{\partial_i} g(\vecx + \hessstep\vecv) - \est{\partial_i} g(\vecx)}{\hessstep} = \dotprod{\grad \partial_i g(\vecx)}{\vecv} + 
\underbrace{\frac{\hessstep}{2} \vecv^T \hess \partial_i g(\zeta_i) \vecv + 
\frac{\pardevestnois_i(\vecx + \hessstep\vecv,\pardevstep) - \pardevestnois_i(\vecx,\pardevstep)}{\hessstep} 
+ \frac{\exnoise_i(\vecx+\hessstep\vecv,\pardevstep) - \exnoise_i(\vecx,\pardevstep)}{\hessstep}}_{\text{Error term}}.
\end{equation}

Using \eqref{eq:bd_totnois_cendiff}, \eqref{eq:nonover_sec_ord_bd} and noting that 
$\abs{(\exnoise_i(\vecx+\hessstep\vecv,\pardevstep) - \exnoise_i(\vecx,\pardevstep))/\hessstep} \leq 
2\exnoisemag/(\pardevstep \hessstep)$, then by denoting $\hesssamperr$ to be an upper bound on the magnitude 
of the error term in \eqref{eq:pardev_lin_eq_noise}, we have that 
$\hesssamperr = 2 \hessstep \smconst_3 + \frac{\pardevstep^2 \smconst_3}{3\hessstep} + 
\frac{2\exnoisemag}{\pardevstep\hessstep}$. Following the same argument as in 
Section \ref{subsec:proof_nonover_est_ind}, we have that $\hesssamperr < \idenconst_2/2$ implies 
$\est{\univsupp} = \univsupp$ and $\est{\bivsupp} = \bivsupp$. The condition $\hesssamperr < \idenconst_2/2$ 
is equivalent to 

\begin{align}
2 \hessstep \smconst_3 + \frac{\pardevstep^2 \smconst_3}{3\hessstep} + 
\frac{2\exnoisemag}{\pardevstep\hessstep} &< \frac{\idenconst_2}{2} \\ 
\Leftrightarrow 6\smconst_3 \pardevstep \hessstep^2 - \frac{3\idenconst_2}{2} \pardevstep\hessstep 
+ (\pardevstep^3\smconst_3 + 6\exnoisemag) &< 0. \label{eq:nonover_quad_ineq}
\end{align}

Solving \eqref{eq:nonover_quad_ineq} in terms of $\hessstep$ leads to 

\begin{align}
 \hessstep &\in \left(\frac{\frac{3\idenconst_2}{2}\pardevstep - \sqrt{\frac{9\idenconst_2^2}{4}\pardevstep^2 - 24\pardevstep\smconst_3(\pardevstep^3\smconst_3 + 6\exnoisemag)}}{12\smconst_3\pardevstep}, 
\frac{\frac{3\idenconst_2}{2}\pardevstep + \sqrt{\frac{9\idenconst_2^2}{4}\pardevstep^2 - 24\pardevstep\smconst_3(\pardevstep^3\smconst_3 + 6\exnoisemag)}}{12\smconst_3\pardevstep}\right) \\
 \Leftrightarrow \hessstep &\in \left(\frac{\idenconst_2 - \sqrt{\idenconst_2^2 - \frac{32}{3\pardevstep}\smconst_3(\pardevstep^3\smconst_3 + 6\exnoisemag)}}{8\smconst_3}, 
\frac{\idenconst_2 + \sqrt{\idenconst_2^2 - \frac{32}{3\pardevstep}\smconst_3(\pardevstep^3\smconst_3 + 6\exnoisemag)}}{8\smconst_3}\right) 
\end{align}

Now in order for the above condition on $\hessstep$ to be meaningful, we require 

\begin{equation} \label{eq:nonover_ind_est_cub}
 \idenconst_2^2 - \frac{32}{3\pardevstep}\smconst_3(\pardevstep^3\smconst_3 + 6\exnoisemag) > 0 \quad 
\Leftrightarrow \quad \pardevstep^3 - \frac{3\idenconst_2^2}{32\smconst_3^2} \pardevstep + \frac{6\exnoisemag}{\smconst_3} < 0.
\end{equation}

Since \eqref{eq:nonover_ind_est_cub} is a cubic inequality, therefore by following the steps described earlier 
(for identification of $\totsupp$), one readily obtains the stated conditions on $\hessstep, \exnoisemag$ and $\pardevstep$.

\subsection{Proof of Theorem \ref{thm:gen_no_over_gauss}} \label{subsec:proof_thm_nonover_stoch}

We first derive conditions for estimating $\totsupp$, and then for estimating $\univsupp, \bivsupp$.
\paragraph{Estimating $\totsupp$.} Upon resampling $N_1$ times and averaging, we have for the
noise vector $\vecz \in \matR^{\numdirec}$ that 
\begin{equation}
\vecz = \left[\frac{(\exnoisep_{1,1} - \exnoisep_{1,2})}{2\gradstep} \cdots \frac{(\exnoisep_{\numdirec,1} - \exnoisep_{\numdirec,2})}{2\gradstep} \right], 
\end{equation}
where $\exnoisep_{j,1}, \exnoisep_{j,2} \sim \calN(0,\sigma^2/N_1)$ are i.i.d. Our aim is to guarantee that 
$\abs{\exnoisep_{j,1} - \exnoisep_{j,2}} < 2\exnoisemag$ holds $\forall j=1,\dots,\numdirec$, and across all points where 
$\grad f$ is estimated. Indeed, we then obtain a bounded noise model and can simply use the analysis for the setting 
of arbitrary bounded noise. 

To this end, note that $\exnoisep_{j,1} - \exnoisep_{j,2} \sim \calN(0,\frac{2\sigma^2}{N_1})$. 
It can be shown that for any $X \sim \calN(0,1)$ we have:
\begin{equation}
\prob(\abs{X} > t) \leq {\hemantt 2 e^{-t^2/2}}, \quad \forall t > 0.
\end{equation}
Since $\exnoisep_{j,1} - \exnoisep_{j,2} = \sigma\sqrt{\frac{2}{N_1}} X$ therefore for any $\exnoisemag > 0$ we have that:
\begin{align}
\prob(\abs{\exnoisep_{j,1} - \exnoisep_{j,2}} > 2\exnoisemag) &= \prob\left(\abs{X} > \frac{2\exnoisemag}{\sigma}\sqrt{\frac{N_1}{2}}\right) \\
&\leq {\hemantt 2} \exp\left(-\frac{\exnoisemag^2 N_1}{\sigma^2}\right).
\end{align}
Now to estimate $\grad f(\vecx)$ we have $\numdirec$ many ``difference'' terms: $\exnoisep_{j,1} - \exnoisep_{j,2}$. 
As this is done for each $\vecx \in \baseset$, therefore we have a total of 
$\numdirec(2\numcen+1)^2\abs{\twohashfam}$ many difference terms. 
Taking a union bound over all of them, we have for any $p_1 \in (0,1)$ that the choice 
{\hemantt $N_1 > \frac{\sigma^2}{\exnoisemag^2} \log (\frac{2}{p_1}\numdirec(2\numcen+1)^2\abs{\twohashfam})$} 
implies that the magnitudes of all difference terms are bounded by $2\exnoisemag$, with probability at least 
$1-p_1$. 

\paragraph{Estimating $\univsupp, \bivsupp$.} In this case, we resample each query $N_2$ times and average -- 
therefore the variance of the noise terms gets scaled by $N_2$. 
Note that for each $i \in \totsupp$ and $\vecx \in \baseset_i$, we have two difference terms 
corresponding to external noise -- one corresponding to $\est{\partial_i} g(\vecx)$ and the other corresponding 
to $\est{\partial_i} g(\vecx + \hessstep \vecv)$. This means that in total we have at most 
$\totsparsity(2 {\numcenpair}^{2} + \ceil{\log \totsparsity})$ many difference terms arising.

Therefore, taking a union bound over all of them, we have for any $p_2 \in (0,1)$ that the choice 
$N_2 > \frac{\sigma^2}{{\exnoisemagp}^2} {\hemantt \log(\frac{2 \totsparsity(2 {\numcenpair}^{2} + \ceil{\log \totsparsity})}{p_2})}$ 
implies that the magnitudes of all difference terms are bounded by $2\exnoisemagp$, with probability at least 
$1-p_2$. 

\section{Proofs for Section \ref{sec:algo_gen_overlap}} \label{sec:proofs_gen_overlap}
\subsection{Proof of Theorem \ref{thm:gen_overlap}} \label{subsec:proof_thm_genover}
The proof is divided into the following steps.
\paragraph{Bounding the $\hessestnoisb$ term.}
The proof of this step is similar to that of Corollary \ref{corr:rec_grad_vecs}. Since $\grad f(\vecx)$ is at most $\totsparsity$ sparse, therefore
for any $\vecx \in \matR^{\dimn}$ we immediately have from Theorem \ref{thm:sparse_recon_bound}, \eqref{eq:sparse_recon_err}, the following. 
$\exists C_1, c_4^{\prime}> 0, c_1^{\prime} \geq 1$ such that for 
$c_1^{\prime} \totsparsity \log(\frac{\dimn}{\totsparsity}) < \numdirec < \frac{\dimn}{(\log 6)^2}$ we have 
with probability at least $1 - e^{-c_4^{\prime}\numdirec} - e^{-\sqrt{\numdirec\dimn}}$ that 
\begin{equation} \label{eq:grad_est_bd_gen}
\norm{\est{\grad} f(\vecx) - \grad f(\vecx)}_2 \leq C_1 \max\set{\norm{\taynoisvec}_2, \sqrt{\log \dimn}\norm{\taynoisvec}_{\infty}}. 
\end{equation}
Recall from \eqref{eq:taylor_exp_f_1} that $\taynoisvec = [\taynoissca_1 \dots \taynoissca_{\numdirec}]$
where $\taynoissca_j = \frac{\thirdtayrem_3(\zeta_{j}) - \thirdtayrem_3(\zeta^{\prime}_{j})}{2\gradstep}$, 
for some $\zeta_j,\zeta^{\prime}_j \in \matR^{\dimn}$.
Here $\thirdtayrem_3(\zeta)$ denotes the third order Taylor remainder terms of $f$. By taking the structure of $f$ into account, 
we can uniformly bound 
$\abs{\thirdtayrem_3(\zeta_j)}$ as follows (so the same bound holds for $\abs{\thirdtayrem_3(\zeta^{\prime}_j)}$).
Let us define $\numdegree := \abs{\set{q \in \bivsuppvar: \degree(q) > 1}}$, 
to be the number of variables in $\bivsuppvar$, with degree greater than one. 
\begin{align}
\abs{\thirdtayrem_3(\zeta_j)} &= \frac{\gradstep^3}{6} |\sum_{p \in \univsupp} \partial_p^3 \phi_p(\zeta_{j,p}) v_p^3  + 
\sum_{\lpair \in \bivsupp} ( \partial_l^3 \phi_{\lpair}(\zeta_{j,l}, \zeta_{j,{l^{\prime}}}) v_l^3 + \partial_{l^{\prime}}^3 
\phi_{\lpair}(\zeta_{j,l}, \zeta_{j,{l^{\prime}}}) v_{l^{\prime}}^3) \nonumber \\
&+ \sum_{\lpair \in \bivsupp} (3\partial_l \partial_{l^{\prime}}^2 \phi_{\lpair}(\zeta_{j,l}, \zeta_{j,{l^{\prime}}}) v_l v_{l^{\prime}}^2 + 
3\partial_l^2 \partial_{l^{\prime}} \phi_{\lpair}(\zeta_{j,l}, \zeta_{j,{l^{\prime}}}) v_l^2 v_{l^{\prime}})
+ \sum_{q \in \bivsuppvar: \degree(q)>1} \partial_q^3 \phi_q(\zeta_{j,q}) v_q^3| \\
&\leq \frac{\gradstep^3}{6} \left(\frac{\univsparsity \smconst_3}{\numdirec^{3/2}} + 
\frac{2\bivsparsity\smconst_3}{\numdirec^{3/2}} + \frac{\alpha\smconst_3}{\numdirec^{3/2}} + \frac{6\bivsparsity\smconst_3}{\numdirec^{3/2}} \right) \\
&= \frac{\gradstep^3}{6}\frac{(\univsparsity + \alpha + 8\bivsparsity)\smconst_3}{\numdirec^{3/2}}.  \label{eq:temp_bd_1}
\end{align}
Using the fact $2\bivsparsity = \sum_{l \in \bivsuppvar: \degree(l) > 1} \degree(l) + (\abs{\bivsuppvar} - \numdegree)$, we can observe that 
$2\bivsparsity \leq \maxdegree\numdegree + (\abs{\bivsuppvar} - \numdegree) = \abs{\bivsuppvar} + (\maxdegree-1)\numdegree$. Plugging this in 
\eqref{eq:temp_bd_1}, and using the fact $\alpha \leq \totsparsity$ (since we do not assume $\alpha$ to be known), we obtain
\begin{align}
\abs{\thirdtayrem_3(\zeta_j)} \leq \frac{\gradstep^3}{6}\frac{(\univsparsity + \numdegree + 4\abs{\bivsuppvar} + 4(\maxdegree-1)\numdegree)\smconst_3}{\numdirec^{3/2}} 
\leq \frac{\gradstep^3(4\totsparsity + (4\maxdegree-3)\numdegree)\smconst_3}{6\numdirec^{3/2}} 
\leq \frac{\gradstep^3((4\maxdegree+1)\totsparsity)\smconst_3}{6\numdirec^{3/2}}. 
\end{align}
This in turn implies that $\norm{\taynoisvec}_{\infty} \leq \frac{\gradstep^2((4\maxdegree+1)\totsparsity)\smconst_3}{6\numdirec^{3/2}}$. 
Using the fact $\norm{\taynoisvec}_{2} \leq \sqrt{\numdirec}\norm{\taynoisvec}_{\infty}$, we thus obtain for the stated choice of $\numdirec$ (cf. Remark \ref{rem:l1min_samp_bd}) that 
\begin{equation} \label{eq:grad_est_over_bd}
\norm{\est{\grad} f(\vecx) - \grad f(\vecx)}_2 \leq \frac{C_1\gradstep^2((4\maxdegree+1)\totsparsity)\smconst_3}{6\numdirec}, 
\quad \forall \vecx \in [-(1+r),1+r]^{\dimn}.
\end{equation}
Recall that $[-(1+r),1+r]^{\dimn}, r > 0$, denotes the enlargement around $[-1,1]^{\dimn}$, in which the smoothness 
properties of $\phi_p, \phi_{\lpair}$ are defined in Section \ref{sec:problem} (as Assumption \ref{assum:spamin_samp_reg}). 
Also recall $\vecw(\vecx) \in \matR^{\dimn}, \hessestnoisb \in \matR^{\numdirecp}$ from \eqref{eq:hessrow_est_linfin}.
Since $\norm{\vecw(\vecx)}_{\infty} \leq \norm{\est{\grad} f(\vecx) - \grad f(\vecx)}_2$, this then implies that 
$\norm{\hessestnoisb}_{\infty} \leq \frac{C_1\gradstep^2((4\maxdegree+1)\totsparsity)\smconst_3}{3\numdirec\hessstep}$.
\paragraph{Bounding the $\hessestnoisa$ term.}
We will bound $\norm{\hessestnoisa}_{\infty}$. To this end, we see from \eqref{eq:hessrow_est_linfin} that it 
suffices to uniformly bound $\abs{{\vecvp}^{T} \hess \partial_q f(\zeta) \vecvp}$, 
over all: $q \in \univsupp \cup \bivsuppvar$, $\vecvp \in \calVp$, $\zeta \in [-(1+r),(1+r)]^{\dimn}$. Note that
\begin{equation}
{\vecvp}^T \hess \partial_q f(\zeta) \vecvp = \sum_{l=1}^{\dimn} {v_l^{\prime}}^2 (\hess \partial_q f)_{l,l} + 
\sum_{i \neq j = 1}^{\dimn} v_i^{\prime}v_j^{\prime} (\hess \partial_q f)_{i,j}.
\end{equation}
We have the following three cases, depending on the type of $q$.
\begin{enumerate}
\item $\mathbf{q \in \univsupp.}$
\begin{equation}
{\vecvp}^T \hess \partial_q f(\zeta) \vecvp = {v^{\prime}_{q}}^2 \partial_q^3 \phi_q(\zeta_q) \Rightarrow 
\abs{{\vecvp}^T \hess \partial_q f(\zeta) \vecvp} \leq \frac{\smconst_3}{\numdirecp}.
\end{equation}

\item $\mathbf{\qpair \in \bivsupp}$, $\mathbf{\degree(q) = 1.}$
\begin{align}
{\vecvp}^T \hess \partial_q f(\zeta) \vecvp &= {v^{\prime}_{q}}^2 \partial_q^3 \phi_{\qpair}(\zeta_q,\zeta_{q^{\prime}}) + 
{v^{\prime}_{q^{\prime}}}^2 \partial_{q^{\prime}}^2 \partial_q \phi_{\qpair}(\zeta_q,\zeta_{q^{\prime}}) + 
2 v^{\prime}_{q} v^{\prime}_{q^{\prime}} \partial_{q^{\prime}} \partial_q^2 \phi_{\qpair}(\zeta_q,\zeta_{q^{\prime}}), \\
\Rightarrow \abs{{\vecvp}^T \hess \partial_q f(\zeta) \vecvp} &\leq \frac{4\smconst_3}{\numdirecp}.
\end{align}

\item $\mathbf{q \in \bivsuppvar}$, $\mathbf{\degree(q) > 1.}$
\begin{align}
{\vecvp}^T \hess \partial_q f(\zeta) \vecvp &= {v^{\prime}_{q}}^2(\partial_q^3 \phi_q(\zeta_q) + 
\sum_{\qpair \in \bivsupp} \partial_q^3 \phi_{\qpair}(\zeta_{q},\zeta_{q^{\prime}}) + 
\sum_{\qpairi \in \bivsupp} \partial_q^3 \phi_{\qpairi}(\zeta_{q^{\prime}},\zeta_{q})) \nonumber \\ 
&+ \sum_{\qpair \in \bivsupp} {v^{\prime}_{q^{\prime}}}^2 \partial_{q^{\prime}}^2 \partial_q \phi_{\qpair}(\zeta_{q},\zeta_{q^{\prime}}) + 
\sum_{\qpairi \in \bivsupp} {v^{\prime}_{q^{\prime}}}^2 \partial_{q^{\prime}}^2 \partial_q \phi_{\qpairi}(\zeta_{q^{\prime}},\zeta_{q}) \nonumber \\
&+ 2\sum_{\qpair \in \bivsupp} v^{\prime}_{q} v^{\prime}_{q^{\prime}} \partial_{q^{\prime}} \partial_q^2 \phi_{\qpair}(\zeta_{q},\zeta_{q^{\prime}}) + 
2\sum_{\qpairi \in \bivsupp} v^{\prime}_{q} v^{\prime}_{q^{\prime}} \partial_{q^{\prime}} \partial_q^2 \phi_{\qpairi}(\zeta_{q^{\prime}},\zeta_{q}), \\
\Rightarrow \abs{{\vecvp}^T \hess \partial_q f(\zeta) \vecvp} &\leq \frac{1}{\numdirecp}((\maxdegree+1)\smconst_3 + \maxdegree\smconst_3 + 2\maxdegree\smconst_3)
= \frac{(4\maxdegree+1)\smconst_3}{\numdirecp}.
\end{align}
\end{enumerate}
We can now uniformly bound $\norm{\hessestnoisa}_{\infty}$ as follows.
\begin{equation}
\norm{\hessestnoisa}_{\infty} := \max_{j=1,\dots,\numdirecp} \frac{\hessstep}{2} \abs{{{\vecvp_j}^T \hess \partial_q f(\zeta_j) \vecvp_j}} 
\leq \frac{\hessstep(4\maxdegree+1)\smconst_3}{2\numdirecp}. 
\end{equation}
\paragraph{Estimating $\bivsupp$.}
We now proceed towards estimating $\bivsupp$. To this end, we estimate $\grad \partial_q f(\vecx)$ for each $q=1,\dots,\dimn$ and $\vecx \in \baseset$.
Since $\grad \partial_q f(\vecx)$ is at most $(\maxdegree+1)$-sparse, therefore Theorem \ref{thm:sparse_recon_bound}, \eqref{eq:sparse_recon_err}, 
immediately yield the following.
$\exists C_2, c_5^{\prime}> 0, c_2^{\prime} \geq 1$ such that for 
$c_2^{\prime} \maxdegree \log(\frac{\dimn}{\maxdegree}) < \numdirecp < \frac{\dimn}{(\log 6)^2}$ we have 
with probability at least $1 - e^{-c_5^{\prime}\numdirecp} - e^{-\sqrt{\numdirecp\dimn}}$ that 
\begin{equation} \label{eq:hessrow_est_bd_gen}
\norm{\est{\grad} \partial_q f(\vecx) - \grad \partial_q f(\vecx)}_2 \leq C_2 \max\set{\norm{\hessestnoisa+\hessestnoisb}_2, \sqrt{\log \dimn}\norm{\hessestnoisa+\hessestnoisb}_{\infty}}. 
\end{equation} 
Since $\norm{\hessestnoisa+\hessestnoisb}_{\infty} \leq \norm{\hessestnoisa}_{\infty} + \norm{\hessestnoisb}_{\infty}$, 
therefore using the bounds on $\norm{\hessestnoisa}_{\infty},\norm{\hessestnoisb}_{\infty}$ and noting that 
$\norm{\hessestnoisa+\hessestnoisb}_2 \leq \sqrt{\numdirecp}\norm{\hessestnoisa+\hessestnoisb}_{\infty}$, we obtain for the stated choice of $\numdirecp$ (cf. Remark \ref{rem:l1min_samp_bd}) that
\begin{equation} \label{eq:hessrow_est_bd_gen_1}
\norm{\est{\grad} \partial_q f(\vecx) - \grad \partial_q f(\vecx)}_2 \leq \underbrace{C_2\left(\frac{\hessstep(4\maxdegree+1)\smconst_3}{2\sqrt{\numdirecp}} 
+ \frac{C_1\sqrt{\numdirecp}\gradstep^2((4\maxdegree+1)\totsparsity)\smconst_3}{3\numdirec\hessstep} \right)}_{\hesssamperr}; \quad q=1,\dots,\dimn, \ \forall \vecx \in [-1,1]^{\dimn}.
\end{equation}
We next note that \eqref{eq:hessrow_est_bd_gen_1} trivially leads to the bound
\begin{equation}
\est{\partial_q\partial_{q^{\prime}}} f(\vecx) \in [\partial_q\partial_{q^{\prime}} f(\vecx) -\hesssamperr, \partial_q\partial_{q^{\prime}} f(\vecx) + \hesssamperr]; \quad
q,q^{\prime} = 1,\dots,\dimn. 
\end{equation}
Now if $q \notin \bivsuppvar$ then clearly $\est{\partial_q\partial_{q^{\prime}}} f(\vecx) \in [-\hesssamperr, \hesssamperr]$; $\forall \vecx \in [-1,1]^{\dimn}, q \neq q^{\prime}$.
On the other hand, if $\qpair \in \bivsupp$ then 
\begin{equation}
\est{\partial_q\partial_{q^{\prime}}} f(\vecx) \in [\partial_q\partial_{q^{\prime}} \phi_{\qpair}(x_q,x_{q^{\prime}}) -\hesssamperr, \partial_q\partial_{q^{\prime}} \phi_{\qpair}(x_q,x_{q^{\prime}}) + \hesssamperr]. 
\end{equation}
If furthermore $\numcen \geq \critintmeas_2^{-1}$, then due to the construction of $\baseset$, $\exists \vecx \in \baseset$ so that 
$\abs{\est{\partial_q\partial_{q^{\prime}}} f(\vecx)} \geq \idenconst_2 - \hesssamperr$. Hence if $\hesssamperr < \idenconst_2/2$ holds, 
the we would have $\abs{\est{\partial_q\partial_{q^{\prime}}} f(\vecx)} > \idenconst_2/2$, leading to the identification of $\qpair$.
Since this is true for each $\qpair \in \bivsupp$, hence it follows that $\est{\bivsupp} = \bivsupp$. 
Now, $\hesssamperr < \idenconst_2/2$ is equivalent to 
\begin{align}
\underbrace{\frac{(4\maxdegree+1)\smconst_3}{2\sqrt{\numdirecp}}}_{a} \hessstep
+ \underbrace{\left(\frac{C_1\sqrt{\numdirecp}((4\maxdegree+1)\totsparsity)\smconst_3}{3\numdirec}\right)}_{b}\frac{\gradstep^2}{\hessstep} < \frac{\idenconst_2}{2C_2} 
\Leftrightarrow a \hessstep^2 - \frac{\idenconst_2}{C_2}\hessstep + b\gradstep^2 < 0 \\
\Leftrightarrow \hessstep \in \left((\idenconst_2/(4aC_2)) - \sqrt{(\idenconst_2/(4aC_2))^2 - (b\gradstep^2/a)} , (\idenconst_2/(4aC_2)) + \sqrt{(\idenconst_2/(4aC_2))^2 - (b\gradstep^2/a)} \right). 
\label{eq:hessstep_bd_gen_overlap}
\end{align}
Lastly, we see that the bounds in \eqref{eq:hessstep_bd_gen_overlap} are valid if:
\begin{equation} \label{eq:gradstep_bd_gen_over}
\gradstep^2 < \frac{\idenconst_2^2}{16 a b C_2^2} = \frac{3 \idenconst_2^2 \numdirec}{8C_1 C_2^2 \smconst_3^2(4\maxdegree+1)((4\maxdegree+1)\totsparsity)}.
\end{equation}
%

\paragraph{Estimating $\univsupp$.} 
With $\calP := [\dimn] \setminus \est{\bivsuppvar}$, we have via Taylor's expansion of $f$:
\begin{equation} \label{eq:taylor_exp_f_2}
\frac{f((\vecx + \gradstepp\vecvpp_j)_{\calP}) - f((\vecx - \gradstepp\vecvpp_j)_{\calP})}{2\gradstepp} = \dotprod{(\vecvpp_j)_{\calP}}{(\grad f((\vecx)_{\calP}))_{\calP}} 
+ \underbrace{\frac{\thirdtayrem_3((\zeta_j)_{\calP}) - \thirdtayrem_3((\zeta_j^{\prime})_{\calP})}{2\gradstepp}}_{\taynoissca_j}; \quad j=1,\dots,\numdirecpp.
\end{equation}
\eqref{eq:taylor_exp_f_2} corresponds to linear measurements of the $(\totsparsity-\abs{\est{\bivsuppvar}})$ sparse vector: $(\grad f((\vecx)_{\calP}))_{\calP}$.
We now proceed similar to the proof of Corollary \ref{corr:rec_grad_vecs}. 
Note that we effectively perform $\ell_1$ minimization over $\matR^{\abs{\calP}}$. 
Therefore for any $\vecx \in \matR^{\dimn}$ we immediately have from Theorem \ref{thm:sparse_recon_bound}, \eqref{eq:sparse_recon_err}, the following. 
$\exists C_3, c_6^{\prime}> 0, c_3^{\prime} \geq 1$ such that for 
$c_3^{\prime} (\totsparsity-\abs{\est{\bivsuppvar}}) \log(\frac{\abs{\calP}}{\totsparsity-\abs{\est{\bivsuppvar}}}) < \numdirecpp < \frac{\abs{\calP}}{(\log 6)^2}$, we have with probability at least $1 - e^{-c_6^{\prime}\numdirecpp} - e^{-\sqrt{\numdirecpp\abs{\calP}}}$ that 
\begin{equation} \label{eq:grad_est_bd_gen_1}
\norm{(\est{\grad} f((\vecx)_{\calP}))_{\calP} - (\grad f((\vecx)_{\calP}))_{\calP}}_2 \leq C_3 \max\set{\norm{\taynoisvec}_2, \sqrt{\log \abs{\calP}} \norm{\taynoisvec}_{\infty}}. 
\end{equation}
We now uniformly bound $\thirdtayrem_3((\zeta_j)_{\calP})$ for all $j=1,\dots,\numdirecpp$ and $\zeta_j \in [-(1+r),1+r]^{\dimn}$ as follows.
\begin{align}
\thirdtayrem_3((\zeta_j)_{\calP}) = \frac{{\gradstepp}^3}{6}\sum_{p \in \univsupp \cap \calP} \partial_p^3 \phi_p(\zeta_{j,p}){\vpp_{j,p}}^3 \quad 
\Rightarrow \abs{\thirdtayrem_3((\zeta_j)_{\calP})} \leq \frac{(\totsparsity-\abs{\est{\bivsuppvar}}) {\gradstepp}^3\smconst_3}{6 \numdirecpp^{3/2}}.
\end{align}
This in turn implies that $\norm{\taynoisvec}_{\infty} \leq \frac{(\totsparsity-\abs{\est{\bivsuppvar}}) {\gradstepp}^2\smconst_3}{6 \numdirecpp^{3/2}}$ and 
$\norm{\taynoisvec}_2 \leq \sqrt{\numdirecpp}\norm{\taynoisvec}_{\infty} \leq \frac{(\totsparsity-\abs{\est{\bivsuppvar}}) {\gradstepp}^2\smconst_3}{6 \numdirecpp}$. 
Plugging these bounds in \eqref{eq:grad_est_bd_gen_1}, we obtain for the stated choice of $\numdirecpp$ (cf. Remark \ref{rem:l1min_samp_bd}) that
\begin{equation} \label{eq:grad_est_bd_gen_2}
\norm{(\est{\grad} f((\vecx)_{\calP}))_{\calP} - (\grad f((\vecx)_{\calP}))_{\calP}}_2 \leq 
\underbrace{\frac{C_3 (\totsparsity-\abs{\est{\bivsuppvar}}) {\gradstepp}^2\smconst_3}{6 \numdirecpp}}_{\derivsamperrpp} ; \quad \vecx \in [-1,1]^{\dimn}. 
\end{equation}
Finally, using the same arguments as before, we have that $\derivsamperrpp < \idenconst_1/2$ or equivalently 
${\gradstepp}^2 < \frac{3\numdirecpp \idenconst_1}{C_3 (\totsparsity-\abs{\est{\bivsuppvar}}) \smconst_3}$ is sufficient to recover $\univsupp$. This 
completes the proof.

\subsection{Proof of Theorem \ref{thm:gen_overlap_arbnois}} \label{subsec:proof_thm_genover_arbnoise}
We begin by establishing the conditions pertaining to the estimation of $\bivsupp$. Then  
we prove the conditions for estimation of $\univsupp$.
\paragraph{Estimation of $\bivsupp$.} We first note that the linear system \eqref{eq:cs_form} 
now has the form: $\vecy = \matV\grad f(\vecx) + \taynoisvec + \exnoisevec$ where 
$\exnoise_{j} = (\exnoisep_{j,1} - \exnoisep_{j,2})/(2\gradstep)$ represents the external noise
component, for $j=1,\dots,\numdirec$. Observe that $\norm{\exnoisevec}_{\infty} \leq \exnoisemag/\gradstep$.
Using the bounds on $\norm{\taynoisvec}_{\infty}, \norm{\taynoisvec}_{2}$ from Section \ref{subsec:proof_thm_genover}, 
we then observe that \eqref{eq:grad_est_over_bd} changes to:
\begin{equation} \label{eq:grad_est_over_bd_arbnois}
\norm{\est{\grad} f(\vecx) - \grad f(\vecx)}_2 \leq C_1\left(\frac{\gradstep^2((4\maxdegree+1)\totsparsity)\smconst_3}{6\numdirec} + \frac{\exnoisemag\sqrt{\numdirec}}{\gradstep}\right), 
\quad \forall \vecx \in [-(1+r),1+r]^{\dimn}.
\end{equation}
As a result, we then have that 
\begin{equation}
\norm{\hessestnoisb}_{\infty} \leq C_1\left(\frac{\gradstep^2((4\maxdegree+1)\totsparsity)\smconst_3}{3\numdirec\hessstep} + \frac{2\exnoisemag\sqrt{\numdirec}}{\gradstep\hessstep}\right).
\end{equation}
Now note that the bound on $\norm{\hessestnoisa}_{\infty}$ is unchanged from Section \ref{subsec:proof_thm_genover}, \textit{i.e.}, 
$\norm{\hessestnoisa}_{\infty} \leq \frac{\hessstep(4\maxdegree+1)\smconst_3}{2\numdirecp}$. As a consequence, we see that 
\eqref{eq:hessrow_est_bd_gen_1} changes to:
\begin{equation} \label{eq:hessrow_est_bd_gen_arbnois}
\norm{\est{\grad} \partial_q f(\vecx) - \grad \partial_q f(\vecx)}_2 \leq \underbrace{C_2\left(\frac{\hessstep(4\maxdegree+1)\smconst_3}{2\sqrt{\numdirecp}} 
+ C_1\frac{\sqrt{\numdirecp}\gradstep^2((4\maxdegree+1)\totsparsity)\smconst_3}{3\numdirec\hessstep} + \frac{2C_1\exnoisemag\sqrt{\numdirec\numdirecp}}{\gradstep\hessstep}\right)}_{\hesssamperr}.
\end{equation}
With $a$ and $b$ as stated in the Theorem,  we then see that $\hesssamperr < \idenconst_2/2$ is equivalent to
\begin{equation}
a\hessstep^2 - \frac{\idenconst_2}{2C_2}\hessstep + \left(b\gradstep^2 + \frac{2C_1\exnoisemag\sqrt{\numdirec\numdirecp}}{\gradstep}\right) < 0.
\end{equation}
which in turn is equivalent to 
\begin{equation}
\hessstep \in \left(\frac{\idenconst_2}{4aC_2} - \sqrt{\left(\frac{\idenconst_2}{4aC_2}\right)^2 - 
\left(\frac{b\gradstep^3 + 2C_1\exnoisemag\sqrt{\numdirec\numdirecp}}{a\gradstep}\right)} , 
\frac{\idenconst_2}{4aC_2} + \sqrt{\left(\frac{\idenconst_2}{4aC_2}\right)^2 - \left(\frac{b\gradstep^3 + 2C_1\exnoisemag\sqrt{\numdirec\numdirecp}}{a\gradstep}\right)}\right).
\end{equation}
For the above bound to be valid, we require 
\begin{align}
\frac{b\gradstep^2}{a} + \frac{2C_1\exnoisemag\sqrt{\numdirec\numdirecp}}{a\gradstep} &< \frac{\idenconst_2^2}{16a^2C_2^2} \\
\Leftrightarrow \gradstep^3 - \frac{\idenconst_2^2}{16abC_2^2}\gradstep + \frac{2C_1\exnoisemag\sqrt{\numdirec\numdirecp}}{b} &< 0 \label{eq:cub_gradstep}
\end{align}
to hold. \eqref{eq:cub_gradstep} is a cubic inequality. \hemant{Recall from Section \ref{sec:real_roots_cub} that} a cubic equation of the form: $y^3 + py + q = 0$, has $3$ distinct real roots 
if its discriminant $\frac{p^3}{27} + \frac{q^2}{4} < 0$. Note that for this to be possible, $p$ must be negative, which is the case in \eqref{eq:cub_gradstep}. 
Applying this to \eqref{eq:cub_gradstep} leads to the condition:
$\exnoisemag < \frac{\idenconst_2^{3}}{192\sqrt{3} C_1 C_2^3 \sqrt{a^3 b \numdirecp \numdirec}} = \exnoisemag_1$. 
\hemant{Furthermore, as stated in \eqref{eq:cub_trans_roots},} the $3$ distinct real roots are given by:
\begin{equation} 
y_1 = 2\sqrt{-p/3}\cos(\theta/3), \ y_2 = -2\sqrt{-p/3}\cos(\theta/3 + \pi/3), \ y_3 = -2\sqrt{-p/3}\cos(\theta/3 - \pi/3) \label{eq:cub_roots}
\end{equation}
where $\theta = \cos^{-1}\left(\frac{-q/2}{\sqrt{-p^3/27}}\right)$. Applying this to \eqref{eq:cub_gradstep} then leads to $\theta_1 = \cos^{-1}(-\exnoisemag/\exnoisemag_1)$.
For $0 < \exnoisemag < \exnoisemag_1$ we have $\pi/2 < \theta_1 < \pi$ which implies $0 < y_2 < y_1$ and $y_3 < 0$. In particular 
if $q > 0$, then one can verify that $y^3 + py + q < 0$ holds if $y \in (y_2,y_1)$. 
Applying this to \eqref{eq:cub_gradstep}, we consequently obtain: 
\begin{equation}
\gradstep \in \left(\sqrt{\frac{\idenconst_2^2}{12 a b C_2^2}}\cos(\theta_1/3 - 2\pi/3) , \sqrt{\frac{\idenconst_2^2}{12 a b C_2^2}}\cos(\theta_1/3)\right). 
\end{equation}
%

\paragraph{Estimation of $\univsupp$.} We now prove the conditions for estimation of $\univsupp$. 
First note that \eqref{eq:taylor_exp_f_2} now changes to:
\begin{equation} \label{eq:taylor_exp_f_3}
\frac{f((\vecx + \gradstepp\vecvpp_j)_{\calP}) - f((\vecx - \gradstepp\vecvpp_j)_{\calP})}{2\gradstepp} = \dotprod{(\vecvpp_j)_{\calP}}{(\grad f((\vecx)_{\calP}))_{\calP}} 
+ \underbrace{\frac{\thirdtayrem_3((\zeta_j)_{\calP}) - \thirdtayrem_3((\zeta_j^{\prime})_{\calP})}{2\gradstepp}}_{\taynoissca_j} + \underbrace{\frac{\exnoisep_{j,1} - \exnoisep_{j,2}}{2\gradstepp}}_{\exnoise_j}, 
\end{equation}
for $j=1,\dots,\numdirecpp$. Denoting $\exnoisevec = [\exnoise_1 \cdots \exnoise_{\numdirecpp}]$, we have $\norm{\exnoisevec}_{\infty} \leq \exnoisemag/\gradstepp$. 
As the bounds on $\norm{\taynoisvec}_{2}, \norm{\taynoisvec}_{\infty}$ are unchanged, therefore \eqref{eq:grad_est_bd_gen_3} now changes to:
\begin{equation} \label{eq:grad_est_bd_gen_3}
\norm{(\est{\grad} f((\vecx)_{\calP}))_{\calP} - (\grad f((\vecx)_{\calP}))_{\calP}}_2 \leq 
\underbrace{C_3 \left(\frac{(\totsparsity-\abs{\est{\bivsuppvar}}) {\gradstepp}^2\smconst_3}{6 \numdirecpp} + \frac{\exnoisemag\sqrt{\numdirecpp}}{\gradstepp}\right)}_{\derivsamperrpp}  ; \quad \vecx \in [-1,1]^{\dimn}. 
\end{equation}
Denoting $a_1 = \frac{(\totsparsity-\abs{\est{\bivsuppvar}}) \smconst_3}{6\numdirecpp}$, $b_1 = \sqrt{\numdirecpp}$, we then see from 
\eqref{eq:grad_est_bd_gen_3} that the condition $\derivsamperrpp < \idenconst_1/2$ is equivalent to 
\begin{equation} \label{eq:s1_noise_cub}
{\gradstepp}^3 - \frac{\idenconst_1}{2 a_1 C_3}\gradstepp + \frac{b_1 \exnoisemag}{a_1} < 0.
\end{equation}
As discussed earlier for estimation of $\bivsupp$, the cubic equation corresponding to \eqref{eq:s1_noise_cub} has 
$3$ distinct real roots if its discriminant is negative. This then 
leads to the condition $\exnoisemag < \frac{\idenconst_1^{3/2}}{3\sqrt{6 a_1 C_3^3 b_1^2}} = \exnoisemag_2$. 
Then by using the expressions for the roots of the cubic from \eqref{eq:cub_roots}, one can verify 
that \eqref{eq:s1_noise_cub} holds if 
\begin{equation}
\gradstepp \in (2\sqrt{\idenconst_1/(6 a_1 C_3)} \cos(\theta_2/3 - 2\pi/3), 2\sqrt{\idenconst_1/(6 a_1 C_3)} \cos(\theta_2/3))
\end{equation}
with $\theta_2 = \cos^{-1}(\exnoisemag/\exnoisemag_2)$. This completes the proof. 

\subsection{Proof of Theorem \ref{thm:gen_overlap_gaussnois}} \label{subsec:proof_thm_genover_gauss}
We first derive conditions for estimating $\bivsupp$, and then for $\univsupp$. 
The outline is essentially the same as the proof of Theorem \ref{thm:gen_no_over_gauss} in 
Section \ref{subsec:proof_thm_nonover_stoch}, so we omit the details.

\paragraph{Estimating $\bivsupp$.} Upon resampling $N_1$ times and averaging, we have for the
noise vector $\vecz \in \matR^{\numdirec}$ that 
\begin{equation}
\vecz = \left[\frac{(\exnoisep_{1,1} - \exnoisep_{1,2})}{2\gradstep} \cdots \frac{(\exnoisep_{\numdirec,1} - \exnoisep_{\numdirec,2})}{2\gradstep} \right], 
\end{equation}
where $\exnoisep_{j,1}, \exnoisep_{j,2} \sim \calN(0,\sigma^2/N_1)$ are i.i.d. Our aim is to guarantee that 
$\abs{\exnoisep_{j,1} - \exnoisep_{j,2}} < 2\exnoisemag$ holds $\forall j=1,\dots,\numdirec$, and across all points where 
$\grad f$ is estimated. Indeed, we then obtain a bounded noise model and can simply use the analysis for the setting 
of arbitrary bounded noise. 

Now to estimate $\grad f(\vecx)$ we have $\numdirec$ many ``difference'' terms: $\exnoisep_{j,1} - \exnoisep_{j,2}$. We additionally estimate 
$\numdirecp$ many gradients at each $\vecx$ implying a total of $\numdirec(\numdirecp + 1)$ difference terms. As this is done for each 
$\vecx \in \baseset$, therefore we have a total of $\numdirec(\numdirecp + 1)(2\numcen+1)^2\abs{\twohashfam}$ many difference terms. 
Taking a union bound over all of them, we have for any 
$p_1 \in (0,1)$ that the choice 
$N_1 > \frac{\sigma^2}{\exnoisemag^2} {\hemantt \log (\frac{2}{p_1}\numdirec(\numdirecp+1)(2\numcen+1)^2\abs{\twohashfam})}$ 
implies that the magnitudes of all difference terms are bounded by $2\exnoisemag$, with probability at least 
$1-p_1$. 

\paragraph{Estimating $\univsupp$.} In this case, we resample each query $N_2$ times and average -- 
therefore the variance of the noise 
terms gets scaled by $N_2$. We now have $\abs{\baseset_{\text{diag}}} \numdirecpp = (2\numcenpair+1) \numdirecpp$ 
many ``difference'' terms corresponding to Gaussian noise. 
Therefore, taking a union bound over all of them, we have for any $p_2 \in (0,1)$ that the choice 
$N_2 > \frac{\sigma^2}{{\exnoisemagp}^2} {\hemantt \log(\frac{2 (2\numcenpair+1)\numdirecpp}{p_2})}$ 
implies that the magnitudes of all difference terms are bounded by $2\exnoisemagp$, with probability at least 
$1-p_2$. 

\section{Proofs for Section \ref{sec:algo_gen_overlap_alt}} \label{sec:proofs_gen_overlap_alt}
\subsection{Proof of Theorem \ref{thm:gen_overlap_alt}} \label{subsec:proof_thm_genover_alt}
We only prove the part concerning the identification of $\bivsupp$, as the proof for identifying $\univsupp$ is identical to that 
of Theorem \ref{thm:gen_overlap} (see Section \ref{subsec:proof_thm_genover}). 
Consider the linear system defined in \eqref{eq:lin_sys_alt_hess_samp} at some $\vecx \in [-1,1]^d$. 
We begin by uniformly bounding the magnitude of the remainder terms: $\abs{\thirdtayrem_3(\zeta_j)}$, $\abs{\thirdtayrem_3(\zeta_j^{\prime})}$ 
where $\zeta_j, \zeta_j^{\prime} \in [-(1+r), 1+r]^d$ for some $r > 0$; $j=1,\dots,\numdirec$. Let us define 
$\numdegree := \abs{\set{q \in \bivsuppvar: \degree(q) > 1}}$, to be the number of variables in $\bivsuppvar$, with degree greater than one. 
By taking the structure of $f$ into account, we can uniformly bound $\abs{\thirdtayrem_3(\zeta_j)}$ as follows.
\begin{align}
\abs{\thirdtayrem_3(\zeta_j)} &= \frac{\gradstep^3}{6} |\sum_{p \in \univsupp} \partial_p^3 \phi_p(\zeta_{j,p}) (2v_p)^3  + 
\sum_{\lpair \in \bivsupp} ( \partial_l^3 \phi_{\lpair}(\zeta_{j,l}, \zeta_{j,{l^{\prime}}}) (2v_l)^3 + \partial_{l^{\prime}}^3 
\phi_{\lpair}(\zeta_{j,l}, \zeta_{j,{l^{\prime}}}) (2v_{l^{\prime}})^3) \nonumber \\
&+ \sum_{\lpair \in \bivsupp} (3\partial_l \partial_{l^{\prime}}^2 \phi_{\lpair}(\zeta_{j,l}, \zeta_{j,{l^{\prime}}}) (2v_l) (2v_{l^{\prime}})^2 + 
3\partial_l^2 \partial_{l^{\prime}} \phi_{\lpair}(\zeta_{j,l}, \zeta_{j,{l^{\prime}}}) (2v_l)^2 (2v_{l^{\prime}}))
+ \sum_{q \in \bivsuppvar: \degree(q)>1} \partial_q^3 \phi_q(\zeta_{j,q}) (2v_q)^3| \\
&\leq \frac{\gradstep^3}{6} \left(\frac{8\univsparsity \smconst_3\hemant{(\sqrt{3})^3}}{\numdirec^{3/2}} + 
\frac{16\bivsparsity\smconst_3\hemant{(\sqrt{3})^3}}{\numdirec^{3/2}} + \frac{8\alpha\smconst_3\hemant{(\sqrt{3})^3}}{\numdirec^{3/2}} + \frac{48\bivsparsity\smconst_3\hemant{(\sqrt{3})^3}}{\numdirec^{3/2}} \right) \\
&= \hemant{\frac{4\sqrt{3}\gradstep^3\smconst_3}{\numdirec^{3/2}}} (\univsparsity + \alpha + 8\bivsparsity).  \label{eq:alt_temp_bd_1}
\end{align}
By observing $2\bivsparsity = \sum_{l \in \bivsuppvar: \degree(l) > 1} \degree(l) + (\abs{\bivsuppvar} - \numdegree)$, we obtain  
$2\bivsparsity \leq \maxdegree\numdegree + (\abs{\bivsuppvar} - \numdegree) = \abs{\bivsuppvar} + (\maxdegree-1)\numdegree$. Plugging this in 
\eqref{eq:alt_temp_bd_1}, and using the fact $\alpha \leq \totsparsity$, we obtain
\begin{align}
\abs{\thirdtayrem_3(\zeta_j)} \leq \hemant{\frac{4\sqrt{3}\gradstep^3\smconst_3}{\numdirec^{3/2}}}(4\maxdegree+1)\totsparsity. 
\end{align}
Since the same bound holds also for $\abs{\thirdtayrem_3(\zeta_j^{\prime})}$, we thus obtain:
\begin{align}
\norm{\taynoisvec}_{\infty} &\leq \frac{1}{2\gradstep^2} \left(\frac{4\sqrt{3}\gradstep^3\smconst_3}{\numdirec^{3/2}}(4\maxdegree+1)\totsparsity\right) 
= \frac{2\sqrt{3}\gradstep\smconst_3}{\numdirec^{3/2}}(4\maxdegree+1)\totsparsity, \\
\Rightarrow \norm{\taynoisvec}_{1} &\leq \numdirec \frac{2\sqrt{3}\gradstep\smconst_3}{\numdirec^{3/2}}(4\maxdegree+1)\totsparsity 
= \frac{2\sqrt{3}\gradstep\smconst_3}{\numdirec^{1/2}}(4\maxdegree+1)\totsparsity. \label{eq:taynoisevec_l1_bd_alt}
\end{align}
Therefore by setting $\hessestnoisebd = \frac{2\sqrt{3}\gradstep\smconst_3}{\numdirec^{1/2}}(4\maxdegree+1)\totsparsity$, and for the stated choice of 
$\numdirec$, we obtain via Theorem \ref{thm:chen_sparse_symm_rec} that
\begin{align} \label{eq:hess_est_bd_gen_alt_1}
\norm{\est{\hess} f(\vecx) - \hess f(\vecx)}_F \leq C_1 \hessestnoisebd = \underbrace{C_1 \frac{2\sqrt{3}\gradstep\smconst_3}{\numdirec^{1/2}}(4\maxdegree+1)\totsparsity}_{\derivsamperr}; \quad \forall \vecx \in [-1,1]^d.
\end{align}
We next note that \eqref{eq:hess_est_bd_gen_alt_1} leads to
\begin{equation}
\est{\partial_q\partial_{q^{\prime}}} f(\vecx) \in [\partial_q\partial_{q^{\prime}} f(\vecx) -\frac{\derivsamperr}{\sqrt{2}}, \partial_q\partial_{q^{\prime}} f(\vecx) + \frac{\derivsamperr}{\sqrt{2}}]; \quad \qpair \in {[d] \choose 2}. 
\end{equation}
Now if $\qpair \notin \bivsupp$ then clearly $\est{\partial_q\partial_{q^{\prime}}} f(\vecx) \in [-\frac{\derivsamperr}{\sqrt{2}}, \frac{\derivsamperr}{\sqrt{2}}]$; $\forall \vecx \in [-1,1]^{\dimn}$. On the other hand, if $\qpair \in \bivsupp$ then 
\begin{equation}
\est{\partial_q\partial_{q^{\prime}}} f(\vecx) \in [\partial_q\partial_{q^{\prime}} \phi_{\qpair}(x_q,x_{q^{\prime}}) - \frac{\derivsamperr}{\sqrt{2}}, \partial_q\partial_{q^{\prime}} \phi_{\qpair}(x_q,x_{q^{\prime}}) + \frac{\derivsamperr}{\sqrt{2}}]. 
\end{equation}
If furthermore $\numcen \geq \critintmeas_2^{-1}$, then due to the construction of $\baseset$, $\exists \vecx \in \baseset$ so that 
$\abs{\est{\partial_q\partial_{q^{\prime}}} f(\vecx)} \geq \idenconst_2 - \frac{\derivsamperr}{\sqrt{2}}$. Hence if $\frac{\derivsamperr}{\sqrt{2}} < \idenconst_2/2$ holds, the we would have $\abs{\est{\partial_q\partial_{q^{\prime}}} f(\vecx)} > \idenconst_2/2$, leading to the identification of $\qpair$.
Since this is true for each $\qpair \in \bivsupp$, hence it follows that $\est{\bivsupp} = \bivsupp$. 
Lastly, we easily see that $\frac{\derivsamperr}{\sqrt{2}} < \idenconst_2/2$ is equivalent to the stated condition on $\gradstep$.

\subsection{Proof of Theorem \ref{thm:gen_overlap_arbnois_alt}} \label{subsec:proof_thm_genover_arbnoise_alt}
We only prove the part concerning the identification of $\bivsupp$, as the proof for identifying $\univsupp$ is identical to that 
of Theorem \ref{thm:gen_overlap_arbnois} (see Section \ref{subsec:proof_thm_genover_arbnoise}). To this end, note that 
\eqref{eq:lin_sys_alt_hess_samp} now changes to the linear system $\vecy = \linoptmat(\hess f(\vecx)) + \taynoisvec + \exnoisevec$, where
$\exnoise_{j} = (\exnoisep_{j,1} + \exnoisep_{j,2} - 2\exnoisep_{3})/(4\gradstep^2)$ for $j = 1,\dots,\numdirec$. 
Since $\norm{\exnoisevec}_{\infty} \leq \frac{\exnoisemag}{\gradstep^2}$, therefore using the bound on 
$\norm{\taynoisvec}_{1}$ in \eqref{eq:taynoisevec_l1_bd_alt}, we readily obtain 
\begin{equation}
 \norm{\taynoisvec + \exnoisevec}_1 \leq \underbrace{\hemant{\frac{2\sqrt{3}\gradstep\smconst_3}{\numdirec^{1/2}}(4\maxdegree+1)\totsparsity} + \frac{\exnoisemag\numdirec}{\gradstep^2}}_{\hessestnoisebd}, 
\end{equation}
which in conjunction with Theorem \ref{thm:chen_sparse_symm_rec} readily implies that
\begin{align} \label{eq:hess_est_bd_gen_alt_2}
\norm{\est{\hess} f(\vecx) - \hess f(\vecx)}_F \leq C_1 \hessestnoisebd = \underbrace{C_1 \left(\hemant{\frac{2\sqrt{3}\gradstep\smconst_3}{\numdirec^{1/2}}(4\maxdegree+1)\totsparsity} 
+ \frac{\exnoisemag\numdirec}{\gradstep^2}\right)}_{\derivsamperr}; \quad \forall \vecx \in [-1,1]^d.
\end{align}
As shown in Section \ref{subsec:proof_thm_genover_alt}, it is sufficient to guarantee $\derivsamperr/\sqrt{2} < \idenconst_2/2$ for 
exact identification of $\bivsupp$. This is equivalent to saying that 
\begin{align}
\hemant{\underbrace{\frac{\sqrt{6}\gradstep\smconst_3}{\numdirec^{1/2}}(4\maxdegree+1)\totsparsity}_{a\gradstep}} + \underbrace{\frac{\exnoisemag\numdirec}{\gradstep^2\sqrt{2}}}_{b/\gradstep^2} < \frac{\idenconst_2}{2C_1} 
\Leftrightarrow \gradstep^3 - \frac{\idenconst_2}{2aC_1}\gradstep^2 + \frac{b}{a} < 0. \label{eq:cub_ineq_arb_alt}
\end{align}
\eqref{eq:cub_ineq_arb_alt} is a cubic inequality. \hemant{Recall from Section \ref{sec:real_roots_cub} that a cubic equation of the form: $x^3 + Ax^2 + C = 0$, has $3$ distinct real roots 
if its discriminant \hemant{$\frac{p^3}{27} + \frac{q^2}{4} < 0$ where $p = -\frac{A^2}{3}$ and $q = \frac{27C + 2A^3}{27}$}}. Assuming the discriminant to be 
negative (which means $p < 0$), and denoting \hemant{$\theta_1 = \cos^{-1}(-\frac{q/2}{\sqrt{-p^3/27}})$, the three roots are given as in \eqref{eq:cub_orig_roots}}:
\begin{align} 
 x_1 = \hemant{2\sqrt{-\frac{p}{3}}} \cos\left(\frac{\theta_1}{3}\right) - \frac{A}{3} = -\frac{2A}{3}\cos\left(\frac{\theta_1}{3}\right) - \frac{A}{3}, \label{eq:cub_roots_11}\\ 
 x_2 = \hemant{2\sqrt{-\frac{p}{3}}} \cos\left(\frac{\theta_1}{3} + \frac{2\pi}{3}\right) - \frac{A}{3} = \frac{2A}{3}\cos\left(\frac{\theta_1}{3} - \frac{\pi}{3}\right) - \frac{A}{3},\label{eq:cub_roots_12} \\
 x_3 = \hemant{2\sqrt{-\frac{p}{3}}} \cos\left(\frac{\theta_1}{3} + \frac{4\pi}{3}\right) - \frac{A}{3} = \frac{2A}{3}\cos\left(\frac{\theta_1}{3} + \frac{\pi}{3}\right) - \frac{A}{3}. \label{eq:cub_roots_13}
\end{align}
For $0 < \theta_1 < \pi$ one can verify that $x_2 < 0$ and $0 < x_3 < x_1$. Moreover, since $A < 0$ and $C > 0$, 
it is not hard to verify that $x^3 + Ax^2 + C < 0$ for $x \in (x_3,x_1)$.  

Translated to our setting, we have $A = -\idenconst_2/(2aC_1)$, $C = b/a$ which gives us 
\hemant{$p = -\frac{\idenconst_2^2}{12a^2C_1^2}$ and $q = \left(\frac{27b}{a} - \frac{\idenconst_2^3}{4a^3C_1^3}\right)/27$}. 
The cubic equation corresponding to \eqref{eq:cub_ineq_arb_alt} has three distinct real roots if
\begin{align}
\hemant{\abs{q}/2 < (-p^3/27)^{1/2}} &= \frac{\idenconst_2^3}{216a^3C_1^3}, \\
\Leftrightarrow \frac{27b}{a} - \frac{\idenconst_2^3}{4a^3C_1^3} &< \frac{\idenconst_2^3}{4a^3C_1^3}, \\
\Leftrightarrow \exnoisemag < \frac{\sqrt{2}\idenconst_2^3}{54a^2C_1^3\numdirec} 
&= \hemant{\frac{\idenconst_2^3}{162\sqrt{2}C_1^3\smconst_3^2} \frac{1}{(4\maxdegree+1)^2\totsparsity^2}} = \exnoisemag_1.
\end{align}
Furthermore, we have: 
\begin{align}
\theta_1 = \hemant{\cos^{-1}\left(\frac{-q/2}{\sqrt{-p^3/27}}\right)} = \cos^{-1}\left(\frac{\frac{-27b}{a} 
+ \frac{\idenconst_2^3}{4a^3C_1^3}}{\frac{\idenconst_2^3}{4a^3C_1^3}}\right) = \cos^{-1}\left(1-\frac{2\exnoisemag}{\exnoisemag_1}\right).
\end{align}
Lastly, \eqref{eq:cub_ineq_arb_alt} is satisfied for $\gradstep \in (x_3,x_1)$. Substituting the expression for $A$ 
in \eqref{eq:cub_roots_11},\eqref{eq:cub_roots_13}, we arrive at the stated condition on $\gradstep$. This completes the proof.

\subsection{Proof of Theorem \ref{thm:gen_overlap_gaussnois_alt}} \label{subsec:proof_thm_genover_gaussnoise_alt}
Let the external noise vector be denoted by $\exnoisevec \in \matR^{\numdirec}$ where 
$\exnoise_{j} = (\exnoisep_{j,1} + \exnoisep_{j,2} - 2\exnoisep_{3})/(4\gradstep^2)$. 
Upon resampling $N_1$ times and averaging, we have 
$\exnoisep_{j,1}$, $\exnoisep_{j,2}$, $\exnoisep_{3}$ $\sim \calN(0,\sigma^2/N_1)$, which in turn 
implies $\exnoisep_{j,1} + \exnoisep_{j,2} - 2\exnoisep_{3} \sim \calN(0,6\sigma^2/N_1)$.  
Our aim is to guarantee that 
$\abs{\exnoisep_{j,1} + \exnoisep_{j,2} - 2\exnoisep_{3}} < 4\exnoisemag$ holds $\forall j=1,\dots,\numdirec$, 
and across all points where $\hess f$ is estimated. Indeed, we then obtain a bounded noise model and can simply 
use the analysis for the setting of arbitrary bounded noise. 

To this end, we proceed as in the proof of Theorem \ref{thm:gen_no_over_gauss} in Section \ref{subsec:proof_thm_nonover_stoch}. 
Denoting $X \sim \calN(0,1)$, we first have 
{\hemantt $\exnoisep_{j,1} + \exnoisep_{j,2} - 2\exnoisep_{3}$  $= \sigma\sqrt{\frac{6}{N_1}} X$}. 
Using the tail bound for standard Gaussian random variables, we then obtain

$$\prob(\abs{\exnoisep_{j,1} + \exnoisep_{j,2} - 2\exnoisep_{j,3}} > 4\exnoisemag) \leq 
{\hemantt 2} \exp\left(-\frac{4\exnoisemag^2 N_1}{3\sigma^2}\right).$$  

At each $\vecx \in \baseset$, we have $\numdirec$ many terms of the form: $\exnoisep_{j,1} + \exnoisep_{j,2} - 2\exnoisep_{3}$, 
meaning that we have a total of $\numdirec (2\numcen+1)^2 \abs{\twohashfam}$ such terms.  
Taking a union bound over all of them, we have for any $p_1 \in (0,1)$ that the choice 
$N_1 > \frac{3\sigma^2}{4\exnoisemag^2} {\hemantt \log (\frac{2}{p_1}\numdirec(2\numcen+1)^2\abs{\twohashfam})}$ 
implies that the magnitudes of all such terms are bounded by $4\exnoisemag$, with probability at least 
$1-p_1$.

\section{Proofs for Section \ref{sec:est_comp}} \label{sec:proofs_comp_est}
\subsection{Proof of Proposition \ref{prop:no_nois_est_comp}} \label{subsec:proof_prop_noiseless_comp_est}
\begin{enumerate}
\item $\mathbf{p \in \univsupp}$.

We have for $\phitil_p$ that $\norm{\phitil_p - (\phi_p + C)}_{\Linfnorm[-1,1]} = O(n^{-3})$. 
Denoting $\phitil_p(x_p) - (\phi_p(x_p) + C) = z_p(x_p)$, this means $\abs{z_p(x_p)} = O(n^{-3})$, $\forall x_p \in [-1,1]$.
Now $\abs{\expec_p[\phitil_p - (\phi_p + C)]} = \abs{\expec_p[\phitil_p] - C} = \abs{\expec_p[z_p]} \leq \expec_p[\abs{z_p}] = O(n^{-3})$. 

Lastly, we have that:
\begin{align}
\norm{\est{\phi}_p - \phi_p}_{\Linfnorm[-1,1]} &= \norm{\phitil_p - \expec_p[\phitil_p] - \phi_p}_{\Linfnorm[-1,1]} \\ 
&= \norm{\phitil_p - (\phi_p + C) - (\expec_p[\phitil_p] - C)}_{\Linfnorm[-1,1]} \\ &= O(n^{-3}).
\end{align}

\item $\mathbf{\lpair \in \bivsupp}$.

We only consider the case where $\degree(l), \degree(\lp) > 1$ as proofs for the other cases are similar. 
Now for $\phitil_{\lpair}$ we have that $\norm{\phitil_{\lpair} - (g_{\lpair} + C)}_{\Linfnorm[-1,1]^2} = O(n^{-3/2})$. 
Denoting $\phitil_{\lpair}\xlpair - (g_{\lpair}\xlpair + C) = z_{\lpair} \xlpair$, this means 
$\abs{z_{\lpair}\xlpair} = O(n^{-3/2})$, $\forall \xlpair \in [-1,1]^2$. 
Consequently, one can easily verify that:

\begin{align}
 \norm{\expec_l[\phitil_{\lpair}] - (\expec_l[g_{\lpair}] + C)}_{\Linfnorm[-1,1]} = O(n^{-3/2}), \label{eq:temp5}\\
 \norm{\expec_{\lp}[\phitil_{\lpair}] - (\expec_{\lp}[g_{\lpair}] + C)}_{\Linfnorm[-1,1]} = O(n^{-3/2}), \label{eq:temp6}\\
 \norm{\expec_{\lpair}[\phitil_{\lpair}] - (\expec_{\lpair}[g_{\lpair}] + C)}_{\Linfnorm} = O(n^{-3/2}). \label{eq:temp7}
\end{align}

Now note that using the form for $g_{\lpair}$ from \eqref{eq:biv_part_fn_exp}, we have that 

\begin{align}
\expec_{l}[g_{\lpair}] &= \sum_{\substack{l_1:(l,l_1) \in \bivsupp \\ l_1 \neq \lp}} \expec_{l}[\phi_{(l,l_1)} (x_l,0)] + \sum_{\substack{l_1:(l_1,l) \in \bivsupp \\ l_1 \neq \lp}} 
\expec_l[\phi_{(l_1,l)} (0,x_l)] + \sum_{\substack{\lp_1:(\lp,\lp_1) \in \bivsupp \\ \lp_1 \neq l}} \phi_{(\lp,\lp_1)} (x_{\lp},0) \nonumber \\ 
&+ \sum_{\substack{\lp_1:(\lp_1,\lp) \in \bivsupp \\ \lp_1 \neq l}} \phi_{(\lp_1,\lp)} (0,x_{\lp}) + 
\phi_{\lp}(x_{\lp}) + C, \quad \text{and} \label{eq:temp1} 
\end{align}
\begin{align}
\expec_{\lp}[g_{\lpair}] &= \sum_{\substack{l_1:(l,l_1) \in \bivsupp \\ l_1 \neq \lp}} \phi_{(l,l_1)} (x_l,0) + \sum_{\substack{l_1:(l_1,l) \in \bivsupp \\ l_1 \neq \lp}} 
\phi_{(l_1,l)} (0,x_l) 
+ \sum_{\substack{\lp_1:(\lp,\lp_1) \in \bivsupp \\ \lp_1 \neq l}} \expec_{\lp} [\phi_{(\lp,\lp_1)} (x_{\lp},0)] \nonumber \\ 
&+ \sum_{\substack{\lp_1:(\lp_1,\lp) \in \bivsupp \\ \lp_1 \neq l}} \expec_{\lp}\phi_{(\lp_1,\lp)} (0,x_{\lp}) + 
\phi_l(x_l) + C, \quad \text{and} \label{eq:temp2} \\
\expec_{\lpair}[g_{\lpair}] &= \sum_{\substack{l_1:(l,l_1) \in \bivsupp \\ l_1 \neq \lp}} \expec_{l}[\phi_{(l,l_1)} (x_l,0)] + \sum_{\substack{l_1:(l_1,l) \in \bivsupp \\ l_1 \neq \lp}} 
\expec_l[\phi_{(l_1,l)} (0,x_l)] \nonumber \\ 
&+ \sum_{\substack{\lp_1:(\lp,\lp_1) \in \bivsupp \\ \lp_1 \neq l}} \expec_{\lp} [\phi_{(\lp,\lp_1)} (x_{\lp},0)]  
+ \sum_{\substack{\lp_1:(\lp_1,\lp) \in \bivsupp \\ \lp_1 \neq l}} \expec_{\lp}\phi_{(\lp_1,\lp)} (0,x_{\lp}) +  C. \label{eq:temp3}
\end{align}
We then have from \eqref{eq:biv_part_fn_exp}, \eqref{eq:temp1}, \eqref{eq:temp2}, \eqref{eq:temp3} that 
\begin{equation}
g_{\lpair} - \expec_{l}[g_{\lpair}] - \expec_{\lp}[g_{\lpair}] + \expec_{\lpair}[g_{\lpair}] = \phi_{\lpair}. \label{eq:temp4}
\end{equation}
Using \eqref{eq:temp5}, \eqref{eq:temp6}, \eqref{eq:temp7}, \eqref{eq:temp4}, and \eqref{eq:biv_est} it then follows that:
\begin{align}
\norm{\est{\phi}_{\lpair} - \phi_{\lpair}}_{\Linfnorm[-1,1]^2} = O(n^{-3/2}).
\end{align}

\item $\mathbf{l \in \bivsuppvar: \degree(l) > 1}$.

In this case, for $\phitil_l : [-1,1]^2 \rightarrow \matR$, we have that 
$\norm{\phitil_l -(g_l + C)}_{\Linfnorm[-1,1]^2} = O(n^{-3/2})$, with 
\begin{align}
g_{l}(x_l,x) =  \phi_l(x_l) &+ \sum_{\degree(\lp) > 1, \lp \neq l} \phi_{\lp}(x) + \sum_{\lp:\lpair \in \bivsupp} \phi_{\lpair}(x_l,x) \nonumber \\
&+ \sum_{\lp:\lpairi \in \bivsupp} \phi_{\lpairi}(x,x_l) + \sum_{\qpair \in \bivsupp : q,\qp \neq l} \phi_{\qpair}(x,x). \label{eq:temp11}
\end{align}
From \eqref{eq:temp11}, we see that: 
\begin{align}
\expec_{x}[g_l(x_l,x)] &= \phi_l(x_l) + \sum_{\qpair \in \bivsupp : q,\qp \neq l} \expec_x[\phi_{\qpair}(x,x)], \\
\text{and} \quad \expec_{(l,x)}[g_l(x_l,x)] &= \sum_{\qpair \in \bivsupp : q,\qp \neq l} \expec_{x}[\phi_{\qpair}(x,x)]. 
\end{align}
Hence clearly, $\expec_{x}[g_l(x_l,x)] - \expec_{(l,x)}[g_l(x_l,x)] = \phi_l(x_l)$. One can also easily verify that 
\begin{align}
\norm{\expec_{x}[\phitil_l] - (\expec_{x}[g_l] + C)}_{\Linfnorm[-1,1]} &= O(n^{-3/2}), \\ 
\norm{\expec_{(l,x)}[\phitil_l] - (\expec_{(l,x)}[g_l] + C)}_{\Linfnorm}  &= O(n^{-3/2}). 
\end{align}
Therefore it follows that 
\begin{align}
\norm{\est{\phi}_{l} - \phi_{l}}_{\Linfnorm[-1,1]} 
&= \norm{(\expec_{x}[\phitil_l] - \expec_{(l,x)}[\phitil_l]) - (\expec_{x}[g_l] - \expec_{(l,x)}[g_l])}_{\Linfnorm[-1,1]} \\
&\leq \norm{\expec_{x}[\phitil_l] - (\expec_{x}[g_l] + C)}_{\Linfnorm[-1,1]} + \norm{\expec_{(l,x)}[\phitil_l] - (\expec_{(l,x)}[g_l] + C)}_{\Linfnorm} \\
&= O(n^{-3/2}).
\end{align}
This completes the proof.
\end{enumerate}

\subsection{Proof of Proposition \ref{prop:gauss_nois_est_comp}} \label{subsec:proof_prop_gauss_comp_est}
Although the proof is again very similar to that of Proposition \ref{prop:no_nois_est_comp}, there 
are some technical differences. Hence we provide a brief sketch of the proof, avoiding details already 
highlighted in the proof of Proposition \ref{prop:no_nois_est_comp}.
\begin{enumerate}
\item $\mathbf{p \in \univsupp}$.

We have for $\phitil_p$ that $\expec_{z}[\norm{\phitil_p - (\phi_p + C)}_{\Linfnorm[-1,1]}] = O((n^{-1} \log n)^{\frac{3}{7}})$. 
Denoting $\phitil_p(x_p) - (\phi_p(x_p) + C) = b_p(x_p)$, this means $\expec_{z}[\abs{b_p(x_p)}] = O((n^{-1} \log n)^{\frac{3}{7}})$. 
Now, 

\begin{equation}
\expec_{z}[\abs{\expec_p[\phitil_p - (\phi_p + C)]}] = \expec_{z}[\abs{\expec_p[b_p]}] \leq \expec_{z}[\expec_p[\abs{b_p}]] = 
\expec_{p}[\expec_z[\abs{b_p(x_p)}]] = O((n^{-1} \log n)^{\frac{3}{7}}). 
\end{equation}

The penultimate equality above involves swapping the order of expectations, which is possible by Tonelli's 
theorem (since $\abs{b_p} > 0$). Then using triangle inequality, it follows that 
$\expec_z[\norm{\est{\phi}_p - \phi_p}_{\Linfnorm[-1,1]}] = O((n^{-1} \log n)^{\frac{3}{7}})$.

\item $\mathbf{\lpair \in \bivsupp}$.

We only consider the case where $\degree(l), \degree(\lp) > 1$ as proofs for the cases are similar. 
For $\phitil_{\lpair}$, we have that 
$\expec_z[\norm{\phitil_{\lpair} - (g_{\lpair} + C)}_{\Linfnorm[-1,1]^2}] = O((n^{-1} \log n)^{\frac{3}{8}})$. 
Denoting $\phitil_{\lpair}\xlpair - (g_{\lpair}\xlpair + C) = b_{\lpair} \xlpair$, this means 
$\expec_z[\abs{b_{\lpair}\xlpair}] = O((n^{-1} \log n)^{\frac{3}{8}})$, $\forall \xlpair \in [-1,1]^2$. 
Using Tonelli's theorem as earlier, one can next verify that:
\begin{align}
 \expec_z[\norm{\expec_l[\phitil_{\lpair}] - (\expec_l[g_{\lpair}] + C)}_{\Linfnorm[-1,1]}] = O((n^{-1} \log n)^{\frac{3}{8}}), \label{eq:gauss_temp5} \\
 \expec_z[\norm{\expec_{\lp}[\phitil_{\lpair}] - (\expec_{\lp}[g_{\lpair}] + C)}_{\Linfnorm[-1,1]}] = O((n^{-1} \log n)^{\frac{3}{8}}), \label{eq:gauss_temp6} \\
 \expec_z[\abs{\expec_{\lpair}[\phitil_{\lpair}] - (\expec_{\lpair}[g_{\lpair}] + C)}] = O((n^{-1} \log n)^{\frac{3}{8}}). \label{eq:gauss_temp7}
\end{align}
As in the proof of Proposition \ref{prop:no_nois_est_comp}, we obtain from \eqref{eq:gauss_temp5}, \eqref{eq:gauss_temp6}, 
\eqref{eq:gauss_temp7}, \eqref{eq:temp4}, \eqref{eq:biv_est} (via triangle inequality):
\begin{align}
\expec_z[\norm{\est{\phi}_{\lpair} - \phi_{\lpair}}_{\Linfnorm[-1,1]^2}] = O((n^{-1} \log n)^{\frac{3}{8}}).
\end{align}

\item $\mathbf{l \in \bivsuppvar: \degree(l) > 1}$.

In this case, for $\phitil_l : [-1,1]^2 \rightarrow \matR$, we have that 
$\expec_z[\norm{\phitil_l - (g_l + C)}_{\Linfnorm[-1,1]^2}] = O((n^{-1} \log n)^{\frac{3}{8}})$, with 
$g_l(x_l,x)$ as defined in \eqref{eq:temp11}. Using Tonelli's theorem as earlier, one can verify that  
\begin{align}
\expec_z[\norm{\expec_{x}[\phitil_l] - (\expec_{x}[g_l] + C)}_{\Linfnorm[-1,1]}] &= O((n^{-1} \log n)^{\frac{3}{8}}), \\ 
\expec_z[\abs{\expec_{(l,x)}[\phitil_l] - (\expec_{(l,x)}[g_l] + C)}] &= O((n^{-1} \log n)^{\frac{3}{8}}). 
\end{align}
Then using the fact $\expec_{x}[g_l(x_l,x)] - \expec_{(l,x)}[g_l(x_l,x)] = \phi_l(x_l)$, 
we obtain via triangle inequality the bound: 
$\expec_z[\norm{\est{\phi}_{l} - \phi_{l}}_{\Linfnorm[-1,1]}] = O((n^{-1} \log n)^{\frac{3}{8}})$. This completes the proof.
\end{enumerate}

\end{document}